\def\eqref#1{equation~\ref{#1}}
\def\1{\mathbb{I}}
\DeclareMathAlphabet{\mathsfit}{\encodingdefault}{\sfdefault}{m}{sl}
\SetMathAlphabet{\mathsfit}{bold}{\encodingdefault}{\sfdefault}{bx}{n}
\newcommand{\E}{\mathbb{E}}
\newcommand{\R}{\mathbb{R}}
\newcommand{\Var}{\mathrm{Var}}
\newcommand{\Cov}{\mathrm{Cov}}
\newcommand{\Span}{\mathrm{Span}}
\renewcommand{\mod}{\,\mathrm{mod}\,}
\DeclareMathOperator*{\argmax}{arg\,max}
\DeclareMathOperator*{\argmin}{arg\,min}
\DeclareMathOperator{\sign}{sign}
\DeclareMathOperator{\diag}{diag}
\DeclareMathOperator{\unif}{Uniform}
\newcommand{\wl}[1]{\textcolor{orange}{(Wenlong: #1)}}
\newcommand{\lm}[1]{\textcolor{blue}{(Ryumei: #1)}}
\newcommand{\AEN}{\text{AE}}
\newcommand{\DE}{\text{DE}}
\newcommand{\CL}{\text{CL}}
\newcommand{\SCL}{\text{SCL}}
\newcommand{\SVD}{\text{SVD}}
\newcommand{\DAE}{\text{DAE}}
\def\eqref#1{(\ref{#1})}
\newcommand{\dataset}{{\cal D}}
\newcommand{\fracpartial}[2]{\frac{\partial #1}{\partial  #2}}
\begin{document}

\title{The Power of Contrast for Feature Learning:\\A Theoretical Analysis}

\author{\name Wenlong Ji \email jwl2000@stanford.edu \\
       \addr Department of Statistics\\
       Stanford University\\
       Stanford, CA 94305, USA
       \AND
       \name Zhun Deng \email zhundeng@g.harvard.edu \\
       \addr Department of Computer Science
\\
       Columbia University\\
       New York, NY 10027, USA
       \AND
       \name Ryumei Nakada \email ryumei.n@rutgers.edu \\
       \addr Department of Statistics\\
       Rutgers University\\
       Piscataway, NJ 08854, USA
       \AND
       \name James Zou \email jamesz@stanford.edu \\
       \addr
       Department of Biomedical Data Science\\
       Stanford University\\
       Stanford, CA 94305, USA
       \AND
       \name Linjun Zhang \email linjun.zhang@rutgers.edu \\
       \addr Department of Statistics\\
       Rutgers University\\
       Piscataway, NJ 08854, USA}

\editor{Aapo Hyvarinen}

\maketitle

\begin{abstract}%
Contrastive learning has achieved state-of-the-art performance in various self-supervised learning tasks and even outperforms its supervised counterpart. Despite its empirical success, theoretical understanding of the superiority of contrastive learning is still limited. In this paper, under linear representation settings, 
(i) we provably show that contrastive learning outperforms the standard autoencoders and generative adversarial networks, two classical generative unsupervised learning methods, for both feature recovery and in-domain downstream tasks; (ii) we also illustrate the impact of labeled data in supervised contrastive learning. This provides theoretical support for recent findings that contrastive learning with labels improves the performance of learned representations in the in-domain downstream task,  but it can harm the performance in transfer learning. We verify our theory with numerical experiments.

\end{abstract}

\begin{keywords}
  Self-Supervised Learning, Contrastive Learning, Principal Component Analysis, Spiked Covariance Model, Supervised Contrastive Learning
\end{keywords}

\section{Introduction}

Deep supervised learning has achieved great success in various applications, including computer vision \citep{krizhevsky2012imagenet}, natural language processing \citep{vaswani2017attention}, and scientific computing \citep{han2018solving}. However, its dependence on manually assigned labels, which is usually difficult and costly, has motivated research into alternative approaches to exploit unlabeled data. Self-supervised learning is a promising approach that leverages the unlabeled data itself as supervision and learns representations that are beneficial to potential in-domain downstream tasks. 


At a high level, there are two common approaches for feature extraction in self-supervised learning: generative and contrastive \citep{liu2021self,jaiswal2021survey}.
Both approaches aim to learn latent representations of the original data, while the difference is that the generative approach focused on minimizing the reconstruction error from latent representations, and the contrastive approach targets to decrease the similarity between the representations of contrastive pairs constructed by data augmentation.
Recent works have shown the benefits of contrastive learning in practice \citep{chen2020simple,he2020momentum,chen2020big,chen2020improved}.  However, these works did not explain the popularity of contrastive learning --- \textit{what is the advantage of contrastive learning and where does it come from?} 


Additionally, recent works aim to further improve contrastive learning by introducing label information. Specifically, \citet{khosla2020supervised} proposed the {supervised contrastive learning}, where the contrasting procedures are performed across different classes rather than different instances. With the help of label information, their proposed method outperforms self-supervised contrastive learning and classical cross-entropy-based supervised learning. 
However, despite this improvement in in-domain downstream tasks, \citet{Islam2021ABS} found that such improvement in transfer learning is limited and even negative for such supervised contrastive learning. This phenomenon motivates us to rethink \textit{the impact of labeled data} in the contrastive learning framework. 

In this paper, we first establish a theoretical framework to study contrastive learning under the linear representation setting. 
Under this framework, we provide a theoretical analysis of the feature learning performance of the contrastive learning on the spiked covariance model \citep{bai2012sample,yao2015sample,zhang2018heteroskedastic} and theoretically justify why contrastive learning outperforms standard autoencoders and generative adversarial networks (GANs) \citep{goodfellow2014generative} ---contrastive learning is able to remove more noise by constructing contrastive samples via augmentations. 
Moreover, we investigate the impact of label information in the contrastive learning framework and provide a theoretical justification of why labeled data help to gain accuracy in in-domain regression and classification while can hurt multi-task transfer learning.

\subsection{Related Works}
The idea of contrastive learning was firstly proposed in \citet{hadsell2006dimensionality} as an effective method to perform dimensional reduction. 
Following this line of research, \citet{dosovitskiy2014discriminative} proposed to perform instance discrimination by creating surrogate classes for each instance and \citet{wu2018unsupervised} further proposed to preserve a memory bank as a dictionary of negative samples. Other extensions based on this memory bank approach include \citet{he2020momentum, misra2020self, tian2020contrastive, chen2020improved}. Rather than keeping a costly memory bank, another line of work exploits the benefit of mini-batch training where different samples are treated as negative to each other \citep{ye2019unsupervised, chen2020simple}. Moreover, \citet{khosla2020supervised} explores the supervised version of contrastive learning where pairs are generated based on label information.

Despite its success in practice, the theoretical understanding of contrastive learning is still limited. Previous works provide provable guarantees for contrastive learning under conditional independence assumption (or its variants) \citep{arora2019theoretical,lee2021predicting,tosh2021contrastive,tsai2020self}. Specifically, they assume the two contrastive views are independent conditioned on the label and show that contrastive learning can provably learn representations beneficial for in-domain downstream tasks. 
In addition to this line of research, there exist several alternative perspectives for studying the theoretical properties of contrastive learning. To name a few, \cite{wang2020understanding,graf2021dissecting} explored the representation geometry, \cite{haochen2021provable} analyzed the augmentation graph, \cite{tian2022deep} proposed a two-player game theory framework, \cite{zimmermann2021contrastive} demonstrated the connection between contrastive learning and nonlinear Independent Component Analysis \citep{hyvarinen2009independent}, \cite{saunshi2022understanding} showed that the importance of inductive bias in contrastive learning, and \cite{jing2021understanding} investigated the dimensional collapse phenomenon. Furthermore, \citet{tian2021understanding,wang2021towards} have also explored the ability of self-supervised learning to learn features even without contrastive pairs, specifically in the context of linear representation settings.

More relevant to this paper, \citet{wen2021toward} considered representation learning under the sparse coding model and studied the optimization properties in shallow ReLU neural networks. However, the assumptions that features are extremely sparse and signals follow Gaussian distribution seem strong for real data. \citet{garg2020functional} studied the combination of supervised learning and self-supervised learning. They derived sample complexity bounds in a PAC-learning style for various settings.
Specifically, the authors assume that there is a ground-truth representation such that it can keep both self-supervised loss and supervised loss at a very low threshold. 
However, as the authors admit, it is hard to determine such a threshold in practical settings. For example, since the unlabeled data and labeled data come from different domains, such as Image-Net and CIFAR-10, domain-specific features may have a much lower loss compared with domain-transferable features. 

While the aforementioned previous works aim to demonstrate that contrastive learning is capable of learning meaningful representations, it was left untouched why contrastive learning \textit{outperforms} other representation learning methods. We also shed light on the impact of labeled data in a contrastive learning framework, which is underexplored in prior works. A detailed comparison with existing literature is deferred to Appendix \ref{section: comparison}.

\subsection{Outline}
This paper is organized as follows. Section \ref{sec: prelim} provides the setup for the data-generating process and the loss function.
In Section \ref{sec: unsupervised}, we review the connection between PCA and autoencoders/GANs. We also establish a theoretical framework to study contrastive learning in the linear representation setting. 
Under this framework, we evaluate the feature recovery performance and in-domain downstream task performance of contrastive learning and autoencoders.
In Section \ref{sec: labeled data}, we analyze the supervised contrastive learning.
In Section \ref{sec: simulation}, we verify our theoretical results given in Sections \ref{sec: unsupervised} and \ref{sec: labeled data}. 
Finally, we summarize our analysis and provide future directions in Section \ref{sec: conclusion}.

\subsection{Notations} In this paper, we use $O,\Omega,\Theta$ to hide universal constants and we write $a_k\lesssim b_k$ for two sequences of positive numbers $\{a_k\} $ and $\{b_k\}$ if and only if there exists a universal constant $C>0$ such that $a_k<Cb_k$ for any $k$. We write $a_k \asymp b_k$ when $a_k \lesssim b_k$ and $a_k \gtrsim b_k$ holds simultaneously. We use $\|\cdot\|,\|\cdot\|_2,\|\cdot\|_F$ to represent the $\ell_2$ norm of vectors, the spectral norm of matrices, and Frobenius norm of matrices respectively. 
Let $\mathbb{O}_{d,r}$ be a set of $d\times r$ orthogonal matrices. Namely, $\mathbb{O}_{d,r} \triangleq \{U \in \mathbb{R}^{d\times r} : U^\top U = I_r\}$.
We write $n \gg d$ when there exists a sufficiently small constant $c$ depending on the constant and independent of $n$, $d$ and $r$ such that $d/n < c$ holds. $d \gg r$ is defined similarly.
We use $|A|$ to denote the cardinality of a set $A$. For any $n\in\mathbb{N}^+$, let $[n]=\{1,2,\cdots,n\}$. We use $\|\sin\Theta(U_1,U_2)\|_F$ to refer to the sine distance between two orthogonal matrices $U_1,U_2\in\mathbb{O}_{d, r}$, which is defined by: $
    \left\|\sin \Theta\left(U_{1}, U_{2}\right)\right\|_F \triangleq\left\|U_{1 \perp}^\top U_{2}\right\|_F$,
where $U_{1\perp} \in \mathbb{O}_{d-r,r}$ is any orthogonal complement of $U_1$. 
More properties of sine distance can be found in Section \ref{sec: distance}. We use $\{e_i\}_{i=1}^d$ to denote the canonical basis in $d$-dimensional Euclidean space $\mathbb{R}^d$, that is, $e_i$ is the vector whose $i$-th coordinate is $1$ and all the other coordinates are $0$.
Let $\1\{A\}$ be an indicator function that takes $1$ when $A$ is true, otherwise takes $0$.
We write $a \vee b$ and $a \wedge b$ to denote $\max(a, b)$ and $\min(a, b)$, respectively.

\section{Setup}\label{sec: prelim}
Here we introduce loss and data-generative models that will be used for the theoretical analysis later.

\subsection{Linear Representation Settings for Contrastive Learning}
\label{sec: linear representation}
Given an input $x\in \mathbb{R}^d$, contrastive learning aims to learn a low-dimensional representation $h=f(x;\theta)\in\mathbb{R}^r$ by contrasting different samples, that is, maximizing the agreement between positive pairs, and minimizing the agreement between negative pairs. Suppose we have $n$ data points $X=[x_1,x_2,\cdots,x_n]\in\mathbb{R}^{d\times n}$ from the population distribution $\mathcal{D}$. The contrastive learning task can be formulated to 
the following optimization problem:
\begin{equation}
	\label{contrastive task}
	\min_{\theta}\mathcal{L}(\theta)=\min_{\theta}\frac{1}{n}\sum_{i=1}^n\ell(x_i,\mathcal{B}_i^{Pos},\mathcal{B}_i^{Neg};f(\cdot; \theta))+\lambda R(\theta),
\end{equation}
where $\ell(\cdot)$ is a contrastive loss and $\lambda R(\theta)$ is a regularization term; $\mathcal{B}_i^{Pos},\mathcal{B}_i^{Neg}$ are the sets of positive samples and negative samples corresponding to $x_i$, the details of which are described below.

\paragraph{Linear Representation and Regularization Term}
We consider the linear representation function $f(x;W)=Wx$, where the parameter $\theta$ is a matrix $W\in\mathbb{R}^{r\times d}$. This linear representation setting has been widely adopted in other theory papers to understand self-supervised contrastive learning \citep{jing2021understanding,wang2021towards,tian2021understanding} and shed light upon other complex machine learning phenomena such as in \citet{tripuraneni2021provable}. Moreover, since regularization techniques have been widely adopted in contrastive learning practice \citep{chen2020simple,he2020momentum,grill2020bootstrap}, we further consider penalizing the representation by a regularization term $R(W)=\|WW^\top\|_F^2/2$ to encourage the orthogonality of $W$ and therefore  
promote the diversity of $w_i$ to learn different representations. The reason we use such quadratic regularization instead of a standard $\ell_2$ regularization is to encourage a diverse representation in the linear representation setting by penalizing on the similarity $\langle w_i, w_j\rangle^2$, we defer a formal discussion and numerical experiments about this regularization in the Appendix \ref{section: regularization}.


\paragraph{Linear Contrastive Loss}
The contrastive loss is set to be the average similarity (measured by the inner product) between positive pairs minus that between negative pairs: 
\begin{equation}
\label{triplet loss}
	\ell(x,\mathcal{B}_x^{Pos},\mathcal{B}_x^{Neg},f(\cdot; \theta))=-\sum_{x^{Pos}\in\mathcal{B}_x^{Pos}}\frac{\langle f(x,\theta),f(x^{Pos},\theta) \rangle}{|\mathcal{B}_x^{Pos}|}+\sum_{x^{Neg}\in\mathcal{B}_x^{Neg}}\frac{\langle f(x,\theta),f(x^{Neg},\theta) \rangle}{|\mathcal{B}_x^{Neg}|},
\end{equation}
where $\mathcal{B}_x^{Pos},\mathcal{B}_x^{Neg}$ are sets of positive samples and negative samples corresponding to $x$. 
This loss function has been commonly used in contrastive learning \citep{hadsell2006dimensionality} and metric learning \citep{Schroff2015FaceNetAU,He2018TripletCenterLF}. In \citet{khosla2020supervised}, the authors show that the inner-product based linear loss \eqref{triplet loss} is an approximation of the NT-Xent contrastive loss when one positive and one negative are used, which has been highlighted in recent contrastive learning practice \citep{Sohn2016ImprovedDM,wu2018unsupervised,oord2018representation,chen2020simple}. In \citet{li2021self}, the authors proposed the SSL-HSIC contrastive loss, which can be reduced to this linear loss when the kernel $k(\cdot,\cdot)$ is chosen to be a simple inner product. Following \citet{li2021self}, we provide the results in Table \ref{tab: simclr loss}, which shows that linear contrastive loss can also work well with some additional training techniques.
\begin{table}[H]
    \centering
    \begin{tabular}{ l||ccc  }
        \hline
        Testing Accuracy & InfoNCE & Linear contrastive loss\\
        \hline
        \hline
        CIFAR10    & $65.11\pm 0.51$ & $\mathbf{66.07\pm 0.46}$ \\
        \hline
        STL10   & $\mathbf{71.02\pm 0.47}$ & $70.30\pm 0.31$ \\
        \hline
    \end{tabular}
    \caption{\textbf{InfoNCE loss v.s. Linear contrastive loss}. We train a ResNet-18 encoder on CIFAR-10 and STL-10 datasets with different contrastive loss functions. To train the linear contrastive loss, we follow the HSIC regularization techniques used in \citet{li2021self}, which helps linear contrastive loss yield comparable performance to standard InfoNCE. We repeat each experiment for ten runs and report the mean and standard deviation of accuracy. Detailed experimental settings can be found in Section \ref{sec: real data}.}
    \label{tab: simclr loss}
\end{table}

\subsection{Generation of Positive and Negative Pairs}
There are two common approaches to generating positive and negative pairs, depending on whether or not label information is available. 
When the label information is not available, the typical strategy is to generate different views of the original data via augmentation \citep{hadsell2006dimensionality,chen2020simple}. Two views of the same data point serve as the positive pair for each other, while those of different data serve as negative pairs.

\begin{definition}[Augmented Pairs Generation in the Self-supervised Setting]
	\label{pair: augmented}
	Given two augmentation functions $g_1,g_2:\mathbb{R}^d\rightarrow\mathbb{R}^d$ and $n$ training samples $\mathcal{B}=\{x_i\}_{i\in[n]}$, the augmented views are given by: $\{(g_1(x_i)$, $g_2(x_i))\}_{i\in [n]}$. 
	Then for each view $g_v(x_i)$, $v=1,2$, the corresponding positive samples and negative samples are defined by: $\mathcal{B}_{i,v}^{Pos}=\{g_s(x_i):s\in[2]\setminus \{v\}\}$ and $\mathcal{B}_{i,v}^{Neg} = \{g_{s}(x_j):s\in [2],j\in [n]\setminus \{i\}\}$.
\end{definition}
The loss function of the self-supervised contrastive learning problem can then be written as:
\begin{small}
\begin{equation}
\label{loss: self contrastive}
    \mathcal{L}_{\text{SelfCon}}(W)\!=-\!\frac{1}{2n}\sum_{i=1}^n\sum_{v=1}^2\biggl[\langle Wg_v(x_i),Wg_{[2]\setminus \{v\}}(x_i) \rangle\!-\!\sum_{j\neq i}\sum_{s=1}^2\frac{\langle Wg_v(x_i),Wg_s(x_j) \rangle}{2n-2}\biggr]\!+\!\frac{\lambda}{2}\|WW^\top\|_F^2.
\end{equation}
\end{small}

In particular, we adopt the following augmentation in our analysis.
\begin{definition}[Random Masking Augmentation]
	\label{aug: random masking}
	The two views of the original data are generated by randomly dividing its dimensions into two sets, that is,
		$g_1(x_i)=Ax_i,\text{~and~} g_2(x_i) = (I-A)x_i$, 
	where $A=\diag(a_1,\cdots,a_d)\in\mathbb{R}^{d\times d}$ is the diagonal masking matrix with $\{a_i\}_{i=1}^{d}$ being $i.i.d.$ random variables sampled from a Bernoulli distribution with mean $1/2$. 
\end{definition}
\begin{remark}
    In this paper, we focus on random masking augmentation, which has also been used in other works on the theoretical understanding of contrastive learning, eg. \citet{wen2021toward}. However, our primary interest lies in comparing the performance of contrastive learning with autoencoders and analyzing the impact of labeled data, while their work focuses on understanding the training process of neural networks in contrastive learning. Random masking augmentation is an analog of the random cropping augmentation used in practice. As shown in \citet{chen2020simple}, cropping augmentation achieves overwhelming performance on linear evaluation (ImageNet top-1 accuracy) compared with other augmentation methods, please see Figure 5 in \citet{chen2020simple} for details. 
\end{remark}
\if0
In particular, we adopt the following random cropping augmentation, which slightly generalizes the random cropping augmentation techniques adopted in practice \citep{chen2020simple,he2020momentum}. \lm{The following is not connected to diagonal-deletion PCA}
\begin{definition}[Random Cropping Augmentation]
    \label{aug: random cropping}
	The two views of the original data are generated by randomly filtering the data.
	Let $F \in \qty{0, 1}^d\setminus \qty{(0, \dots, 0), (1, \dots, 1)}$ be a filter. Define $A_{\ell} = \diag((A_{\ell,j})_{j\in[d]})$ with $A_{\ell,j} = F_{(j+\ell \mod d) + 1}$ and let $A \sim \unif(A_1, \dots, A_d)$.
	For this $A$, take $g_1(x_i)=Ax_i$ and $g_2(x_i)=(I-A)x_i$.
\end{definition}
Definition \ref{aug: random cropping} includes $1$-dimensional random cropping of size $s$ when $F_j = \1_{j \leq s}$.
If we set $d = d_1 d_2$ for some positive integers $d_1$ and $d_2$, and set $F_j = \1_{(j \mod d_1) + 1 \leq s_1} \1_{(j \mod d_2) + 1 \leq s_2}$, the definition also includes the $2$-dimensional random cropping of size $s_1 \times s_2$.
By a similar argument, we can consider random cropping in $2$-dimensional color images as a special case of Definition \ref{aug: random cropping}.
As shown in \lm{Appendix}, the random filtering augmentation is essentially equivalent to the random masking augmentation defined below. Thus, in this paper, we focus on random masking augmentation for clarity.
\begin{definition}[Random Masking Augmentation]
	\label{aug: random masking}
	The two views of the original data are generated by randomly dividing its dimensions into two sets, that is,
		$g_1(x_i)=Ax_i$ and $g_2(x_i)=(I-A)x_i$, 
	where $A=\diag(a_1,\cdots,a_d)\in\mathbb{R}^{d\times d}$ is the diagonal masking matrix with $\{a_i\}_{i=1}^{d}$ being $i.i.d.$ random variables sampled from a Bernoulli distribution with mean $1/2$. 
\end{definition}
A similar augmentation was considered in \citet{wen2021toward}. However, our primary interest lies in comparing the performance of contrastive learning against autoencoders/GANs and analyzing the impact of labeled data, while their work focuses on understanding the training process of neural networks in contrastive learning.
\fi

When the label information is available, \citet{khosla2020supervised} proposed the following approach to generate positive and negative pairs.
\begin{definition}[Pairs Generation in the Supervised Setting]
	\label{pair: supervised}
	In a $K$-class classification problem, given $n_k$ samples for each class $k\in[K]$: $\{x_i^k:i\in[n_k]\}_{k=1}^K$ and let $n=\sum_{k=1}^K n_k$, the corresponding positive samples and negative samples for $x_i^k$ are defined by $\mathcal{B}_{i,k}^{Pos}=\{x_j^k: j\in [n_k]\setminus i\}$ and $\mathcal{B}_{i,k}^{Neg} = \{x_j^s: s\in[K]\setminus k, j\in [n_s]\}$. That is, the positive samples are the remaining ones in the same class with $x^k_i$ and the negative samples are the samples from different classes.
\end{definition}
Correspondingly, the loss function of the supervised contrastive learning problem can be written as:
\begin{equation}
    \mathcal{L}_{\text{SupCon}}(W)=-\frac{1}{nK}\sum_{k=1}^K\sum_{i=1}^{n}\biggl[\sum_{j\neq i}\frac{\langle Wx_i^k,Wx_j^k\rangle}{n-1}-\sum_{j=1}^n\sum_{s\neq k}\frac{\langle Wx_i^k,Wx_j^s \rangle}{n(K-1)}\biggr]+\frac{\lambda}{2}\|WW^\top\|_F^2.
\end{equation}

\subsection{Data Generating Process}
In real-world scenarios, data often comprises both signal (relevant information) and noise (irrelevant distractions). For instance, in image classification, the signal might be the primary subject of interest, while the noise could represent background elements. Self-supervised learning methods, without predefined tasks, aim to extract generalized patterns from data, ideally capturing as much of the signal as possible. It is commonly understood that signals tend to exhibit specific low-complexity structures, often being low-rank and showing higher correlations across coordinates. In contrast, background noise might lack a distinct structure, potentially being dense (or full rank) with lower coordinate correlations. To delve into this structural difference more rigorously, we consider an additive data-generating model. Here, the observed data emerges as a combination of a low-rank signal and dense noise.
\begin{equation}
	\label{model: spiked covariance}
	x = U^\star z+\xi,\quad \Cov(z)=\nu^2I_r,\quad \Cov(\xi)=\Sigma,
\end{equation}
where $z\in\mathbb{R}^{r}$ and $\xi\in\mathbb{R}^{d}$ are both zero mean sub-Gaussian independent random variables, and $\nu\in\R$ is a constant represents the signal strength.
In particular, $U^\star \in \mathbb{O}_{d, r}$ and $\Sigma = \diag(\sigma_1^2,\cdots,\sigma_d^2)$.
The first term $U^\star z$ represents the signal of interest residing in a low-dimensional subspace spanned by the columns of $U^\star$. The second term $\xi$ is the dense noise with heteroskedastic noise. Given that, the ideal low-dimensional representation is to compress the observed $x$ into a low-dimensional representation spanned by the columns of $U^\star$. This model is known as the spiked covariance model \citep{johnstone2001distribution,bai2012sample,yao2015sample,zhang2018heteroskedastic}. It was proposed from the empirical observation that the eigenvalues of the sample covariance matrix of phoneme data have few "spikes", which corresponds to the low-dimensional structure of data generation. The model has been used in the literature of PCA \citep{johnstone2001distribution,deshpande2014information,zhang2018heteroskedastic} and Contrastive Learning \citep{wen2021toward}.


In this paper, we aim to \textit{learn a good projection $W \in \R^{r\times d}$} onto a lower-dimensional subspace from the observation $x$. 
Since the information of $W$ is invariant with the transformation $W \leftarrow O W$ for any $O \in \mathbb{O}_{r,r}$, the essential information of $W$ is contained in the right eigenvector of $W$. Thus, we quantify the goodness of the representation $W$ using the sine distance $\|\sin\Theta(U,U^\star)\|_F$, where $U$ is the top-$r$ right eigenspace of $W$. It is notable that we only assume that noise and signal follow a sub-Gaussian distribution. This includes bounded noise/signals such as images, sound data, or text data.



\section{Comparison of Self-Supervised Contrastive Learning and Autoencoders/GANs}\label{sec: unsupervised}
Generative and contrastive learning are two popular approaches of self-supervised learning. Recent experiments have highlighted the improved performance of contrastive learning compared with the generative approach. For example, in Figure 1 of \citet{chen2020simple} and Figure 7 of \citet{liu2021self}, it is observed that state-of-the-art contrastive self-supervised learning has more than 10 percent improvement over state-of-the-art generative self-supervised learning, with the same number of parameters. In this section, we rigorously demonstrate the advantage of contrastive learning over autoencoders/GANs, the representative methods in generative self-supervised learning, by investigating the linear representation settings under the spiked covariance model \eqref{model: spiked covariance}. The investigation is conducted for both feature recovery and in-domain downstream tasks.


Hereafter, we focus on the linear representation settings. This section is organized as follows: in Section \ref{sec: PCA and autoencoder} we first review the connection between principal component analysis (PCA) and autoencoders/GANs, which are two representative methods in generative approaches in self-supervised learning, under linear representation settings. Then we establish the connection between contrastive learning and PCA in Section \ref{sec: connection to cl}. 
Based on these connections, we make the comparison between contrastive learning and autoencoder on feature recovery ability (Section \ref{sec: recover}) and in-domain downstream performance (Section \ref{sec: downstream}). 

\subsection{Autoencoders, GANs and PCA}\label{sec: connection to autoencoders}

\label{sec: PCA and autoencoder}
Autoencoders are popular unsupervised learning methods to perform dimensional reduction.
Autoencoders learn two functions: encoder $f:\mathbb{R}^d\rightarrow\mathbb{R}^r$ and decoder $g:\mathbb{R}^r\rightarrow\mathbb{R}^d$. While the encoder $f$ compresses the original data into low-dimensional features, and the decoder $g$ recovers the original data from those features. It can be formulated to be the following optimization problem for samples $\{x_i\}_{i=1}^n$ \citep{ballard1987modular,fan2019selective}:
\begin{equation}
\label{autoencoder}
    \min_{f, g}\mathbb{E}_x\mathcal{L}(x,g(f(x))).
\end{equation}
By minimizing this loss, autoencoders try to preserve the essential features to recover the original data in the low-dimensional representation. 
In our setting, we consider the class of linear functions for $f$ and $g$. The loss function is set as the mean squared error. Write $f(x)=W_{\AEN}x$ and $g(x)=W_{\DE}x$. Namely, we consider the following problem.
\begin{equation*}
    \min_{W_{\AEN}, W_{\DE}}\frac{1}{n}\|X-W_{\DE}W_{\AEN}X\|_F^2.
\end{equation*}
Let $X = (x_1, \dots, x_n) \in \R^{d \times n}$. By Theorem 2.4.8 in \citet{van1996matrix}, the optimal solution is given by the eigenspace of $X X^\top$, which exactly corresponds to the result of PCA. Thus, in linear representation settings, autoencoders are equivalent to PCA, which is also often known as \textit{undercomplete linear autoencoders} \citep{bourlard1988auto,plaut2018principal,fan2019selective}. 
We write the obtained low-rank representation by autoencoders as
\begin{equation}
\label{opt: AE}
    W_{\AEN}=(U_{\AEN}\Sigma_{\AEN}V_{\AEN}^\top)^\top,
\end{equation}
where $U_{\AEN}$ is the top-$r$ eigenvectors of matrix $X X^\top$, $\Sigma_{\AEN}$ is a diagonal matrix of spectral values and $V_{\AEN}=[v_1,\cdots,v_r]\in\mathbb{R}^{r\times r}$ can be any orthonormal matrix.

We also note that GANs \citep{goodfellow2014generative} is related to PCA. Namely, \citet{feizi2020understanding} showed that the global solution for GANs recovers the empirical PCA solution as the generative model. 

To see this, let $\mathcal{W}_2$ be the second-order Wasserstein distance. Also let $\mathcal{G}$ be the set of linear generator functions from $\R^r \to \R^d$.
Consider the following $\mathcal{W}_2$ GAN optimization problem:
\begin{align}\label{eq: gan}
    \min_{g \in \mathcal{G}} \mathcal{W}_2^2(\mathbb{P}_{n}, \mathbb{P}_{g(Z)}),
\end{align}
where $\mathbb{P}_{n}$ denotes the empirical distribution of i.i.d. data $x_1, \dots, x_n \in \R^d$ and $\mathbb{P}_{g(Z)}$ is the generated distribution with generator $g$ and $Z \sim N(0, I_r)$.
Note that the optimization problem Equation \eqref{eq: gan} can be written as $\min_{\mathbb{P}_{n, Z}} \min_{g \in \mathcal{G}} \mathbb{E}[\|X - g(Z)\|^2]$, where the first minimization is over probability distributions which have marginals $\mathbb{P}_n$ and $\mathbb{P}_Z$.
By Theorem 2 in \citet{feizi2020understanding}, the optimizer of problem Equation \eqref{eq: gan} is obtained as $\hat g: Z \mapsto \hat G Z$, where $\hat G$ satisfies $\hat G \hat G^\top = U_{\AEN}\Sigma_{\AEN}^2 U_{\AEN}^\top$. This implies $W_{\AEN}^\top: \R^r \to \R^d$ is also a solution to the optimization problem Equation \eqref{eq: gan}. 
Hence GANs learn the PCA solution as a generator.

From this equivalence among ordinary PCA, autoencoders, and GANs, we \textbf{only focus on autoencoders hereafter} for brevity.

\subsection{Contrastive Learning and Diagonal-Deletion PCA}\label{sec: connection to cl}

Here we bridge PCA and contrastive learning with certain augmentations under the linear representation setting.
Recall that the optimization problem for self-supervised contrastive learning is formulated as:
\begin{small}
\begin{equation}
\label{opt: CL ap}
  \min_{W\in\mathbb{R}^{r\times d}}\mathcal{L}_{\text{SelfCon}}(W)\!:=-\!\frac{1}{2n}\sum_{i=1}^n\sum_{v=1}^2\biggl[\langle Wg_v(x_i),Wg_{[2]\setminus \{v\}}(x_i) \rangle\!-\!\sum_{j\neq i}\sum_{s=1}^2\frac{\langle Wg_v(x_i),Wg_s(x_j) \rangle}{2n-2}\biggr]\!+\!\frac{\lambda}{2}\|WW^\top\|_F^2.
\end{equation}
\end{small}

To compare contrastive learning with autoencoders, we now derive the solution of the optimization problem \eqref{opt: CL ap}. We start with the general result for self-supervised contrastive learning with augmented pairs generation in Definition \ref{pair: augmented}, and then turn to the special case of random masking augmentation (Definition \ref{aug: random masking}).

\begin{proposition}
	\label{prop: augment}
	For two fixed augmentation functions $g_1,g_2:\mathbb{R}^d\rightarrow\mathbb{R}^d$, denote the augmented data matrices as $X_1=[g_1(x_1),\cdots,g_1(x_n)]\in\mathbb{R}^{d\times n}$ and $X_2=[g_2(x_1),\cdots,g_2(x_n)]\in\mathbb{R}^{d\times n}$, when the augmented pairs are generated as in Definition \ref{pair: augmented}, all the optimal solutions of contrastive learning problem \eqref{opt: CL ap} are given by:
	\begin{equation*}
		W_{\CL} = C\left(\sum_{i=1}^{r}u_i\sigma_i v_i^\top\right)^\top,
	\end{equation*}
	where $C>0$ is a positive constant, $\sigma_i$ is the $i$-th largest eigenvalue of the following matrix:
	\begin{equation}
		X_1X_2^\top+X_2X_1^\top-\frac{1}{2(n-1)}(X_1+X_2)(1_n 1_n^\top-I_n)(X_1+X_2)^\top,
	\end{equation}
	$u_i$ is the corresponding eigenvector and $V=[v_1,\cdots,v_r]\in\mathbb{R}^{r\times r}$ can be any orthonormal matrix.
\end{proposition}

The proof is given in Appendix \ref{prop: augment ap}.

{Proposition \ref{prop: augment} is a general result for augmented pairs generation with fixed and deterministic augmentation functions. The result itself only depends on the augmented data matrices, thus it is straightforward to generalize to the case where different augmentation functions are applied to different samples, we omit it here for the simplicity of notations. Moreover, when the augmentation is sampled from a stochastic distribution, we can also characterize the optimal solution of the expected loss in the same way.} Specifically, if we apply the random masking augmentation \eqref{aug: random masking}, we can further obtain a result to characterize the optimal solution. 
For any square matrix $A\in\mathbb{R}^{d\times d}$, we denote $D(A)$ to be $A$ with all off-diagonal entries set to be zero and $\Delta(A)=A-D(A)$ to be $A$ with all diagonal entries set to be zero. Then we have the following corollary for random masking augmentation. 
\begin{corollary}
	\label{prop: diagonal contrast}
	Under the same conditions as in Proposition \ref{prop: augment}, if we use random masking (Definition \ref{aug: random masking}) as our augmentation function, then the minimizer of the expected loss function of contrastive learning problem \eqref{opt: CL ap} over the distribution of random augmentations (i.e., $\mathbb{E}_{g_1,g_2}\mathcal{L}_{SelfCon}(W)$) is given by:
	\begin{equation*}
		W_{\CL} = C\left(\sum_{i=1}^{r}u_i\sigma_i v_i^\top\right)^\top,
	\end{equation*}
	where $C>0$ is a positive constant, $\sigma_i$ is the $i$-th largest eigenvalue of the following matrix:
	\begin{equation}
		\Delta(XX^\top)-\frac{1}{n-1}X(1_n 1_n^\top-I_n)X^\top,
	\end{equation}
	 $u_i$ is the corresponding eigenvector and $V=[v_1,\cdots,v_r]\in\mathbb{R}^{r\times r}$ can be any orthonormal matrix.
\end{corollary}

The proof is given in Appendix \ref{prop: diagonal contrast ap}.

With Proposition \ref{prop: augment} and Corollary \ref{prop: diagonal contrast} established, we can find that the self-supervised contrastive learning equipped with augmented pairs generation and random masking augmentation can eliminate the effect of random noise on the diagonal entries of the observed covariance matrix. Since $\Cov(\xi)=\Sigma$ is a diagonal matrix, when 
the diagonal entries $\Cov(U^\star z)=\nu^2 U^\star U^{\star \top}$ only take a small proportion of the total Frobenius norm, the contrasting augmented pairs will preserve the core features while eliminating most of the random noise and give a more accurate estimation of core features. 

\subsection{Feature Recovery from Noisy Data}
\label{sec: recover}
After bridging both autoencoder and contrastive learning with PCA, now we can perform the analysis of feature recovery ability to understand the benefit of contrastive learning over autoencoders. 
As mentioned above, our target is to recover the subspace spanned by the columns of $U^\star$, which can further help us obtain information on the unobserved $z$ that is important for in-domain downstream tasks. However, the observed data has a covariance matrix of $\nu^2U^\star U^{\star \top}+\Sigma$ rather than the desired $\nu^2U^\star U^{\star \top}$, 
which brings difficulty to representation learning. We demonstrate that contrastive learning can better exploit the structure of core features and obtain better estimation than autoencoders in this setting.

We start with autoencoders. 
In the noiseless case, the covariance matrix is $\nu^2U^\star U^{\star \top}$ and autoencoders can \textit{perfectly} recover the core features. However, in noisy cases, the random noises sometimes perturb the core features, which makes autoencoders fail to learn the core features. Such noisy cases are widespread in real applications such as measurement errors and backgrounds in images such as grasses and sky. Interestingly, we will later show that contrastive learning can better recover $U^\star$ despite the presence of large noise.

To provide rigorous analysis, we first introduce the incoherent constant \citep{candes2009exact}.
\begin{definition}[Incoherent Constant]
	We define the incoherence constant of $U \in \mathbb{O}_{d,r}$ as
	\begin{equation}
		\label{eq: incoherent}
		I(U)= \max _{i \in[d]}\left\|e_{i}^\top U\right\|^{2}.
	\end{equation}
\end{definition}

Intuitively, the incoherent constant measures the degree of the incoherence of the distribution of entries among different coordinates, or loosely speaking, the similarity between $U$ and canonical basis $\{e_i\}_{i=1}^{d}$. For uncorrelated random noise, the covariance matrix is diagonal and its eigenspace is exactly spanned by the canonical basis $\{e_i\}_{i=1}^d$ (if the diagonal entries in $\Sigma$ are all different), which attains the maximum value of the incoherent constant. On the contrary, the core features usually exhibit certain correlation structures and the corresponding eigenspace of the covariance matrix is expected to have a lower incoherent constant.

We then introduce a few assumptions which our theoretical results are built on. Recall that in the spiked covariance model \eqref{model: spiked covariance}, $x = U^\star z+\xi$, $\Cov(z)=\nu^2I_r$ and $\Cov(\xi)=\diag(\sigma_1^2,\cdots,\sigma_d^2)$.

\begin{assumption}[Regular Covariance Condition]
	\label{asm: regular}
	The condition number of covariance matrix $\Sigma = \diag(\sigma_{1}^2,\cdots,\sigma_{d}^2)$ satisfies $
		\kappa := \sigma_{(1)}^{2}/\sigma_{(d)}^{2}<C,$
	where $\sigma_{(j)}^2$ represents the $j$-th largest number among $\sigma_{1}^2,\cdots,\sigma_{d}^2$ and $C>0$ is a universal constant.
\end{assumption}

\begin{assumption}[Signal to noise ratio condition]
	\label{asm: SNR}
	Define the signal-to-noise ratio $\rho := \nu / \sigma_{(1)}$, we assume $\rho=\Theta(1)$, implying that the covariance of noise is of the same order as that of the core features.
\end{assumption}

\begin{assumption}[Incoherent Condition]
	\label{asm: incoherent}
	The incoherent constant of the core feature matrix $U^\star\in\mathbb{O}_{d, r}$ satisfies $I(U^\star)=O\qty(r\log d/d).$ 
\end{assumption}
The incoherent constant often appears in the literature of matrix completion \citep{candes2009exact} and PCA \citep{zhang2018heteroskedastic}.
The order of $I(U^\star)$ can be arbitrary as long as it decreases to $0$ as $d\rightarrow\infty$. One can directly adapt the later results to this setting.
If $U$ is distributed uniformly on $\mathbb{O}_{d,r}$, then the expectation of incoherent constant is of order $r\log d / d$.
\begin{lemma}[Expectation of incoherent constant over a uniform distribution]
	\label{lem: incoherent}
	\begin{equation}
	\label{eq: expectation incoherent }
		\mathbb{E}_{U\sim \unif(\mathbb{O}_{d, r})}I(U) = O\qty(\frac{r}{d}\log d).
	\end{equation}
\end{lemma}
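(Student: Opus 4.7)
\textbf{Proof plan for Lemma \ref{lem: incoherent}.} The plan is to exploit the rotational invariance of the uniform (Haar) measure on $\mathbb{O}_{d,r}$ by representing $U$ through a Gaussian matrix, reducing the problem to the concentration of chi-squared variables. Specifically, if $G \in \R^{d\times r}$ has i.i.d.\ $N(0,1)$ entries, then $U \stackrel{d}{=} G(G^\top G)^{-1/2}$ is Haar on $\mathbb{O}_{d,r}$. Writing $g_i := G^\top e_i \sim N(0, I_r)$ for the $i$-th row of $G$, we have
\begin{equation*}
    \|e_i^\top U\|^2 = g_i^\top (G^\top G)^{-1} g_i.
\end{equation*}
Equivalently, using the embedding of $U$ as the first $r$ columns of a Haar-distributed $d\times d$ orthogonal matrix, each $\|e_i^\top U\|^2$ has the same marginal law as the squared norm of the first $r$ coordinates of a uniform point on $S^{d-1}$, i.e., $\mathrm{Beta}(r/2,(d-r)/2)$, which is the ratio $R_r/R_d$ of independent chi-squared variables.

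Next I would establish a tail bound for a single $i$. By the Laurent--Massart inequality, $\Pr(R_r \geq r + 2\sqrt{r t} + 2t) \leq e^{-t}$, and a symmetric lower-tail bound gives $R_d \geq d/2$ with probability at least $1 - e^{-c d}$ for a universal $c>0$. Combining these, for any $t>0$,
\begin{equation*}
    \Pr\!\left(\|e_i^\top U\|^2 \geq \frac{2(r + 2\sqrt{r t} + 2t)}{d}\right) \leq e^{-t} + e^{-cd}.
\end{equation*}
Setting $t = 3\log d$ and applying a union bound over $i \in [d]$ yields
\begin{equation*}
    \Pr\!\left(I(U) = \max_{i\in[d]} \|e_i^\top U\|^2 \geq \frac{C(r + \log d)}{d}\right) \leq \frac{1}{d} + d\, e^{-cd}
\end{equation*}
for a universal constant $C>0$. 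Since $r\geq 1$ implies $r + \log d \lesssim r\log d$, this gives $I(U) \lesssim r\log d/d$ on a high-probability event $\mathcal{E}$.

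Finally, to pass from the high-probability bound to the expectation bound, I would use the deterministic fact that $U^\top U = I_r$ forces $\|e_i^\top U\|^2 \leq 1$, so $I(U) \leq 1$ always. Splitting the expectation,
\begin{equation*}
    \E[I(U)] \leq \frac{C(r+\log d)}{d} + \Pr(\mathcal{E}^c) \leq \frac{C(r+\log d)}{d} + \frac{1}{d} + d\,e^{-cd} = O\!\left(\frac{r \log d}{d}\right),
\end{equation*}
which is the claimed bound. The main technical step is the chi-squared tail calculation combined with the union bound; the Gaussian representation is what makes this clean, and the trivial deterministic bound $I(U)\leq 1$ controls the bad-event contribution to the expectation without needing a refined tail analysis.
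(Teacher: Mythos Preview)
Your proof is correct and follows the same high-level blueprint as the paper---represent the Haar matrix through Gaussians, control chi-squared tails, and split the expectation over a good event---but the execution differs in a way worth noting. The paper first swaps the order of $\max_i$ and $\sum_j$ via $\max_i\sum_j U_{ij}^2\le \sum_j\max_i U_{ij}^2$, reducing to the $r=1$ case of a single column uniform on $S^{d-1}$; it then bounds $\mathbb{E}[\max_i x_i^2]\le 2\log d$ by the moment-generating-function trick and controls the denominator $\sum_j x_j^2$ by Chebyshev. You instead keep the full row norm $\|e_i^\top U\|^2\sim\mathrm{Beta}(r/2,(d-r)/2)$, apply Laurent--Massart to both the $\chi^2_r$ numerator and the $\chi^2_d$ denominator, and union-bound over $i$. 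Your route actually produces the slightly sharper intermediate rate $(r+\log d)/d$ before you relax it to $r\log d/d$; the paper's max--sum swap forfeits this additive structure and lands directly at $r\log d/d$. One minor wording issue: in the Beta representation $R_r/R_d$ the numerator and denominator are \emph{not} independent (the denominator is the full sum containing the numerator), so ``ratio of independent chi-squared variables'' is inaccurate. This is harmless for your argument, since the inequality $\Pr(R_r/R_d\ge a/b)\le \Pr(R_r\ge a)+\Pr(R_d\le b)$ needs only the marginal tails, but it is worth stating correctly.
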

Thus, we set $I(U^\star)$ to the order $r\log d/d$ for simplicity.
The proof is given in Appendix \ref{lem: incoherent ap}.
Here we provide a remark on the implication of our assumptions, and we defer a further discussion on how to generalize our main results under weaker assumptions in Remark \ref{rem: general assumption}.
\begin{remark}
\label{remark: assumption}
The three assumptions above can be explained as follows: Assumption \ref{asm: regular} implies that the variances of all dimensions are of the same order. For Assumption \ref{asm: SNR}, we focus on a large noise regime where the noise may hurt the estimation significantly. Here we assume the ratio lies in a constant range, but our theory can easily adapt to the case where $\rho$ has a decreasing order. Specifically, for Theorems \ref{thm: recover PCA}, \ref{thm: recover CL}, \ref{thm: best linear predictor risk CL} and \ref{thm: best linear predictor risk PCA lower bound} presented below, we derive an explicit dependence on $\rho$ of each result in  the appendix. One can check Equations \eqref{dependence of rho AE}, \eqref{dependence of rho CL}, \eqref{dependence on rho: classification}, \eqref{dependence on rho: regression}, \eqref{dependence on rho: classification lb} and \eqref{dependence on rho: regression lb} for details. Assumption \ref{asm: incoherent} implies a stronger correlation among the coordinates of core features, which is the essential property to distinguish them from random noise. 
\end{remark}
Now we are ready to present our first result, showing that the autoencoders are unable to recover the core features in the large-noise regime. Due to the equivalence among PCA, autoencoders, and GANs we presented in Section~\ref{sec: connection to autoencoders},  for brevity, we only focus on autoencoders hereafter.
\begin{theorem}[Recovery Ability of Autoencoders, Lower Bound]
	\label{thm: recover PCA}
	Consider the spiked covariance model \eqref{model: spiked covariance}, under Assumptions \ref{asm: regular}-\ref{asm: incoherent} and $n> d\gg r$, let $W_{\AEN}$ be the learned representation of autoencoders with singular value decomposition $W_{\AEN}=(U_{\AEN}\Sigma_{\AEN}V_{\AEN}^\top)^\top$ (as in Equation \eqref{opt: AE}). If we further assume $\{\sigma_{i}^2\}_{i=1}^d$ are different from each other and $\sigma_{(1)}^2/(\sigma_{(r)}^2-\sigma_{(r+1)}^2)<C_{\sigma}$ for some universal constant $C_{\sigma}$. Then there exist two universal constants $C_\rho>0$, $c\in (0,1)$, such that when $\rho<C_\rho$, we have
	\begin{equation}
	\label{PCA lower bound}
		\mathbb{E}\left\|\sin \Theta\left(U^\star, U_{\AEN}\right)\right\|_F \geq c\sqrt{r}.
	\end{equation}
\end{theorem}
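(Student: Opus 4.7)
The plan is to argue via eigenvalue perturbation theory that when $\rho$ is below a suitable threshold, the top-$r$ eigenspace of the autoencoder's covariance matrix is trapped near the $r$ canonical basis directions associated with the largest noise variances, and then to exploit the incoherence of $U^\star$ to show these directions are uniformly far from the column space of $U^\star$.

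First I would observe that $U_{\AEN}$ is the top-$r$ eigenspace of the empirical covariance $\hat\Sigma_n = n^{-1}X(I_n - n^{-1}\mathbf{1}_n\mathbf{1}_n^\top)X^\top$, whose population limit is $\Sigma^{\rm pop} = \nu^2 U^\star U^{\star\top} + \Sigma$. By standard sub-Gaussian sample-covariance concentration, for $n \gtrsim d$ one has $\|\hat\Sigma_n - \Sigma^{\rm pop}\|_2 \leq \epsilon_n \sigma_{(1)}^2$ with high probability, with $\epsilon_n \to 0$. Let $E_r \in \mathbb{O}_{d,r}$ denote the matrix whose columns are the canonical basis vectors $e_{i_1}, \ldots, e_{i_r}$ picking out the $r$ largest entries of $\{\sigma_i^2\}$; since these $\sigma_i^2$'s are distinct, $E_r$ is precisely the top-$r$ eigenspace of $\Sigma$, and the extra assumption $\sigma_{(1)}^2/(\sigma_{(r)}^2 - \sigma_{(r+1)}^2) < C_\sigma$ together with Assumption~\ref{asm: regular} bounds the eigengap at position $r$ from below by $\sigma_{(1)}^2/C_\sigma$.

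Next I would view $\hat\Sigma_n$ as a perturbation of $\Sigma$ by $\nu^2 U^\star U^{\star\top} + (\hat\Sigma_n - \Sigma^{\rm pop})$, whose spectral norm is at most $(\rho^2 + \epsilon_n)\sigma_{(1)}^2$. For $\rho < C_\rho$ with $C_\rho$ small enough and $n$ large enough, this is strictly less than half the gap, so the Davis--Kahan $\sin\Theta$ theorem yields
\begin{equation*}
    \|\sin\Theta(U_{\AEN}, E_r)\|_F \;\leq\; \sqrt{r}\,\|\sin\Theta(U_{\AEN}, E_r)\|_2 \;\leq\; \frac{2\sqrt{r}\,(\rho^2 + \epsilon_n)\sigma_{(1)}^2}{\sigma_{(1)}^2/C_\sigma} \;\leq\; \tfrac{1}{2}\sqrt{r}.
\end{equation*}
On the other hand, by the incoherence assumption,
\begin{equation*}
    \|E_r^\top U^\star\|_F^2 \;=\; \sum_{j=1}^r \|e_{i_j}^\top U^\star\|^2 \;\leq\; r\,I(U^\star) \;=\; O\!\qty(\tfrac{r^2\log d}{d}) \;=\; o(r),
\end{equation*}
since $d \gg r$, so $\|\sin\Theta(U^\star, E_r)\|_F^2 = r - \|E_r^\top U^\star\|_F^2 \geq (1-o(1))r$. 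Using that $\|\sin\Theta(\cdot,\cdot)\|_F$ is (up to a factor of $\sqrt 2$) the chordal distance $\|U_1 U_1^\top - U_2 U_2^\top\|_F$ on the Grassmannian and hence a genuine metric, the triangle inequality gives
\begin{equation*}
    \|\sin\Theta(U^\star, U_{\AEN})\|_F \;\geq\; \|\sin\Theta(U^\star, E_r)\|_F - \|\sin\Theta(E_r, U_{\AEN})\|_F \;\geq\; (1-o(1))\sqrt{r} - \tfrac{1}{2}\sqrt{r} \;\geq\; c\sqrt{r}
\end{equation*}
on the concentration event. Since the $\sin\Theta$ distance is nonnegative and this event has probability at least $1/2$ for $n$ sufficiently large, the stated bound in expectation follows.

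The main obstacle I anticipate is the tension between operator- and Frobenius-norm bounds in the Davis--Kahan step: the perturbation's Frobenius norm scales as $\nu^2\sqrt{r}$ plus a term from $\hat\Sigma_n - \Sigma^{\rm pop}$ that is hard to control tightly in high dimensions, so one must instead use the operator-norm version and pay a factor of $\sqrt{r}$ to convert back. This is precisely why the gap-ratio assumption $\sigma_{(1)}^2/(\sigma_{(r)}^2 - \sigma_{(r+1)}^2) < C_\sigma$ is needed---it forces the ratio of perturbation to gap to be $O(\rho^2)$, which, after the $\sqrt{r}$ conversion, still leaves a $\sqrt{r}$ bound with a prefactor smaller than one when $\rho < C_\rho$, and this is what allows the $\Omega(\sqrt{r})$ lower bound coming from incoherence to survive.
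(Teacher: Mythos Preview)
Your proposal is correct and follows essentially the same three-step strategy as the paper: show the empirical covariance is close in operator norm to the noise covariance $\Sigma$, apply Davis--Kahan with the gap assumption $\sigma_{(1)}^2/(\sigma_{(r)}^2-\sigma_{(r+1)}^2)<C_\sigma$ to conclude $U_{\AEN}$ is close to the canonical-basis eigenspace $U_\Sigma$, then use incoherence plus the triangle inequality for $\|\sin\Theta(\cdot,\cdot)\|_F$ to lower-bound the distance to $U^\star$. The only cosmetic difference is that the paper works directly with expected operator norms (via concentration lemmas for Wishart-type and cross terms) rather than a high-probability event, which avoids the final conversion step but is otherwise equivalent.
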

The proof is given in Appendix \ref{thm: recover PCA ap}. The condition $d \gg r$ means that there exists a sufficiently small constant $c>0$ independent of $d$ and $r$ such that $r / d < c$ holds. The additional assumptions $\{\sigma_{i}^2\}_{i=1}^d$ are different from each other and $\sigma_{(1)}^2/(\sigma_{(r)}^2-\sigma_{(r+1)}^2)<C_{\sigma}$ for some universal constant $C_{\sigma}$ are made to ensure the identifiability of top-$r$ eigenspace. We need these conditions to guarantee the uniqueness of $U_{\AEN}$. As an extreme example, the top-$r$ eigenspace of the identity matrix can be any $r$-dimensional subspace and thus not unique. To avoid discussing such arbitrariness of the output, we make these assumptions to guarantee the separability of the eigenspace.

Then we investigate the feature recovery ability of the self-supervised contrastive learning approach. 

\begin{theorem}[Recovery Ability of Contrastive Learning, Upper Bound]
    \label{thm: recover CL}
	Under the spiked covariance model \eqref{model: spiked covariance}, random masking augmentation in Definition \ref{aug: random masking}, Assumptions \ref{asm: regular}-\ref{asm: incoherent} and $n> d\gg r$, let $W_{\CL}$ be any solution that minimizes Equation \eqref{loss: self contrastive}, and denote its singular value decomposition as $W_{\CL}=(U_{\CL}\Sigma_{\CL}V_{\CL}^\top)^\top$, then we have
	\begin{equation}
	\label{CL upper 1}
		\mathbb{E}\left\|\sin \Theta\left(U^\star, U_{\CL}\right)\right\|_F \lesssim\frac{r^{3/2}}{d}\log d+\sqrt{\frac{dr}{n}}. 
	\end{equation}
\end{theorem}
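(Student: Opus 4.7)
The plan is to invoke Proposition \ref{prop: diagonal contrast} to identify $U_{\CL}$ with the top-$r$ eigenspace of the data-dependent matrix
\begin{equation*}
\hat M := \Delta(XX^\top) - \tfrac{1}{n-1}X(1_n 1_n^\top - I_n)X^\top = \tfrac{n}{n-1}XX^\top - D(XX^\top) - \tfrac{1}{n-1}(X 1_n)(X 1_n)^\top,
\end{equation*}
and then to bound $\|\hat M/n - \nu^2 U^\star U^{\star\top}\|_2$ directly, so that a single Davis--Kahan step together with the conversion $\|\sin\Theta\|_F \leq \sqrt r\,\|\sin\Theta\|_2$ delivers the claim. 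Since $U^\star$ is the top-$r$ eigenspace of $\nu^2 U^\star U^{\star\top}$, whose eigengap between the $r$-th and $(r+1)$-th eigenvalues is exactly $\nu^2$, Davis--Kahan will yield $\|\sin\Theta(U^\star, U_{\CL})\|_F \lesssim \tfrac{\sqrt r}{\nu^2}\,\|\hat M/n - \nu^2 U^\star U^{\star\top}\|_2$ once we verify that the perturbation is much smaller than $\nu^2$. I will split this target perturbation into a deterministic bias and a stochastic variance, and bound each separately.

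For the bias, a direct computation using the linearity of $\Delta$ and the fact that the two copies of $\nu^2 U^\star U^{\star\top}+\Sigma$ arising from $\tfrac{n}{n-1}XX^\top$ and $\tfrac{1}{n-1}(X 1_n)(X 1_n)^\top$ cancel in expectation, together with $\Delta(\Sigma)=0$, gives
\begin{equation*}
M^\star := \tfrac{1}{n}\E[\hat M] = \nu^2 \Delta(U^\star U^{\star\top}) = \nu^2 U^\star U^{\star\top} - \nu^2 D(U^\star U^{\star\top}).
\end{equation*}
Hence by Assumption \ref{asm: incoherent}, $\|M^\star - \nu^2 U^\star U^{\star\top}\|_2 = \nu^2 \max_i \|e_i^\top U^\star\|^2 = \nu^2 I(U^\star) \lesssim \nu^2 r\log d/d$. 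This cancellation is the algebraic reason random masking beats PCA: noise that is uncorrelated across coordinates lives on the diagonal, which $\Delta$ throws away.

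For the variance, I decompose $\hat M/n - M^\star = \Delta(XX^\top/n - \E[xx^\top]) - \tfrac{1}{n(n-1)}[(X 1_n)(X 1_n)^\top - XX^\top]$ and use $\|\Delta(A)\|_2 \leq 2\|A\|_2$. Standard sub-Gaussian covariance concentration gives $\|XX^\top/n - \E[xx^\top]\|_2 \lesssim \nu^2\sqrt{d/n}$ with high probability under $n>d$. For the U-statistic piece, Hanson--Wright applied to the quadratic form $\|X 1_n\|^2$ (whose expectation is $n\,\mathrm{tr}(\nu^2 U^\star U^{\star\top}+\Sigma) = O(nd\nu^2)$) yields $\|(X 1_n)(X 1_n)^\top\|_2/(n(n-1)) \lesssim d\nu^2/n$, while $\|XX^\top\|_2/(n(n-1)) \lesssim \nu^2/n$; under $n>d$ the covariance-concentration term dominates, giving $\|\hat M/n - M^\star\|_2 \lesssim \nu^2\sqrt{d/n}$.

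Combining bias and variance produces $\|\hat M/n - \nu^2 U^\star U^{\star\top}\|_2 \lesssim \nu^2(\sqrt{d/n} + r\log d/d)$, and feeding this into the Davis--Kahan bound yields $\|\sin\Theta(U^\star, U_{\CL})\|_F \lesssim \sqrt{rd/n} + r^{3/2}\log d/d$ with high probability; conversion to an expectation bound uses the trivial a priori estimate $\|\sin\Theta\|_F \leq \sqrt r$ on a polynomially-small failure event. The main obstacle will be the U-statistic concentration: the rank-one term $(X 1_n)(X 1_n)^\top$ has natural operator norm of order $nd\nu^2$, so the prefactor $1/(n(n-1))$ is crucial for keeping this piece subdominant to $\nu^2\sqrt{d/n}$, and this is precisely where the hypothesis $n>d$ enters.
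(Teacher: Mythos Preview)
Your proposal is correct and follows essentially the same route as the paper: identify $U_{\CL}$ via Proposition~\ref{prop: diagonal contrast}, control the operator-norm perturbation $\|\hat M/n-\nu^2 U^\star U^{\star\top}\|_2$ by combining the incoherence bound on the diagonal-deletion bias with the inequality $\|\Delta(A)\|_2\le 2\|A\|_2$ and sub-Gaussian concentration for the stochastic part, and conclude by Davis--Kahan. The only difference is organizational---the paper expands $X=U^\star Z+E$ and bounds five resulting cross terms in expectation, whereas your bias--variance packaging working with $X$ directly is more compact but needs a covariance-concentration result for sub-Gaussian \emph{vectors} (e.g.\ Vershynin's Theorem~4.6.1) rather than the independent-entry lemma the paper cites; also note that since Davis--Kahan is a deterministic inequality you can take expectations of both sides directly, bypassing the high-probability/failure-event conversion.
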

The proof is given in Appendix \ref{thm: recover CL ap}. The two terms in equation \eqref{CL upper 1} can be explained as follows: the first term is due to the shift between the distributions of the augmented data and the original data. Specifically, the random masking augmentation generates two views with disjoint nonzero coordinates and thus can mitigate the influence of random noise on the diagonal entries in the covariance matrix. However, such augmentation slightly hurts the estimation of core features. This bias, appearing as the first term in Equation \eqref{CL upper 1}, is measured by the incoherent constant defined in Equation \eqref{eq: incoherent}. The second term corresponds to the estimation error of the population covariance matrix. 

Theorems \ref{thm: recover PCA} and \ref{thm: recover CL} characterize the difference in feature recovery ability between autoencoders and contrastive learning. The autoencoders fail to recover most of the core features in the large-noise regime since $\|\sin\Theta(U,U^\star)\|_F$ has a trivial upper bound $\sqrt{r}$. In contrast, with the help of data augmentation, the contrastive learning approach mitigates the corruption of random noise while preserving core features. As $n$ and $d$ increase, it yields a consistent estimator of core features and 
further leads to better performance in the in-domain downstream tasks, as shown in the next section.
\begin{remark}
\label{rem: general assumption} 
    Here we discuss the potential generalization of our results to the setting with weaker assumptions. Intuitively speaking, the random masking augmentation exploits the prior knowledge that the core features in the original signal are more structural across different coordinates compared with the random noise. Thus the essential requirements are 
    \begin{enumerate}
        \item Noise is less correlated between different coordinates compared with core features.
        \item Core features and noise are very different, i.e., $\|\sin\Theta(U^\star,U_\Sigma)\|_F$ is large.
    \end{enumerate}
    Those two requirements correspond to the diagonal assumption on $\Sigma$ and incoherent assumption on $U^\star$ (Assumption \ref{asm: incoherent}). In particular, the latter is to give a lower bound for $\|\sin\Theta(U^\star,U_\Sigma)\|_F$ when $\Sigma$ is diagonal and heteroskedasticity. For more general $\Sigma$ and $U^\star$, it suffices to assume $\frac{\|\Delta(\Sigma)\|_2}{\nu^2}=o(1)$, and $\|\sin\Theta(U^\star,U_\Sigma)\|_F=\Omega(\sqrt{r})$, and we can still draw a similar comparison under these assumptions. Notice that by Lemma \ref{lem: incoherent}, when $U^\star$ is randomly chosen we will immediately have $\E\|\sin\Theta(U^\star,U_\Sigma)\|_F=(1-o(1))\sqrt{r}$. 
Similar arguments also apply to all of the later results in this paper and we omit them for simplicity.
\end{remark}

\begin{remark}\label{rem: masked ae} Similar random masking augmentation (as in Definition \ref{aug: random masking}) can also apply to autoencoders. Although directly applying this augmentation would not work as well since it will not affect the optimal solution (see discussion in Appendix \ref{sec: MAE}), an alternative strategy is to reconstruct the whole data from the masked one. This method was originally proposed as denoising autoencoders (DAEs) for general augmentation \cite{vincent2008extracting}, and was proven to be powerful with masking augmentation in a recently proposed representation learning method, masked autoencoders (MAEs) \citep{he2021masked}. 
    DAEs are a variant of autoencoders that are trained to reconstruct the original image from randomly masked patches. It has been found that DAEs (especially MAEs) outperform other self-supervised methods like MoCo v3, DINO, and BEiT after fine-tuning \citep{he2021masked}.
    
    More specifically, under the same setup described in Section~\ref{sec: prelim}, let $A$ be the random masking augmentation defined in Definition \ref{aug: random masking}. We adopt the symmetric linear encoders and decoders.
    Given samples $X = [x_1, \dots, x_n] \in \R^{d \times n}$,
    we formally define the loss minimization problem\footnote{In \citep{he2021masked}, the loss function is computed on masked coordinates only, but as the authors noted, ``This choice is purely result-driven: computing the loss on all pixels leads to
a slight decrease in accuracy (e.g., 0.5\%)." Hence we will analyze the loss with respect to all coordinates for simplicity. } of DAEs as
    \begin{align}
        \min_{W \in \R^{r \times d}: W W^\top = 2 I_r} \frac{1}{n} \mathbb{E}_A \qty[ \| W^\top W A X - X\|_F^2 ].\label{loss: MAE}
    \end{align}
    Notice that the DAEs may not preserve the norm of the input since $\mathbb{E}_A[\|A x_i\|^2] = (1/2) \|x_i\|^2$.
    As a result, we optimize the loss under the scaled constraint $W W^\top = 2 I_r$. 
    
    Then, we claim that under the same conditions as in Theorem \ref{thm: recover CL}, DAEs behave similarly to contrastive learning: 
    Let $W_{\DAE}$ be any solution that minimizes equation \eqref{loss: MAE}, and denote its singular value decomposition as $W_{\DAE}=(U_{\DAE}\Sigma_{\DAE}V_{\DAE}^\top)^\top$, then
    we have (the proof is given in Appendix \ref{proof: masked 1}.)
	\begin{equation}
		\mathbb{E}\left\|\sin \Theta\left(U^\star, U_{\DAE}\right)\right\|_F \lesssim\frac{r^{3/2}}{d}\log d+\sqrt{\frac{dr}{n}}. \label{eq: masked ae}
	\end{equation}
    From Theorem \ref{thm: recover PCA}, we know that under high dimensional settings with large sample sizes, DAEs (or masked autoencoders) significantly outperform classic autoencoders. {Moreover, compared to Theorem \ref{thm: recover CL}, the upper bounds of DAEs are the same as contrastive learning with random masking augmentations. We also provide experimental results on synthetic datasets to verify this result in Appendix \ref{proof: masked 1}.  Although they have similar performance in our linear representation framework because both of them exploit the masking views to eliminate noise, the difference could arise from other aspects such as network architecture and training algorithms, for example, \cite{he2021masked} used a vision Transformer \citep{dosovitskiy2020image} while \cite{chen2020simple} used a ResNet \citep{He2016DeepRL}.} 
    
    \if0
    \begin{theorem}[Recovery Ability of Denoising Autoencoders, Upper Bound]\label{thm: masked 1}
    	Under the spiked covariance model \eqref{model: spiked covariance}, random masking augmentation in Definition \ref{aug: random masking}, Assumptions \ref{asm: regular}-\ref{asm: incoherent} and $n> d\gg r$, let $W_{\DAE}$ be any solution that minimizes Equation \eqref{loss: MAE}, and denote its singular value decomposition as $W_{\DAE}=(U_{\DAE}\Sigma_{\DAE}V_{\DAE}^\top)^\top$, then we have
    	\begin{equation*}
    		\mathbb{E}\left\|\sin \Theta\left(U^\star, U_{\DAE}\right)\right\|_F \lesssim\frac{r^{3/2}}{d}\log d+\sqrt{\frac{dr}{n}}. 
    	\end{equation*}
    \end{theorem}
    The proof is given in Appendix \ref{proof: masked 1}.
    From Theorem \ref{thm: recover PCA}, we know that under high dimensional setting with large sample size, masked autoencoders significantly outperform classic autoencoders.
    \fi
\end{remark}

\if0
\lm{A same remark with slightly different explanation}
\begin{remark}
    \citet{he2021masked} demonstrated that masked autoencoders outperform other self-supervised methods on ImageNet 1K.
    The masked autoencoders are trained to reconstruct the original image from randomly masked patches.
    
    More specifically, let $A$ be the random masking augmentation defined in Definition\ref{aug: random masking}.
    Given samples $X = [x_1, \dots, x_n] \in \R^{d \times n}$,
    we formally define the loss minimization problem of masked autoencoders as
    \begin{align}
        \min_{(W_1, W_2) \in \mathcal{W}} \frac{1}{n} \mathbb{E}_A \qty[ \| W^\top W A X - X\|_F^2 ].\label{loss: MAE}
    \end{align}
    For simplicity we assume that $W_2 \in \mathbb{O}_{r,d}$ and that $W_1^\top$ is proportional to $W_2$; $\mathcal{W} = \mathcal{W}_c := \{(W_1, W_2) : W_1 = c W_2^\top, W_2 \in \mathbb{O}_{r,d}\}$ for some $c \in \R$.
    Note that under the constraint $\mathcal{W}$, the loss can be decomposed into two parts: (a) minimizing the norm of restored images from masked images $\|W^\top W A X\|_F^2$ and (b) maximizing the similarity between the original images and restored images from masked images $\langle W^\top W A X, X \rangle$. Thus we can interpret that the parameter $c$ controls the tradeoff between (a) and (b). (a) forces algorithm not to learn the augmented data.
    
    Then, we claim that under the same conditions as in Theorem \ref{thm: recover CL}, denoising autoencoders behave similarly to contrastive learning; 
    Let $W_{\DAE}$ be any solution that minimizes Equation \eqref{loss: MAE}, and denote its singular value decomposition as $W_{\DAE}=(U_{\DAE}\Sigma_{\DAE}V_{\DAE}^\top)^\top$, then, with $c = 2$,
    the feature recover ability can be upper bounded as
	\begin{equation*}
		\mathbb{E}\left\|\sin \Theta\left(U^\star, U_{\DAE}\right)\right\|_F \lesssim\frac{r^{3/2}}{d}\log d+\sqrt{\frac{dr}{n}}. 
	\end{equation*}
    The proof is given in Appendix \ref{proof: masked 1}. Notice that the order of upper bound is exactly the same as the bound in Theorem \ref{thm: recover CL}.
    From Theorem \ref{thm: recover PCA}, we know that under high dimensional setting with large sample size, masked autoencoders significantly outperform classic autoencoders.
\end{remark}
\fi

\if0
{\color{red}
\subsubsection{Relation to Masked Autoencoders}\label{sec: masked ae}

\begin{enumerate}
    \item Recent advances \citet{he2021masked}
    \item We show that under certain conditions, the masked autoencoders are strongly connected to diagonal-deletion PCA, and hence contrastive learning.
\end{enumerate}

Take $p \in (0, 1)$. We consider the following random masking augmentation:
\begin{align}
    A = \diag(a_1, \dots, a_d),\ \ a_i \sim \mathrm{Ber}(p).
\end{align}
Given $A$, they predict the masked pixels from unmasked pixels.
The masked pixels are input into encoders.

Take $n$ i.i.d. samples $x_1, \dots, x_n \in \R^d$.
Set the loss function as $l_2(x, y) = \|x - y\|^2$. 
Let $W_1 \in \R^{d \times r}$ and $W_2 \in \R^{r \times d}$ be the representations for decoder and encoder, respectively.
Then the loss we optimize becomes
\begin{align}
    \frac{1}{n}\sum_{i=1}^n \|(I - A) x_i - W_1 W_2 A x_i\|^2.
\end{align}

If we take augmentations many times, we are essentially considering the following type of loss:
\begin{align}
    \frac{1}{n}\sum_{i=1}^n \mathbb{E}_A[\|(I - A) x_i - W_1 W_2 A x_i\|^2]. \label{eq: MAE loss}
\end{align}

Note
\begin{footnotesize}
\begin{align}
    &\mathbb{E}_A[\|(I - A) x_i - W_1 W_2 A x_i\|^2]\\
    &\quad= \mathbb{E}_A\qty[\tr((I - A) x_i x_i^\top (I - A) - (I - A) x_i x_i^\top A W_2^\top W_1^\top - W_1 W_2 A x_i x_i^\top (I - A) + W_1 W_2 A x_i x_i^\top A W_2^\top W_1^\top)]\\
    &\quad= (const.) + \tr(- p (1 - p) \Delta(x_i x_i^\top) W_2^\top W_1^\top - p ( 1 - p ) W_1 W_2 \Delta(x_i x_i^\top) + p W_1 W_2 D(x_i x_i^\top) W_2^\top W_1^\top + p^2 W_1 W_2 \Delta(x_i x_i^\top) W_2^\top W_1^\top),\end{align}
\end{footnotesize}
where $D(A)$ denotes a matrix made from setting the off-diagonal part of $A$ as $0$ and $\Delta(A) := A - D(A)$.
Let $\hat \Sigma_x := (1/n) X X^\top$. 
For notational brevity, define $\hat M = D(\hat \Sigma_x) + p \Delta(\hat \Sigma_x)$ and $M = D(\Sigma_x) + p \Delta(\Sigma_x)$.

Note that since $M$ is a convex combination of positive definite matrices $D(\Sigma_x)$ and $\Sigma_x$, we can define the square root of $M$. We can also define the square root of $\hat M$.
Then, \eqref{eq: MAE loss} becomes
\begin{align}
    &\tr(- p (1 - p) \Delta(\hat\Sigma_x) W_2^\top W_1^\top - p ( 1 - p ) W_1 W_2 \Delta(\hat\Sigma_x) + p W_1 W_2 \hat M W_2^\top W_1^\top) + (const.)\\
    &\quad= p \tr( (W_1 W_2 \hat M^{1/2} - (1-p) \Delta(\hat\Sigma_x) \hat M^{-1/2} ) (W_1 W_2 \hat M^{1/2} - (1-p) \Delta(\hat\Sigma_x) \hat M^{-1/2} )^\top ) + (const.)\\
    &\quad= p \|W_1 W_2 \hat M^{1/2} - (1-p) \Delta(\hat\Sigma_x) \hat M^{-1/2}\|_F^2 + (const.),
\end{align}

Define $\SVD_r(A)$ as the rank-$r$ approximation of the matrix $A$.
From Eckhart-Young-Mirsky theorem, the optimal solution for \eqref{eq: MAE loss} becomes
\begin{align}
    W_1 W_2 \hat M^{1/2} = (1 - p) \SVD_r(\Delta(\hat\Sigma_x) \hat M^{-1/2}).\label{eq: EYM mae1}
\end{align}
Define $P_r(A)$ as the top-$r$ right singular vectors of $A$.
From \eqref{eq: EYM mae1}, we obtain that $W_1 W_2$ should be aligned with $P_r(\Delta(\hat \Sigma_x) \hat M^{-1/2}) \hat M^{-1/2}$. Equivalently,
\begin{align}
    \sin\Theta\qty(P_r(W_1 W_2), P_r( \SVD_r(\Delta(\hat \Sigma_x) \hat M^{-1/2}) \hat M^{-1/2} )) = 0.\label{eq: EYM mae2}
\end{align}

In particular, if $p$ is small, then $M \approx \Sigma$. Further if $\Sigma$ is homoskedastic, i.e. $\Sigma = \sigma^2 I$, then $\hat M \approx \sigma^2 I$ and $P_r( \SVD_r(\Delta(\hat \Sigma_x) \hat M^{-1/2}) \hat M^{-1/2} ) \approx P_r(\Delta(\hat \Sigma_x))$. This implies that masked autoencoders can be seen as diagonal-deletion PCA.

\begin{theorem}[Feature Learning via Masked Autoencoders]\label{thm: masked 1}
    Suppose Assumptions \ref{asm: incoherent} and \ref{asm: regular} hold and $n / \log n \gg d \gg r \log d$. Let $W_2$ be the learned representation of masked autoencoders with singular value decomposition $W_2 = (U_{MAE} \Sigma_{MAE} V_{MAE}^\top)^\top$. Also, let $U^\natural$ be the top-$r$ right singular vectors of $T M^{-1/2}$, where $T$ is the rank-$r$ approximation of $\Delta(X X^\top) M^{-1/2}$. Then,
    \begin{align}
        \mathbb{E} \|\sin\Theta( U_{MAE}, U^\natural )\|_F \lesssim r \sqrt{\frac{d \log n}{n}}.
    \end{align}
\end{theorem}

\begin{theorem}[Feature Learning via Masked Autoencoders 3]\label{thm: masked 2}
    Suppose Assumptions \ref{asm: incoherent} and \ref{asm: regular} hold and $n / \log n \gg d \gg r \log d$. Let $W_2$ be the learned representation of masked autoencoders with singular value decomposition $W_2 = (U_{MAE} \Sigma_{MAE} V_{MAE}^\top)^\top$. If $\Sigma = \sigma^2 I_d$ and $p = O(r^{3/2} \log d / d \vee \sqrt{r d \log n / n})$, then,
    \begin{align}
        \mathbb{E} \|\sin\Theta( U_{MAE}, U^\star )\|_F \lesssim \frac{r^2 \log d}{d} + r\sqrt{\frac{d \log n}{n}}.
    \end{align}
\end{theorem}

\begin{remark}
    As long as $p > 0$, the smaller $p$ is, the better performance masked autoencoders achieve.
\end{remark}
}
\fi

\subsection{Performance on In-Domain Downstream Tasks}
\label{sec: downstream}
In the previous section, we have seen that contrastive learning can recover the core feature effectively. In practice, we are interested in using the learned features on in-domain downstream tasks. \citet{he2020momentum} experimentally showed the overwhelming performance of linear classifiers trained on representations learned with contrastive learning against several supervised learning methods in those in-domain downstream tasks. 



Following the recent success, 
here we evaluate the in-domain downstream performance of simple predictors, which take a linear transformation of the representation as an input. Let $W_{\CL}$ and $W_{\AEN}$ be the learned representations based on train data $X \in \mathbb{R}^{n\times d}$.
We observe a new signal $\check x = U^\star \check z + \check \xi$ independent of $X$ following the spiked covariance model \eqref{model: spiked covariance}.
For simplicity, assume $\check z$ follows $N(0, \nu^2 I_r)$ and is independent of $\check \xi$. 
We consider two major types of in-domain downstream tasks: classification and regression.
For the binary classification task, we observe a new supervised sample $\check y$ following the binary response model:
\begin{align}\label{model: binary classification}
    \check y | \check z &\sim \text{Ber}(F(\langle \check z, w^\star\rangle/\nu)),
\end{align}
where $F: \mathbb{R} \to [0, 1]$ is a known monotone increasing function satisfying $1 - F(u) = F(-u)$ for any $u \in \mathbb{R}$ , and $w^\star \in \mathbb{R}^r$ is a unit vector of coefficients. 
Notice that our model \eqref{model: binary classification} includes a logistic model (when $F(u) = 1/(1 + e^{-u})$) and probit models (when $F(u) = \Phi(u)$, where $\Phi$ is the cumulative distribution function of the standard normal distribution.) We can also interpret model \eqref{model: binary classification} as a shallow neural network model with width $r$ for binary classification.
For the regression task, we observe a new supervised sample $\check y$ following the linear regression model:
\begin{align}\label{model: spiked covariance regression}
    \check y &= \langle \check z, w^\star\rangle/\nu + \check \epsilon,
\end{align}
where $\check \epsilon \sim (0, \sigma_\epsilon^2)$ is independent of $\check z$, , and $w^\star \in \mathbb{R}^r$ is a unit vector of coefficients as before.
We can interpret this model as a principal component regression model (PCR) \citep{jolliffe1982note} under standard error-in-variables settings\footnote{
In error-in-variables settings, the bias term of the measurement error appears in prediction and estimation risk. Since our focus lies in proving a better performance of contrastive learning against autoencoders, we ignore the unavoidable bias term here by considering the excess risk. 
}, where we assume that the coefficients lie in a low-dimensional subspace spanned by column vectors of $U^\star$. We either estimate or predict the signal based on the observed samples contaminated by the measurement error $\check\xi$.
For details of PCR in error-in-variables settings, see, for example, \citet{cevid2020spectral,agarwal2020principal,bing2021prediction}.

In classification setting, we specify $0$-$1$ loss, that is, $\ell_c(\delta) \triangleq \1\{\check y \neq \delta(\check x)\}$ for some predictor $\delta$ taking values in $\{0, 1\}$.
For regression task, we employ the squared error loss $\ell_r(\delta) \triangleq (\check y - \delta(\check x))^2$.
Based on some learned representation $W$, we consider a class of linear predictors. Namely, $\delta_{W, w}(\check x) \triangleq \1\{F(w^\top W \check x) \geq 1/2\}$ for classification task and $\delta_{W, w}(\check x) \triangleq w^\top W\check x$ for regression task, where $w \in \mathbb{R}^r$ is a weight vector $w \in \mathbb{R}^r$. Note that the learned representation depends only on unsupervised samples $X$.
Let $\mathbb{E}_{\mathcal{D}}[\cdot]$ and $\mathbb{E}_{\mathcal{E}}[\cdot]$ the expectations with respect to $(X, Z)$ and $(\check y, \check x, \check z)$, respectively.

Our goal as stated above is to bound the prediction risk of predictors $\{\delta_{W, w} : w \in \mathbb{R}^r\}$ constructed upon the learned representations $W_{\CL}$ and $W_{\AEN}$, that is, the quantity $\inf_{w \in \mathbb{R}^r} \mathbb{E}_{\mathcal{E}}[\ell(\delta_{W_{\CL}, w})]$ and $\inf_{w \in \mathbb{R}^r} \mathbb{E}_{\mathcal{E}}[\ell(\delta_{W_{\AEN}, w})]$.

Now we state our results on the performance of the in-domain downstream prediction task.
\begin{theorem}[Excess Risk for In-Domain Downstream Task: Upper Bound]\label{thm: best linear predictor risk CL}
    Suppose the conditions in Theorem \ref{thm: recover CL} hold. Then, for the regression task, we have 
    \begin{align*}
		\mathbb{E}_{\mathcal{D}}[\inf_{w\in\mathbb{R}^r} \mathbb{E}_{\mathcal{E}}[\ell_r(\delta_{W_{\CL}, w})]-\inf_{w\in\mathbb{R}^r}\mathbb{E}_{\mathcal{E}}[\ell_r(\delta_{U^{\star \top}, w})] \lesssim \frac{r^{3/2}}{d}\log d + \sqrt{\frac{dr}{n}},
	\end{align*}
    and for the classification task,
    \begin{align*}
        \mathbb{E}_{\mathcal{D}}[\inf_{w \in \mathbb{R}^r} \mathbb{E}_{\mathcal{E}}[\ell_c(\delta_{W_{\CL}, w})] - \inf_{w \in \mathbb{R}^r} \mathbb{E}_{\mathcal{E}}[\ell_c(\delta_{U^{\star \top}, w})] = O\qty(\frac{r^{3/2}}{d} \log d + \sqrt{\frac{dr}{n}}) \wedge 1.
    \end{align*}
\end{theorem}
The proofs are given in Appendix \ref{thm: best linear predictor risk PCA lower bound restatement}.

This result shows that the price of estimating $U^\star$ by contrastive learning on an in-domain downstream prediction task can be made small in the case where the core feature lies in a relatively low-dimensional subspace, and the number of samples is relatively large compared to the ostensible dimension of data.

However, the in-domain downstream performance of autoencoders is not as good as contrastive learning.
We obtain the following lower bound for the in-domain downstream prediction risk with the autoencoders.
\begin{theorem}[Excess Risk for In-Domain Downstream Task: Lower Bound)]\label{thm: best linear predictor risk PCA lower bound}
	Suppose the conditions in Theorem \ref{thm: recover PCA} hold.
	Assume $r \leq r_c$ holds for some constant $r_c > 0$.
	Additionally assume that $\rho = \Theta(1)$ is sufficiently small and $n \gg d \gg r$. Then,
    For the regression task,
    \begin{align*}
        \mathbb{E}_{\mathcal{D}}[\inf_{w\in\mathbb{R}^r} \mathbb{E}_{\mathcal{E}}[\ell_r(\delta_{U_{\AEN}, w})]-\inf_{w\in\mathbb{R}^r}\mathbb{E}_{\mathcal{E}}[\ell_r(\delta_{U^\star, w})] \geq c_c',
    \end{align*}
    and for classification task, if $F$ is differentiable at $0$ and $F'(0) > 0$, then
    \begin{align*}
        \mathbb{E}_{\mathcal{D}}[\inf_{w \in \mathbb{R}^r} \mathbb{E}_{\mathcal{E}}[\ell_c(\delta_{U_{\AEN}, w})] - \inf_{w \in \mathbb{R}^r} \mathbb{E}_{\mathcal{E}}[\ell_c(\delta_{U^\star, w})] \geq c_r',
    \end{align*}
    where $c_r' > 0$ and $c_c' > 0$ are constants independent of $n$ and $d$.
\end{theorem}
The proof is given in Appendix \ref{thm: best linear predictor risk PCA lower bound restatement}. The condition $n \gg d \gg r$ means that there exists a sufficiently small constant $c>0$ independent of $n$, $d$ and $r$ such that $d / n \vee r / d < c$ holds.

The constant $c'$ appearing in Theorem \ref{thm: best linear predictor risk PCA lower bound} is a constant term independent of $d$ and $n$. 
Thus, when $d$ is sufficiently large compared to $r$ and $d/n$ is small, the upper bound of in-domain downstream task performance via contrastive learning in Theorem \ref{thm: best linear predictor risk CL} is smaller than the lower bound of in-domain downstream task performance via autoencoders.
The assumption of $r \leq r_c$ in Theorem \ref{thm: best linear predictor risk PCA lower bound} is assumed for clarity of presentation. Using the same techniques in the proof of Theorem \ref{thm: best linear predictor risk PCA lower bound}, one can obtain a constant lower bound for autoencoders with slightly stronger assumptions, for example, $\rho^2 = O(1/\log d)$ with $n \gg dr$, without assuming $r \leq r_c$. Our theory can be adapted to both of these assumptions.
Our results support the empirical success of contrastive learning.

\if0
Notice that the lower bound in theorem 3.1 takes the form of $c\sqrt{r}$. I think it will be consistent if you also show the order of $r$ in thm 3.4. By the way I think in lemma B9 it should be $cr$ rather than $r-c$. Could you please check that?

As for downstream task, the risk lower bound (Thm3.4) will not increase as r increases, which happens in Thm3.1. This is because of the lower bound in page 32(just before Prop. B.1).

I guess the intuituion behind this is that in Theorem 3.1, we measure the frobenius norm of sine distance, which depends on r. but in donwnstream task, we need to bound the risk by (essentially) 2 norm.
But why do we need to assume $r<r_c$ in theorem 3.4?
intuitively, $r < r_c$ is assumed because it is necessary for translating the Frobienius norm result Thm3.1 into downstream task result. The lower bound in page 32(just before Prop B.1) reflects this fact. This lower bound is something like $c - c' r$, so in order to make this lower bound nontrivial, we need the condition $r < r_c$
\fi

\if0
\begin{figure}
	\centering
	\begin{subfigure}[b]{0.6\textwidth}
	    \centering
    	\includegraphics[width=0.4\linewidth]{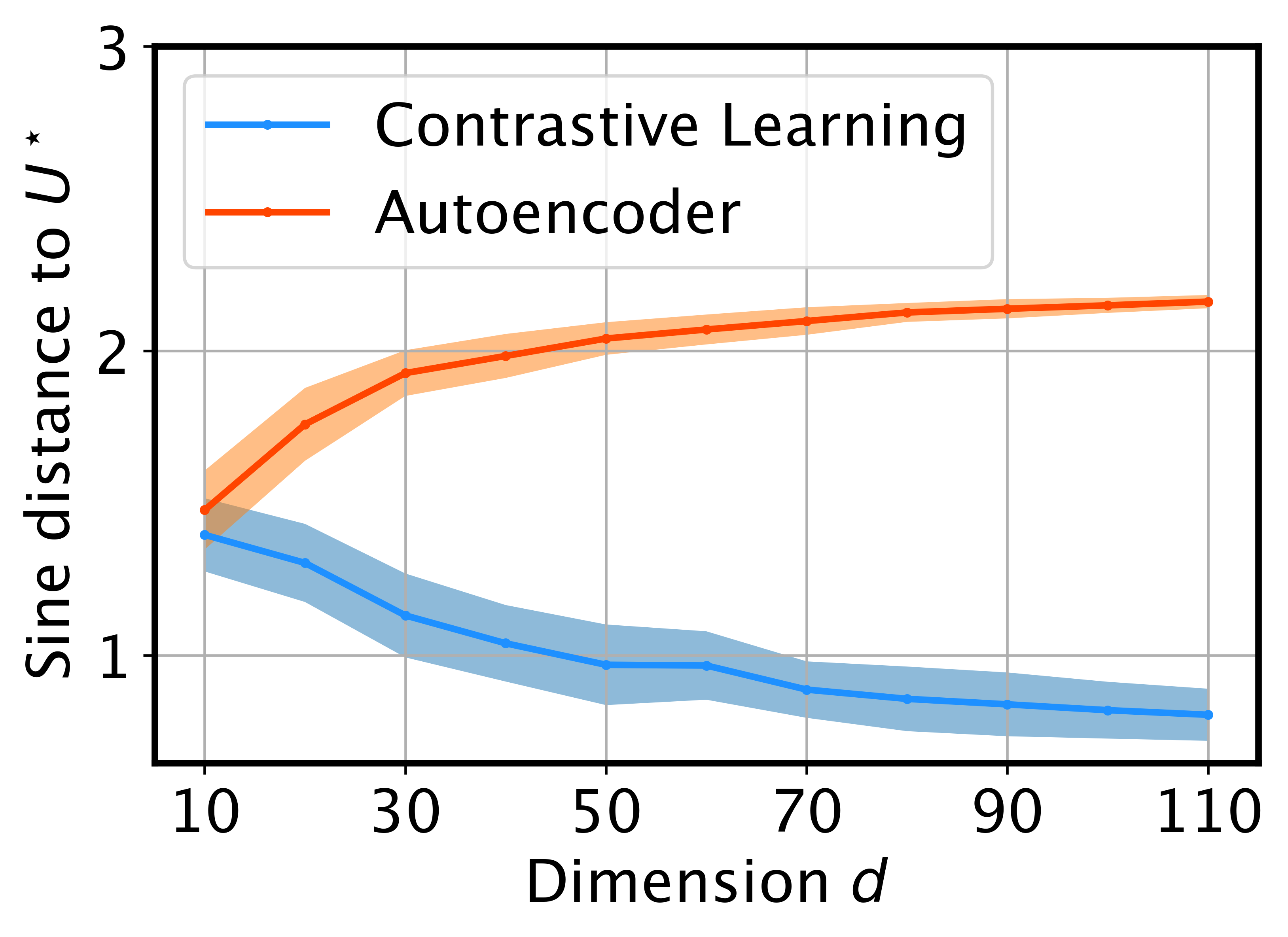}
    	\includegraphics[width=0.4\linewidth]{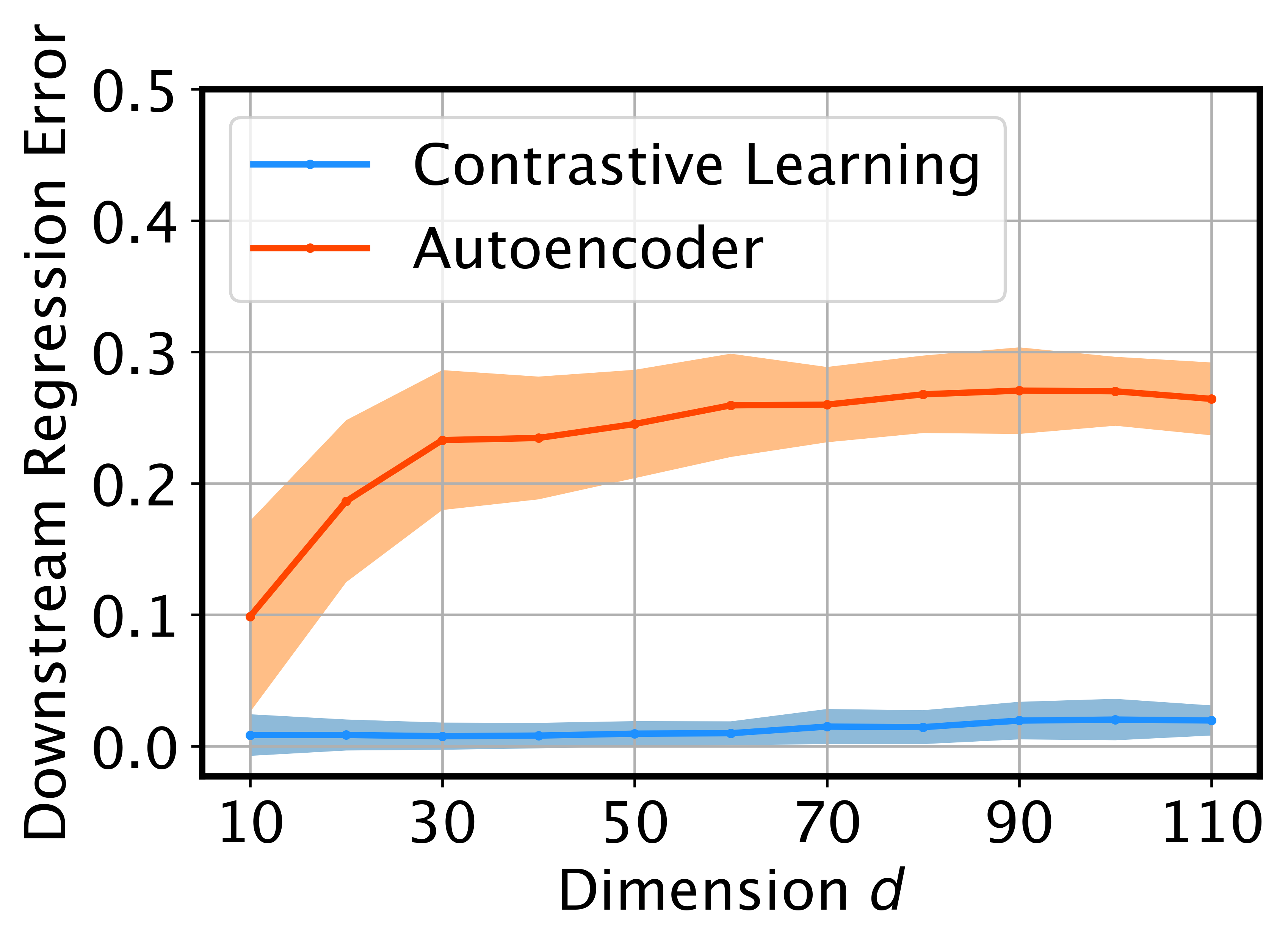}
        \captionsetup{width=.9\linewidth}
	    \caption{Comparison of in-domain downstream task performance between contrastive learning and autoencoders.  }
    \end{subfigure}
	\begin{subfigure}[b]{0.33\textwidth}\label{fig: }
	    \centering
	    \includegraphics[width=0.8\linewidth]{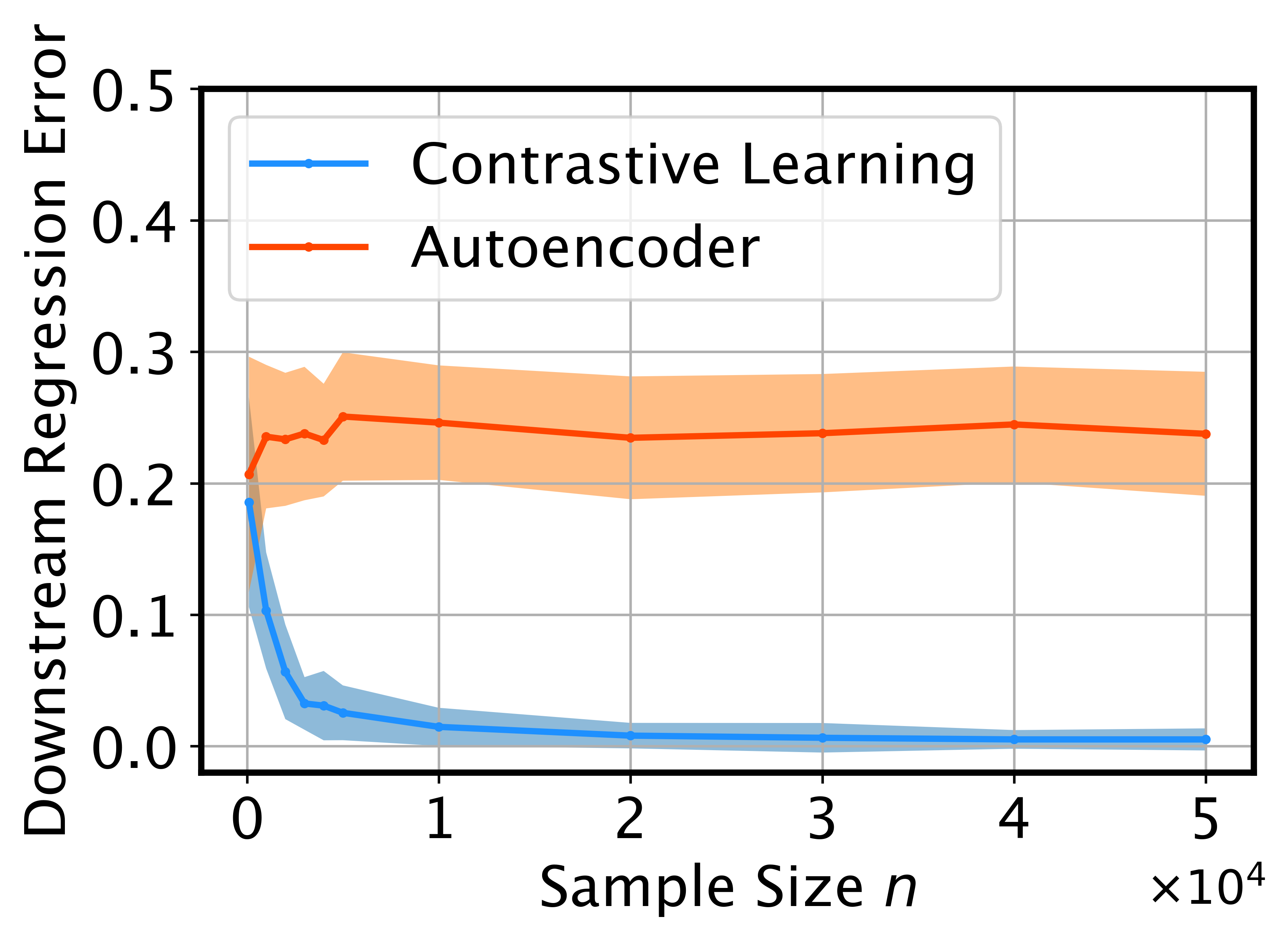}
        \captionsetup{width=.9\linewidth}
	    \caption{Downstreak task performance for transfer learning against . The vertical axes indicate the prediction risk. The horizontal axis indicates the parameter $\mu$ in log scale.}
    \end{subfigure}
	\caption{Comparison of downstream task performance between contrastive learning and autoencoders. The performance is measured by the prediction risk.\wl{I think the figure should be put at the last page?}}
	\label{fig: CL and AE}
\end{figure}
\fi

\section{The Impact of Labeled Data in Supervised Contrastive Learning}\label{sec: labeled data}
 Recent works have explored adding label information to improve contrastive learning \citep{khosla2020supervised}.
Empirical results show that label information can significantly improve the accuracy of the in-domain downstream tasks.  However, when domain shift is considered, the label information hardly improves and even hurts transferability \citep{Islam2021ABS}. For example, in Table 2 of \citet{khosla2020supervised} and the first column in Table 4 of \citet{Islam2021ABS}, supervised contrastive learning shows significant improvement with 7\%-8\% accuracy increase on in-domain downstream classification on ImageNet and Mini-ImageNet. On the contrary, in Table 4 of \citet{khosla2020supervised} and Table 4 of \citet{Islam2021ABS},  supervised contrastive learning hardly increases the predictive accuracy compared to the self-supervised contrastive learning (the difference of mean accuracy is less than 1\%) and can harm significantly on some datasets (e.g. 5.5\% lower for SUN 397 in Table 4 of \citet{khosla2020supervised}).  These results indicate that some mechanisms in supervised contrastive learning hurt model transferability while the improvement in source tasks is significant. Moreover, in Table 4 of \citet{Islam2021ABS}, it is observed that combining supervised learning and self-supervised contrastive learning together achieves the best transfer learning performance compared to each of them individually. Motivated by those empirical observations, in this section, we aim to investigate the impact of labeled data in contrastive learning and provide a theoretical foundation for these phenomena.

\subsection{Feature Mining in Multi-Class Classification}
\label{sec: classification}
We first demonstrate the impact of labels in contrastive learning under the standard single-sourced (i.e. no transfer learning) setting. Suppose our samples are drawn from $r+1$ different classes with probability $p_k$ for class $k\in[r+1]$, and $\sum_{k=1}^{r+1}p_k=1$. For each class, samples are generated from a class-specific Gaussian distribution:
\begin{equation}
	\label{model: Gaussian}
	x^k= \mu^k+\xi^k,\quad \xi^k\sim\mathcal{N}(0,\Sigma^k), \quad \forall k=1,2,\cdots,r+1.
\end{equation} 
{
We assume the norms of $\mu^k,\forall k\in [r+1]$ are in the same order, that is, denote $\nu=\|\mu^1\|/\sqrt{r}$, we have $\|\mu^k\|=O(\sqrt{r}\nu), \forall k\in [r+1]$. We further assume $\Sigma^k=\diag(\sigma_{1,k}^2,\cdots,\sigma_{d,k}^2)$, denote $\sigma_{(1)}^2=\max_{1\leq i\leq d, 1\leq j\leq r+1}\sigma_{i,j}^2$ and assume $\sum_{k=1}^{r+1}p_k\mu^k=0$, where the last assumption is added to ensure identifiability since the classification problem \eqref{model: Gaussian} is invariant under translation. Denote $\Lambda=\sum_{k=1}^{r+1}p_k\mu^k\mu^{k\top}$, we assume $\operatorname{rank}(\Lambda)=r$ and $C_1\nu^2<\lambda_{(r)}(\Lambda)<\lambda_{(1)}(\Lambda)<C_2\nu^2$ for two universal constants $C_1$ and $C_2$.
We remark that this model is a labeled version of the spiked covariance model \eqref{model: spiked covariance} since the core features and random noise are both sub-Gaussian. We use $r+1$ classes to ensure that $\mu^k$'s span an $r$-dimensional space, and denote its orthonormal basis as $U^\star$. Recall that our target is to recover $U^\star$.}
\begin{remark}
    Here we focus on explaining the impact of label information in the SupCon algorithm\citep{khosla2020supervised}. SupCon is designed for multi-class classification tasks and it requires using the class label to find positive samples. In that case, labels from a linear function of the latent features can not be used as class labels in a multi-class classification setting. Hence we proposed to use the Gaussian Mixture Model \eqref{model: Gaussian} to generate class labels while keeping the most consistency with models used earlier(i.e., the spiked covariance model \eqref{model: spiked covariance}).
\end{remark}
As introduced in Definition \ref{pair: supervised}, the supervised contrastive learning introduced by \citet{khosla2020supervised} allows us to generate contrastive pairs using labeled information and discriminate instances across classes. When we have both labeled data and unlabeled data, we can perform contrastive learning based on pairs that are generated separately for the two types of data.

\paragraph{Data Generating Process}
Formally, let us consider the case where we draw $n$ samples as unlabeled data $X=[x_1,\cdots,x_n]\in\mathbb{R}^{d\times n}$ from the Gaussian mixture model \eqref{model: Gaussian} with $p_1=p_2=\cdots=p_{r+1}$. For the labeled data, we draw $(r+1)m$ samples; $m$ samples for each of the $r+1$ classes in the Gaussian mixture model, and denote them as $\hat{X}=[\hat{x}_1,\cdots,\hat{x}_{(r+1)m}]\in\mathbb{R}^{d\times (r+1)m}$. We discuss the above case for simplicity. More general versions that allow different sample sizes for each class are considered in Theorem \ref{thm: general supcon} (in the appendix). We study the following hybrid loss to illustrate how the label information helps promote performance over self-supervised contrastive learning:
\begin{equation}
	\label{sup contrastive task}
	\min_{W\in\mathbb{R}^{r\times d}}\mathcal{L}(W):=\min_{W\in\mathbb{R}^{r\times d}}\mathcal{L}_{\text{SelfCon}}(W)+\alpha\mathcal{L}_{\text{SupCon}}(W),
\end{equation}
where $\alpha>0$ is the ratio between supervised loss and self-supervised contrastive loss. Here we consider this generalized hybrid loss to show the benefit of exploiting additional unlabeled data. If we choose $\alpha\rightarrow\infty$ it will correspond to the original SupCon loss.

We first provide a high-level explanation of why label information can help learn core features. When the label information is unavailable, no matter how much (unlabeled) data we have, we can only take them (and their augmented views) as positive samples. 
In such a scenario, performing augmentation leads to an unavoidable trade-off between estimation bias and accuracy. However, if we have additional class information, we can contrast between data in the same class to extract more beneficial features that help distinguish a particular class from others and therefore reduce the bias. 
\begin{theorem}
\label{thm: SupCon}
	Suppose the labeled and unlabeled samples are generated as the process mentioned above. If Assumptions \ref{asm: regular}-\ref{asm: incoherent} hold, $n> d\gg r$ and let $W_{\CL}$ be any solution that minimizes the supervised contrastive learning problem in Equation \eqref{sup contrastive task}, and denote its singular value decomposition as $W_{\CL}=(U_{\CL}\Sigma_{\CL}V_{\CL}^\top)^\top$, then we have
	\begin{equation*}
		\begin{aligned}
			\mathbb{E}\|\sin\Theta(U_{\CL},U^\star)\|_F\lesssim&\frac{1}{1+\alpha }\left(\frac{r^{3/2}}{d}\log d+\sqrt{\frac{dr}{n}}\right)+\frac{\alpha}{1+\alpha} \sqrt{\frac{dr}{m}}.
		\end{aligned}
	\end{equation*}
\end{theorem}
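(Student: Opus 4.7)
The plan is to mirror the strategy of Section~\ref{sec: unsupervised}: derive a closed-form characterization of the minimizer $W_{\CL}$ of the hybrid loss \eqref{sup contrastive task} as the top-$r$ right eigenspace of a data-dependent matrix $\widehat M$, show that its population counterpart shares the eigenspace $U^\star$ with a spectral gap that scales linearly in $(1+\alpha)$, and then apply the Davis--Kahan $\sin\Theta$ theorem after bounding $\|\widehat M-\mathbb{E}[\widehat M]\|$ by concentration arguments.

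\emph{Step 1 (closed form of the minimizer).} By the same differentiation argument used for Propositions~\ref{prop: augment} and \ref{prop: diagonal contrast}, the first-order optimality condition for the quadratic-in-$W$ hybrid loss \eqref{sup contrastive task} is $(\widehat M_{\text{self}} + \alpha\widehat M_{\text{sup}})W^\top = \lambda W^\top W W^\top$, so any minimizer $W_{\CL}$ is supported on the top-$r$ right eigenspace $U_{\CL}$ of
\begin{equation*}
\widehat M \;:=\; \widehat M_{\text{self}} + \alpha\,\widehat M_{\text{sup}},
\end{equation*}
where $\widehat M_{\text{self}} = \Delta(XX^\top)-\tfrac{1}{n-1}X(1_n1_n^\top-I_n)X^\top$ (from Proposition~\ref{prop: diagonal contrast}) and $\widehat M_{\text{sup}}$ is the analogous within-class-minus-between-class matrix built from the labeled sample $\widehat X$; working out the algebra gives $\widehat M_{\text{sup}} = \tfrac{1}{m-1}\sum_{k}\widehat X_k(1_m1_m^\top-I_m)\widehat X_k^\top - \tfrac{1}{m^2 r}\sum_{k\ne s}\widehat X_k 1_m 1_m^\top \widehat X_s^\top$ up to normalization.

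\emph{Step 2 (population eigenstructure and spectral gap).} Taking expectations, $\mathbb{E}[\widehat M_{\text{self}}]$ is (by the calculation already used for Theorem~\ref{thm: recover CL}) a rank-$r$ matrix of the form $c_1\nu^2 U^\star U^{\star\top} + \text{bias}$, where the bias has norm $O(\nu^2 r\log d/d)$ inherited from $I(U^\star)$. For the supervised piece, a direct computation gives $\mathbb{E}[\widehat M_{\text{sup}}] = c_2 \Lambda = c_2\sum_k p_k\mu_k\mu_k^\top$, which by assumption is rank-$r$ with column span exactly $U^\star$ and eigenvalues of order $\nu^2$. Hence the two population matrices share the top-$r$ eigenspace $U^\star$, and $\mathbb{E}[\widehat M]$ has a spectral gap of order $(1+\alpha)\nu^2$ separating $U^\star$ from its orthogonal complement.

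\emph{Step 3 (concentration and Davis--Kahan).} By the triangle inequality,
\begin{equation*}
\|\widehat M - \mathbb{E}[\widehat M]\|_F \;\le\; \|\widehat M_{\text{self}} - \mathbb{E}[\widehat M_{\text{self}}]\|_F + \alpha\,\|\widehat M_{\text{sup}} - \mathbb{E}[\widehat M_{\text{sup}}]\|_F.
\end{equation*}
The first term was already controlled in the proof of Theorem~\ref{thm: recover CL}, contributing (after division by the spectral gap) the rate $r^{3/2}\log d/d + \sqrt{dr/n}$. For the second term I would apply standard sub-Gaussian covariance concentration class by class to each $\tfrac{1}{m}\widehat X_k\widehat X_k^\top$ and to the between-class cross-products, which yields $\|\widehat M_{\text{sup}}-\mathbb{E}[\widehat M_{\text{sup}}]\|_F \lesssim \nu^2\sqrt{dr/m}$; note that no augmentation bias appears here because labeled pairs are not masked. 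Finally, applying the $\sin\Theta$ theorem with spectral gap of order $(1+\alpha)\nu^2$ yields
\begin{equation*}
\mathbb{E}\|\sin\Theta(U_{\CL},U^\star)\|_F \;\lesssim\; \frac{1}{1+\alpha}\Bigl(\tfrac{r^{3/2}}{d}\log d + \sqrt{\tfrac{dr}{n}}\Bigr) + \frac{\alpha}{1+\alpha}\sqrt{\tfrac{dr}{m}},
\end{equation*}
as claimed.

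\emph{Expected main obstacle.} The bookkeeping in Step 1 is the most error-prone part: one has to carefully expand $\mathcal{L}_{\text{SupCon}}$, identify the bilinear form it induces in $W$, and verify that the resulting $\widehat M_{\text{sup}}$ has no spurious off-diagonal structure that would displace $U^\star$ from being the top-$r$ eigenspace in expectation. In particular, one must check that the subtraction of the between-class cross term exactly cancels the rank-one contribution of the (nonzero) class means in the within-class term when $\sum_k p_k\mu_k = 0$, leaving a clean $\Lambda = \sum_k p_k\mu_k\mu_k^\top$ signal; this identifiability condition is precisely why the assumption $\sum_k p_k\mu_k = 0$ was introduced in Section~\ref{sec: classification}.
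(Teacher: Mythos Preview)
Your proposal is correct and follows essentially the same route as the paper's proof (Theorem~\ref{thm: general supcon} specialized to the balanced case): derive the closed-form minimizer as the top-$r$ eigenspace of $\widehat M_{\text{self}}+\alpha\widehat M_{\text{sup}}$, identify the population target as a multiple of $\Lambda=\sum_k p_k\mu_k\mu_k^\top$ with spectral gap $\Theta((1+\alpha)\nu^2)$, and apply Davis--Kahan after bounding each perturbation piece separately. One small imprecision: in Step~2 you write $\mathbb{E}[\widehat M_{\text{self}}]\approx c_1\nu^2 U^\star U^{\star\top}$, but in the mixture model of Section~\ref{sec: classification} the correct population is $\Lambda$ itself (the paper decomposes the unlabeled $X=M+E$ with $M$ having sub-Gaussian columns of covariance $\Lambda$ and reruns the argument of Theorem~\ref{thm: recover CL} with $\Lambda$ in place of $\nu^2U^\star U^{\star\top}$); since $\Lambda$ shares column span $U^\star$ and has eigenvalues $\Theta(\nu^2)$ by assumption, this does not affect the argument.
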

The proof is given in Appendix \ref{thm: general supcon}
\begin{corollary}
    From Theorem \ref{thm: SupCon}, it directly follows that when we have $m$ labeled data for each class and no unlabeled data ($\alpha\to\infty$),
    \begin{equation*}
        \mathbb{E}\|\sin\Theta(U_{\CL},U)\|_F\lesssim\sqrt{\frac{dr}{m}}.
    \end{equation*}
\end{corollary}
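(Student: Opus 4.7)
The claim is a direct specialization of Theorem \ref{thm: SupCon} to the pure-supervised regime, so the plan is simply to pass to the limit $\alpha \to \infty$ in that theorem's bound, with a brief justification that this limit genuinely corresponds to ``no unlabeled data.'' First, I would check that the heuristic ``$\alpha \to \infty$ $\Leftrightarrow$ no unlabeled data'' is rigorous by scaling the hybrid loss \eqref{sup contrastive task} by $1/(1+\alpha)$, producing the convex combination
$$\frac{1}{1+\alpha}\mathcal{L}_{\text{SelfCon}}(W) + \frac{\alpha}{1+\alpha}\mathcal{L}_{\text{SupCon}}(W),$$
whose weights tend to $(0,1)$. Since each summand is a quadratic in $W$ and the regularizer $R(W)=\|WW^\top\|_F^2/2$ is coercive, the minimizer path $W_{\CL}(\alpha)$ is continuous in $\alpha$ and its limit as $\alpha\to\infty$ is exactly the minimizer of $\mathcal{L}_{\text{SupCon}}$ alone, i.e., the contrastive estimator built only from the $(r+1)m$ labeled samples.

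Next, I would invoke Theorem \ref{thm: SupCon} for finite $\alpha>0$,
$$\mathbb{E}\|\sin\Theta(U_{\CL}(\alpha),U^\star)\|_F \;\lesssim\; \frac{1}{1+\alpha}\Big(\frac{r^{3/2}}{d}\log d + \sqrt{\tfrac{dr}{n}}\Big) + \frac{\alpha}{1+\alpha}\sqrt{\tfrac{dr}{m}},$$
with absolute constants that do not depend on $\alpha$, and let $\alpha\to\infty$. The first bracket is killed by the factor $1/(1+\alpha)\to 0$, while $\alpha/(1+\alpha)\to 1$, leaving exactly the bound $\sqrt{dr/m}$. Combined with the continuity of $W_{\CL}(\alpha)$ and lower semicontinuity of $\|\sin\Theta(\cdot,U^\star)\|_F$, this transfers the limiting numerical bound onto the limiting estimator, which is the pure supervised contrastive solution claimed in the corollary.

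The only (mild) obstacle is confirming that the hidden constant in Theorem \ref{thm: SupCon} does not secretly deteriorate with $\alpha$; an inspection of its proof shows that the $\alpha$-dependence appears only through the explicit convex-combination weights $1/(1+\alpha)$ and $\alpha/(1+\alpha)$, with all remaining constants depending on $\rho$, $\kappa$, and $I(U^\star)$ via sub-Gaussian tail and Davis--Kahan-type matrix perturbation estimates. If one wished to avoid the limiting argument altogether, an equivalent shortcut is to re-specialize the proof of Theorem \ref{thm: SupCon} with $\mathcal{L}_{\text{SelfCon}}\equiv 0$: the spectral analysis then collapses to a standard perturbation bound on the between-class scatter matrix $\tfrac{1}{r+1}\sum_{k=1}^{r+1}\hat\mu_k\hat\mu_k^\top$ estimated from the $(r+1)m$ labeled samples, which yields the $\sqrt{dr/m}$ rate directly and matches the limit above.
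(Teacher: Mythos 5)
Your proposal is correct and follows essentially the same route as the paper, which treats the corollary as the $\alpha\to\infty$ specialization of the bound in Theorem \ref{thm: SupCon}: the weight $1/(1+\alpha)$ kills the self-supervised bias and variance terms while $\alpha/(1+\alpha)\to 1$ leaves $\sqrt{dr/m}$, and the appendix analysis (the generalized Theorem \ref{thm: general supcon}) confirms the hidden constants do not degrade with $\alpha$. Your extra justification that the limit corresponds to the supervised-only estimator is a harmless addition (phrase the ``continuity of the minimizer path'' in terms of convergence of the top-$r$ eigenspace under a positive eigengap, since $W_{\CL}$ is only defined up to an orthogonal factor), and your alternative shortcut of rerunning the spectral perturbation argument with $\mathcal{L}_{\text{SelfCon}}\equiv 0$ is exactly what the paper's appendix machinery supports.
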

The first bound in Theorem \ref{thm: SupCon} demonstrates how the effect of labeled data changes with the ratio $\alpha$ in the hybrid loss in Equation \eqref{sup contrastive task}. In addition, compared with Theorem \ref{thm: recover CL}, when we only have labeled data ($\alpha\to\infty$), the second bound in Theorem \ref{thm: SupCon} indicates that with labeled data being available, the supervised contrastive learning can yield consistent estimation as $m\rightarrow\infty$ while the self-supervised contrastive learning consists of an irreducible bias term $O(r^{3/2}\log d/d)$. At a high level, label information can help gain accuracy by creating more positive samples for a single anchor and therefore extract more decisive features. One should notice a caveat that when labeled data is extremely rare compared to unlabeled data, the estimation of supervised contrastive learning suffers from high variance. In comparison, self-supervised contrastive learning, which can exploit a much larger number of samples, may outperform it.


\subsection{Information Filtering in Multi-Task Transfer Learning}
\label{sec: transfer}

In  this section, we show that the theoretical tools developed in this paper can be used to illustrate the role of label information when using contrastive learning in the transfer learning setting. Label information can tell us the beneficial information for the in-domain downstream task, and learning with labeled data will filter out useless information and preserve the decisive parts of core features. However, in transfer learning, 
the label information is sometimes rather found to hurt the performance of contrastive learning. For example, in Table 4 of \citet{Islam2021ABS}, while supervised contrastive learning gains 8\% improvement in source tasks by incorporating label information, it improves only 1\% on generalizing to new datasets on average and can even hurt on some datasets. Such observation implies that label information in contrastive learning has very different roles for generalization on source tasks and new tasks. In this section, we consider two regimes of transfer learning -- tasks are insufficient/abundant. In both regimes, we provide theories to support the empirical observations and further demonstrate how to wisely combine supervised and self-supervised contrastive learning to avoid those harms and achieve better performance. Specifically, we consider a transfer learning problem with \textbf{regression setting and binary classification setting}. Suppose we have $T$ source tasks which share a common data generative model \eqref{model: spiked covariance}. For the $t$-th task, the labels are generated by
$
	y^t = \langle w_t, z \rangle/\nu
$
in a regression setting while $y^t=\sign(\langle w_t, z \rangle)$ in a binary classification setting,
where $w_t\in\mathbb{R}^r$ is a unit vector varying across tasks. These two settings share the same distribution for $x$ only differ in the way to generate the labels $y^t$.

To incorporate label information, we maximize the Hilbert-Schmidt Independence Criteria (HSIC)  \citep{gretton2005measuring,barshan2011supervised}, which has been widely used in literature \citep{song2007colored,song2007dependence,song2007supervised,barshan2011supervised}.


\subsubsection{Hilbert-Schmidt Independent Criteria}
\label{sec: HSIC}
\citet{gretton2005measuring} proposed the Hilbert Schmidt Independent Criteria (HSIC) to measure the dependence between two random variables. It computes the Hilbert-Schmidt norm of the cross-covariance operator associated with their Reproducing Kernel Hilbert Space (RKHS). Such measurement has been widely used as a supervised loss function in feature selection \citep{song2007supervised}, feature extraction \citep{song2007colored}, clustering \citep{song2007dependence} and supervised PCA \citep{barshan2011supervised}. 

The basic idea behind HSIC is that two random variables are independent if and only if any bounded continuous functions of the two random variables are uncorrelated. Let $\mathcal{F}$ be a separable RKHS containing
all continuous bounded real-valued functions mapping from $\mathcal{X}$ to $\mathbb{R}$ and $\mathcal{G}$ be that for maps from $\mathcal{Y}$ to $\R$. For each point $x \in$ $\mathcal{X}$, there exists a corresponding element $\phi \in \mathcal{F}$ such that $\left\langle\phi(x), \phi\left(x^{\prime}\right)\right\rangle_{\mathcal{F}}=k\left(x, x^{\prime}\right)$, where $k: \mathcal{X} \times \mathcal{X} \rightarrow \mathbb{R}$ is a unique positive definite kernel. Likewise, define the kernel $l(\cdot,\cdot)$ and feature map $\psi$ for $\mathcal{G}$. 
The empirical HSIC is defined as follows.
\begin{definition}[Empirical HSIC \citep{gretton2005measuring}]
Let $Z:=\{(x_{1}, y_{1}),\ldots$, $(x_{m}, y_{m})\}$ $\subseteq \mathcal{X} \times \mathcal{Y}$ be a series of $m$ independent and identically distributed observations. An estimator of HSIC, written as $\operatorname{HSIC}(Z, \mathcal{F}, \mathcal{G})$, is given by
$$
\operatorname{HSIC}(Z, \mathcal{F}, \mathcal{G}):=(m-1)^{-2} \operatorname{tr} (K H L H),
$$
where $H, K, L \in \mathbb{R}^{m \times m}, K_{i j}:=k\left(x_{i}, x_{j}\right), L_{i j}:=l\left(y_{i}, y_{j}\right)$ and $H:=I_m-(1/m)1_m1_m^\top$.
\end{definition}
In our setting, we aim to maximize the dependency between learned features $WX\in\mathbb{R}^{r\times n}$ and label $y\in\mathbb{R}^{n}$ via HSIC. Substituting $K \leftarrow X^\top W^\top WX$ and $L \leftarrow yy^\top$, we obtain our supervised loss for the representation matrix $W$:
\begin{equation}
\label{HSIC AP}
    \operatorname{HSIC}(X,y;W)=\frac{1}{(n-1)^2}\tr(X^\top W^\top WXHyy^\top H).
\end{equation}

A more commonly used supervised loss in the regression task is the mean squared error.
Here we explain the equivalence of maximizing HSIC with penalty $\|W W^\top\|_F^2$ and minimizing the mean squared error in the regression task.

Recall that in the contrastive learning framework, we first learn the representation via a linear transformation and then perform linear regression to learn a predictor with the learned representation. 
Consider the mean squared error $\mathcal{L}(\delta)=(1/n)\sum_{i=1}^n(\delta(x_i)-y_i)^2$, where $\delta(x_i)$ is a predictor of $y_i$. Also consider a linear class of predictors $\delta_{W, w}(x_i) = w^\top Wx_i$ with parameter $w \in \R^r$. 
Assume that both $X$ and $y$ are centered.
For any fixed representation $W$, the minimum mean squared error is given by
\begin{equation*}
\begin{aligned}
    \min_{w \in \R^r} \mathcal{L}(\delta_{W, w})=&\frac{1}{n}\|(WX)^\top w^\star-y\|^2\\
    =&\frac{1}{n}\qty(y^\top y-\tr(X^\top W^\top(WXX^\top W^\top)^{-1}WXy y^\top)),
\end{aligned}
\end{equation*}
where $w^\star=(WXX^\top W^\top)^{-1}WXy$.
Ignoring the constant term $y^\top y$, it can be seen that the only essential difference between the minimization problem $\min_{w \in \R^r} \mathcal{L}(\delta_{W, w})$ and maximizing HSIC in Equation \eqref{HSIC AP} is the normalization term $(WXX^\top W^\top)^{-1/2}$ for $W$. 
Thus, minimizing the mean squared error is equivalent to maximizing HSIC with regularization term $\|WW^\top\|_F^2$.
Since $L_{\text{SelfCon}}$ contains the regularization term $\|WW^\top\|_F^2$, we can jointly use $L_{\text{SelfCon}}$ and HSIC as a surrogate for the standard regression error to avoid the singularity of learned representation $W$.
\subsubsection{Main results}
First, we consider the regression setting. Before stating our results, we prepare some notations. Suppose we have $n$ unlabeled data $X=[x_1,\cdots,x_n]\in\mathbb{R}^{d\times n}$ and $m$ labeled data for each source task $\hat{X}^t=[\hat{x}_1^t,\cdots,\hat{x}_m^t],y^t=[y_1^t,\cdots,y_m^t],\forall t=1,\dots, T$ where $x_i$ and $\hat{x}_j^t$ are independently drawn from the spiked covariance model \eqref{model: spiked covariance}, 
we learn the linear representation via the joint optimization:
\begin{equation}
	\label{trans contrastive task}
	\min_{W\in\mathbb{R}^{r\times d}}\mathcal{L}(W):=\min_{W\in\mathbb{R}^{r\times d}}\mathcal{L}_{\text{SelfCon}}(W)-\alpha\sum_{t=1}^{T}\operatorname{HSIC}(\hat{X}^t,y^t;W),
\end{equation}
where $\alpha>0$ is a pre-specified ratio between the self-supervised contrastive loss and HSIC. A more general setting, where the ratio $\alpha$ and the number of labeled data for each source task are allowed to depend on $t$, is considered in the appendix, see Section \ref{sec: Information filtering proof} for details. We now present a theorem showing the recoverability of $W$ by minimizing the hybrid loss function~\eqref{trans contrastive task}. 

\begin{theorem}
	\label{thm: transfer t<r}
	In the regression setting where $y^t = \langle w_t, z \rangle/\nu$ , suppose Assumptions \ref{asm: regular}-\ref{asm: incoherent} hold for the spiked covariance model \eqref{model: spiked covariance} and $n> d\gg r$, if we further assume that $\alpha>C$ for some constant $C$, $T<r$ and $w_t$'s are orthogonal to each other, and let $W_{\CL}$ be any solution that optimizes the problem in Equation \eqref{trans contrastive task}, and denote its singular value decomposition as $W_{\CL}=(U_{\CL}\Sigma_{\CL}V_{\CL}^\top)^\top$, then we have:
    \begin{small}
	\begin{equation}
	\label{CL transfer upper bound}
    \begin{aligned}
      \mathbb{E}\|\sin\Theta(U_{\CL},U^\star)\|_F\lesssim&\sqrt{r-T} \left(\frac{r\log d}{d}+\sqrt{\frac{d}{n}}+\alpha T\sqrt{\frac{ d}{m}} \wedge 1 \right) +\sqrt{T}\left(\frac{r\log d}{\alpha d}+\frac{1}{\alpha}\sqrt{\frac{d}{n}}+T\sqrt{\frac{d}{m}}\right).
    \end{aligned}
    \end{equation}
    \end{small}
\end{theorem}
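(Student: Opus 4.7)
The plan is to derive a closed-form characterization of the right singular vectors $U_{\CL}$ of $W_{\CL}$ from the first-order conditions of \eqref{trans contrastive task}, and then apply a two-block Wedin sine-theta bound. Each HSIC summand rewrites as $\operatorname{HSIC}(\hat X^t, y^t; W) = \|W \hat X^t H y^t\|^2/(m-1)^2$, which is quadratic in $W$. Combining this with the self-supervised loss analyzed in Proposition~\ref{prop: diagonal contrast} and the regularizer $\tfrac{1}{2}\|WW^\top\|_F^2$, the stationarity equation for \eqref{trans contrastive task} reduces to $W W^\top W = W M$, where
\begin{equation*}
M \;=\; \Delta(XX^\top) \;-\; \tfrac{1}{n-1} X(1_n 1_n^\top - I_n) X^\top \;+\; \alpha c \sum_{t=1}^{T} \hat X^t H y^t (y^t)^\top H (\hat X^t)^\top,
\end{equation*}
for a normalization constant $c > 0$ coming from the relative scales of the two losses. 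Exactly as in Proposition~\ref{prop: diagonal contrast}, every minimizer has $U_{\CL}$ equal to the top-$r$ eigenvectors of $M$, so the theorem reduces to an eigenvector perturbation estimate.

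Second, I would compute $\mathbb{E}[M]$ and its eigenstructure. The self-supervised part has expectation $n\nu^2 \Delta(U^\star U^{\star\top})$, which differs from $n\nu^2 U^\star U^{\star\top}$ by a deterministic masking bias of order $\nu^2 r\log d/d$ in operator norm via Assumption~\ref{asm: incoherent} (the same step used in Theorem~\ref{thm: recover CL}). Using Isserlis' theorem with $z \sim N(0, \nu^2 I_r)$ and $y^t = \langle w_t, z\rangle/\nu$, one gets $\mathbb{E}[\hat X^t H y^t (y^t)^\top H (\hat X^t)^\top] = m\nu^2 U^\star(I_r + 2 w_t w_t^\top) U^{\star\top} + m\Sigma$ up to lower-order centering terms. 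Because the $w_t$'s are orthogonal unit vectors, $\sum_t w_t w_t^\top$ is a rank-$T$ projector, so inside $\operatorname{col}(U^\star)$ the signal splits into a \emph{boosted} $T$-dimensional eigenspace $V_1 = U^\star \operatorname{span}(w_1, \dots, w_T)$ and an \emph{unboosted} $(r - T)$-dimensional eigenspace $V_2 = V_1^\perp \cap \operatorname{col}(U^\star)$, whose eigenvalues are separated by an additive gap of order $\alpha c m \nu^2$. Matrix concentration then gives $\|M - \mathbb{E}[M]\|_{\mathrm{op}} = O(\nu^2 \sqrt{dn} + \alpha T \nu^2 \sqrt{dm})$, the first term from covariance concentration of $X$ and the second from summing $T$ independent rank-one HSIC fluctuations.

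Finally, I would apply Wedin's sine-theta theorem block by block. Aligning $U^\star = [U_1^\star, U_2^\star] Q$ with $V_1 \oplus V_2$, the span of the top-$T$ eigenvectors of $M$ has sine distance to $V_1$ bounded by $\|M - \mathbb{E}[M]\|_{\mathrm{op}}$ divided by the boosted gap $\Theta(\alpha c m \nu^2)$, times the Frobenius factor $\sqrt{T}$; simplification yields $\sqrt{T}\bigl(r\log d/(\alpha d) + (1/\alpha)\sqrt{d/n} + T\sqrt{d/m}\bigr)$. The next $(r - T)$ eigenvectors have sine distance to $V_2$ bounded by the same perturbation divided by the unboosted gap $\Theta(n \nu^2)$ (after rescaling), times $\sqrt{r - T}$, producing $\sqrt{r - T}\bigl(r\log d/d + \sqrt{d/n} + \alpha T\sqrt{d/m}\bigr)$, and the $\wedge 1$ truncation comes from the trivial bound $\|\sin\Theta\|_F \leq \sqrt{r - T}$. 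Summing the two block contributions reproduces \eqref{CL transfer upper bound}.

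The main obstacle is the careful two-level eigengap accounting: a single application of Davis--Kahan would only deliver the worse of the two bounds and destroy the $1/\alpha$ improvement on the $T$ boosted directions. One has to use a three-band Wedin decomposition (boosted plateau, unboosted plateau, noise bulk) and separately control cross-band perturbation spillover, confirming via an invariance argument in the rotation $Q$ that HSIC noise directed orthogonally to $V_1$ contributes only $\alpha T\sqrt{d/m}$ to the unboosted directions (rather than leaking into the $\alpha$-boosted gap). A secondary subtlety is handling the centering operator $H$ and the Isserlis expansion of $\mathbb{E}[(y^t)^2 \hat x \hat x^\top]$ with sufficient precision for the rank-$T$ projector structure $\sum_t w_t w_t^\top$ to emerge cleanly without being contaminated by the isotropic $I_r$ piece.
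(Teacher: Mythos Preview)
Your overall strategy---characterize $U_{\CL}$ as the top-$r$ eigenvectors of an explicit empirical matrix, identify its population target with a two-plateau eigenstructure inside $\operatorname{col}(U^\star)$, and apply Davis--Kahan separately to the boosted $T$-dimensional and unboosted $(r-T)$-dimensional blocks---is exactly the paper's approach (Theorem~\ref{thm: trans ap} for the closed form, and the proof of Theorem~\ref{thm: transfer t<r ap} for the two-block eigengap argument). The subtlety you flag, that a single Davis--Kahan call would destroy the $1/\alpha$ improvement on the boosted directions, is precisely the reason the paper splits $U_{\CL}$ and $U^\star$ into their top-$T$ and $(T{+}1)$-to-$r$ eigenspaces and bounds each separately.

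However, your expectation computation for the HSIC term is wrong in a way that would break the argument as written. The matrix $\hat X^t H y^t (y^t)^\top H (\hat X^t)^\top$ is rank one: it equals $v v^\top$ with $v=\hat X^t H y^t=\sum_j(\hat x_j^t-\bar{\hat x}^t)(y_j^t-\bar y^t)$. Hence $\mathbb{E}[vv^\top]=(\mathbb{E}v)(\mathbb{E}v)^\top+\Cov(v)$, and the dominant piece is the squared mean $(\mathbb{E}v)(\mathbb{E}v)^\top=(m-1)^2\nu^2\,U^\star w_t w_t^\top U^{\star\top}$, coming from the $m(m-1)$ cross terms $y_j y_k \hat x_j \hat x_k^\top$ with $j\neq k$. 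Your Isserlis formula $m\nu^2 U^\star(I_r+2w_tw_t^\top)U^{\star\top}+m\Sigma$ is essentially $m\,\mathbb{E}[y_1^2\hat x_1\hat x_1^\top]$, i.e.\ the $O(m)$ covariance part only; after the $(m-1)^{-2}$ normalization it is $O(1/m)$ and negligible. With the correct expectation the population target is $N=c_1\nu^2 U^\star U^{\star\top}+\alpha\nu^2 U^\star\bigl(\sum_t w_tw_t^\top\bigr)U^{\star\top}$, with no $\Sigma$ term---which is essential, since a $\Sigma$ contribution of order comparable to $\nu^2$ would pull the leading eigenvectors out of $\operatorname{col}(U^\star)$ and the Davis--Kahan step would fail. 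The paper sidesteps computing this expectation altogether: Lemma~\ref{Thm: SCL} bounds $\bigl\|\hat N_t-\nu^2 U^\star w_tw_t^\top U^{\star\top}\bigr\|_F$ directly on a high-probability event by first controlling the vector $v$ and then invoking the rank-one perturbation Lemma~\ref{Lemma: alphabeta}. Once you correct the target to $N$, your two-block Wedin plan goes through and coincides with the paper's.
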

The proof is given in Appendix \ref{thm: transfer t<r ap}.

Similar to Section \ref{sec: downstream}, we can obtain an in-domain downstream task risk in a supervised contrastive learning setting. Consider a new test task where a label is generated by $\check y = \langle w^\star, \check z \rangle / \nu$ with $\check x = U^\star \check z + \check\xi$.
Recall that the loss in the in-domain downstream task is measured by the squared error: $\ell_r(\delta) := (\check y - \delta(\check x))^2$. We obtain the following result.
\begin{theorem}\label{thm: downstream risk transfer t<r}
    Suppose the conditions in Theorem \ref{thm: transfer t<r} hold. Then, 
    \begin{equation}
        \begin{aligned}\label{CL transfer upper bound 4}
		&\mathbb{E}_{\mathcal{D}}[\inf_{w\in\mathbb{R}^r} \mathbb{E}_{\mathcal{E}}[\ell_r(\delta_{W_{\CL}, w})]-\inf_{w\in\mathbb{R}^r}\mathbb{E}_{\mathcal{E}}[\ell_r(\delta_{U^{\star \top}, w})]\\
		&\quad\lesssim \sqrt{r-T} \left(\frac{r\log d}{d}+\sqrt{\frac{d}{n}}+\alpha T\sqrt{\frac{ d}{m}} \wedge 1 \right) +\sqrt{T}\left(\frac{r\log d}{\alpha d}+\frac{1}{\alpha}\sqrt{\frac{d}{n}}+T\sqrt{\frac{d}{m}}\right).
	\end{aligned}
    \end{equation}
    
\end{theorem}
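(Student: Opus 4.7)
The plan is to convert the subspace-distance guarantee of Theorem \ref{thm: transfer t<r} into a downstream prediction risk bound. The structure mirrors the regression portion of the proof of Theorem \ref{thm: best linear predictor risk CL}; only the input estimate changes, from the self-supervised sine-distance bound of Theorem \ref{thm: recover CL} to the supervised counterpart \eqref{CL transfer upper bound}.

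First I would reduce the excess risk to a functional of $U_{\CL}$, the top-$r$ right singular vectors of $W_{\CL}$. Because the predictor $\delta_{W,w}(\check x) = w^\top W \check x$ depends on $W$ only through its row span, we may replace $W_{\CL}$ by $U_{\CL}^\top$ without changing the infimum $\inf_w \mathbb{E}_{\mathcal E}[\ell_r(\delta_{W_{\CL},w})]$. Substituting $\check x = U^\star \check z + \check\xi$ with $\check z \sim N(0,\nu^2 I_r)$ independent of $\check\xi$, and $\check y = \langle w^\star, \check z\rangle/\nu$, the squared-loss risk at any $U \in \mathbb{O}_{d,r}$ is
\begin{equation*}
    \mathbb{E}_{\mathcal E}[\ell_r(\delta_{U^\top,w})] \;=\; \|w^\star - \nu (U^\top U^\star)^\top w\|^2 + w^\top U^\top \Sigma U\, w .
\end{equation*}
Minimizing the quadratic in $w$ and simplifying via the Woodbury identity gives the closed form $R_{\min}(U) = w^{\star\top} B_U (\nu^2 I_r + B_U)^{-1} w^\star$ with $B_U := (U^\top U^\star)^{-1}(U^\top \Sigma U)(U^\top U^\star)^{-\top}$, provided $U^\top U^\star$ is invertible. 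The excess risk is then the difference $w^{\star\top}[f(B_{U_{\CL}}) - f(B_{U^\star})] w^\star$ for the matrix map $f(X) = X(\nu^2 I_r + X)^{-1}$. A first-order expansion of $f$, together with the boundedness of $\|\Sigma\|_2$ and $\nu^2$ via Assumptions \ref{asm: regular}--\ref{asm: SNR}, yields
\begin{equation*}
    \inf_w \mathbb{E}_{\mathcal E}[\ell_r(\delta_{W_{\CL},w})] - \inf_w \mathbb{E}_{\mathcal E}[\ell_r(\delta_{U^{\star\top},w})] \;\lesssim\; \|B_{U_{\CL}} - B_{U^\star}\| \;\lesssim\; \|\sin\Theta(U_{\CL}, U^\star)\|_F,
\end{equation*}
where the second inequality uses the standard perturbation identity $\|(I-U_{\CL}U_{\CL}^\top)U^\star\|_F = \|\sin\Theta(U_{\CL}, U^\star)\|_F$ and Weyl's inequality on $U_{\CL}^\top U^\star$.

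The final step is immediate: take expectation over the training data $\mathcal D$ and plug in the sine-distance bound of Theorem \ref{thm: transfer t<r}, giving precisely the right-hand side of \eqref{CL transfer upper bound 4}. The main technical obstacle is the Woodbury perturbation in the previous step: one must ensure that $U_{\CL}^\top U^\star$ is well-conditioned on a high-probability event, so that the Lipschitz constant of $f$ does not blow up. I would handle this by event decomposition---on the event $\{\|\sin\Theta(U_{\CL},U^\star)\|_F \leq 1/2\}$ the smallest singular value of $U_{\CL}^\top U^\star$ is bounded below by a constant and the above expansion is valid, while on the complementary (low-probability) event the trivial bound $\|\sin\Theta\|_F \leq \sqrt r$ combined with the rate from Theorem \ref{thm: transfer t<r} suffices. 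This is the same event-splitting strategy used for Theorem \ref{thm: best linear predictor risk CL}, and it ensures that no spurious factors of $r$, $\alpha$, or $T$ are introduced beyond those already present in \eqref{CL transfer upper bound}.
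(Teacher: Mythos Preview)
The proposal is correct and follows the paper's approach: the paper's proof is a one-line appeal to Lemma~\ref{lem: excess risk} (which bounds the regression excess risk by $O(\|\sin\Theta(U,U^\star)\|_2)$ via Proposition~\ref{prop: excess_risk} and the same event-splitting on $\{\|\sin\Theta\|_2^2<1/2\}$) together with Theorem~\ref{thm: transfer t<r}. Your only deviation is the intermediate algebra---you pass through the Woodbury form $B_U=(U^\top U^\star)^{-1}(U^\top\Sigma U)(U^\top U^\star)^{-\top}$ rather than the paper's direct form $U^{\star\top}U(U^\top\Sigma_x U)^{-1}U^\top U^\star$---which forces you to assume $U^\top U^\star$ invertible on the good event, whereas the paper's formulation avoids that requirement; this is a cosmetic difference and does not affect the final bound.
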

The proof is given in Appendix \ref{thm: downstream risk transfer t<r ap}.

In Theorem \ref{thm: transfer t<r} and Theorem \ref{thm: downstream risk transfer t<r}, as $\alpha$ goes to infinity (corresponding to the case where we only use the supervised loss), the upper bounds in Equations \eqref{CL transfer upper bound} and \eqref{CL transfer upper bound 4} are reduced to $\sqrt{r-T}+T^{3/2}\sqrt{d/m}$, which is worse than the  $r^{3/2}\log d/d$ rate obtained by self-supervised contrastive learning (Theorem \ref{thm: recover CL}). This implies that when the model focuses mainly on the supervised loss, the algorithm will extract the information only beneficial for the source tasks and fail to estimate other parts of core features. As a result, when the target task has a very different distribution, labeled data will bring extra bias and therefore hurt the transferability. Additionally, one can minimize the right-hand side of Equation \eqref{CL transfer upper bound} to obtain a sharper rate. Specifically, 
we can choose an appropriate $\alpha$ such that the upper bound becomes $\sqrt{r^2(r-T)}\log d/d$ (when $n,m\rightarrow\infty$), obtaining a smaller rate than that of the self-supervised contrastive learning. These facts provide theoretical foundations for the recent empirical observations that smartly combining supervised and self-supervised contrastive learning achieves significant improvement in transferability compared with performing each of them individually \citep{Islam2021ABS}. 
\begin{remark}
    A heuristic intuition of this surprising fact is that when tasks are not diverse enough, supervised training will only focus on the features that are helpful to predict the labels of source tasks and ignore other features. For example, we have unlabeled images which contain cats or dogs and the background can be sandland or forest. If the source task focuses on classifying the background, supervised learning will not learn features associated with cats and dogs, while self-supervised learning can learn these features since they are helpful to discriminate different images. As a result, although supervised learning can help to classify sandland and forest, it can hurt performance on the classification of dogs and cats and we should incorporate self-supervised contrastive learning to learn these features.
\end{remark}
When the tasks are abundant enough 
then estimation via labeled data can recover core features completely. Similar to Theorem \ref{thm: transfer t<r} and Theorem \ref{thm: downstream risk transfer t<r}, we have the following results. 
\begin{theorem}
	\label{thm: transfer t>r}
	In the regression setting where $y^t = \langle w_t, z \rangle/\nu$, suppose Assumptions \ref{asm: regular}-\ref{asm: incoherent} hold for the spiked covariance model \eqref{model: spiked covariance} and $n> d\gg r$, if we further assume that $T>r$ and $\lambda_{(r)}(\sum_{i=1}^{T} w_iw_i^\top)>c$ for some constant $c>0$, suppose $W_{\CL}$ is the optimal solution of optimization problem Equation \eqref{trans contrastive task}, and denote its singular value decomposition as $W_{\CL}=(U_{\CL}\Sigma_{\CL}V_{\CL}^\top)^\top$, then we have:
	\begin{equation}\label{CL transfer upper bound2}
		\begin{aligned}
	    	\mathbb{E}\|\sin\Theta(U_{\CL},U^\star)\|_F\lesssim&\frac{\sqrt{r}}{\alpha+1}\qty(\frac{r}{d}\log d+\sqrt{\frac{d}{n}})+T\sqrt{\frac{dr}{m}}.
		\end{aligned}
	\end{equation}
\end{theorem}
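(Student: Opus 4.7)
The plan is to follow the same template used for Theorem~\ref{thm: transfer t<r} (and ultimately Proposition~\ref{prop: diagonal contrast}): characterize $W_{\CL}$ as the top-$r$ right eigenspace of an explicit matrix $M$, compute $\mathbb{E}M$ together with its top-$r$ eigenspace, control the fluctuation $\|M-\mathbb{E}M\|$, and finish with a Davis--Kahan/Wedin $\sin\Theta$ argument. The new ingredient relative to Theorem~\ref{thm: transfer t<r} is that the assumption $T>r$ with $\lambda_{(r)}(\sum_t w_t w_t^\top)>c$ makes the supervised signal alone full-rank in the $U^\star$ direction, which is precisely what kills the $\sqrt{r-T}$ factor appearing in \eqref{CL transfer upper bound}.

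Concretely, I would first differentiate \eqref{trans contrastive task} with respect to $W$. As in Proposition~\ref{prop: diagonal contrast}, the first-order condition (together with the regularizer $\lambda\|WW^\top\|_F^2/2$) forces the row space of $W_{\CL}$ to equal the top-$r$ right eigenspace of
\[
M \;=\; \Delta(XX^\top)-\tfrac{1}{n-1}X(1_n 1_n^\top-I_n)X^\top
\;+\;\tfrac{\alpha}{(m-1)^2}\sum_{t=1}^{T}\hat X^{t}H\,y^{t}(y^{t})^\top H\,(\hat X^{t})^\top,
\]
with $H=I_m-(1/m)1_m 1_m^\top$. Next I would take expectations term by term. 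The self-supervised block is handled exactly as in the proof of Theorem~\ref{thm: recover CL}: in expectation it equals $\nu^2 U^\star U^{\star\top}$ up to a diagonal/incoherence bias of order $\nu^2\cdot I(U^\star)$. For the HSIC block, substituting $\hat x_i^t=U^\star \hat z_i^t+\hat\xi_i^t$ and $y_i^t=\langle w_t,\hat z_i^t\rangle/\nu$, the cross-term $\hat X^t H y^t$ concentrates around $m\,\nu\,U^\star w_t$, so that
\[
\mathbb{E}\Big[\tfrac{1}{(m-1)^2}\hat X^{t}Hy^{t}(y^{t})^\top H(\hat X^{t})^\top\Big]
\;=\;\nu^2\,U^\star w_t w_t^\top U^{\star\top}\;+\;o(1).
\]
Summing in $t$ yields $\mathbb{E}M \approx \nu^2 U^\star\bigl(I_r+\alpha\sum_t w_t w_t^\top\bigr)U^{\star\top}$, whose top-$r$ eigenspace is exactly $U^\star$, with $r$-th eigenvalue bounded below by $\nu^2(1+\alpha c)$ under the assumption on $\lambda_{(r)}(\sum_t w_t w_t^\top)$.

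For the perturbation bound, I would reuse the concentration inequalities developed for Theorem~\ref{thm: recover CL} to bound the self-supervised deviations by $\nu^2\bigl(r\log d/d+\sqrt{d/n}\bigr)$, and apply a standard sub-Gaussian covariance concentration (e.g.\ matrix Bernstein applied to each rank-one summand $\hat X^tHy^t(y^t)^\top H(\hat X^t)^\top/(m-1)^2$) to bound each HSIC term's deviation by $\nu^2\sqrt{d/m}$, which after summing over $t=1,\dots,T$ gives a total HSIC fluctuation of order $\alpha\,T\sqrt{d/m}\cdot\nu^2$. Davis--Kahan then yields
\[
\mathbb{E}\|\sin\Theta(U_{\CL},U^\star)\|_F
\;\lesssim\;\sqrt{r}\cdot\frac{\nu^2\bigl(r\log d/d+\sqrt{d/n}\bigr)+\alpha\,\nu^2 T\sqrt{d/m}}{\nu^2(1+\alpha)},
\]
which collects into the claimed bound \eqref{CL transfer upper bound2}.

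The main obstacle I anticipate is the HSIC concentration step: unlike the self-supervised block, each summand is a product of two noisy factors (the labels $y^t$ and the features $\hat X^t$ are correlated through $\hat z^t$), so bounding $\|\hat X^tH y^t(y^t)^\top H(\hat X^t)^\top/(m-1)^2-\nu^2 U^\star w_t w_t^\top U^{\star\top}\|$ cleanly at rate $\sqrt{d/m}$ requires decomposing the error into signal-signal, signal-noise, and noise-noise cross-terms and handling each via Hanson--Wright-type inequalities. A secondary care point is ensuring that the $T$-task union bound does not lose extra $\log T$ factors; since $T$ appears linearly in the final bound this can be absorbed, but I would keep track of it. Once these fluctuations are controlled, the eigenvalue gap of order $\nu^2(1+\alpha)$ produces the $1/(\alpha+1)$ decay multiplying the self-supervised error, while the supervised error enters only additively as $T\sqrt{dr/m}$, matching Theorem~\ref{thm: transfer t>r}.
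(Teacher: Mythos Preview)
Your proposal is correct and follows essentially the same route as the paper: identify $U_{\CL}$ as the top-$r$ eigenspace of the matrix you call $M$ (this is Theorem~\ref{thm: trans ap}), recognize the population target as $\nu^2 U^\star(I_r+\alpha\sum_t w_t w_t^\top)U^{\star\top}$ with eigengap $\gtrsim\nu^2(1+\alpha)$, reuse the bound \eqref{CL difference bound} for the self-supervised block, bound each HSIC block by $\nu^2\sqrt{d/m}$, and conclude with Davis--Kahan.

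The only technical difference is in how the HSIC fluctuation is controlled. The paper does not use matrix Bernstein on the rank-one summand; instead, Lemma~\ref{Thm: SCL} first bounds the vector $\tfrac{1}{m}\hat X^t H y^t - \nu U^\star w_t$ by decomposing into the signal--signal, signal--noise, and noise--noise pieces you anticipate, and then lifts to the matrix via the elementary inequality $\|\alpha\alpha^\top-\beta\beta^\top\|_F\lesssim(\|\alpha\|+\|\beta\|)\|\alpha-\beta\|$ (Lemma~\ref{Lemma: alphabeta}). Because $\|\tfrac{1}{m}\hat X^t H y^t\|$ is random, the paper introduces a high-probability event $A_t=\{\|\tfrac{1}{m}\hat X^t H y^t-\nu U^\star w_t\|<\nu\}$ on which this norm is $O(\nu)$, applies Davis--Kahan on $\cap_t A_t$, and handles $\cup_t A_t^C$ with the trivial bound $\|\sin\Theta\|_F\le\sqrt r$ together with $\mathbb P(A_t^C)\lesssim\sqrt{d/m}$. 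This truncation step is why the paper's final HSIC contribution is $T\sqrt{dr/m}$ rather than the $\tfrac{\alpha}{1+\alpha}T\sqrt{dr/m}$ your display yields; both are of course bounded by the same quantity. Your direct approach would also work (e.g.\ via Cauchy--Schwarz on $\mathbb E[(\|\alpha\|+\|\beta\|)\|\alpha-\beta\|]$), and would avoid the event bookkeeping at no loss.
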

The proof is given in Appendix \ref{thm: transfer t>r ap}.

\begin{theorem}\label{thm: downstream risk transfer t>r}
    Suppose the conditions in Theorem \ref{thm: transfer t>r} hold. Then, 
    \begin{align}\label{CL transfer upper bound3}
		\mathbb{E}_{\mathcal{D}}[\inf_{w\in\mathbb{R}^r} \mathbb{E}_{\mathcal{E}}[\ell_r(\delta_{W_{\CL}, w})]-\inf_{w\in\mathbb{R}^r}\mathbb{E}_{\mathcal{E}}[\ell_r(\delta_{U^{\star \top}, w})] \lesssim \frac{\sqrt{r}}{\alpha+1}\qty(\frac{r}{d}\log d+\sqrt{\frac{d}{n}})+T\sqrt{\frac{dr}{m}}.
	\end{align}
\end{theorem}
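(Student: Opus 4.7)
The plan is to reduce the excess downstream risk to the subspace recovery bound already proved in Theorem \ref{thm: transfer t>r} via a deterministic comparison lemma, mirroring the passage from Theorem \ref{thm: recover CL} to the regression part of Theorem \ref{thm: best linear predictor risk CL}. First I would observe that the linear predictor class $\{w^\top W_{\CL} \check x : w \in \mathbb{R}^r\}$ depends only on the row span of $W_{\CL}$, which by construction equals $\mathrm{col}(U_{\CL})$. Hence the infimum risk is invariant under replacing $W_{\CL}$ by $U_{\CL}^\top$, and it suffices to upper bound
\begin{equation*}
\mathcal{E}(U_{\CL}) \triangleq \inf_{w\in\mathbb{R}^r} \mathbb{E}_{\mathcal{E}}[\ell_r(\delta_{U_{\CL}^\top, w})] - \inf_{w\in\mathbb{R}^r} \mathbb{E}_{\mathcal{E}}[\ell_r(\delta_{U^{\star \top}, w})].
\end{equation*}

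Next, for any $U \in \mathbb{O}_{d,r}$, I would write down the Wiener-filter minimizer in closed form: $w(U) = (U^\top \Cov(\check x) U)^{-1} U^\top \mathbb{E}_{\mathcal{E}}[\check x \check y]$, with $\Cov(\check x) = \nu^2 U^\star U^{\star \top} + \Sigma$ and $\mathbb{E}_{\mathcal{E}}[\check x \check y] = \nu U^\star w^\star$ under the linear generative model. Plugging this back yields an explicit expression for $\mathcal{E}(U)$ as a quadratic form in $w^\star$ involving $U^\top U^\star$ and $\Sigma$. Under Assumptions \ref{asm: regular}--\ref{asm: incoherent} and $\rho = \Theta(1)$, Woodbury-type manipulations together with the incoherence of $U^\star$ let one control $\mathcal{E}(U)$ by a constant multiple of $\|\sin\Theta(U, U^\star)\|_F$; this is precisely the deterministic lemma that is already invoked behind Theorem \ref{thm: best linear predictor risk CL}, applied here verbatim to $U = U_{\CL}$.

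Finally, taking expectation over $\mathcal{D}$ and invoking Theorem \ref{thm: transfer t>r} directly gives
\begin{equation*}
\mathbb{E}_{\mathcal{D}}[\mathcal{E}(U_{\CL})] \lesssim \mathbb{E}_{\mathcal{D}}\|\sin\Theta(U_{\CL}, U^\star)\|_F \lesssim \frac{\sqrt{r}}{\alpha+1}\left(\frac{r}{d}\log d + \sqrt{\frac{d}{n}}\right) + T\sqrt{\frac{dr}{m}},
\end{equation*}
which is exactly the claimed bound.

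The main obstacle is the deterministic step of converting the sine-$\Theta$ distance into an excess risk bound that is \emph{linear} (rather than quadratic) in $\|\sin\Theta(U_{\CL}, U^\star)\|_F$: a naive expansion of $\mathcal{E}(U)$ produces a term quadratic in $U_{\perp}^{\star \top} U_{\CL}$ plus a cross term $\nu^2 w^{\star \top} U^{\star \top} U_{\CL} (U_{\CL}^\top \Sigma U^\star - \Sigma U^\star)$ that must be handled delicately. The incoherence Assumption \ref{asm: incoherent} together with the diagonality of $\Sigma$ ensures this cross term is $O(\sqrt{r\log d/d})$ smaller than the trivial bound, which is what allows the final rate to match the subspace recovery rate without losing an extra factor of $\sqrt{r}$; the rest is routine bookkeeping.
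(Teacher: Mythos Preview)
Your high-level plan is correct and matches the paper exactly: the paper's proof is one line, citing Lemma~\ref{lem: excess risk} (the deterministic conversion of excess regression risk into a sine-distance bound) and then Theorem~\ref{thm: transfer t>r}. Your reduction via the row span of $W_{\CL}$ and the closed-form Wiener minimizer is also how that lemma is derived.

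Where your explanation goes astray is the ``main obstacle'' paragraph. The linear (rather than quadratic) dependence on $\|\sin\Theta(U_{\CL},U^\star)\|$ does \emph{not} come from the incoherence Assumption~\ref{asm: incoherent}; the proof of Lemma~\ref{lem: excess risk} (via Lemma~\ref{lem: auxiliary lemma 3 for risk matrix}) never uses it. The cross term you worry about is simply bounded as $\|U^\top\Sigma U - U^{\star\top}\Sigma U^\star\|_2 \le 2\sigma_{(1)}^2\|U-U^\star\|_2$, and the crucial device is the rotation-invariance of the risk: since $\inf_w\mathcal{R}_r(\delta_{U,w})=\inf_w\mathcal{R}_r(\delta_{UO,w})$ for any $O\in\mathbb{O}_{r,r}$, one may pick $O$ so that $\|UO-U^\star\|_2\le\sqrt 2\,\|\sin\Theta(U,U^\star)\|_2$ (Lemma~\ref{lem: bound 2 norm by sin distance}). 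That is what delivers the linear rate; incoherence plays no role here. Also note the lemma is stated in the spectral norm $\|\sin\Theta(\cdot,\cdot)\|_2$, which is then dominated by the Frobenius-norm bound of Theorem~\ref{thm: transfer t>r}.
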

The proof is given in Appendix \ref{thm: downstream risk transfer t>r ap}.

Theorem~\ref{thm: transfer t>r} and Theorem~\ref{thm: downstream risk transfer t>r} show that in the case where tasks are abundant, as $\alpha$ goes to infinity (corresponding to the case where we use the supervised loss only), the upper bounds in Equations \eqref{CL transfer upper bound2} and \eqref{CL transfer upper bound3} are reduced to $T\sqrt{rd/m}$. This rate can be worse than the $\sqrt{r^3}\log d/d+\sqrt{rd/n}$ rate obtained by self-supervised contrastive learning when $m$ is small. Recall that when the number of tasks is small, labeled data introduce extra bias term $\sqrt{r-T}$ (Theorem~\ref{thm: transfer t<r} and Theorem~\ref{thm: downstream risk transfer t<r}). We note that when the tasks are abundant enough, the harm of labeled data is mainly due to the variance brought by the labeled data. When $m$ is sufficiently large, supervised learning on source tasks can yield a consistent estimation of core features, whereas self-supervised contrastive learning can not. 


In extending our results from regression to the binary classification setting, the only difference is in the label generation process, and we can obtain similar results with some modification of the proofs. 
The corresponding feature recovery bounds of Theorem \ref{thm: transfer t<r} (where the tasks are insufficient) and Theorem \ref{thm: transfer t>r} (where the tasks are abundant) are stated as follows:

\begin{theorem}
	\label{thm: transfer t<r classification}
	In the classification setting where $y^t = \sign(\langle w_t, z \rangle/\nu)$, suppose the conditions in Theorem \ref{thm: transfer t<r} hold and $z$ in the spiked covariance model \eqref{model: spiked covariance} follows a Gaussian distribution, then we have:
    \begin{small}
	\begin{equation}
	\label{CL transfer upper bound classification}
    \begin{aligned}
      \mathbb{E}\|\sin\Theta(U_{\CL},U^\star)\|_F\lesssim&\sqrt{r-T} \left(\frac{r\log d}{d}+\sqrt{\frac{d}{n}}+\alpha T\sqrt{\frac{ d}{m}} \wedge 1 \right) +\sqrt{T}\left(\frac{r\log d}{\alpha d}+\frac{1}{\alpha}\sqrt{\frac{d}{n}}+T\sqrt{\frac{d}{m}}\right).
    \end{aligned}
    \end{equation}
    \end{small}
\end{theorem}

\begin{theorem}
	\label{thm: transfer t>r classification}
	In the classification setting where $y^t = \sign(\langle w_t, z \rangle/\nu)$ , suppose the conditions in Theorem \ref{thm: transfer t>r} hold and $z$ in the spiked covariance model \eqref{model: spiked covariance} follows a Gaussian distribution, then we have:
	\begin{equation}\label{CL transfer upper bound2 classification}
		\begin{aligned}
	    	\mathbb{E}\|\sin\Theta(U_{\CL},U^\star)\|_F\lesssim&\frac{\sqrt{r}}{\alpha+1}\qty(\frac{r}{d}\log d+\sqrt{\frac{d}{n}})+T\sqrt{\frac{dr}{m}}.
		\end{aligned}
	\end{equation}
\end{theorem}

As the feature recovery bounds remain to be the same, the counterpart of the in-domain downstream tasks results, Theorem 
\ref{thm: downstream risk transfer t<r} and Theorem 
\ref{thm: downstream risk transfer t>r}, in this classification setting follows immediately. For the sake of space, we defer the generalized version and proofs in Theorem \ref{thm: transfer t<r ap classification} and Theorem \ref{thm: transfer t>r ap classification} in the appendix.

\section{Numerical Experiments}\label{sec: simulation}
\subsection{Linear Model with Synthetic Data}
\label{sec: synthetic}


To verify our theory, we conducted numerical experiments on the spiked covariance model \eqref{model: spiked covariance} under a linear representation setting and contrastive loss functions defined in \eqref{loss: self contrastive} and \eqref{trans contrastive task}. As we have explicitly formulated the loss function and derived its equivalent form in the main body and appendix, we simply minimize the corresponding loss by gradient descent to find the optimal linear representation $W$. For self-supervised contrastive learning with random masking augmentation, we independently draw the augmentation function by Definition \ref{aug: random masking} and apply them to all samples in each iteration. To ensure convergence, we set the maximum number of iterations for it (typically 10000 or 50000 depending on dimension $d$). 

We report two criteria to evaluate the quality of the representation, in-domain downstream error, and sine distance. To obtain the sine distance for a learned representation $W$, we perform singular value decomposition to get $W=(U\Sigma V^\top)^\top$ and then compute $\|\sin\Theta(U,U^\star)\|_F$. To obtain the in-domain downstream task performance, in the comparison between autoencoders and contrastive learning, we first draw $n$ labeled data from spiked covariance model \eqref{model: spiked covariance} with labels generated as in Section \ref{sec: downstream}, then we train the model by using the data without labels to obtain the linear representation $W$, and learn a linear predictor $w$ using the data with labels and compute the regression error. In the transfer learning setting, we draw some labeled data from the source tasks and additional unlabeled data. The number of labeled data is set to be $m=1000$ and the number of unlabeled data is set to be $n=1000$. Then train with them to obtain the linear representation $W$, and draw labeled data from a new source task to learn a linear predictor $w$ to compute the regression error. In particular, we subtract the optimal regression error obtained by the best representation $U^{\star\top}$ for each regression error and report the difference, or more precisely, the excess risk as in-domain downstream performance.

The results are reported in Fig. \ref{fig: CL and AE} and \ref{fig: theta distance} and Table \ref{tab: downstream trans} and \ref{tab: recover trans}. As predicted by Theorems \ref{thm: recover PCA} and \ref{thm: recover CL}, the feature recovery error and in-domain downstream task risk of contrastive learning decrease as $d$ increases (Fig. \ref{fig: CL and AE}: \textbf{Left}) and as $n$ increases (Fig. \ref{fig: CL and AE}: \textbf{Center}) while that of autoencoders is insensible to the changes in $d$ and $n$. Consistent with our theory, in Fig. \ref{fig: CL and AE}: \textbf{Right}, it is observed that when tasks are not abundant, the transfer performance exhibit a $U$-shaped curve, and the best result is achieved by choosing an appropriate $\alpha$. When tasks are abundant and labeled data are sufficient, the error remains small when we take large $\alpha$. 

\begin{figure}
	\centering
	\includegraphics[width=0.3\linewidth]{figure/fix_n_error.png}
    \includegraphics[width=0.3\linewidth]{figure/fix_d_error.png}
	\includegraphics[width=0.3\linewidth]{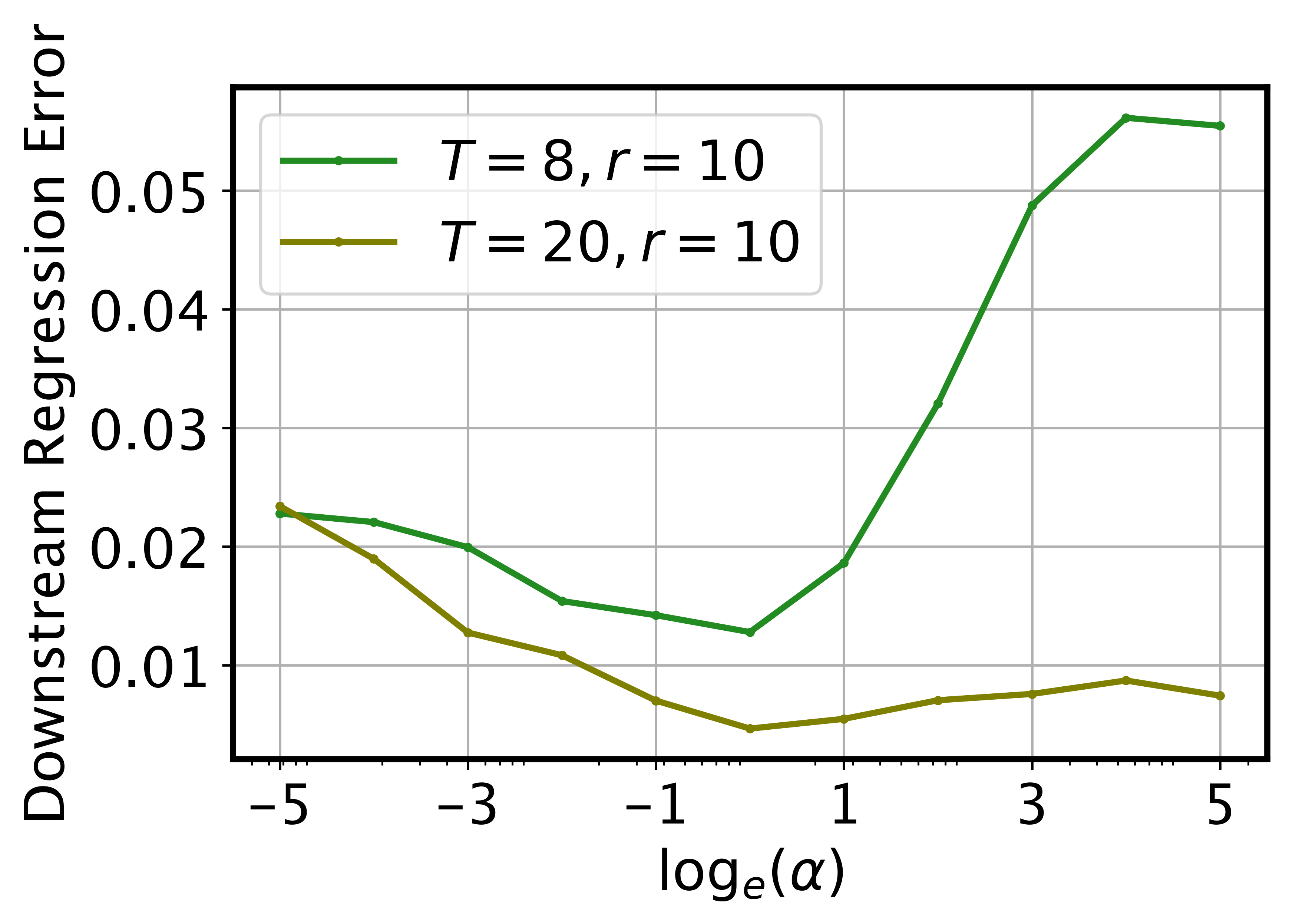}
    \caption{The vertical axes indicate the downstream regression error. We subtract the regression error of the ground truth features to measure the excess error.
    \textbf{Left:} 
    Comparison of in-domain downstream task performance between contrastive learning and autoencoders the dimension $d$. The sample size $n$ is set as $n=20000$. \textbf{Center:} Comparison of in-domain downstream task performance between contrastive learning and autoencoders the dimension $n$. The dimension $d$ is set as $d=40$. \textbf{Right:} In-domain downstream task performance in transfer learning against penalty parameter $\alpha$ in log scale. $T$ is the number of source tasks and $r$ is the dimension of the representation function. We set the number of labeled data and unlabeled data as $m=1000$ and $n=1000$ respectively.\vspace{-1.5em}}
	\label{fig: CL and AE}
\end{figure}

\begin{figure}

	\centering
	\includegraphics[width=0.29\linewidth]{figure/fix_n_dist.png}
    \includegraphics[width=0.29\linewidth]{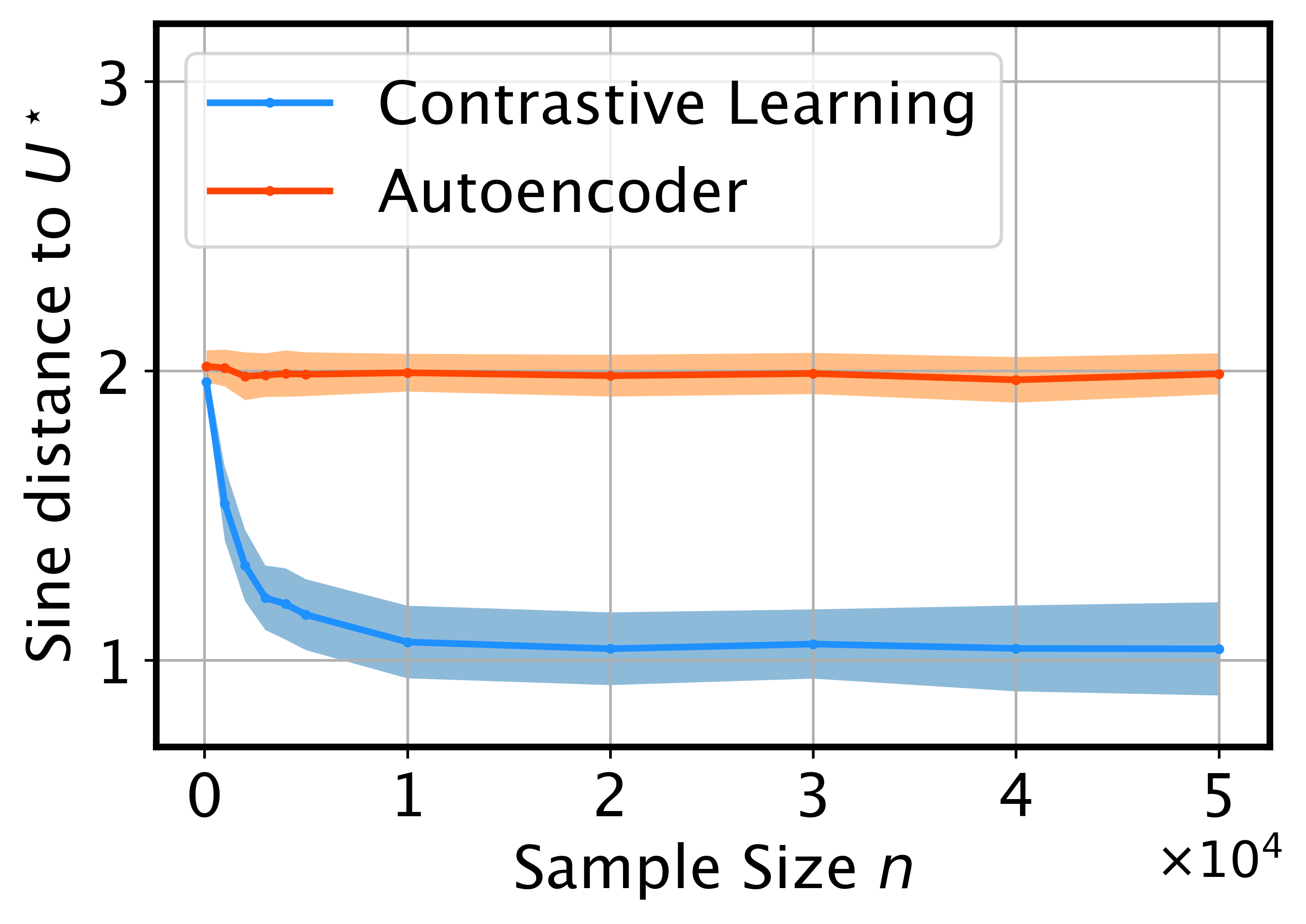}
	\includegraphics[width=0.32\linewidth]{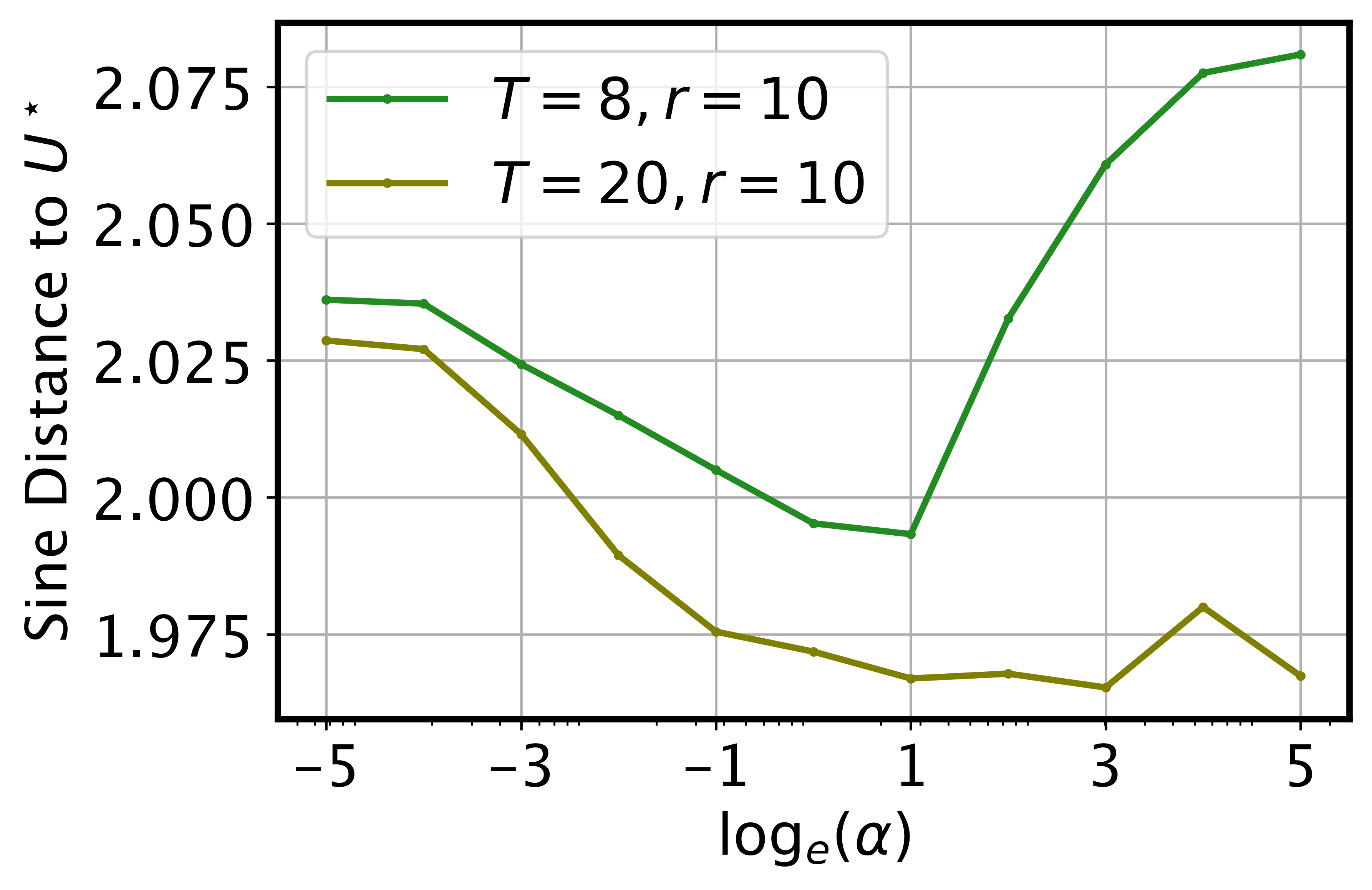}
    \caption{\textbf{Left:} Comparison of learned feature between contrastive learning and autoencoders against the dimension $d$. The sample size $n$ is set as $n=20000$. \textbf{Center:} Comparison of feature recovery performance between contrastive learning and autoencoders against the dimension $n$. The dimension $d$ is set as $d=40$. \textbf{Right:} Feature recovery performance in transfer learning against penalty parameter $\alpha$ in log scale. $T$ is the number of source tasks. We set the number of labeled data and unlabeled data as $m=1000$ and $n=1000$ respectively.}
    \label{fig: theta distance}
\end{figure}
\begin{table}
    \centering
    \resizebox{\columnwidth}{!}{
    \begin{tabular}{ l||cccccccccccc|  }
        \hline
        $\log_e(\alpha)$ & -5 & -4 & -3 & -2 & -1 & 0 & 1 & 2 & 3 & 4 & 5\\
        \hline
        \hline
        $T=8, r=10$    & 0.0242 & 0.0231 & 0.0199 & 0.0141 & 0.0122 & 0.0125 & 0.0184 & 0.0345 & 0.0499 & 0.0535 & 0.0587 \\
        \hline
        $T=20, r=10$   & 0.0223 & 0.0163 & 0.0156 & 0.0096 & 0.0079 & 0.0055 & 0.0064 & 0.0064 & 0.0067 & 0.0070 & 0.0079 \\
        \hline
    \end{tabular}
    }
    \caption{In-domain downstream performance in transfer learning against the penalty parameter $\alpha$. $T$ is the number of source tasks.}
    \label{tab: downstream trans}
\end{table}

\begin{table}
    \centering
    \resizebox{\columnwidth}{!}{
    \begin{tabular}{ l||cccccccccccc|  }
        \hline
        $\log_e(\alpha)$ & -5 & -4 & -3 & -2 & -1 & 0 & 1 & 2 & 3 & 4 & 5\\
        \hline
        \hline
        $T=8, r=10$    & 2.0373 & 2.0371 & 2.0228 & 1.9908 & 2.0021 & 2.0055 & 2.0010 & 2.0362 & 2.0699 & 2.0705 & 2.0813 \\
        \hline
        $T=20, r=10$   & 2.0352 & 2.0292 & 2.0030 & 1.9871 & 1.9740 & 1.9690 & 1.9766 & 1.9702 & 1.9790 & 1.9714 & 1.9672 \\
        \hline
    \end{tabular}
    }
    \caption{Feature recovery performance in transfer learning against the penalty parameter $\alpha$. $T$ is the number of source tasks.}
    \label{tab: recover trans}
\end{table}
\subsection{Neural Nets with Real-World Dataset}
\label{sec: real data}
{In this section, we provide experimental results in real-world datasets to support our theoretical results. Although our model settings and assumptions might be violated under this scenario, as we shall see, our findings still remain valid in practice. }

We conduct the experiments using the datasets STL-10 \citep{Coates2011AnAO} and CIFAR-10 \citep{Krizhevsky2009LearningML} with the neural nets architecture ResNet-18 \citep{He2016DeepRL}. Our experiments are carried out based on linear evaluation following SimCLR \citep{chen2020simple}, where we first train a ResNet-18 encoder and a two-layer MLP projector with unlabeled augmented data. We then freeze the encoder, train a logistic regression on top of it with labeled data, and lastly evaluate the performance on the test data. Following \citet{chen2020simple}, we apply augmentations including resized cropping, horizontal flipping, color distortion, and Gaussian blurring to generate the augmented data, and use the InfoNCE loss function to train the network. All training is carried out with the Adam optimizer \citep{Kingma2015AdamAM}, batch size 256, learning rate $3\times 10^{-4}$, weight decay $10^{-4}$, and a cosine annealing learning rate scheduler for 100 epochs. Our codes are implemented in Pytorch and run on an NVIDIA V100 GPU.

\if0
\paragraph{Quadratic regularization v.s. weight decay:} As discussed in Section \ref{sec: linear representation} and \ref{section: regularization}, it is more appropriate to apply quadratic regularization rather than standard weight decay in a linear representation setting to encourage the diversity of representation. We verify this conjecture in neural networks, and the results are provided in Table \ref{tab: simclr reg}. The first column corresponds to training with standard weight decay, and in the setting of the second column, we do not apply weight decay for each weight matrix of fully connected layers in the encoder, and add an additional regularization term $\lambda\|W^TW\|_F^2$ on the loss function instead. Note that the quadratic regularization does not apply to convolution layers and bias terms, thus we keep the weight decay on these parameters. We search the regularization parameter in each settings from $\lambda=0.1,0.01,0.001,0.0001,0.00001,0.000001$ and find that $\lambda=0.0001$ yields the best performance in each settings and datasets.
\fi

\paragraph{Contrastive learning v.s. standard autoencoders:}
Here we provide real-world evidence for our theoretical findings in Section \ref{sec: recover} by comparing the performance of contrastive learning versus a standard autoencoder. The architecture of encoders is the same for these two methods, and we use an inversed ResNet-18 as the decoder. During the training time, we use the encoder-decoder architecture and mean squared error loss to reconstruct the input, and then we train a linear classifier on the features learned by the encoder. The results are listed in Table \ref{tab: simclr autoencoder}, we can find that contrastive learning demonstrates superior performance over the standard autoencoder.
\begin{table}[H]
    \centering
    \begin{tabular}{ l||cc  }
        \hline
        Testing Accuracy & Contrastive Learning & Standard Autoencoder\\
        \hline
        \hline
        CIFAR10& $65.11\pm0.51$&$44.76\pm0.16$ \\
        STL10& $71.02\pm0.47$&$39.00\pm0.58$\\
        \hline
    \end{tabular}
    \caption{Comparison of linear evaluation performance of contrastive learning and autoencoders.}
    \label{tab: simclr autoencoder}
\end{table}
\paragraph{The impact of labeled data in transfer learning}
Now we we provide real-world evidence for our theoretical findings in Section \ref{sec: transfer}. Following the joint optimization formulation in equation \ref{trans contrastive task}, we combine the InfoNCE loss function for unlabeled data and cross-entropy loss for labeled data with a ratio $\alpha$. For both STL-10 and CIFAR-10 datasets, we divide the test data into two sets, one consists of the first five classes and the other one consists of the remaining five classes. During training, we use the training data as unlabeled data and the first set of test data as the labeled data to train the model jointly, and then train a linear classifier with the second set of test data on features learned by the encoder. As predicted by Theorem \ref{thm: transfer t<r}, when $\alpha$ is small, introducing labeled data from the first five classes would be beneficial to learn better representations and improve the performance on the last five classes; when $\alpha$ is large, labeled data from the first five classes will make the model only focus on features that are useful to discriminate the first five classes and ignore other features, thus introducing labeled data could be harmful to the performance on the last five classes. Testing accuracy on the last five classes with different $\alpha$ are listed in table \ref{tab: simclr transfer}, it is observed that the accuracy first increases and then decreases as $\alpha$ grows, which is consistent with our theoretical results.
\begin{table}[H]
    \centering
    \resizebox{\columnwidth}{!}{
    \begin{tabular}{ l||ccccccccccc  }
        \hline
        Testing Accuracy & $\alpha=0.0$& $\alpha=0.1$& $\alpha=0.2$& $\alpha=0.3$& $\alpha=0.4$& $\alpha=0.5$& $\alpha=0.6$&$\alpha=0.7$& $\alpha=0.8$& $\alpha=0.9$& $\alpha=1.0$\\
        \hline
        \hline
        CIFAR10 & 74.27&75.52&74.86&75.31&75.21&74.86&74.46&73.85&72.20&69.17&51.31\\
        \hline
        STL10 &
        82.56&83.54&83.27&83.24&83.21&83.03&82.34&82.11&80.94&76.88&52.37  \\
        \hline
    \end{tabular}
    }
    \caption{\textbf{Transfer learning performance with different $\alpha$}. For each experiment, we report the average accuracy for three independent runs.}
    \label{tab: simclr transfer}
\end{table}
\section{Conclusion}\label{sec: conclusion}
In this work, we  establish a theoretical framework to study contrastive learning under the linear representation setting. 
We theoretically prove that contrastive learning, compared with autoencoders and GANs, can obtain a better low-rank representation under the spiked covariance model, which further leads to better performance in in-domain downstream tasks. We also highlight the impact of labeled data in supervised contrastive learning and multi-task transfer learning: labeled data can reduce the domain shift bias in contrastive learning, but it harms the learned representation in transfer learning.
To our knowledge, our result is the first theoretical result to guarantee the success of contrastive learning by comparing it with existing representation learning methods. However, to get a tractable analysis, like many other theoretical works in representation learning \citep{du2020few,lee2021predicting,tripuraneni2021provable}, our work starts with linear representations, which still provides important insights. Recently, \citet{wen2021toward} and \citet{refinetti2022dynamics} studied the training dynamics of autoencoders and contrastive learning with nonlinear shallow neural networks. Extending our results to these more complex models is an interesting direction for future work.
\section*{Acknowledgement}
L.Z. is supported by National Science Foundation DMS 2015378. J.Z. is supported
by the National Science Foundation (CCF 1763191 and CAREER 1942926), the US National Institutes of Health (P30AG059307 and U01MH098953) and grants from the Silicon Valley Foundation and the Chan-Zuckerberg Initiative.


\newpage

\appendix

\section{Background and omitted discussion}
\subsection{Comparison with other works}\label{section: comparison}
Here we compare the results in this paper with some closely related works. 

To rigorously analyze contrastive learning, we consider the random masking augmentation strategy which is also analyzed in \citet{wen2021toward}. In \citet{wen2021toward}, the authors aim to understand the training dynamics of contrastive learning in a shallow nonlinear neural network and focus more on dealing with nonlinearity. In comparison, our work focus on the comparison between contrastive learning and autoencoders and the role of label information in contrastive learning. To make the problem mathematically tractable, we adopt a linear model, which is simple but enough to shed light on many mysterious phenomena in practice.  Moreover, while they assume a sparse coding model, where the features are extremely sparse, and Gaussianity of signals and noise, our analysis only requires that the features are sub-Gaussian \eqref{model: spiked covariance}. Furthermore, our technique allows the signal-to-noise ratio to have different orders, as long as it decreases slowly, while their analysis is restricted to a particular signal-to-noise ratio. 

\citet{tian2022deep} studied the relationship between contrastive learning and PCA from a game-theoretic point of view. 
Specifically, the authors decompose the gradient descent on the contrastive loss into two dynamics, namely the max-player and min-player. It is proven that in deep linear networks, the max-player is equivalent to PCA and the landscape has no spurious minimum. While the results on max-player can be applied to a family of contrastive loss, it is still difficult to analyze the min-player in a general setting. In our paper, we use a linear contrastive loss \eqref{triplet loss} to explicitly obtain the features learned by contrastive learning. 
Moreover, our results can be directly extended to a deep linear network setting by the equivalence of a single linear transformation and a deep linear network.
The major difference is the non-convexity of the loss landscape.

\citet{garg2020functional} studied the combination of supervised learning and self-supervised learning. They viewed training with unlabeled data as functional regularization on learning the representation function, and obtained sample complexity bounds in a PAC-learning style for various settings. In particular, they found that such functional regularization can help to reduce the amount of labeled data needed, and showed autoencoders and masked self-supervision as two concrete examples. Apart from \citet{garg2020functional}, this paper focuses on a regime in combining self-supervised learning and supervised learning, where a trade-off between labeled data and unlabeled data exists. 
Specifically, Theorem 3 in \citet{garg2020functional} assumes that a ground truth representation exists such that it can keep both self-supervised loss and supervised loss at a very low threshold. 
However, as the authors admit, it is hard to determine such a threshold in practical settings. For example, since the unlabeled data and labeled data come from different domains, such as Image-Net and CIFAR-10, domain-specific features may have a much lower loss compared with domain-transferable features. 
In our paper, we first study the regime where tasks are not diverse enough in Theorem \ref{thm: transfer t<r} (which corresponds to the case where ground truth does not exist) and show the trade-off between supervised loss and self-supervised loss.
Then in Theorem \ref{thm: transfer t>r} we show that when tasks are abundant (which corresponds to the case where ground truth exists), labeled data helps to achieve better error bounds, which is similar to the result of \citet{garg2020functional}. 
Our result of Theorem \ref{thm: transfer t<r} provides novel insight into the regime where tasks are not diverse, which has been left untouched in the literature.

In \citet{li2021self}, the authors proposed a novel self-supervised loss function based on HSIC and discussed the relationship between InfoNCE and the proposed SSL-HSIC loss. The SSL-HSIC loss measures the dependence between the output features and one-hot encoded labels(which serve as the indicators of positive samples) and minimizing the SSL-HSIC loss encourages the network to discriminate augmented views from different samples. In comparison to this self-supervised loss, we use HSIC in Section \ref{sec: transfer} as a supervised loss to measure the dependence between output features and the true labels, which is a common usage of HSIC in previous works \citep{barshan2011supervised,song2007supervised}. Moreover, we want to point out that the proposed estimator of SSL-HSIC (see Equation (11) in \citet{li2021self}) can be reduced to the linear loss we use in this paper when the kernel $k(\cdot,\cdot)$ is chosen to be a simple inner product. The authors argued that the standard InfoNCE loss may yield meaningless features in some cases thus the proposed HSIC-based loss could be a better alternative, and provided empirical results illustrating the comparable performance of SSL-HSIC. It remains to be further explored the benefits of such an HSIC-based method. 

\cite{lee2021predicting} studied self-supervised learning under a conditional independence assumption, and showed that with the optimal representation learned in pretext tasks, the in-domain downstream risk is guaranteed to be small. In the contrastive learning context, for example, an image classification downstream task, such an assumption implies that the augmented views generated from the same picture are roughly independent conditional on its ground-truth class, which could be too strong since two views are usually strongly correlated. Such an assumption would be closer to supervised contrastive learning as we have discussed in Section \ref{sec: classification} where we contrast two independent samples from the same class, such sample pairs are independent conditional on the true label, but it requires label information and thus not applied to self-supervised contrastive learning. Compared with this work, we studied a specific data-generating model where the two views are obtained by practical augmentation and thus could be more close to a real-world setting. It is also remarkable that in\cite{lee2021predicting}, their analysis can be adapted to the nonlinear representation setting in the sense that they directly assumed that the optimal representation is obtained. However, the representation learning in the contrastive learning context, even under a linear representation setting, could be non-convex. Thus our analysis starts from a simple setting to obtain a deep understanding of what could be learned in the pretext tasks.

\cite{saunshi2022understanding} proposed a novel perspective that theoretical analysis of contrastive learning must take the inductive bias into account. It is shown that without considering the function class, it is possible that the learned features totally fail in downstream classification tasks. Furthermore, it is shown that within the linear representation class, contrastive self-supervised learning is guaranteed to learn meaningful features under certain conditions. In our paper, we have restricted ourselves to a similar linear representation setting and avoided the collapse raised by complicated models. Compared with their analysis in a linear representation setting, our bounds provide an exact order while their results still need to quantify the expressivity and inconsistency measure, which is very difficult without very strong assumptions. Moreover, their analysis requires a finite input space and augmentation set, which is much stronger than our data-generating model. 

Later than our paper, \cite{fu2022details} considered a similar issue of transfer learning performance of the supervised contrastive method. The authors argue that directly minimizing the supervised contrastive method will lead to feature collapse, which implies that within each class, all data points will have the same embedding and thus the supervised contrastive method loses information that could be useful for transfer learning. This intuition is similar to our setting in Section \ref{sec: transfer}, and based on it the authors proposed a new loss function that combines the supervised contrastive loss and within-class self-supervised loss together. At a high level, this new loss function is quite similar to equation \ref{trans contrastive task} since they are both linear interpolations of self-supervised contrastive loss and supervised loss, and the common motivation is to encourage the model to learn more background features. And our analysis in Section \ref{sec: transfer} can provide a theoretical foundation for when could such interpolation work and how to choose the ratio $\alpha$.

\subsection{Disucssion about the regularization term}
\label{section: regularization}
In this paper we use a quadratic regularization term $R_1(W)=\|WW^\top\|_F^2$ instead of a standard $\ell_2$ regularization term $R_2(W)=\|W\|_F^2$. Denote $W^T=[w_1,\cdots,w_r]$, then we can write these two terms as:
    $$
    R_1(W)=\sum_i^r\|w_i\|^4+\sum_{i,j}\langle w_i,w_j\rangle^2, \quad R_2(W)=\sum_i^r\|w_i\|^2
    $$
    The main difference between these two terms is that except for penalizing the norm of representation $w_i$, it also penalizes the similarity between different representations. In particular, in the linear representation setting, where we will deal with optimization problems like $$
    \min_{W\in\mathbb{R}^{r\times d}} \tr(W AW^\top)+\frac{\lambda}{2}R(W)
    $$
    where $A$ is a symmetric matrix determined by data and augmentation, we can easily find that the $\ell_2$ regularization would fail. To see this, we can rewrite the loss function as
    $$
    \tr(W AW^\top) +\frac{\lambda}{2}\|W\|_F^2=\sum_{i=1}^r (w_i^\top A w_i +\frac{\lambda}{2}\|w_i\|^2)=\sum_{i=1}^r w_i^\top(A +\frac{\lambda}{2}I)w_i, 
    $$
    it is easy to find that the optimal solution of each $w_i$ would be at infinity. Moreover, even if we add constraints like $\|w_i\|<C,\forall i\in [r]$, the optimal solution of each $w_i$ would all be the eigenvector corresponding to the smallest eigenvalue of $A$ thus the model would only learn a single representation from the data. In contrast, the quadratic regularization term encourages the diversity of representation by penalizing the similarity between representations, i.e., $\langle w_i,w_j \rangle^2$. In this situation, we have:
    $$
    \tr(W AW^\top) +\frac{\lambda}{2}\|WW^\top\|_F^2= \frac{\lambda}{2}\|W^\top W+\frac{1}{\lambda}A\|_F^2-\frac{1}{2\lambda}\|A\|_F^2
    $$
    and it is easy to find that the optimal solution of $w_i$ would be finite and each $w_i$ corresponds to different eigenvectors of $A$ and is orthogonal to each other, which implies that they would learn totally different representations. As a result, the quadratic regularization term would be more helpful to self-supervised learning with linear representation. In real-world practice, $\ell_2$ regularization can still work well with the help of non-linearity and normalization techniques, but we also provide an empirical observation that applying quadratic regularization would be helpful to improve the performance compared with using standard weight decay. 
    We would also like to point out that in the linear regime, the choice of $\lambda\in\mathbb{R}_+$ would only affect the norm of $w_i$ and makes no difference to the direction of $w_i$ and the quality of representation, thus we do not specify this value in our analysis.
    Similar regularization techniques are also used in \citet{liu2021self} for theoretical analysis in the linear representation setting.
    
    We verify this conjecture in neural networks, and the results are provided in Table \ref{tab: simclr reg}. The first column corresponds to training with standard weight decay, and in the setting of the second column, we do not apply weight decay for each weight matrix of fully connected layers in the encoder, and add an additional regularization term $\lambda\|W^TW\|_F^2$ on the loss function instead. Note that the quadratic regularization does not apply to convolution layers and bias terms, thus we keep the weight decay on these parameters. We search the regularization parameter in each settings from $\lambda=0.1,0.01,0.001,0.0001,0.00001,0.000001$ and find that $\lambda=0.0001$ yields the best performance in each settings and datasets.
    
    It is observed that quadratic regularization slightly improves the performance of contrastive learning, which is consistent with our intuition.
    \begin{table}
        \centering
        \begin{tabular}{ l||ccc|  }
            \hline
            Testing Accuracy & weight decay=0.0001 &quadratic regularization=0.0001\\
            \hline
            \hline
            CIFAR10    & $65.11\pm0.51$ & $\mathbf{65.54\pm0.26}$ \\
            \hline
            STL10   &  $71.02\pm 0.47$ & $\mathbf{71.39\pm0.39}$ \\
            \hline
        \end{tabular}
        \caption{\textbf{Quadratic regularization v.s. weight decay}. We compare the top-1 accuracy of linear classifiers trained on features learned by SimCLR with the ResNet-18 encoder and different regularization methods. We repeat each experiment for 5 runs and report the mean and standard deviation. More details are provided in Section \ref{sec: real data}}
        \label{tab: simclr reg}
    \end{table}

\subsection{Background on distance between subspaces}
\label{sec: distance}
In this section, we will provide some basic properties of sine distance between subspaces. Recall the definition:
\begin{equation}
\label{sin theta distance ap}
    \left\|\sin \Theta\left(U_{1}, U_{2}\right)\right\|_F \triangleq\left\|U_{1 \perp}^\top U_{2}\right\|_F=\left\|U_{2 \perp}^\top U_{1}\right\|_F.
\end{equation}
where $U_1,U_2\in\mathbb{O}_{d, r}$ are two orthogonal matrices. Similarly, we can also define:
\begin{equation*}
    \left\|\sin \Theta\left(U_{1}, U_{2}\right)\right\|_2 \triangleq\left\|U_{1 \perp}^\top U_{2}\right\|_2=\left\|U_{2 \perp}^\top U_{1}\right\|_2.
\end{equation*}
We first give two equivalent definitions of this distance: 
\begin{proposition}
\label{prop: distance 1}
    $$\left\|\sin \Theta\left(U_{1}, U_{2}\right)\right\|_F^2 = r-\left\|U_{1}^\top U_{2}\right\|_F^2$$
\end{proposition}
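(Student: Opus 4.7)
The plan is to exploit the fact that $[U_1, U_{1\perp}] \in \mathbb{O}_{d,d}$ is a full orthogonal basis of $\mathbb{R}^d$, so that the projector decomposition $U_1 U_1^\top + U_{1\perp} U_{1\perp}^\top = I_d$ holds. Inserting this identity between $U_2^\top$ and $U_2$ gives
\begin{equation*}
    U_2^\top U_2 = U_2^\top U_1 U_1^\top U_2 + U_2^\top U_{1\perp} U_{1\perp}^\top U_2.
\end{equation*}

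Next I would take the trace of both sides. The left-hand side equals $\tr(I_r) = r$ since $U_2 \in \mathbb{O}_{d,r}$. For the right-hand side, the cyclic property of the trace together with $\tr(A^\top A) = \|A\|_F^2$ yields $\tr(U_2^\top U_1 U_1^\top U_2) = \|U_1^\top U_2\|_F^2$ and $\tr(U_2^\top U_{1\perp} U_{1\perp}^\top U_2) = \|U_{1\perp}^\top U_2\|_F^2$. By the definition \eqref{sin theta distance ap}, the latter is exactly $\|\sin\Theta(U_1, U_2)\|_F^2$. Rearranging produces the claimed identity.

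There is no substantive obstacle here; the only thing to be mindful of is that the existence of an orthogonal complement $U_{1\perp} \in \mathbb{O}_{d,d-r}$ such that $[U_1, U_{1\perp}]$ is orthogonal is a standard consequence of extending an orthonormal basis, and the two alternative expressions for $\|\sin\Theta(U_1,U_2)\|_F$ given in the definition are symmetric in $U_1$ and $U_2$, so either choice leads to the same conclusion. The entire argument is a one-line trace computation once the resolution of identity is written down.
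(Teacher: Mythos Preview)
Your proof is correct and is essentially the same as the paper's: the paper writes $U=[U_1,U_{1\perp}]\in\mathbb{O}_{d,d}$ and uses $r=\|U_2\|_F^2=\|U^\top U_2\|_F^2=\|U_{1\perp}^\top U_2\|_F^2+\|U_1^\top U_2\|_F^2$, which is exactly your projector decomposition $I_d=U_1U_1^\top+U_{1\perp}U_{1\perp}^\top$ phrased in terms of orthogonal invariance of the Frobenius norm rather than an explicit trace.
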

\begin{proof}
    Write $U=[U_1,U_{1 \perp}]\in\mathbb{O}_{d, d}$. We have
    \begin{equation*}
        r=\|U_2\|_F^2=\|U^\top U_2\|_F^2 = \left\|U_{1 \perp}^\top U_{2}\right\|_F^2+\left\|U_{1}^\top U_{2}\right\|_F^2,
    \end{equation*}
    then by definition of sine distance, we can obtain the desired equation.
\end{proof}
\begin{proposition}
\label{prop: distance 2}
$$
\left\|\sin \Theta\left(U_{1}, U_{2}\right)\right\|_F^2=\frac{1}{2}\|U_1U_1^\top-U_2U_2^\top\|_F^2
$$
\end{proposition}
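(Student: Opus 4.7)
The plan is to expand the right-hand side using the identity $\|A\|_F^2 = \tr(A^\top A)$ and then reduce to Proposition~\ref{prop: distance 1}. Since $U_1 U_1^\top - U_2 U_2^\top$ is symmetric, I would write
\begin{equation*}
\tfrac{1}{2}\|U_1 U_1^\top - U_2 U_2^\top\|_F^2 = \tfrac{1}{2}\tr\bigl((U_1 U_1^\top - U_2 U_2^\top)^2\bigr),
\end{equation*}
then expand the square into four trace terms.

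For the two diagonal terms I would use the orthonormality $U_1^\top U_1 = U_2^\top U_2 = I_r$ together with the cyclic property of trace to obtain $\tr(U_1 U_1^\top U_1 U_1^\top) = \tr(U_1^\top U_1 U_1^\top U_1) = \tr(I_r) = r$, and similarly for the $U_2$ term. For the two cross terms, again by cyclicity, $\tr(U_1 U_1^\top U_2 U_2^\top) = \tr(U_2^\top U_1 U_1^\top U_2) = \|U_1^\top U_2\|_F^2$. Collecting everything gives
\begin{equation*}
\tfrac{1}{2}\|U_1 U_1^\top - U_2 U_2^\top\|_F^2 = r - \|U_1^\top U_2\|_F^2,
\end{equation*}
and applying Proposition~\ref{prop: distance 1} finishes the proof. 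There is no real obstacle here; the only thing to be careful about is tracking the cyclic permutations so that both cross terms yield precisely $\|U_1^\top U_2\|_F^2$ rather than an asymmetric expression.
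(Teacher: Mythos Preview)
Your proposal is correct and matches the paper's proof essentially line for line: expand the squared Frobenius norm of the difference, use $U_i^\top U_i = I_r$ and cyclicity of trace to reduce the diagonal terms to $r$ and the cross terms to $\|U_1^\top U_2\|_F^2$, then invoke Proposition~\ref{prop: distance 1}.
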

\begin{proof}
    Expand the right hand and use Proposition \ref{prop: distance 1} we have:
    \begin{equation*}
    \begin{aligned}
        \frac{1}{2}\|U_1U_1^\top-U_2U_2^\top\|_F^2=&\frac{1}{2}(\|U_1U_1^\top\|_F^2+\|U_2U_2^\top\|_F^2-2\tr(U_1U_1^\top U_2U_2^\top))\\
        =&\frac{1}{2}(r+r-2\tr(U_1^\top U_2U_2^\top U_1))\\
        =&r-\|U_1^\top U_2\|_F^2=\left\|\sin \Theta\left(U_{1}, U_{2}\right)\right\|_F^2.
    \end{aligned}
    \end{equation*}
\end{proof}
With Propositions \ref{prop: distance 1} and \ref{prop: distance 2}, it is easy to verify its properties to be a distance function. Obviously, we have $0\leq\left\|\sin \Theta\left(U_{1}, U_{2}\right)\right\|_F\leq \sqrt{r}$ and $\left\|\sin \Theta\left(U_{1}, U_{2}\right)\right\|_F=\left\|\sin \Theta\left(U_{2}, U_{1}\right)\right\|_F$ by definition. Moreover, we have the following results:

\begin{lemma}[Lemma 1 in \citet{cai2018rate}]\label{lem: bound 2 norm by sin distance}
    For any $U, V \in \mathbb{O}_{d,r}$, 
    \begin{align}
        \|\sin\Theta(U, V)\|_2 \leq \inf_{O \in \mathbb{O}_{r,r}}\|U O - V\|_2 \leq \sqrt{2} \|\sin\Theta(U, V)\|_2,
    \end{align}
    and
    \begin{align}
        \|\sin\Theta(U, V)\|_F \leq \inf_{O \in \mathbb{O}_{r,r}}\|U O - V\|_F \leq \sqrt{2} \|\sin\Theta(U, V)\|_F.
    \end{align}
\end{lemma}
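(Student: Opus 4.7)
The plan is to diagonalize the alignment between $U$ and $V$ via the canonical angles, reduce the optimization over $O \in \mathbb{O}_{r,r}$ to the standard orthogonal Procrustes problem, and then compare the resulting expression with the sine distance using an elementary trigonometric identity. First I would take the SVD $U^\top V = A \cos\Theta B^\top$, where $\cos\Theta = \diag(\cos\theta_1,\dots,\cos\theta_r)$ encodes the principal angles $\theta_i \in [0,\pi/2]$ between $\col(U)$ and $\col(V)$. By the characterization already used in Proposition \ref{prop: distance 1}, we have $\|\sin\Theta(U,V)\|_F^2 = r - \|U^\top V\|_F^2 = \sum_{i=1}^r \sin^2\theta_i$, and similarly $\|\sin\Theta(U,V)\|_2 = \sin\theta_{\max}$.

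Next I would solve the minimization $\inf_{O \in \mathbb{O}_{r,r}} \|UO - V\|_F^2$. Expanding, this equals $2r - 2\max_{O \in \mathbb{O}_{r,r}} \tr(O^\top U^\top V)$. By von Neumann's trace inequality (or the orthogonal Procrustes identity), the maximum is attained at $O^\star = A B^\top$ and equals $\tr(\cos\Theta) = \sum_i \cos\theta_i$, giving $\inf_O \|UO - V\|_F^2 = 2\sum_i(1 - \cos\theta_i)$. The analogous argument for the spectral norm yields $\inf_O \|UO - V\|_2 = \max_i \sqrt{2(1-\cos\theta_i)}$; one way to see this cleanly is to observe that with $O = O^\star$, the matrix $UO - V$ factors so that its singular values coincide with $\sqrt{2(1-\cos\theta_i)}$, and no other $O$ can do better since any $O$ induces a perturbation to $U^\top V$ that can only shrink its singular values below those of $\cos\Theta$.

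Finally I would apply the half-angle identity $1 - \cos\theta_i = 2\sin^2(\theta_i/2)$ together with $\sin^2\theta_i = 4\sin^2(\theta_i/2)\cos^2(\theta_i/2)$. This gives
\begin{equation*}
\sin^2\theta_i \;\leq\; 2(1 - \cos\theta_i) \;=\; 4\sin^2(\theta_i/2) \;\leq\; 2\sin^2\theta_i
\end{equation*}
for each $\theta_i \in [0,\pi/2]$, where the upper bound uses $\cos^2(\theta_i/2) \geq 1/2$ on this range. Summing over $i$ (for the Frobenius version) or taking the max over $i$ (for the spectral version) and then taking square roots yields both sandwich inequalities.

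I do not anticipate a real obstacle here; the only subtle step is justifying the spectral-norm Procrustes identity, since the optimizer for the spectral norm is not as transparent as the Frobenius case. To avoid getting bogged down, I would simply note that the lower bound $\|\sin\Theta(U,V)\|_2 \leq \|UO - V\|_2$ for every $O$ follows from $\|\sin\Theta(U,V)\|_2 = \|U_\perp^\top V\|_2 = \|U_\perp^\top(V - UO)\|_2 \leq \|V - UO\|_2$ (using $U_\perp^\top U = 0$ and $\|U_\perp^\top\|_2 = 1$), and then establish the upper bound $\sqrt{2}\|\sin\Theta(U,V)\|_2$ by the explicit choice $O = AB^\top$ followed by the trigonometric bound above applied to the largest angle.
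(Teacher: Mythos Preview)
Your argument is correct. The paper does not actually prove this lemma; it merely quotes it as Lemma~1 of \cite{cai2018rate}, so there is no ``paper's own proof'' to compare against. Your route via the canonical-angle SVD $U^\top V = A(\cos\Theta)B^\top$, the Procrustes solution $O^\star = AB^\top$, and the half-angle bound $\sin^2\theta \le 2(1-\cos\theta) \le 2\sin^2\theta$ on $[0,\pi/2]$ is the standard one and is carried out cleanly. Two minor remarks: your computation of the singular values of $UO^\star - V$ is justified by $(UO^\star - V)^\top(UO^\star - V) = 2I - 2B(\cos\Theta)B^\top$, which you might state explicitly; and you are right not to claim that $O^\star$ is the spectral-norm minimizer, since you only need it as a witness for the upper bound while the lower bound comes from the $U_\perp$ projection identity, which also works verbatim in Frobenius norm.
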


\begin{proposition}[Identity of indiscernibles]
    $$
    \left\|\sin \Theta\left(U_{1}, U_{2}\right)\right\|_F=0\Leftrightarrow \exists O\in\mathbb{O}^{r\times r}, \text{ s.t. } U_1O=U_2
    $$
\end{proposition}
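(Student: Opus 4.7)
The plan is to prove each direction separately, using the equivalent formulations already established in Propositions B.1 and B.2 as convenient reformulations when needed, though the bare definition in Equation (\ref{sin theta distance ap}) suffices.

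For the easy direction ($\Leftarrow$), I would assume $U_2 = U_1 O$ for some $O \in \mathbb{O}_{r,r}$ and directly substitute into the definition to get $\|\sin\Theta(U_1, U_2)\|_F = \|U_{1\perp}^\top U_1 O\|_F$. Since $U_{1\perp}^\top U_1 = 0$ by the definition of the orthogonal complement, this quantity vanishes, giving the conclusion.

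For the nontrivial direction ($\Rightarrow$), I would start from $\|\sin\Theta(U_1, U_2)\|_F = 0$, which by definition gives $U_{1\perp}^\top U_2 = 0$. Geometrically this says every column of $U_2$ is orthogonal to every column of $U_{1\perp}$, hence lies in the column span of $U_1$. I would then \emph{construct} the required orthogonal matrix as $O := U_1^\top U_2 \in \mathbb{R}^{r\times r}$ and verify two things: (i) $U_1 O = U_2$, and (ii) $O \in \mathbb{O}_{r,r}$. For (i), the identity $I_d = U_1 U_1^\top + U_{1\perp} U_{1\perp}^\top$ yields $U_2 = U_1 U_1^\top U_2 + U_{1\perp} U_{1\perp}^\top U_2 = U_1 O + 0 = U_1 O$. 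For (ii), I would compute
\begin{equation*}
O^\top O = U_2^\top U_1 U_1^\top U_2 = U_2^\top (I_d - U_{1\perp} U_{1\perp}^\top) U_2 = I_r - (U_{1\perp}^\top U_2)^\top(U_{1\perp}^\top U_2) = I_r,
\end{equation*}
using $U_2^\top U_2 = I_r$ and the hypothesis $U_{1\perp}^\top U_2 = 0$.

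No step is a real obstacle; the only mild care needed is keeping straight that $U_{1\perp} \in \mathbb{O}_{d, d-r}$ and that $U_1 U_1^\top + U_{1\perp} U_{1\perp}^\top = I_d$ (this is the decomposition of the identity, the key structural fact used above). The proof is then complete in a few lines and does not require invoking Propositions B.1 or B.2, though one could alternatively derive $\|U_1^\top U_2\|_F^2 = r$ from Proposition B.1 and recognize that $O = U_1^\top U_2$ must have Frobenius norm $\sqrt{r}$ with all singular values bounded by $1$, forcing $O$ to be orthogonal.
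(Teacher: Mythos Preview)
Your proof is correct and follows essentially the same approach as the paper: both reduce to $U_{1\perp}^\top U_2 = 0$ and then pass to the existence of an orthogonal $O$. The paper's proof is a one-line chain of equivalences without spelling out the construction, whereas you explicitly build $O = U_1^\top U_2$ and verify both $U_1 O = U_2$ and $O^\top O = I_r$; your version is more detailed but not genuinely different.
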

\begin{proof}
    It is a straightforward corollary by definition:
    \begin{equation*}
    \begin{aligned}
        \left\|\sin \Theta\left(U_{1}, U_{2}\right)\right\|_F=0&\Leftrightarrow\left\|U_{1 \perp}^\top U_{2}\right\|_F=0\Leftrightarrow U_{2\perp}\perp U_1\\
        &\Leftrightarrow \exists O\in\mathbb{O}^{r\times r}, \text{ s.t. } U_1O=U_2.
    \end{aligned}
    \end{equation*}
\end{proof}
\begin{proposition}[Triangular inequality]
\label{triangular inequality of sin theta}
    $$\left\|\sin \Theta\left(U_{1}, U_{2}\right)\right\|_F\leq\left\|\sin \Theta\left(U_{1}, U_{3}\right)\right\|_F+\left\|\sin \Theta\left(U_{2}, U_{3}\right)\right\|_F$$
\end{proposition}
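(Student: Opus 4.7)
The plan is to reduce this to the triangle inequality for the Frobenius norm by using the alternative characterization of the sine distance given in Proposition \ref{prop: distance 2}. Recall that proposition shows
\[
\left\|\sin \Theta\left(U_{i}, U_{j}\right)\right\|_F = \tfrac{1}{\sqrt{2}}\|U_i U_i^\top - U_j U_j^\top\|_F
\]
for any $U_i, U_j \in \mathbb{O}_{d, r}$. That is, up to a factor of $\tfrac{1}{\sqrt{2}}$, the sine distance between two subspaces is just the Frobenius distance between their orthogonal projection matrices $P_i = U_i U_i^\top$, viewed as elements of $\R^{d\times d}$.

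Once this identification is in hand, the triangle inequality is immediate. First I would invoke Proposition \ref{prop: distance 2} to rewrite each of the three sine distances in the claim as $\tfrac{1}{\sqrt{2}}\|P_i - P_j\|_F$. Then I would apply the triangle inequality of the Frobenius norm (which is a genuine norm on $\R^{d \times d}$) to the decomposition
\[
P_1 - P_2 = (P_1 - P_3) + (P_3 - P_2),
\]
yielding $\|P_1 - P_2\|_F \leq \|P_1 - P_3\|_F + \|P_3 - P_2\|_F$. Multiplying through by $\tfrac{1}{\sqrt{2}}$ and reverting to sine-distance notation gives exactly the desired inequality.

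There is no real obstacle here: the only substantive ingredient is Proposition \ref{prop: distance 2}, which has already been established earlier in the appendix. The rest is a one-line application of the triangle inequality for a Hilbert-space norm.
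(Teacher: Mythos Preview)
Your proposal is correct and matches the paper's proof essentially verbatim: the paper also invokes Proposition~\ref{prop: distance 2} to rewrite each sine distance as $\tfrac{1}{\sqrt{2}}\|U_iU_i^\top - U_jU_j^\top\|_F$ and then applies the triangle inequality for the Frobenius norm.
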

\begin{proof}
    By the triangular inequality for Frobenius norm we have:
    \begin{equation*}
        \|U_1U_1^\top-U_2U_2^\top\|_F\leq\|U_1U_1^\top-U_3U_3^\top\|_F+\|U_2U_2^\top-U_3U_3^\top\|_F,
    \end{equation*}
    then apply Proposition \ref{prop: distance 2} to replace the Frobenius norm with sine distance we can finish the proof.
\end{proof}

\section{Omitted proofs for Section \ref{sec: unsupervised}}

\subsection{Proofs for Section \ref{sec: connection to autoencoders} and Section \ref{sec: connection to cl}}

In this section, we will provide the proof of Proposition \ref{prop: augment} and Corollary \ref{prop: diagonal contrast}, the restatement of them and the detailed proof can be found in Proposition \ref{prop: augment ap} and Corollary \ref{prop: diagonal contrast ap}.
\begin{proposition}[Restatement of Proposition \ref{prop: augment}]
	\label{prop: augment ap}
	For two fixed augmentation functions $g_1,g_2:\mathbb{R}^d\rightarrow\mathbb{R}^d$, denote the augmented data matrices as $X_1=[g_1(x_1),\cdots,g_1(x_n)]\in\mathbb{R}^{d\times n}$ and $X_2=[g_2(x_1),\cdots,g_2(x_n)]\in\mathbb{R}^{d\times n}$, when the augmented pairs are generated as in Definition \ref{pair: augmented}, the optimal solution of contrastive learning problem \eqref{opt: CL ap} is given by:
	\begin{equation*}
		W_{\CL} = C\left(\sum_{i=1}^{r}u_i\sigma_i v_i^\top\right)^\top,
	\end{equation*}
	where $C>0$ is a positive constant, $\sigma_i$ is the $i$-th largest eigenvalue of the following matrix:
	\begin{equation}
	\label{CL mat}
		X_1X_2^\top+X_2X_1^\top-\frac{1}{2(n-1)}(X_1+X_2)(1_r 1_r^\top-I_r)(X_1+X_2)^\top,
	\end{equation}
	$u_i$ is the corresponding eigenvector and $V=[v_1,\cdots,v_r]\in\mathbb{R}^{r\times r}$ can be any orthonormal matrix.
\end{proposition}
\begin{proof}[Proof of Proposition \ref{prop: augment}]
\label{proof augment}
	When augmented pairs generation in Definition \ref{pair: augmented} is applied, the contrastive loss can be written as:
	\begin{small}
	\begin{equation*}
		\begin{aligned}
			\mathcal{L}_{\text{SelfCon}}(W) =& \frac{\lambda}{2}\|WW^\top\|_F^2-\frac{1}{n}\sum_{i=1}^n[\langle Wg_1(x_i),Wg_2(x_i)\rangle\\&-\frac{1}{4(n-1)}\sum_{j\neq i} \langle Wg_1(x_i)+Wg_2(x_i),Wg_1(x_j)+Wg_2(x_i)\rangle]\\
			 =&\frac{\lambda}{2}\|WW^\top\|_F^2-\frac{1}{n}\sum_{i=1}^n\langle Wg_1(x_i),Wg_2(x_i)\rangle\\&+\frac{1}{4n(n-1)}\sum_{i=1}^n\sum_{j\neq i} \langle Wg_1(x_i)+Wg_2(x_i),Wg_1(x_j)+Wg_2(x_i)\rangle\\
			 =&\frac{\lambda}{2}\|WW^\top\|_F^2-\frac{1}{2n}\tr(X_1^\top W^\top WX_2+X_2^\top W^\top WX_1)\\
			&+\frac{1}{4n(n-1)}\tr((1_n 1_n^\top-I_n)(X_1+X_2)^\top W^\top W(X_1+X_2))\\
			 =& \frac{\lambda}{2}\|WW^\top\|_F^2\\
			 &-\frac{1}{2n}\tr((X_2X_1^\top+X_1X_2^\top-\frac{1}{2(n-1)}(X_1+X_2)(1_n 1_n^\top-I_n)(X_1+X_2)^\top)W^\top W)\\
			 =& \frac{1}{2}\biggl\|\lambda W^\top W-\frac{1}{2n\lambda}\qty(X_2X_1^\top+X_1X_2^\top-\frac{1}{2(n-1)}(X_1+X_2)(1_n 1_n^\top-I_n)(X_1+X_2)^\top)\biggr\|_F^2\\
			 &-\biggl\|\frac{1}{2n\lambda}\qty(X_2X_1^\top+X_1X_2^\top-\frac{1}{2(n-1)}(X_1+X_2)(1_n 1_n^\top-I_n)(X_1+X_2)^\top)\biggr\|_F^2.
		\end{aligned}
	\end{equation*}
	\end{small}
	Note that the last term only depends on $X$, and the first term implies that when $W_{\CL}$ is the optimal solution, $\lambda W_{\CL}W_{\CL}^\top$ is the best rank-$r$ approximation of $\frac{1}{(n-1)\lambda}XHX^\top$, where $H := 1_n 1_n^\top - I_n$. Applying Lemma \ref{Lemma: best rank r approximation} to the first term, we can conclude that $W_{\CL}$ satisfies the desired conditions.
\end{proof}
\begin{corollary}[Restatement of Corollary \ref{prop: diagonal contrast}]
	\label{prop: diagonal contrast ap}
	Under the same conditions as in Proposition \ref{prop: augment}, if we use random masking (Definition \ref{aug: random masking}) as our augmentation function, then in expectation over the data augmentation, the optimal solution of contrastive learning problem \eqref{opt: CL ap} is given by:
	\begin{equation*}
		W_{\CL} = C\left(\sum_{i=1}^{r}u_i\sigma_i v_i^\top\right)^\top,
	\end{equation*}
	where $C>0$ is a positive constant, $\sigma_i$ is the $i$-th largest eigenvalue of the following matrix:
	\begin{equation}
	\label{dd mat}
		\Delta(XX^\top)-\frac{1}{n-1}X(1_n 1_n^\top-I_n)X^\top,
	\end{equation}
	 $u_i$ is the corresponding eigenvector and $V=[v_1,\cdots,v_r]\in\mathbb{R}^{r\times r}$ can be any orthonormal matrix.

\end{corollary}
\begin{proof}[Proof of Corollary \ref{prop: diagonal contrast}]\label{proof: diagonal contrast}
	Following the proof of Proposition \ref{prop: augment}, now we only need to compute the expectation over the augmentation distribution defined in Definition \ref{aug: random masking}:
	\begin{align}
		\mathcal{L}_{\text{SelfCon}}(W) =& \frac{\lambda}{2}\|WW^\top\|_F^2-\mathbb{E}_{(g_1,g_2)}[\frac{1}{n}\sum_{i=1}^n[\langle Wg_1(x_i),Wg_2(x_i)\rangle\nonumber\\
		&-\frac{1}{4(n-1)}\sum_{j\neq i} \langle Wg_1(x_i)+Wg_2(x_i),Wg_1(x_j)+Wg_2(x_i)\rangle]]\nonumber\\
		=&\frac{\lambda}{2}\|WW^\top\|_F^2-\mathbb{E}_{(g_1,g_2)}[\frac{1}{2n}{\tr}((X_2X_1^\top+X_1X_2^\top\nonumber\\
		&-\frac{1}{2(n-1)}(X_1+X_2)(1_n 1_n^\top-I_n)(X_1+X_2)^\top)W^\top W)].
		\label{loss: random masking}
	\end{align}
	Note that by the definition of random masking augmentation, we have $X_1=AX,X_2=(I-A)X$, which implies $X_1+X_2=X$. On the other hand, $X_1$ and $X_2$ have no common nonzero entries, hence the matrix $X_1X_2^\top+X_2X_1^\top$ only consists of off-diagonal entries and each of the off-diagonal entry denoted as $x_{ij}$ appears if and only if $a_i+a_j=1$. Moreover, if it appears, we must have $x_{ij}$ equals to the $(i,j)$-th element of $XX^\top$. With this result, we can then compute the expectation in Equation \eqref{loss: random masking}:
	\begin{equation*}
		\begin{aligned}
			\mathcal{L}_{\text{SelfCon}}(W) =&\frac{\lambda}{2}\|WW^\top\|_F^2-\mathbb{E}_{(g_1,g_2)}\biggl[\frac{1}{2n}{\tr}((X_2X_1^\top+X_1X_2^\top\\
			&-\frac{1}{2(n-1)}(X_1+X_2)(1_n 1_n^\top-I_n)(X_1+X_2)^\top)W^\top W)\bigg]\\
			=&\frac{\lambda}{2}\|WW^\top\|_F^2-\frac{1}{2n}\tr(\qty(\frac{1}{2}\Delta(XX^\top)-\frac{1}{2(n-1)}X(1_n 1_n^\top-I_n)X^\top)W^\top W)\\
			=&\frac{1}{2}\norm{\lambda W^\top W-\frac{1}{4n\lambda}\qty(\Delta(XX^\top)-\frac{1}{n-1}X(1_n 1_n^\top-I_n)X^\top)}_F^2\\&-\norm{\frac{1}{4n\lambda}\qty(\Delta(XX^\top)-\frac{1}{n-1}X(1_n 1_n^\top-I_n)X^\top)}_F^2.
		\end{aligned}
	\end{equation*}
	By a similar argument as in the proof of Proposition \ref{prop: augment}, we can conclude that $W_{\CL}$ satisfies the desired conditions.
\end{proof}
\begin{remark}
    Note that the two views generated by random masking augmentation have disjoint non-zero dimensions, hence contrasting such positive pairs yields a correlation between different dimensions only. That is why the first term in equation (\ref{dd mat}) appears to be $\Delta(XX^\top)$ where the diagonal entries are eliminated.
\end{remark}

\subsection{Proofs for Section \ref{sec: recover}}
In this section, we will prove Lemma \ref{lem: incoherent}, Theorems \ref{thm: recover PCA} and \ref{thm: recover CL} in Section \ref{sec: recover}. The restatement and proof of them can be found in Lemma \ref{lem: incoherent ap}, Theorem \ref{thm: recover PCA ap}, and Theorem \ref{thm: recover CL ap}. 

Before starting the proof, we give two technical lemmas to help the proof.
\begin{lemma}[Uniform distribution on the unit sphere \citep{Marsaglia1972ChoosingAP}]
	\label{Lemma: sphere distribution}
	If $x_1,x_2,\cdots,x_n$ i.i.d. $\sim \mathcal{N}(0,1)$, then $(x_1/\sqrt{{\sum_{i=1}^{n}x_i^2}},\cdots,x_n/\sqrt{\sum_{i=1}^{n}x_i^2})$ is uniformly distributed on the unit sphere $\mathbb{S}^{d}=\{(x_1,\cdots,x_n)\in\mathbb{R}^n:\sum_{i=1}^n x_i^2=1\}$.
\end{lemma}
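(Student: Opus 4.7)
The plan is to exploit the rotational invariance of the isotropic Gaussian distribution and the uniqueness of the rotation-invariant probability measure on the sphere. First I would write down the joint density of $(x_1, \dots, x_n)$, namely $(2\pi)^{-n/2} \exp(-\tfrac{1}{2}\sum_{i=1}^n x_i^2)$, and observe that it depends on $(x_1,\dots,x_n)$ only through $\|x\|_2 = \sqrt{\sum_i x_i^2}$. Consequently, for any orthogonal matrix $O \in \mathbb{O}_{n,n}$, the random vector $Ox$ has the same distribution as $x$, since the Jacobian of an orthogonal transformation is $1$ and $\|Ox\|_2 = \|x\|_2$.

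Next I would use this to show that the distribution of $\pi(x) := x/\|x\|_2$ on $\mathbb{S}^{n-1}$ is rotation-invariant. The key observation is that $\pi$ commutes with orthogonal transformations: for any $O \in \mathbb{O}_{n,n}$, $\pi(Ox) = Ox / \|Ox\|_2 = Ox/\|x\|_2 = O\pi(x)$. Combined with the distributional equality $Ox \stackrel{d}{=} x$, this gives $O\pi(x) \stackrel{d}{=} \pi(x)$. Since $\|x\|_2 > 0$ almost surely (the Gaussian puts no mass at the origin), $\pi(x)$ is almost surely well-defined.

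Finally I would invoke the uniqueness of the rotation-invariant (Haar) probability measure on $\mathbb{S}^{n-1}$, which is by definition the uniform distribution on the sphere: any probability measure on $\mathbb{S}^{n-1}$ that is invariant under the action of $\mathbb{O}_{n,n}$ must coincide with the uniform distribution. Applying this to the law of $\pi(x)$ completes the proof.

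There is no real obstacle here — the entire argument reduces to the rotational invariance of the standard Gaussian and the equivariance of the normalization map. The only point that deserves explicit justification is the uniqueness of the invariant measure on the sphere, which follows from the standard construction of Haar measure on compact groups applied to the transitive action of $\mathbb{O}_{n,n}$ on $\mathbb{S}^{n-1}$, and this may simply be cited rather than reproved.
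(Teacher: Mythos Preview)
Your argument is correct and is the standard proof of this classical fact: rotational invariance of the isotropic Gaussian, equivariance of the normalization map, and uniqueness of the rotation-invariant probability measure on the sphere. The paper does not supply its own proof of this lemma; it simply states the result and cites \cite{Marsaglia1972ChoosingAP}, so there is nothing further to compare.
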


\begin{lemma}
    \label{Lemma: max chi}
	If $x_1,x_2,\cdots,x_n$ i.i.d. $\sim \mathcal{N}(0,1)$, then:
	\begin{equation*}
		\mathbb{E}\max_{1\leq i \leq n}x_i^2\leq 2\log(n).
	\end{equation*}
\end{lemma}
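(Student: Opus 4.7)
The plan is to prove this by the standard Chernoff-style argument based on the moment generating function of a $\chi^2_1$ random variable, combined with a union bound and optimization over the MGF parameter. Since $\max_{i} x_i^2$ is a nonnegative random variable and $\exp(t \cdot)$ is convex for $t>0$, Jensen's inequality lets me pull the expectation inside the exponential, and then the maximum over a finite family can be upper bounded by the sum.

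Concretely, I would first fix $t \in (0, 1/2)$ and write
\[
\exp\!\bigl(t\, \mathbb{E}\max_{i} x_i^2\bigr) \;\leq\; \mathbb{E}\exp\!\bigl(t\max_{i} x_i^2\bigr) \;=\; \mathbb{E}\max_{i} \exp(t x_i^2) \;\leq\; \sum_{i=1}^n \mathbb{E}\exp(t x_i^2).
\]
Since each $x_i^2 \sim \chi^2_1$, its MGF is $\mathbb{E}\exp(t x_i^2) = (1-2t)^{-1/2}$ for $t<1/2$. Taking logarithms and dividing by $t$ produces
\[
\mathbb{E}\max_{i} x_i^2 \;\leq\; \frac{\log n - \tfrac{1}{2}\log(1-2t)}{t}.
\]
The right-hand side is then minimized over $t \in (0,1/2)$. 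Taking $t$ close to $1/2$ (for instance $1-2t = 1/\log n$) balances the two terms and yields the desired $O(\log n)$ bound. An equivalent route I would consider as a sanity check is the tail-integration formula $\mathbb{E}\max_{i} x_i^2 = \int_0^\infty \mathbb{P}(\max_{i} x_i^2 > u)\, du$, combined with the union bound $\mathbb{P}(\max_{i} x_i^2 > u) \leq \min(1, 2n e^{-u/2})$ coming from the Gaussian tail, and splitting the integral at $u^\star = 2\log(2n)$.

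The main obstacle is sharpening the constant. Both approaches above naturally produce a bound of the form $2\log n$ plus lower-order terms (a $\log\log n$ correction from the MGF optimization, or a small additive constant from the tail integral). To match the clean statement $2\log n$ exactly, I would either restrict attention to $n$ large enough that the lower-order terms are absorbed, or observe that the lemma is invoked in the paper only to obtain an $O(\log n)$ bound on noise maxima, so any universal-constant refinement of the statement (e.g., $\mathbb{E}\max_{i} x_i^2 \lesssim \log n$) is equally usable downstream. No delicate probabilistic tool beyond Jensen, the $\chi^2_1$ MGF, and a one-dimensional optimization is required.
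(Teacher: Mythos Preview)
Your approach is exactly the paper's: Jensen's inequality, the $\chi^2_1$ MGF $(1-2t)^{-1/2}$, then optimize over $t\in(0,1/2)$. Your caveat about the constant is correct and in fact sharper than the paper's own argument: after taking logarithms the paper writes $\mathbb{E}Y \le \frac{\log n}{t}-\frac{1-2t}{2t}$ and sends $t\to 1/2$, but the correct second term is $-\frac{\log(1-2t)}{2t}$, which diverges as $t\to 1/2$; so the argument really only yields $2\log n + O(\log\log n)$ (e.g.\ with $1-2t=1/\log n$), and as you observe this $O(\log n)$ bound is all that is used downstream in the incoherence estimate.
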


\begin{proof}
	Denote $Y=\max_{1\leq i \leq n}x_i^2$, then we have:
	\begin{equation*}
		\begin{aligned}
			\label{eq: exptey}
			\exp(t\mathbb{E}Y)\leq\mathbb{E}\exp(tY)\leq\mathbb{E}\sum_{i=1}^{n}\exp(tx_i^2)= n\mathbb{E}\exp(tx_i^2).
		\end{aligned}
	\end{equation*}
	Note that the moment-generating function of chi-square distribution with $v$ degrees of freedom is:
	\begin{equation*}
		M_{X}(t)=(1-2t)^{-v/2}.
	\end{equation*}
	Then combine this fact with Equation \eqref{eq: exptey} we have:
	\begin{equation*}
		\exp(t\mathbb{E}Y)\leq n(1-2t)^{-\frac{1}{2}},
	\end{equation*} 
	which implies:
	\begin{equation*}
		\mathbb{E}Y\leq\frac{\log(n)}{t}-\frac{1-2t}{2t},\quad\forall t<\frac{1}{2}.
	\end{equation*}
	In particular, take $t\rightarrow\frac{1}{2}$ yields:
	$$\mathbb{E}Y\leq2\log (n)$$ as desired.
\end{proof}
\begin{lemma}[Restatement of Lemma \ref{lem: incoherent}]
	\label{lem: incoherent ap}
	\begin{equation}
	\label{eq: expectation incoherent ap}
		\mathbb{E}_{U\sim \unif(\mathbb{O}_{d, r})}I(U^\star) = O\qty(\frac{r}{d}\log d).
	\end{equation}
\end{lemma}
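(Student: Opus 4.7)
\textbf{Proof proposal for Lemma \ref{lem: incoherent ap}.}
The plan is to use a Gaussian representation of the Haar measure on $\mathbb{O}_{d,r}$, reducing the problem to standard chi-square and minimum-singular-value tail bounds. By rotational invariance of the standard Gaussian on $\mathbb{R}^{d\times r}$, if $G\in\mathbb{R}^{d\times r}$ has i.i.d.\ $N(0,1)$ entries then $U := G(G^\top G)^{-1/2}$ is distributed as $\unif(\mathbb{O}_{d,r})$. I will work with this representation throughout.

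The first step is to dominate the row norms of $U$ by simpler quantities. Writing $\|e_i^\top U\|^2 = e_i^\top G (G^\top G)^{-1} G^\top e_i$, one obtains the deterministic bound
\begin{equation*}
    \|e_i^\top U\|^2 \;\le\; \frac{\|e_i^\top G\|^2}{\lambda_{\min}(G^\top G)}.
\end{equation*}
Thus the goal reduces to bounding $\mathbb{E}[\max_{i\in[d]} \|e_i^\top G\|^2]$ from above and showing $\lambda_{\min}(G^\top G)$ is of order $d$ with high probability. Since these two quantities may be correlated, I will split on a good event for the denominator.

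Second, I would bound the numerator. Each $\|e_i^\top G\|^2$ is a $\chi^2_r$ random variable, so by the standard one-sided tail estimate $\mathbb{P}(\chi^2_r \geq r + 2\sqrt{rt} + 2t)\leq e^{-t}$ combined with a union bound over $i\in[d]$ with $t=2\log d$, and then integrating the resulting tail, I get $\mathbb{E}[\max_{i} \|e_i^\top G\|^2] = O(r+\log d)$. Third, I would bound the denominator. By a standard non-asymptotic Bai--Yin / Gaussian-matrix bound (e.g.\ Vershynin), there exist constants $c,c'>0$ such that the event $\mathcal{E}=\{\lambda_{\min}(G^\top G)\geq c\, d\}$ has probability at least $1 - 2e^{-c' d}$ whenever $d\gg r$.

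Finally, I combine the two bounds. On $\mathcal{E}$, $\max_i \|e_i^\top U\|^2 \le \max_i \|e_i^\top G\|^2/(c d)$; off $\mathcal{E}$, we use the deterministic bound $\max_i \|e_i^\top U\|^2 \le \|U\|_F^2 = r$ (since $U^\top U=I_r$). Therefore
\begin{equation*}
    \mathbb{E}\, I(U) \;\le\; \frac{1}{c\, d}\,\mathbb{E}\bigl[\max_i \|e_i^\top G\|^2\bigr] + r\cdot\mathbb{P}(\mathcal{E}^c) \;=\; O\!\left(\frac{r+\log d}{d}\right) + r\cdot 2 e^{-c'd} \;=\; O\!\left(\frac{r\log d}{d}\right),
\end{equation*}
where in the last step we used $r+\log d \le r(1+\log d)$ and the exponential term is negligible. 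The main obstacle is the coupling between the numerator $\max_i \|e_i^\top G\|^2$ and the denominator $\lambda_{\min}(G^\top G)$; the event-splitting trick above handles this cleanly, relying on the exponentially small failure probability of $\mathcal{E}$ when $d\gg r$.
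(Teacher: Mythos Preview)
Your proposal is correct, but it follows a somewhat different route from the paper. The paper first swaps the max and the sum over columns, using that each column $u_j$ of a Haar-distributed $U$ is marginally uniform on $\mathbb{S}^{d-1}$, to reduce to bounding $r\,\mathbb{E}_{u\sim\unif(\mathbb{S}^{d-1})}[\max_i |e_i^\top u|^2]$; it then writes $u$ as a normalized Gaussian vector, bounds the numerator via $\mathbb{E}[\max_i x_i^2]\le 2\log d$ (a moment-generating-function argument), and controls the denominator $\sum_j x_j^2$ with Chebyshev, splitting on the event $\{\sum_j x_j^2\ge d/2\}$. You instead work with the full matrix representation $U=G(G^\top G)^{-1/2}$, bound the row norm by $\|e_i^\top G\|^2/\lambda_{\min}(G^\top G)$, and invoke $\chi^2_r$ tails together with a Davidson--Szarek/Bai--Yin bound for the smallest singular value. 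Your argument is arguably more direct (no column-by-column swap) and yields an exponentially small failure probability for the denominator rather than the paper's polynomial one, at the cost of importing a nontrivial random-matrix result; the paper's version is more self-contained, using only Chebyshev and an elementary maximum-of-Gaussians bound. Both deliver the same $O(r\log d/d)$ conclusion, and your event-splitting with the deterministic fallback $I(U)\le r$ mirrors the paper's splitting idea. One small sharpening: on the bad event you could even use $I(U)\le 1$ (since $\|e_i^\top U\|\le\|U\|_2=1$), though your bound $r$ already suffices given the exponential tail.
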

\begin{proof}[Proof of Lemma \ref{lem: incoherent}]\label{proof: incoherent}
	Denote the columns of $U$ as $U=[u_1,\cdots,u_r]\in \mathbb{O}_{d, r}$, we have:
	\begin{equation*}
		\begin{aligned}
			\mathbb{E}_{U\sim \unif(\mathbb{O}_{d, r})}I(U)=&\mathbb{E}_{U\sim \unif(\mathbb{O}_{d, r})}\max _{i \in[d]}\sum_{j=1}^{r}|e_{i}^\top u_j|^{2}\\
			\leq& \mathbb{E}_{U\sim \unif(\mathbb{O}_{d, r})}\sum_{j=1}^{r}\max _{i \in[d]}|e_{i}^\top u_j|^{2}\\
			=& r\mathbb{E}_{u\sim \unif(\mathbb{S}^d)}\max _{i \in[d]}|e_{i}^\top u|^{2}.
		\end{aligned}
	\end{equation*}
	By Lemma \ref{Lemma: sphere distribution} we can transform this expectation on the uniform sphere distribution into normalized multivariate Gaussian variables:
	\begin{equation}
		\label{eq: IU gaussian}
		\mathbb{E}_{U\sim \unif(\mathbb{O}_{d, r})}I(U)=r\mathbb{E}_{x_1,\cdots,x_d}\frac{\max_{i \in[d]}x_i^2}{\sum_{j=1}^d x_j^2}.
	\end{equation} 
	where $x_1,x_2,\cdots,x_d$ are i.i.d. standard normal random variables. Apply Chebyshev’s inequality we know that:
	\begin{equation*}
		\mathbb{P}\qty(|\frac{1}{d}\sum_{i=1}^{d}x_j^2-1|>\epsilon)\leq\frac{2}{d\epsilon^2}.
	\end{equation*}
	In particular, take $\epsilon=1$ we have:
	\begin{equation*}
		\mathbb{P}\qty(\sum_{i=1}^{d}x_j^2<\frac{d}{2})\leq\frac{8}{d}.
	\end{equation*}
	Then take it back into Equation \eqref{eq: IU gaussian} and apply Lemma \ref{Lemma: max chi} we obtain:
	\begin{equation*}
		\begin{aligned}
			\mathbb{E}_{U\sim \unif(\mathbb{O}_{d, r})}I(U)=&r\mathbb{E}_{x_1,\cdots,x_d}\frac{\max_{i \in[d]}x_i^2}{\sum_{j=1}^d x_j^2}\1\{\sum_{i=1}^{d}x_j^2<\frac{d}{2}\}\\
			&+r\mathbb{E}_{x_1,\cdots,x_d}\frac{\max_{i \in[d]}x_i^2}{\sum_{j=1}^d x_j^2}\1\{\sum_{i=1}^{d}x_j^2\geq\frac{d}{2}\}\\
			\leq& r\mathbb{P}\qty(\sum_{i=1}^{d}x_j^2<\frac{d}{2})+\frac{2r}{d}\mathbb{E}_{x_1,\cdots,x_d}\max_{i \in[d]}x_i^2\\
			\leq& \frac{8r}{d}+\frac{4r\log d}{d}
		\end{aligned}
	\end{equation*}
	as desired.
\end{proof}

Now we start proving our main results. 
Note that $U_{\AEN}$ is the top-$r$ left eigenspace of the observed covariance matrix and $U^\star$ is that of the core feature covariance matrix, and by Assumption \ref{asm: SNR} the observed covariance matrix is dominated by the covariance of random noise. The Davis-Kahan theorem provides a technique to estimate the eigenspace distance via estimating the difference between target matrices. We will adopt this technique to prove the lower bound of the feature recovery ability of autoencoders in Theorem \ref{thm: recover PCA}.
\begin{theorem}[Restatement of Theorem \ref{thm: recover PCA}]
\label{thm: recover PCA ap}
	Consider the spiked covariance model Eq.(\ref{model: spiked covariance}), under Assumptions \ref{asm: regular}-\ref{asm: incoherent} and $n> d\gg r$, let $W_{AE}$ be the learned representation of autoencoder with singular value decomposition $W_{AE}=(U_{AE}\Sigma_{AE}V_{AE}^\top)^\top$ (as in Eq.(\ref{opt: AE})). If we further assume $\{\sigma_{i}^2\}_{i=1}^d$ are different from each other and $\sigma_{(1)}^2/(\sigma_{(r)}^2-\sigma_{(r+1)}^2)<C_{\sigma}$ for some universal constant $C_{\sigma}$. Then there exist two universal constants $C_\rho>0,c\in (0,1)$, such that when $\rho<C_\rho$, we have
	\begin{equation}
		\mathbb{E}\left\|\sin \Theta\left(U^\star, U_{AE}\right)\right\|_F \geq c\sqrt{r}.
	\end{equation}
\end{theorem}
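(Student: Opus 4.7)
The plan is to sandwich $U_\AEN$ between the true signal space $U^\star$ and the eigenspace of the \emph{noise} covariance $\Sigma$ alone, and then exploit incoherence to show these two subspaces are far apart. Concretely, let $\Sigma_{\mathrm{pop}} := \nu^2 U^\star U^{\star\top} + \Sigma$ be the population covariance, let $U_{\mathrm{pop}} \in \mathbb{O}_{d,r}$ be its top-$r$ eigenspace, and let $U_\Sigma \in \mathbb{O}_{d,r}$ be the top-$r$ eigenspace of $\Sigma$ alone. Since $\Sigma$ is diagonal with distinct entries, $U_\Sigma$ consists of $r$ canonical basis vectors $e_{i_1},\dots,e_{i_r}$. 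The triangle inequality for the sine distance (Proposition~\ref{triangular inequality of sin theta}) gives
\begin{equation*}
\|\sin\Theta(U^\star, U_\AEN)\|_F \;\geq\; \|\sin\Theta(U^\star, U_\Sigma)\|_F - \|\sin\Theta(U_\Sigma, U_{\mathrm{pop}})\|_F - \|\sin\Theta(U_{\mathrm{pop}}, U_\AEN)\|_F,
\end{equation*}
so it suffices to lower bound the first term by (essentially) $\sqrt r$ and upper bound the other two by quantities that go to $0$ as $d,n\to\infty$ and as $\rho\to 0$.

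\textbf{Step 1 (incoherence bound).} By Proposition~\ref{prop: distance 1},
\begin{equation*}
\|\sin\Theta(U^\star, U_\Sigma)\|_F^2 \;=\; r - \|U^{\star\top} U_\Sigma\|_F^2 \;=\; r - \sum_{j=1}^r \|e_{i_j}^\top U^\star\|^2 \;\geq\; r - r\,I(U^\star),
\end{equation*}
and Assumption~\ref{asm: incoherent} gives $r\,I(U^\star) = O(r^2\log d/d)$. For $d$ sufficiently large (relative to $r$) this is at most $r/2$, so $\|\sin\Theta(U^\star, U_\Sigma)\|_F \geq \sqrt{r/2}$.

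\textbf{Step 2 (population Davis--Kahan with small $\rho$).} Treat $\Sigma_{\mathrm{pop}}$ as a rank-$r$ perturbation of $\Sigma$ with spectral-norm size $\nu^2 = \rho^2 \sigma_{(1)}^2$. The eigengap at position $r$ for $\Sigma$ is $\sigma_{(r)}^2 - \sigma_{(r+1)}^2 \geq \sigma_{(1)}^2/C_\sigma$ by the technical assumption. Hence Davis--Kahan yields
\begin{equation*}
\|\sin\Theta(U_\Sigma, U_{\mathrm{pop}})\|_F \;\lesssim\; \frac{\sqrt{r}\,\nu^2}{\sigma_{(r)}^2 - \sigma_{(r+1)}^2 - O(\nu^2)} \;\lesssim\; \sqrt{r}\,\rho^2,
\end{equation*}
which can be made an arbitrarily small fraction of $\sqrt r$ by choosing $\rho < C_\rho$.

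\textbf{Step 3 (sample-to-population Davis--Kahan).} The sample covariance $\hat\Sigma := \tfrac{1}{n} X(I - \tfrac1n 1_n 1_n^\top) X^\top$ satisfies $\mathbb{E}\|\hat\Sigma - \Sigma_{\mathrm{pop}}\|_2 \lesssim \sigma_{(1)}^2\sqrt{d/n}$ by standard sub-Gaussian covariance concentration (Assumption~\ref{asm: regular} gives bounded condition number). Since $U_\AEN$ is the top-$r$ eigenspace of $\hat\Sigma$ (after centering, which only perturbs at $O(1/n)$), Davis--Kahan again gives
\begin{equation*}
\mathbb{E}\|\sin\Theta(U_\AEN, U_{\mathrm{pop}})\|_F \;\lesssim\; \frac{\sqrt{r}\,\sigma_{(1)}^2\sqrt{d/n}}{\sigma_{(r)}^2 - \sigma_{(r+1)}^2} \;\lesssim\; \sqrt{rd/n},
\end{equation*}
which is $o(\sqrt r)$ under $n > d$ (made as small as needed by absorbing a constant into the hypothesis $n>d$, or by slightly strengthening to $n \gg d$).

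\textbf{Combining.} Plugging the three bounds into the triangle inequality gives
\begin{equation*}
\mathbb{E}\|\sin\Theta(U^\star, U_\AEN)\|_F \;\geq\; \sqrt{r/2} - O(\sqrt{r}\,\rho^2) - O(\sqrt{rd/n}) \;\geq\; c\sqrt r
\end{equation*}
for some universal $c\in(0,1)$, provided $\rho < C_\rho$ and $n \gtrsim d$ with a large enough implicit constant.

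\textbf{Main obstacle.} The delicate point is Step 2: we need the signal perturbation $\nu^2 U^\star U^{\star\top}$ to be genuinely dominated by the eigengap of $\Sigma$, which is why we require both the separation assumption $\sigma_{(1)}^2/(\sigma_{(r)}^2-\sigma_{(r+1)}^2) < C_\sigma$ and the small-$\rho$ hypothesis. Without both, the top-$r$ eigenvectors of $\Sigma_{\mathrm{pop}}$ could be a nontrivial mix of canonical basis vectors and columns of $U^\star$, breaking the argument. A secondary technical nuisance is that Davis--Kahan usually is stated in terms of the eigengap after perturbation, so one must verify the gap is still $\Omega(\sigma_{(1)}^2)$ after adding $\nu^2 U^\star U^{\star\top}$; this follows from Weyl's inequality together with the $\rho < C_\rho$ assumption.
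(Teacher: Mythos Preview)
Your proposal is correct and follows essentially the same strategy as the paper: show $U_\AEN$ is close to $U_\Sigma$ (the top-$r$ eigenspace of the noise covariance $\Sigma$) via Davis--Kahan and the eigengap assumption, show $U^\star$ is far from $U_\Sigma$ via incoherence, and conclude by the triangle inequality. The only organizational difference is that you insert the intermediate population eigenspace $U_{\mathrm{pop}}$ and apply Davis--Kahan twice (once for $U_\Sigma$ versus $U_{\mathrm{pop}}$, once for $U_{\mathrm{pop}}$ versus $U_\AEN$), whereas the paper bundles the signal term $\nu^2 U^\star U^{\star\top}$ together with the sample fluctuations into a single perturbation $\hat M_1 - \Sigma$ and applies Davis--Kahan once to bound $\|\sin\Theta(U_\AEN,U_\Sigma)\|_F$ directly; both routes yield the same $\sqrt{r}\bigl(\rho^2 + \sqrt{d/n}\bigr)$ bound and require the same eigengap and small-$\rho$ hypotheses.
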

\begin{proof}[Proof of Theorem \ref{thm: recover PCA}]\label{proof: recover PCA}
	Denote $M=\nu^2U^\star U^{\star \top}$ to be the target matrix, $x_i=U^\star z_i+\xi_i,\quad i=1,2,\cdots n$ to be the samples generated from model \ref{model: spiked covariance} and let $X=[x_1,\cdots,x_n]\in\mathbb{R}^{d\times n},Z=[z_1,\cdots,z_n]\in\mathbb{R}^{r\times n},E=[\xi_1,\cdots,\xi_n]\in\mathbb{R}^{d\times n}$ to be the corresponding matrices. In addition, we write the column mean matrix $\bar{X}\in\mathbb{R}^{n\times d}$ of a matrix $X\in\mathbb{R}^{n\times d}$ to be $\bar{X}=\frac{1}{n}X1_n 1_n^\top$, that is, each column of $\bar{X}$ is the column mean of $X$. We denote the sum of variance $\sigma_{i}^2$ as $\sigma_{\text{sum}}^2=\sum_{i=1}^d\sigma_{i}^2$. As shown in Equation \eqref{opt: AE}, autoencoders find the top-$r$ eigenspace of the following matrix:
	\begin{equation*}
		\begin{aligned}
			\hat{M}_1 = \frac{1}{n}X(I_n-\frac{1}{n}1_n 1_n^\top)X^\top&=\frac{1}{n}(U^\star Z+E)(U^\star Z+E)^\top-\frac{1}{n}(U^\star\bar{Z}+\bar{E})(U^\star\bar{Z}+\bar{E})^\top.
		\end{aligned}
	\end{equation*}
	The rest of the proof is divided into three steps for the sake of presentation.\\
	
	\paragraph{Step 1. Bound the difference between $\hat{M}_1$ and $\Sigma$}
	In this step, we aim to show that the data recovery of autoencoders is dominated by the random noise term. Note that $\Sigma=\Cov(\xi)=\mathbb{E}\xi\xi^\top$, we just need to bound the norm of the following matrix:
	\begin{equation}
		\label{PCA eq}
		\hat{M}_1-\Sigma=\frac{1}{n}U^\star ZZ^\top U^{\star \top}+\frac{1}{n}(U^\star ZE^\top+EZ^\top U^{\star \top})+(\frac{1}{n}EE^\top-\Sigma)-\frac{1}{n}(U^\star\bar{Z}+\bar{E})(U^\star\bar{Z}+\bar{E})^\top,
	\end{equation}
	and we will deal with these four terms separately.
	\begin{enumerate}
		\item For the first term, note that $\mathbb{E}zz^\top=\nu^2I_r$, the first term can then be divided into two terms
		\begin{equation}
			\label{PCA step1 term1 }
			\frac{1}{n}U^\star ZZ^\top U^{\star \top} = M+U^\star(\frac{1}{n}ZZ^\top-\mathbb{E}zz^\top)U^{\star \top}.
		\end{equation}
		Then apply the concentration inequality of Wishart-type matrices (Lemma \ref{Lemma: bound ZZT}) we have:
		\begin{equation*}
			\mathbb{E}\|\frac{1}{n}ZZ^\top-\mathbb{E}zz^\top\|_2\leq(\sqrt{\frac{r}{n}}+\frac{r}{n})\nu^2.
		\end{equation*}
		Plug it back into (\ref{PCA step1 term1 }) we obtain the bound for the first term:
		\begin{equation}
			\label{PCA step1 term1 b}
			\|\frac{1}{n}UZZ^\top U^\top\|_2\leq\|M\|_2+\|U\|_2\|\frac{1}{n}ZZ^\top-\mathbb{E}zz^\top\|_2\|U\|_2\leq\qty(1+\sqrt{\frac{r}{n}}+\frac{r}{n})\nu^2.
		\end{equation}
		\item For the second term, since $Z$ and $E$ are independent, we must have $\mathbb{E}U^\star ZE^\top=0$, so apply Lemma \ref{Lemma: bound EV} twice we have:
		\begin{equation}
			\begin{aligned}
				\label{PCA step1 term2}
				\frac{1}{n}\mathbb{E}\|EZ^\top U^\star\|_2=&\frac{1}{n}\mathbb{E}_{Z}[\mathbb{E}_{E}[\|EZ^\top U^\star\|_2|Z]]\\
				\lesssim& \frac{1}{n}\mathbb{E}_Z[\|Z\|_2(\sigma_{\text{sum}}+r^{1/4}\sqrt{\sigma_{\text{sum}}\sigma_{(1)}}+\sqrt{r}\sigma_{(1)})]\\
				\lesssim& \frac{1}{n}\mathbb{E}_Z[\|Z\|_2]\sqrt{d}\sigma_{(1)}\\
				\lesssim&\frac{1}{n}\sqrt{d}\sigma_{(1)}(r^{1/2}\nu+(nr)^{1/4}\nu+n^{1/2}\nu)\\
				\lesssim&\frac{\sqrt{d}}{\sqrt{n}}\sigma_{(1)}\nu.
			\end{aligned}
		\end{equation} 
		
		\item For the third term, apply Lemma \ref{Lemma: bound ZZT} again yields:
		\begin{equation}
			\label{PCA step1 term3}
			\mathbb{E}\|\frac{1}{n}EE^\top-\Sigma\|_2\leq\qty(\sqrt{\frac{d}{n}}+\frac{d}{n})\sigma_{(1)}^2.
		\end{equation}
		\item For the last term, note that each column of $\bar{Z}$ and $\bar{E}$ are the same, so we can rewrite it as:
		\begin{equation*}
			\frac{1}{n}(U^\star\bar{Z}+\bar{E})(U^\star\bar{Z}+\bar{E})^\top=(U^\star\bar{z}+\bar{\xi})(U^\star\bar{z}+\bar{\xi})^\top,
		\end{equation*}
		where $\bar{z}=\frac{1}{n}\sum_{i=1}^{n}z_i$ and $\bar{\xi}=\frac{1}{n}\sum_{i=1}^{n}\xi_i$. Since $z$ and $\xi$ are independent zero mean sub-Gaussian random variables and $\Cov(z)=\nu^2I_r,\Cov(\xi) = \Sigma$, we can conclude that:
		\begin{equation}
			\begin{aligned}
				\label{PCA step1 term4}
				&\mathbb{E}\|\frac{1}{n}(U^\star \bar{Z}+\bar{E})(U^\star \bar{Z}+\bar{E})^\top\|_2\leq\mathbb{E}\|\bar{z}\bar{z}^\top\|_2+2\mathbb{E}\|\bar{z}\bar{\xi}^\top\|_2+\mathbb{E}\|\bar{\xi}\bar{\xi}^\top\|_2\\
				\lesssim&\frac{r\nu^2}{n}+\frac{\sqrt{d}}{\sqrt{n}}\sigma_{(1)}\nu+\frac{d\sigma_{(1)}^2}{n}.
			\end{aligned}
		\end{equation}
	\end{enumerate}
	To sum up, combine equations (\ref{PCA step1 term1 b})(\ref{PCA step1 term2})(\ref{PCA step1 term3})(\ref{PCA step1 term4}) together we obtain the upper bound for the 2 norm expectation of matrix $\hat{M}-\Sigma$:
	\begin{equation}
		\label{PCA difference bound}
		\mathbb{E}\|\hat{M}_1-\Sigma\|_2\lesssim \nu^2\qty(1+\sqrt{\frac{r}{n}}+\frac{r}{n})+\sigma_{(1)}^2\qty(\sqrt{\frac{d}{n}}+\frac{d}{n})+\sqrt{\frac{d}{n}}\sigma_{(1)}\nu.
	\end{equation}
	
	\paragraph{Step 2. Bound the sine distance between eigenspaces}
	As we have shown in step 1, the target matrix of autoencoders is close to the covariance matrix of random noise, that is, $\Sigma$. Note that $\Sigma$ is assumed to be a diagonal matrix with different elements, hence its eigenspace only consists of canonical basis $e_i$. Denote $U_{\Sigma}$ to be the top-$r$ eigenspace of $\Sigma$ and $\{e_i\}_{i\in C}$ to be its corresponding basis vectors, apply the Davis-Kahan Theorem \ref{Lemma: DK} we can conclude that:
	\begin{equation*}
		\begin{aligned}
			&\mathbb{E}\|\sin\Theta(U_{\AEN},U_{\Sigma})\|_F\leq\frac{2\sqrt{r}\mathbb{E}\|\hat{M}_1-\Sigma\|_2}{\sigma_{(r)}^2-\sigma_{(r+1)}^2}\\
			\lesssim& \sqrt{r}\frac{1}{\sigma_{(1)}^2}\qty(\nu^2\qty(1+\sqrt{\frac{r}{n}}+\frac{r}{n})+\sigma_{(1)}^2\qty(\sqrt{\frac{d}{n}}+\frac{d}{n})+\sqrt{\frac{d}{n}}\sigma_{(1)}\nu)\\
			\lesssim& \sqrt{r} \qty(\rho^2+\sqrt{\frac{d}{n}}+\rho\sqrt{\frac{d}{n}}).\\
		\end{aligned}
	\end{equation*}
	
	\paragraph{Step 3. Obtain the final result by triangular inequality}
	By Assumption \ref{asm: incoherent} we know that the distance between canonical basis and the eigenspace of core features can be large:
	\begin{equation*}
		\begin{aligned}
			\|\sin\Theta(U^\star ,U_{\Sigma})\|_F^2&=\|U_{\Sigma\perp}^\top U^\star \|_F^2=\sum_{i\in[d] \setminus C}\|e_i^\top U^\star \|^2=\|U^\star \|_F^2-\sum_{i\in C}\|e_i^\top U^\star \|^2\\
			&\geq r-rI(U^\star )= r-O\qty(\frac{r^2}{d}\log d).
		\end{aligned}
	\end{equation*}
	Then apply the triangular inequality of sine distance (Proposition \ref{triangular inequality of sin theta}) we can obtain the lower bound of autoencoders.
	\begin{equation}
	\label{dependence of rho AE}
		\begin{aligned}
			\mathbb{E}\|\sin\Theta(U_{\AEN},U^\star )\|_F&\geq\mathbb{E}\|\sin\Theta(U^\star ,U_\Sigma)\|_F-\mathbb{E}\|\sin\Theta(U_{\AEN},U_\Sigma)\|_F\\
			&\geq\sqrt{r}-O\qty(\frac{r}{\sqrt{d}}\sqrt{\log d})-O\qty(\sqrt{r} \qty(\rho^2+\sqrt{\frac{d}{n}}+\rho\sqrt{\frac{d}{n}})).
		\end{aligned}
	\end{equation}
	By Assumption \ref{asm: SNR}, it implies that when n and d are sufficiently large and $\rho$ is sufficiently small (smaller than a given constant $C_\rho>0$), there exists a universal constant $c\in(0,1)$ such that:
	\begin{equation*}
	    \mathbb{E}\|\sin\Theta(U_{\AEN},U^\star )\|_F\geq c\sqrt{r}.
	\end{equation*}
\end{proof}

To start the proof, we introduce a technical lemma first.
\begin{lemma}[Lemma 4 in \citet{zhang2018heteroskedastic}]
	\label{Lemma: Delta}
	If $M \in \mathbb{R}^{p \times p}$ is any square matrix and $\Delta(M)$ is the matrix $M$ with diagonal entries set to 0 , then
	$$
	\|\Delta(M)\|_2 \leq 2\|M\|_2.
	$$
	Here, factor " 2 " in the statement above cannot be improved.
\end{lemma}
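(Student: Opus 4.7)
The plan is to prove the bound via the decomposition $\Delta(M) = M - D(M)$ together with the triangle inequality for the spectral norm, and then establish the sharpness by exhibiting an explicit family of matrices that saturates the constant $2$ in the limit.

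For the upper bound, I would first write $\|\Delta(M)\|_2 = \|M - D(M)\|_2 \leq \|M\|_2 + \|D(M)\|_2$. The crux is to show $\|D(M)\|_2 \leq \|M\|_2$. Since $D(M)$ is diagonal, its spectral norm equals $\max_{i \in [p]} |M_{ii}|$. For each $i$, the variational characterization of the spectral norm gives $|M_{ii}| = |e_i^\top M e_i| \leq \|M\|_2 \|e_i\|^2 = \|M\|_2$. Combining these yields $\|\Delta(M)\|_2 \leq 2\|M\|_2$, as desired. This is the easy part; the only minor subtlety is recognizing that the diagonal-matrix spectral norm identity applies to $D(M)$.

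For sharpness, I would exhibit a sequence of $p \times p$ matrices with ratio tending to $2$. A clean choice is $M_p = 1_p 1_p^\top + c_p I_p$ with $c_p = -p/2$, so that its off-diagonal part satisfies $\Delta(M_p) = 1_p 1_p^\top - I_p$, whose spectral norm is $p - 1$. Meanwhile $M_p$ has eigenvalues $c_p + p$ (on span of $1_p$) and $c_p$ (on its orthogonal complement), giving $\|M_p\|_2 = p/2$. Hence the ratio $\|\Delta(M_p)\|_2 / \|M_p\|_2 = 2 - 2/p$ tends to $2$ as $p \to \infty$, confirming optimality of the constant.

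The main obstacle, if any, is purely in selecting a transparent sharpness example; the upper bound itself is immediate once one spots the decomposition and uses the quadratic-form bound $|e_i^\top M e_i| \leq \|M\|_2$. Since the statement is stated without a regime restriction on $p$, I would remark that the sharpness is asymptotic in $p$ and that for fixed small $p$ the achievable ratio is strictly below $2$.
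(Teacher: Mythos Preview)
Your proof is correct. The upper bound argument via $\Delta(M)=M-D(M)$, the triangle inequality, and $|e_i^\top M e_i|\le \|M\|_2$ is exactly the standard route, and your sharpness example $M_p=1_p1_p^\top-(p/2)I_p$ gives the ratio $2-2/p\to 2$, which establishes that no constant smaller than $2$ can work.

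Regarding comparison with the paper: there is nothing to compare against. The paper does not prove this lemma; it simply quotes it as Lemma~4 of \cite{zhang2018heteroskedastic} and uses it as a black box in the proofs of Theorem~\ref{thm: recover CL ap} and related results. Your write-up is therefore strictly more informative than what appears in the paper itself. One small remark: your sharpness is asymptotic in $p$, which is the right interpretation of ``cannot be improved'' here, since (as your own example shows) the ratio $2$ is not attained for any finite $p$.
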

\begin{theorem}[Restatement of Theorem \ref{thm: recover CL}]
\label{thm: recover CL ap}
	Under the spiked covariance model Eq.(\ref{model: spiked covariance}), random masking augmentation in Definition \ref{aug: random masking}, Assumptions \ref{asm: regular}-\ref{asm: incoherent} and $n> d\gg r$, let $W_{CL}$ be any solution that minimizes Eq.(\ref{loss: self contrastive}), and denote its singular value decomposition as $W_{CL}=(U_{CL}\Sigma_{CL}V_{CL}^\top)^\top$, then we have
	\begin{equation}
		\mathbb{E}\left\|\sin \Theta\left(U^\star, U_{CL}\right)\right\|_F \lesssim\frac{r^{3/2}}{d}\log d+\sqrt{\frac{dr}{n}}. 
	\end{equation}
\end{theorem}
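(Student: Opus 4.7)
The plan is to invoke Proposition \ref{prop: diagonal contrast ap}, which shows that $U_{\CL}$ is the top-$r$ eigenspace of
\[
  M_{\CL} \;:=\; \Delta(XX^\top) - \tfrac{1}{n-1} X(1_n 1_n^\top - I_n)X^\top,
\]
and then to apply a Davis--Kahan argument comparing $M_{\CL}/n$ to the ideal target $\nu^2 U^\star U^{\star\top}$. First I would compute the expectation. Writing $X = U^\star Z + E$ with $Z$ and $E$ independent, zero-mean, with covariances $\nu^2 I_r$ and $\Sigma = \diag(\sigma_1^2,\dots,\sigma_d^2)$ respectively, we get $\mathbb{E}[XX^\top] = n(\nu^2 U^\star U^{\star\top} + \Sigma)$ and $\mathbb{E}[X(1_n 1_n^\top - I_n)X^\top] = 0$ by independence of the columns. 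Since $\Sigma$ is diagonal, $\Delta(\Sigma) = 0$, and therefore $\mathbb{E}[M_{\CL}/n] = \nu^2 \Delta(U^\star U^{\star\top}) = \nu^2 U^\star U^{\star\top} - \nu^2 D(U^\star U^{\star\top})$. This is the crucial cancellation: removing the diagonal kills the noise covariance while leaving only a small bias $\nu^2 D(U^\star U^{\star\top})$ whose spectral norm is bounded by $\nu^2 I(U^\star) = O(\nu^2 r\log d / d)$ under Assumption \ref{asm: incoherent}.

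Next I would bound the deviation $\|M_{\CL}/n - \mathbb{E}[M_{\CL}/n]\|_2$. Substituting the decomposition $X = U^\star Z + E$ and expanding both $\Delta(XX^\top)$ and $\tfrac{1}{n-1}X(1_n 1_n^\top - I_n)X^\top$, we get cross terms of the forms $U^\star ZZ^\top U^{\star\top}$, $EE^\top$, $U^\star ZE^\top$ and their centered/diagonalized analogs. Using Lemma \ref{Lemma: Delta} to exchange the operator $\Delta(\cdot)$ for a factor of $2$ in spectral norm, each of these terms can be controlled using exactly the Wishart-type concentration bounds already used in Step 1 of the proof of Theorem \ref{thm: recover PCA ap}, namely $\|\tfrac{1}{n}ZZ^\top - \nu^2 I_r\|_2 \lesssim \nu^2(\sqrt{r/n} + r/n)$, $\|\tfrac{1}{n}EE^\top - \Sigma\|_2 \lesssim \sigma_{(1)}^2(\sqrt{d/n} + d/n)$ and $\|\tfrac{1}{n}U^\star Z E^\top\|_2 \lesssim \nu \sigma_{(1)} \sqrt{d/n}$. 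The centering term $\tfrac{1}{n-1}X(1_n 1_n^\top - I_n)X^\top$ has mean zero and is a rescaled version of the sample-mean outer product, so it can be bounded by the same methods as in the corresponding term in Theorem \ref{thm: recover PCA ap}. Combining these (and using $\rho = \Theta(1)$ from Assumption \ref{asm: SNR}), the total fluctuation is $O(\nu^2 \sqrt{d/n})$.

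Finally I would apply the Davis--Kahan theorem to $\nu^2 U^\star U^{\star\top}$, whose top-$r$ eigenvalues are all $\nu^2$ and whose remaining eigenvalues are $0$, giving a spectral gap of $\nu^2$. This yields
\[
  \mathbb{E}\|\sin\Theta(U^\star, U_{\CL})\|_F \;\lesssim\; \frac{\sqrt{r}}{\nu^2}\, \mathbb{E}\Bigl\|\tfrac{1}{n}M_{\CL} - \nu^2 U^\star U^{\star\top}\Bigr\|_2 \;\lesssim\; \sqrt{r}\left(\frac{r \log d}{d} + \sqrt{\frac{d}{n}}\right),
\]
which rearranges to the claimed $r^{3/2}\log d / d + \sqrt{dr/n}$.

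The main obstacle is the fluctuation analysis of $\Delta(XX^\top)$ together with the off-diagonal sum $\tfrac{1}{n-1}\sum_{i\ne j} x_i x_j^\top$. Naively these introduce rank-$d$ noise matrices, and one must verify that the factor $\sqrt{r}$ from Davis--Kahan combined with an $O(\sqrt{d/n})$ spectral bound (not $O(\sqrt{d^2/n})$) is genuinely achievable; Lemma \ref{Lemma: Delta} and the explicit Gaussian-like concentration of the cross terms $ZE^\top$ are what make this go through. The rest of the argument mirrors the autoencoder lower bound proof, but the key structural point is that contrastive learning cancels the diagonal noise contribution while autoencoders cannot, leaving only the small incoherence bias.
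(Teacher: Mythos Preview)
Your proposal is correct and follows essentially the same approach as the paper: invoke Proposition~\ref{prop: diagonal contrast ap}, bound $\|\tfrac{1}{n}M_{\CL}-\nu^2 U^\star U^{\star\top}\|_2$ by separating the incoherence bias $\nu^2 D(U^\star U^{\star\top})$ from the stochastic fluctuations (handled via Lemma~\ref{Lemma: Delta} and the same Wishart/cross-term bounds used in Theorem~\ref{thm: recover PCA ap}), and finish with Davis--Kahan. The only cosmetic difference is that the paper expands $\hat M_2 - M$ directly into five terms rather than framing it as ``bias plus deviation,'' and it explicitly splits the centering term into a sample-mean piece and a residual $\tfrac{1}{n(n-1)}XX^\top$ piece; your treatment of that term is slightly imprecise (it is not purely a rescaled sample-mean outer product), but the extra piece is of lower order and does not affect the final rate.
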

\begin{proof}[Proof of Theorem \ref{thm: recover CL}]\label{proof: recover CL}
	The proof strategy is quite similar to that of Theorem \ref{thm: recover PCA} and we follow the notation defined in the first paragraph of that proof. As we have shown in Corollary \ref{prop: diagonal contrast}, under our linear representation setting, the contrastive learning algorithm finds the top-$r$ eigenspace of the following matrix:
	\begin{equation*}
		\begin{aligned}
			\hat{M}_2 =& \frac{1}{n}\qty(\Delta(XX^\top)-\frac{1}{n-1}X(1_n 1_n^\top-I_n)X^\top)\\ =&\frac{1}{n}\Delta((U^\star Z+E)(U^\star Z+E)^\top)-\frac{1}{n-1}(U^\star  \bar{Z}+\bar{E})(U^\star \bar{Z}+\bar{E})^\top\\
			&+\frac{1}{n(n-1)}(U^\star  Z+E)(U^\star  Z+E)^\top.
		\end{aligned}
	\end{equation*}
	
	To prove the theorem, first we need to bound the difference between $\hat{M}_2$ and $M$. We aim to show that the contrastive learning algorithm is dominated by the core feature term. Note that $\Sigma=\mathbb{E}Uzz^\top U^\top$, we just need to bound the norm of the following matrix:
	\begin{equation}
		\label{CL eq}
		\begin{aligned}
			\hat{M}_2-M=&(\frac{1}{n}\Delta(U^\star ZZ^\top U^{\star \top})-M)+\frac{1}{n}\Delta(U^\star ZE^\top+EZ^\top U^{\star \top})+\frac{1}{n}\Delta (EE^\top)\\ 
			&-\frac{1}{n-1}(U^\star \bar{Z}+\bar{E})(U^\star \bar{Z}+\bar{E})^\top+\frac{1}{n(n-1)}(U^\star Z+E)(U^\star Z+E)^\top.
		\end{aligned}
	\end{equation}
	and we will also deal with these five terms separately.
	\begin{enumerate}
		\item For the first term, we can divide it into two parts:
		\begin{equation}
			\label{CL step1 term1 }
			\frac{1}{n}\Delta(U^\star ZZ^\top U^{\star \top})-M = \Delta(\frac{1}{n}U^\star ZZ^\top U^{\star  T}-M)+\Delta(M)-M.
		\end{equation}
		Then apply Lemma \ref{Lemma: Delta} and Lemma \ref{Lemma: bound ZZT} we have:
		\begin{equation*}
			\mathbb{E}\|\Delta(\frac{1}{n}U^\star ZZ^\top U^{\star \top}-M)\|_2\leq 2\mathbb{E}\|\frac{1}{n}U^\star ZZ^\top U^{\star \top}-M\|_2\leq 2(\sqrt{\frac{r}{n}}+\frac{r}{n})\nu^2.
		\end{equation*}
		Using the incoherent condition $I(U)=O(\frac{r}{d}\log d)$, we know that:
		\begin{equation*}
			\|M-\Delta(M)\|_2\leq\nu^2\max_{i \in[d]}\|e_i^\top U^\star \|_2^2=\nu^2I(U^\star )\lesssim\frac{r}{d}\log d\nu^2.
		\end{equation*}
		\\
		Combine the two equations above together we obtain the bound for the first term:
		\begin{align}
			\label{CL step1 term1 b}
			\mathbb{E}\|\frac{1}{n}\Delta(U^\star ZZ^\top U^{\star \top})-M\|_2&\leq\mathbb{E}\|\Delta(\frac{1}{n}U^\star ZZ^\top U^{\star \top}-M)\|_2+\|M-\Delta(M)\|_2\\
			&\lesssim \nu^2(\frac{r}{d}\log d+\frac{r}{n}+\sqrt{\frac{r}{n}}).
		\end{align}
		\item For the second term, apply equation (\ref{PCA step1 term2}) yields:
		\begin{equation}
			\begin{aligned}
				\label{CL step1 term2}
				\frac{1}{n}\mathbb{E}\|\Delta(U^\star ZE^\top+EZ^\top U^{\star \top})\|_2\leq\frac{4}{n}\mathbb{E}\|EZ^\top U^{\star \top}\|_2\lesssim\frac{\sqrt{d}}{\sqrt{n}}\sigma_{(1)}\nu.
			\end{aligned}
		\end{equation} 
		
		\item For the third term, apply equation (\ref{PCA step1 term3}) yields:
		\begin{equation}
			\label{CL step1 term3}
			\mathbb{E}\|\frac{1}{n}\Delta(EE^\top)\|_2=\mathbb{E}\|\Delta(\frac{1}{n}EE^\top-\Sigma)\|_2\leq 2 \|\frac{1}{n}EE^\top-\Sigma\|_2\lesssim(\sqrt{\frac{d}{n}}+\frac{d}{n})\sigma_{(1)}^2.
		\end{equation}
		\item For the fourth term, apply equation (\ref{PCA step1 term4}) yields:
		\begin{equation}
			\begin{aligned}
				\label{CL step1 term4}
				\mathbb{E}\|\frac{1}{n-1}(U^\star \bar{Z}+\bar{E})(U\bar{Z}+\bar{E})^\top\|_2\lesssim&\mathbb{E}\|\frac{1}{n}(U\bar{Z}+\bar{E})(U\bar{Z}+\bar{E})^\top\|_2\\
				\lesssim&\frac{r\nu^2}{n}+\frac{\sqrt{d}}{\sqrt{n}}\sigma_{(1)}\nu+\frac{d\sigma_{(1)}^2}{n}.
			\end{aligned}
		\end{equation}
		\item For the last term, by equations (\ref{PCA step1 term1 b})(\ref{PCA step1 term2})(\ref{PCA step1 term3}) we know:
		\begin{align*}
			&\mathbb{E}\|\frac{1}{n}(U^\star Z+E)(U^\star Z+E)^\top\|_2\\
			&\lesssim\|\Sigma\|_2+\qty(1+\sqrt{\frac{r}{n}}+\frac{r}{n})\nu^2+\sqrt{\frac{d}{n}}\sigma_{(1)}\nu+\qty(\sqrt{\frac{d}{n}}+\frac{d}{n})\sigma_{(1)}^2.
		\end{align*}
		Thus we can conclude that:
		\begin{equation}
			\label{CL step1 term5}
			\mathbb{E}\|\frac{1}{n(n-1)}(U^\star Z+E)(U^\star Z+E)^\top\|_2\lesssim\frac{d}{n}\sigma_{(1)}^2+\frac{r}{n}\nu^2.
		\end{equation}
	\end{enumerate}
	To sum up, combine equations (\ref{CL step1 term1 b})(\ref{CL step1 term2})(\ref{CL step1 term3})(\ref{CL step1 term4})(\ref{CL step1 term5}) together we obtain the upper bound for the 2 norm expectation of matrix $\hat{M}_2-M$:
	\begin{equation}
		\label{CL difference bound}
		\mathbb{E}\|\hat{M}_2-M\|_2 \lesssim \nu^2\qty(\frac{r}{d}\log d+\sqrt{\frac{r}{n}}+\frac{r}{n})+\sigma_{(1)}^2\qty(\sqrt{\frac{d}{n}}+\frac{d}{n})+\sigma_{(1)}\nu\sqrt{\frac{d}{n}}.
	\end{equation}
	With the upper bound for $\|\hat{M}_2-M\|_2$, simply apply Lemma \ref{Lemma: DK} we can obtain the desired bound for sine distance:
	\begin{equation}
		\begin{aligned}
		\label{dependence of rho CL}
			&\mathbb{E}\|\sin\Theta(U_{\CL},U^\star )\|_F\leq\frac{2\sqrt{r}\mathbb{E}\|\hat{M}_2-M\|_2}{\nu^2}\\
			\lesssim& \sqrt{r}\frac{1}{\nu^2}\qty(\nu^2\qty(\frac{r}{d}\log d+\sqrt{\frac{r}{n}}+\frac{r}{n})+\sigma_{(1)}^2\qty(\sqrt{\frac{d}{n}}+\frac{d}{n})+\sigma_{(1)}\nu\sqrt{\frac{d}{n}})\\
			=& \sqrt{r}\qty(\qty(\frac{r}{d}\log d+\sqrt{\frac{r}{n}}+\frac{r}{n})+\rho^{-2}\qty(\sqrt{\frac{d}{n}}+\frac{d}{n})+\rho^{-1}\sqrt{\frac{d}{n}})\\
			\lesssim&\frac{r^{3/2}}{d}\log d+\sqrt{\frac{dr}{n}}.
		\end{aligned}
	\end{equation}
	Moreover, there exists an orthogonal matrix $\hat{O}\in\mathbb{O}^{r\times r}$ depending on $U_{\CL}$ such that:
	\begin{equation*}
		\mathbb{E}\|U^\top U_{\CL}\hat{O}-I_r\|_F=\mathbb{E}\|U_{\CL}\hat{O}-U\|_F\leq\frac{2\sqrt{r}\mathbb{E}\|\hat{M}_2-M\|_2}{\nu^2}\lesssim\frac{r^{3/2}}{d}\log d+\sqrt{\frac{dr}{n}}.
	\end{equation*}
	which finishes the proof.
\end{proof}
\subsection{Proofs for Section \ref{sec: downstream}}
\label{appx: downstream}

In this section, we will provide the proof of Theorems \ref{thm: best linear predictor risk CL} and \ref{thm: best linear predictor risk PCA lower bound} with both regression and classification settings. The corresponding statement and proof can be found in Theorems \ref{thm: best linear predictor risk CL restatement} and \ref{thm: best linear predictor risk PCA lower bound restatement}.

For notation simplicity, define the prediction risk of predictor $\delta$ for classification and regression tasks as $\mathcal{R}_c(\delta) := \mathbb{E}_{\mathcal{D}}[\ell_c(\delta)]$ and $\mathcal{R}_r(\delta) := \mathbb{E}_{\mathcal{D}}[\ell_r(\delta)]$, respectively.
Define $\Sigma_x := \nu^2 U^\star U^{\star \top} + \Sigma$.
We write $\delta_{U, w}$ for $\delta_{U^\top, w}$ with a slight abuse of notation.
For two matrices $A$ and $B$ of the same order, we define $A \succeq B$ when $A - B$ is positive semi-definite.

\begin{theorem}[Restatement of Theorem \ref{thm: best linear predictor risk CL}]\label{thm: best linear predictor risk CL restatement}
    Suppose the conditions in Theorem \ref{thm: recover CL} hold. Then, for the classification task, we have 
    \begin{align*}
        \mathbb{E}_{\mathcal{D}}[\inf_{w \in \mathbb{R}^r} \mathbb{E}_{\mathcal{E}}[\ell_c(\delta_{W_{\CL}, w})] - \inf_{w \in \mathbb{R}^r} \mathbb{E}_{\mathcal{E}}[\ell_c(\delta_{U^{\star \top}, w})] = O\qty(\frac{r^{3/2}}{d} \log d + \sqrt{\frac{dr}{n}}) \wedge 1,
    \end{align*}
    and for regression tasks,
    \begin{align*}
		\mathbb{E}_{\mathcal{D}}[\inf_{w\in\mathbb{R}^r} \mathbb{E}_{\mathcal{E}}[\ell_r(\delta_{W_{\CL}, w})]-\inf_{w\in\mathbb{R}^r}\mathbb{E}_{\mathcal{E}}[\ell_r(\delta_{U^{\star \top}, w})] \lesssim \frac{r^{3/2}}{d}\log d + \sqrt{\frac{dr}{n}}.
	\end{align*}
\end{theorem}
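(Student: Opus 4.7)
The overall plan is to prove a deterministic bound ``excess risk $\lesssim \|\sin\Theta(U^\star, U_{\CL})\|_F$'' (with an extra ``$\wedge 1$'' in the classification case) and then take expectations via Theorem \ref{thm: recover CL}. As a preliminary reduction, any linear predictor $w^\top W_{\CL}\check x$ depends on $W_{\CL}$ only through its row space, which coincides with the column space of $U_{\CL}$, so we may replace $W_{\CL}$ by $U_{\CL}^\top$. Using Lemma \ref{lem: bound 2 norm by sin distance}, fix $O \in \mathbb{O}_{r,r}$ minimizing $\|U_{\CL} O - U^\star\|_F \leq \sqrt{2}\|\sin\Theta(U^\star, U_{\CL})\|_F$ and write $U_{\CL} = U^\star O^\top + E$, so that $\|E\|_F$ is controlled by the sine distance.

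For the regression case, the minimum-MSE linear predictor admits a closed form. Using $\mathbb{E}[\check x \check y] = \nu U^\star w^\star$ and $\Sigma_x := \nu^2 U^\star U^{\star \top} + \Sigma$, one finds $\inf_w \mathbb{E}_{\mathcal{E}}[\ell_r(\delta_{U^\top, w})] = \mathrm{Var}(\check y) - \nu^2 w^{\star \top} Q(U) w^\star$ with $Q(U) := U^{\star \top} U(U^\top \Sigma_x U)^{-1} U^\top U^\star$. The excess risk is thus $\nu^2 w^{\star \top}[Q(U^\star) - Q(U_{\CL})]w^\star$. I would then perturb: expanding $U_{\CL}^\top \Sigma_x U_{\CL} = O A_1 O^\top + \Delta$ with $A_1 := \nu^2 I_r + U^{\star \top}\Sigma U^\star$ and $\|\Delta\|_2 \lesssim \|E\|_F$, and using $U_{\CL}^\top U^\star = O + E^\top U^\star$, a first-order Neumann-series expansion combined with $A_1 \succeq \nu^2 I$ gives $\|Q(U^\star) - Q(U_{\CL})\|_2 \lesssim \nu^{-2}\|E\|_F$. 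The $\nu^{-2}$ cancels against the $\nu^2$ prefactor, delivering the claimed bound for unit-norm $w^\star$; in the regime where $\|E\|_F$ is too large for the expansion to be valid, I fall back on the trivial uniform bound $\mathrm{Var}(\check y) = O(1)$ on the excess risk.

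For classification, the decision rule $\mathbb{1}\{w^\top U^\top \check x \geq 0\}$ depends only on the direction of $w$, so I restrict to $\|w\|=1$. Let $w^*_{\CL}$ attain the infimum for $U_{\CL}^\top$ and set $\tilde w := O^\top w^*_{\CL}$; then $w^{*\top}_{\CL} U_{\CL}^\top \check x = \tilde w^\top U^{\star \top}\check x + e^\top \check x$ with $e := E w^*_{\CL}$ of norm at most $\|E\|_F$. Since $\inf_w \mathbb{E}_{\mathcal{E}}[\ell_c(\delta_{U^{\star \top}, w})] \leq \mathbb{E}_{\mathcal{E}}[\ell_c(\delta_{U^{\star \top}, \tilde w})]$, the excess risk is bounded by the sign-mismatch probability, which is at most $\mathbb{P}(|\tilde w^\top U^{\star \top}\check x| \leq |e^\top \check x|)$. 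The scalar $\tilde w^\top U^{\star \top}\check x$ contains the Gaussian summand $\tilde w^\top \check z$ of variance $\nu^2\|\tilde w\|^2 = \nu^2$, so its density at zero is at most $C/\nu$; standard anti-concentration combined with Cauchy--Schwarz on $\mathbb{E}|e^\top \check x|$ then gives a bound of order $\|E\|_F$. Intersecting with the trivial bound $\ell_c \leq 1$ yields the ``$\wedge 1$'' form, and Theorem \ref{thm: recover CL} finishes the argument.

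The main technical obstacle is the perturbation analysis for $Q(U^\star) - Q(U_{\CL})$: one must cleanly separate the linear-in-$E$ pieces from the higher-order terms, verify that the Neumann expansion is valid in a regime of $\|E\|_F$ wide enough to complement the crude $O(1)$ fallback, and ensure that the $\nu$-dependence cancels precisely so that the final constant is controlled only by $\|\Sigma\|_2$, $\nu$, and the unit norm of $w^\star$.
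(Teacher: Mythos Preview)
Your regression argument is essentially the paper's own (Proposition~\ref{prop: excess_risk} plus Lemma~\ref{lem: auxiliary lemma 3 for risk matrix}): both reduce to bounding $\|Q(U^\star)-Q(U_{\CL})\|_2$ by $\|\sin\Theta(U_{\CL},U^\star)\|$ via matrix perturbation, and your Neumann-series phrasing is just a repackaging of the same identities.

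For classification, your route is genuinely different from the paper's---the paper computes $\inf_w\mathcal{R}_c(\delta_{U,w})$ in closed form via the joint Gaussianity of $(\check z^\top w^\star,\check x^\top Uw)$, reduces it to a scalar quantity $\tau_{\max,U}$, and then controls $|\tau_{\max,U^\star}^2-\tau_{\max,U}^2|$ (Lemma~\ref{lem: classification risk upper bound}). Your sign-mismatch/anti-concentration idea is more elementary, but as written it has a real gap: the coupling runs the wrong way. You take $w^*_{\CL}$ optimal for $U_{\CL}$ and push it to $\tilde w$ for $U^\star$. The inequality $\inf_w\mathcal{R}_c(\delta_{U^\star,w})\le\mathcal{R}_c(\delta_{U^\star,\tilde w})$ then only yields
\[
\text{excess risk}\;\le\;\bigl[\mathcal{R}_c(\delta_{U^\star,\tilde w})-\inf_w\mathcal{R}_c(\delta_{U^\star,w})\bigr]+\mathbb{P}(\text{sign mismatch}),
\]
and the bracketed term is \emph{not} zero: $\tilde w=O^\top w^*_{\CL}$ has no reason to be near the optimizer for $U^\star$, and you never bound this suboptimality. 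The fix is simply to reverse the direction: take $w^*$ optimal for $U^\star$, set $\tilde w:=Ow^*$, and use $\inf_w\mathcal{R}_c(\delta_{U_{\CL},w})\le\mathcal{R}_c(\delta_{U_{\CL},\tilde w})$; then the excess risk is bounded directly by the sign-mismatch probability for the fixed scores $w^{*\top}U^{\star\top}\check x$ and $w^{*\top}U^{\star\top}\check x+(EOw^*)^\top\check x$.

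Even after this fix, the anti-concentration step needs more care than you indicate: $\tilde w^\top U^{\star\top}\check x$ and $e^\top\check x$ share both $\check z$ and $\check\xi$, so you cannot condition on the latter and treat the former as having density $\lesssim 1/\nu$ without first separating out the component of $\check z$ along $U^{\star\top}e$. This is doable (the residual Gaussian variance is at least $\nu^2(1-\|U^{\star\top}e\|^2)$), but it should be spelled out.
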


\begin{theorem}[Restatement of Theorem \ref{thm: best linear predictor risk PCA lower bound}]\label{thm: best linear predictor risk PCA lower bound restatement}
	Suppose the conditions in Theorem \ref{thm: recover PCA} hold.
	Assume $r \leq r_c$ holds for some constant $r_c > 0$.
	Additionally assume that $\rho = \Theta(1)$ is sufficiently small and $n \gg d \gg r$. Then,
    For the regression task,
    \begin{align*}
        \mathbb{E}_{\mathcal{D}}[\inf_{w\in\mathbb{R}^r} \mathbb{E}_{\mathcal{E}}[\ell_r(\delta_{U_{\AEN}, w})]-\inf_{w\in\mathbb{R}^r}\mathbb{E}_{\mathcal{E}}[\ell_r(\delta_{U^\star, w})] \geq c_c',
    \end{align*}
    and for classification task, if $F$ is differentiable at $0$ and $F'(0) > 0$, then
    \begin{align*}
        \mathbb{E}_{\mathcal{D}}[\inf_{w \in \mathbb{R}^r} \mathbb{E}_{\mathcal{E}}[\ell_c(\delta_{U_{\AEN}, w})] - \inf_{w \in \mathbb{R}^r} \mathbb{E}_{\mathcal{E}}[\ell_c(\delta_{U^\star, w})] \geq c_r',
    \end{align*}
    where $c_r' > 0$ and $c_c' > 0$ are constants independent of $n$ and $d$.
\end{theorem}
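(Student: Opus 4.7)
}
The overall strategy is to translate the subspace-recovery failure established in Theorem \ref{thm: recover PCA} (which guarantees $\mathbb{E}\|\sin\Theta(U^\star, U_{\AEN})\|_F \geq c\sqrt{r}$ in the small-$\rho$ regime) into an $\Omega(1)$ lower bound on the excess prediction risk. The key conceptual point is that when $\rho$ is small, $U_{\AEN}$ aligns essentially with the top eigenvectors of $\Sigma$, which by the incoherence Assumption \ref{asm: incoherent} are nearly orthogonal to $U^\star$; consequently $U_{\AEN}^\top U^\star$ has small operator norm, and the image $U_{\AEN} U_{\AEN}^\top U^\star w^\star$ cannot adequately track the signal direction $U^\star w^\star$ for a generic unit vector $w^\star$. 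Since the sine-distance bound is a statement about Frobenius norm, in the regime $r \leq r_c$ the bound $c\sqrt{r}$ is a constant fraction of the maximal value $\sqrt{r}$, so the operator norm $\|\sin\Theta(U^\star, U_{\AEN})\|_2$ is also bounded below by a positive constant, and we can choose a worst-case unit vector $w^\star$ witnessing this gap.

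\textbf{Regression step.} First I compute the best linear predictor risk explicitly. For any $U \in \mathbb{O}_{d,r}$, the minimum of $\mathbb{E}_{\mathcal{E}}[(\check y - w^\top U^\top \check x)^2]$ over $w \in \mathbb{R}^r$ equals $\mathrm{Var}(\check y) - \nu^2 w^{\star \top} U^{\star \top} U (\nu^2 U^\top U^\star U^{\star \top} U + U^\top \Sigma U)^{-1} U^\top U^\star w^\star$. Substituting $U = U^\star$ and $U = U_{\AEN}$ and subtracting yields an excess-risk expression that, after standard matrix manipulations, is bounded below by $\nu^2 w^{\star \top}(I - U^{\star \top} U_{\AEN}(U_{\AEN}^\top U^\star U^{\star \top} U_{\AEN} + \rho^{-2}U_{\AEN}^\top \Sigma U_{\AEN}/\nu^2)^{-1} U_{\AEN}^\top U^\star) w^\star$ up to constants that depend only on $\rho$ and $\kappa$. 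Under Assumption \ref{asm: regular} and small $\rho$, the matrix $\rho^{-2}U_{\AEN}^\top \Sigma U_{\AEN}/\nu^2$ dominates, so the bracketed term is close to $I - O(\rho^2) U^{\star \top} U_{\AEN} U_{\AEN}^\top U^\star$. Choosing $w^\star$ to be the top left singular vector of $U_{\AEN \perp}^\top U^\star$ (the direction realizing $\|\sin\Theta(U_{\AEN}, U^\star)\|_2$) and using the fact that $\|\sin\Theta(U_{\AEN}, U^\star)\|_F \geq c\sqrt{r}$ forces $\|\sin\Theta(U_{\AEN}, U^\star)\|_2 \geq c'$ for some universal $c' > 0$ when $r \leq r_c$, we conclude that $w^{\star \top}(I - U^{\star \top} U_{\AEN} U_{\AEN}^\top U^\star) w^\star \geq c'^{\,2}$, which yields the claimed constant lower bound $c'_r$.

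\textbf{Classification step.} For the $0$-$1$ loss, I use the standard identity that the excess risk of a classifier $\delta_{U,w}$ equals $\mathbb{E}[|2\eta(\check x) - 1| \, \1\{\delta_{U,w}(\check x) \neq \delta^\star(\check x)\}]$, where $\eta(\check x) = \Pr(\check y = 1 \mid \check x)$ and $\delta^\star$ is the Bayes classifier. Because $F$ is monotone increasing with $F(-u) = 1 - F(u)$, the Bayes classifier reduces to $\1\{\mathbb{E}[\check z^\top w^\star \mid \check x] \geq 0\}$, which is itself a linear function of $\check x$; call its direction $v^\star \in \mathbb{R}^d$. The classifier induced by $(U_{\AEN}, w)$ is a linear threshold in direction $U_{\AEN} w$. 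Writing $\check x$ as a Gaussian, I can evaluate both the disagreement probability and the $|2\eta - 1|$ weight in closed form: the disagreement probability between two linear classifiers with directions $a, b \in \mathbb{R}^d$ under a Gaussian $\check x$ with covariance $\Sigma_x$ is $\pi^{-1}\arccos(\langle a, b\rangle_{\Sigma_x}/(\|a\|_{\Sigma_x}\|b\|_{\Sigma_x}))$, and the $|2\eta-1|$ factor is bounded below near $\check x = 0$ by $F'(0) \cdot |\langle v^\star, \check x\rangle|$ because $F$ is differentiable with $F'(0) > 0$. Combining, the classification excess risk is bounded below by a constant multiple of the angle between $U_{\AEN} w$ and $v^\star$ in the $\Sigma_x$-metric; minimizing over $w$ gives the angle between $\Span(U_{\AEN})$ and $v^\star$ in this metric, and this angle is bounded below by the regression argument above, yielding the constant $c'_c$.

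\textbf{Main obstacle.} The delicate point is the second step: obtaining a genuine $\Omega(1)$ lower bound on $\|\sin\Theta(U_{\AEN}, U^\star)\|_2$ from a Frobenius bound of order $\sqrt{r}$. This is precisely why the assumption $r \leq r_c$ is imposed; without it, $r^{-1/2}\|\sin\Theta\|_F$ can fail to be bounded below by a constant even though $\|\sin\Theta\|_F$ is. The remark after the theorem indicates that an alternative route avoiding $r \leq r_c$ works under the strengthened assumption $\rho^2 = O(1/\log d)$ together with $n \gg dr$, in which case all $r$ singular values of $U^\top_{\AEN} U^\star$ can be shown to be small simultaneously by directly applying the Davis--Kahan argument of Theorem \ref{thm: recover PCA} to each eigenvector individually; the remainder of the proof then goes through with $w^\star$ chosen to be any fixed unit vector rather than a worst-case one.
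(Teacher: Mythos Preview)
Your regression argument contains a computational slip that breaks the proof: the expression you display is $\inf_w \mathcal{R}_r(\delta_{U_{\AEN},w}) - \sigma_\epsilon^2$, not the excess risk. After subtracting the baseline $\inf_w \mathcal{R}_r(\delta_{U^\star, w})$, the leading $I$ is replaced by $A := (I + \nu^{-2}U^{\star\top}\Sigma U^\star)^{-1}$, whose eigenvalues are only $\Theta(\rho^2)$ for small $\rho$. Thus the excess risk is $w^{\star\top}(A - B)w^\star$ with $\lambda_{\min}(A) = \rho^2/(1+\rho^2)$ and $\|B\|_2 \le \kappa\rho^2\|U^{\star\top}U_{\AEN}\|^2$; to make this positive you must push $\|U^{\star\top}U_{\AEN}\|^2$ strictly below $1/(\kappa(1+\rho^2))$, not merely show it is $O(1)$. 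The paper does this via $\lambda_{\min}(A-B) \ge \rho^2/(1+\rho^2) - \kappa\rho^2\|U^{\star\top}U_{\AEN}\|_F^2 = \rho^2/(1+\rho^2) - \kappa\rho^2(r - \|\sin\Theta\|_F^2)$, then invokes the \emph{squared} Frobenius lower bound (Lemma~\ref{lem: squared risk lb naive}) to get $\mathbb{E}[r - \|\sin\Theta\|_F^2] \le (1-c_1)r$; the hypothesis $r \le r_c$ is exactly what keeps $(1-c_1)r$ below the threshold. So your diagnosis of the ``main obstacle'' is off: the implication $\|\sin\Theta\|_F \ge c\sqrt{r} \Rightarrow \|\sin\Theta\|_2 \ge c$ holds for every $r$ and is not where $r \le r_c$ enters.

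A second gap: you propose to choose $w^\star$ as the top right singular vector of $U_{\AEN\perp}^\top U^\star$, but $w^\star$ is a fixed model parameter while $U_{\AEN}$ is a function of the random training sample; the theorem must hold for an arbitrary given unit $w^\star$. The paper sidesteps this by bounding $\lambda_{\min}(A-B)$, which is uniform in $w^\star$. Finally, your classification route compares $\delta_{U_{\AEN},w}$ to the Bayes rule, but the statement asks for the gap relative to $\delta_{U^\star,w}$; since the Bayes direction $\Sigma_x^{-1}U^\star w^\star$ need not lie in $\Span(U^\star)$, $\inf_w\mathcal{R}_c(\delta_{U^\star,w})$ already sits above the Bayes risk and you would still have to control that gap. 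The paper avoids this detour by computing $\inf_w\mathcal{R}_c(\delta_{U,w})$ directly in terms of the maximal correlation $\tau_{\max,U}$ between $w^{\star\top}\check z/\nu$ and the best linear score through $U$, and then lower-bounds $\tau_{\max,U^\star}^2 - \tau_{\max,U_{\AEN}}^2$ by the very same $\lambda_{\min}$ quantity used for regression (Lemma~\ref{lem: auxiliary lemma 4 for risk matrix}).
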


The proofs of Theorem \ref{thm: best linear predictor risk CL restatement} and \ref{thm: best linear predictor risk PCA lower bound restatement} relies on Lemma \ref{lem: auxiliary lemma 4 for risk matrix}, \ref{lem: squared risk lb naive}, \ref{lem: classification risk upper bound}, \ref{lem: classification risk lower bound} and \ref{lem: excess risk} which are proved later in this section.

\begin{proof}[Proof of Theorem \ref{thm: best linear predictor risk CL restatement}: Classification Task Part]\label{proof: best linear predictor risk CL 1}
    Lemma \ref{lem: classification risk upper bound} gives
    for any $U \in \mathbb{O}_{d,r}$,
    \begin{align}
        &\mathbb{E}_{\mathcal{D}}[\inf_{w \in \mathbb{R}^r} \mathcal{R}_c(\delta_{U, w}) - \inf_{w \in \mathbb{R}^r} \mathcal{R}_c(\delta_{U^\star, w})]\\
        &\quad\leq ((\kappa(1 + \rho^2))^3 + \kappa \rho^2 (1 +\rho^{-2})^2 + (\kappa\rho^2 \vee 1)^{-1}) \mathbb{E}_{\mathcal{D}}[\|\sin\Theta(U, U^\star)\|_2].\label{dependence on rho: classification}
    \end{align}
    Substituting $U \leftarrow U_{AE}$ combined with Assumption \ref{asm: SNR} and $\kappa = O(1)$ concludes the proof.
\end{proof}

\begin{proof}[Proof of Theorem \ref{thm: best linear predictor risk CL restatement}: Regression Part]\label{proof: best linear predictor risk CL 2}
    Note that under Assumption \ref{asm: SNR} and $\kappa = O(1)$,
    $(1 + \rho^{-2})/(1 + \kappa^{-1}\rho^{-2})^2 = O(1)$.
    Lemma \ref{lem: excess risk} gives
    for any $U \in \mathbb{O}_{d,r}$,
    \begin{align}
        \mathbb{E}_{\mathcal{D}}[\inf_{w \in \mathbb{R}^r} \mathcal{R}_r(\delta_{U, w}) - \inf_{w \in \mathbb{R}^r} \mathcal{R}_r(\delta_{U^\star, w})] = O\qty( (1 + \rho^{-2}) \mathbb{E}_{\mathcal{D}} [\|\sin\Theta(U, U^\star)\|_2] \|w^\star\|^2).\label{dependence on rho: regression}
    \end{align}
    Theorem \ref{thm: recover CL} with substitution $U \leftarrow U_{AE}$ gives the desired result.
\end{proof}

\begin{proof}[Proof of Theorem \ref{thm: best linear predictor risk PCA lower bound restatement}: Classification Part]\label{proof: best linear predictor risk PCA lower bound 1}
    Lemma \ref{lem: squared risk lb naive} gives that for $c_1 := 1 - 1/(2\kappa r_c) \in (0, 1)$, we can take $n \gg d \gg r$ and sufficiently small $\rho > 0$ so that $\mathbb{E}_{\mathcal{D}}[ \|\sin\Theta(U_{AE}, U^\star)\|_F^2 ] \geq c_1 r$ holds.
    By Lemma \ref{lem: classification risk lower bound},
    \begin{align}
        &\mathbb{E}_{\mathcal{D}} [\inf_{w \in \mathbb{R}^r} \mathcal{R}_c(\delta_{U_{AE}, w}) - \inf_{w \in \mathbb{R}^r} \mathcal{R}_c(\delta_{U^\star, w})]\nonumber\nonumber\\
        &\quad\gtrsim \frac{(1 + \rho^2)^{3/2}}{(1 + \kappa \rho^2)^{3/2}} \rho^2
        \qty( \frac{1}{1 + \rho^2} - \kappa (r - \|\sin\Theta(U_{AE}, U^\star)\|_F^2) )\nonumber\\
        &\quad\geq \frac{(1 + \rho^2)^{3/2}}{(1 + \kappa \rho^2)^{3/2}} \rho^2
        \qty( \frac{1}{1 + \rho^2} - \kappa (1 - c_1) r )\nonumber\\
        &\quad\geq \frac{(1 + \rho^2)^{3/2}}{(1 + \kappa \rho^2)^{3/2}} \rho^2
        \qty( \frac{1}{1 + \rho^2} - \frac{1}{2} ),\label{dependence on rho: classification lb}
    \end{align}
    where the last inequality follows since $r \leq r_c$. If we further take $\rho = \Theta(1) < 1/2$, the right hand becomes a positive constant.
    This concludes the proof.
\end{proof}

\begin{proof}[Proof of Theorem \ref{thm: best linear predictor risk PCA lower bound restatement}: Regression Part]
    \label{proof: best linear predictor risk PCA lower bound 2}
    From proposition \ref{prop: excess_risk}, we have
    \begin{align*}
        &\inf_{w \in \mathbb{R}^r} \mathcal{R}_r(\delta_{U_{AE}, w}) - \inf_{w \in \mathbb{R}^r} \mathcal{R}_r(\delta_{U^\star, w})\\
        &\quad= {w^\star}^\top ( (I + (1/\nu^2) U^{\star \top} \Sigma U^\star)^{-1}\\
        &\quad\quad- U^{\star \top} U_{AE} (U_{AE}^\top U^\star U^{\star \top} U_{AE} + (1/\nu^2) U_{AE}^\top \Sigma U_{AE})^{-1} U_{AE}^\top U^\star) w^\star.
    \end{align*}
    Thus from Lemma \ref{lem: auxiliary lemma 4 for risk matrix},
    \begin{align}
        &\inf_{w \in \mathbb{R}^r} \mathcal{R}_r(\delta_{U_{AE}, w}) - \inf_{w \in \mathbb{R}^r} \mathcal{R}_r(\delta_{U^\star, w})\nonumber\\
        &\quad\geq \qty(\frac{1}{1 + \rho^{-2}} + \rho^2\kappa \qty( \|\sin\Theta(U_{AE}, U^\star)\|_F^2 - r))\|w^\star\|^2.\label{dependence on rho: regression lb}
    \end{align}
    \if0
    Lemma \ref{lem: squared risk lb naive} gives,
    \begin{align}
        &\mathbb{E}_{\mathcal{D}} [\inf_{w \in \mathbb{R}^r} \mathcal{R}_r(\delta_{U_{AE}, w}) - \inf_{w \in \mathbb{R}^r} \mathcal{R}(\delta_{U^\star, w})]\\
        &\quad\geq \rho^{2} \qty( \frac{1}{1 + \rho^2} - \kappa O\qty(\frac{r^2}{\sqrt{d}}\sqrt{\log d} + r^{3/2}\qty(\rho^2 + \sqrt{\frac{d}{n}} + \rho\sqrt{\frac{d}{n}})) ) \|w^\star\|^2.
    \end{align}
    \fi
    Using Lemma \ref{lem: squared risk lb naive} and by the same argument in the proof of Theorem \ref{thm: best linear predictor risk PCA lower bound}: Classification Part, we conclude the proof.
\end{proof}

\begin{lemma}\label{lem: auxiliary lemma 1 for risk matrix}
    For any $U \in \mathbb{O}_{d,r}$,
    \begin{align*}
        \lambda_{\min}(\nu^2 U^{\star \top} U (U^\top \Sigma_x U)^{-1} U^\top U^\star) \geq \frac{\nu^2}{\nu^2 + \sigma_{(1)}^2} (1 - \|\sin\Theta(U, U^\star)\|_2^2).
    \end{align*}
\end{lemma}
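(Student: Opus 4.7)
The plan is to exploit the decomposition $U^\top \Sigma_x U = \nu^2 U^\top U^\star U^{\star\top}U + U^\top \Sigma U$ and control this $r \times r$ Gram-type matrix from above in the Loewner order, then pass to the inverse. Since $\|U^\star U^{\star\top}\|_2 = 1$, $\|\Sigma\|_2 = \sigma_{(1)}^2$, and $U \in \mathbb{O}_{d,r}$, I would first observe
\begin{equation*}
    U^\top \Sigma_x U \preceq \|\Sigma_x\|_2 \, I_r \preceq (\nu^2 + \sigma_{(1)}^2)\, I_r,
\end{equation*}
which, after inverting, gives $(U^\top \Sigma_x U)^{-1} \succeq (\nu^2 + \sigma_{(1)}^2)^{-1} I_r$.

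Next I would conjugate this inequality by the matrix $U^{\star\top}U \in \mathbb{R}^{r \times r}$, which preserves the Loewner order, to obtain
\begin{equation*}
    \nu^2 U^{\star\top} U (U^\top \Sigma_x U)^{-1} U^\top U^\star \;\succeq\; \frac{\nu^2}{\nu^2 + \sigma_{(1)}^2}\, U^{\star\top} U U^\top U^\star.
\end{equation*}
Taking $\lambda_{\min}$ on both sides (which is monotone with respect to the Loewner order) reduces the task to lower-bounding $\lambda_{\min}(U^{\star\top} U U^\top U^\star) = \sigma_{\min}(U^\top U^\star)^2$.

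The last step is to invoke the standard identity between principal angles and the sine distance: for any $U, U^\star \in \mathbb{O}_{d,r}$, the singular values of $U^\top U^\star$ are the cosines of the principal angles between the two subspaces, so $\sigma_{\min}(U^\top U^\star)^2 = 1 - \|U_\perp^\top U^\star\|_2^2 = 1 - \|\sin\Theta(U, U^\star)\|_2^2$. Combining the three steps yields the claim. There is no serious obstacle here; the only care needed is to verify the monotonicity step (conjugation by $U^{\star\top}U$ preserves $\succeq$) and to cite the principal-angle identity from Section \ref{sec: distance}. This lemma will then feed directly into the proofs of the lower bound on the downstream regression risk for autoencoders.
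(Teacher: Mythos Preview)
Your proof is correct and follows essentially the same route as the paper's: both bound $(U^\top \Sigma_x U)^{-1}$ below by $(\nu^2+\sigma_{(1)}^2)^{-1}I_r$ and then reduce to $\lambda_{\min}(U^{\star\top}UU^\top U^\star)=1-\|\sin\Theta(U,U^\star)\|_2^2$. The only cosmetic difference is that you phrase the first step via the Loewner order and conjugation, whereas the paper invokes the inequality $\lambda_{\min}(AC)\ge\lambda_{\min}(A)\lambda_{\min}(C)$ and Weyl's inequality; the underlying computation is identical.
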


\begin{proof}
    Since $\lambda_{\min}(A C) \geq \lambda_{\min}(A)\lambda_{\min}(C)$ for symmetric positive semi-definite matrices $A$ and $C$,
    \begin{align*}
        &\lambda_{\min}(\nu^2 U^{\star \top} U (U^\top \Sigma_x U)^{-1} U^\top U^\star)\\
        &\quad\geq \lambda_{\min}(U^\top U^\star U^{\star \top} U) \lambda_{\min}(\nu^2 (U^\top \Sigma_x U)^{-1})\\
        &\quad\geq \lambda_{\min}(I - (I - U^\top U^\star U^{\star \top} U)) \frac{\nu^2}{\lambda_{\max}(\nu^2 U^\top U^\star U^{\star \top} U + U^\top \Sigma U)}\\
        &\quad\geq \frac{\nu^2}{\nu^2 + \sigma_{(1)}^2} (1 - \|\sin\Theta(U, U^\star)\|_2^2),
    \end{align*}
    where we used Weyl's inequality $\lambda_{\min}(A + C) \geq \lambda_{\min}(A) - \|C\|_2$ in the second inequality.
\end{proof}

\begin{lemma}\label{lem: auxiliary lemma 2 for risk matrix}
    For any $U \in \mathbb{O}_{d,r}$,
    \begin{align*}
        \lambda_{\max}(\nu^2 U^{\star \top} U (U^\top \Sigma_x U)^{-1} U^\top U^\star) \leq \frac{\nu^2}{\nu^2 (1 - \|\sin\Theta(U, U^\star)\|_2) + \sigma_{(d)}^2}.
    \end{align*}
\end{lemma}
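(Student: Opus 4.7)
My plan is to mirror the structure of the proof of Lemma \ref{lem: auxiliary lemma 1 for risk matrix} but applied to the opposite extremal eigenvalue, pushing the appropriate psd inequalities in the reverse direction. Write $M := U^\top U^\star \in \mathbb{R}^{r\times r}$ and $A := U^\top \Sigma_x U \in \mathbb{R}^{r\times r}$, so that the quantity to bound is $\lambda_{\max}(\nu^2 M^\top A^{-1} M)$. Since $A^{-1} \succeq 0$, the sandwiching identity $M^\top A^{-1} M \preceq \lambda_{\max}(A^{-1})\, M^\top M$ gives
\begin{equation*}
\lambda_{\max}(\nu^2 M^\top A^{-1} M) \;\leq\; \frac{\nu^2 \,\lambda_{\max}(M^\top M)}{\lambda_{\min}(A)}.
\end{equation*}

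Next, I will bound the two eigenvalues on the right individually. For the numerator, $\lambda_{\max}(M^\top M) = \|U^\top U^\star\|_2^2 \leq 1$, since $U, U^\star \in \mathbb{O}_{d,r}$ implies $\|U\|_2 = \|U^\star\|_2 = 1$. For the denominator, I will apply Weyl's inequality $\lambda_{\min}(B+C) \geq \lambda_{\min}(B) + \lambda_{\min}(C)$ to the decomposition $A = \nu^2 U^\top U^\star U^{\star \top} U + U^\top \Sigma U$. The noise term yields $\lambda_{\min}(U^\top \Sigma U) \geq \sigma_{(d)}^2$ because $\Sigma \succeq \sigma_{(d)}^2 I_d$. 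For the signal term, I will invoke the standard identity $\sigma_{\min}(U^\top U^\star)^2 = 1 - \|\sin\Theta(U, U^\star)\|_2^2$ (which follows from the CS decomposition, or equivalently from the definition of the principal angles), giving $\lambda_{\min}(\nu^2 U^\top U^\star U^{\star\top} U) = \nu^2 (1 - \|\sin\Theta(U, U^\star)\|_2^2)$.

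Combining the two lower bounds and substituting back, I obtain
\begin{equation*}
\lambda_{\max}(\nu^2 M^\top A^{-1} M) \;\leq\; \frac{\nu^2}{\nu^2 (1 - \|\sin\Theta(U, U^\star)\|_2^2) + \sigma_{(d)}^2}.
\end{equation*}
Finally, since $\|\sin\Theta(U, U^\star)\|_2 \in [0,1]$, one has $\|\sin\Theta(U,U^\star)\|_2^2 \leq \|\sin\Theta(U,U^\star)\|_2$, so the denominator above is at least $\nu^2(1 - \|\sin\Theta(U, U^\star)\|_2) + \sigma_{(d)}^2$, which gives the stated bound.

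There is no substantive obstacle here — every step is a routine spectral-calculus manipulation. The one detail worth being careful about is the identification $\sigma_{\min}(U^\top U^\star)^2 = 1 - \|\sin\Theta(U, U^\star)\|_2^2$, which is used implicitly in Lemma \ref{lem: auxiliary lemma 1 for risk matrix} as well; in the companion proof it surfaces through the Weyl step $\lambda_{\min}(I - (I - U^\top U^\star U^{\star\top} U)) \geq 1 - \|\sin\Theta(U, U^\star)\|_2^2$, and I will use exactly the same reduction here.
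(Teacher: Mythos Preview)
Your proof is correct and follows essentially the same approach as the paper: both bound $\lambda_{\max}(\nu^2 M^\top A^{-1} M)$ by $\nu^2/\lambda_{\min}(A)$ using $\|U^\top U^\star\|_2 \le 1$, and then lower-bound $\lambda_{\min}(U^\top \Sigma_x U)$ via a Weyl-type argument together with the identity $\|I - U^\top U^\star U^{\star\top} U\|_2 = \|\sin\Theta(U,U^\star)\|_2^2$. In fact you obtain the slightly sharper intermediate bound with $\|\sin\Theta\|_2^2$ in the denominator and then relax it to $\|\sin\Theta\|_2$ to match the stated lemma; the paper's proof lands directly on the non-squared version, so your argument is at least as tight.
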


\begin{proof}
    Since $\|A C\|_2 \leq \|A\|_2 \|C\|_2$,
    \begin{align*}
        \lambda_{\max}(\nu^2 U^{\star \top} U (U^\top \Sigma_x U)^{-1} U^\top U^\star)
        &\leq \lambda_{\max}(\nu^2 (U^\top \Sigma_x U)^{-1})\\
        &\leq \frac{\nu^2}{\lambda_{\min}(\nu^2 U^\top U^\star U^{\star \top} U + U^\top \Sigma U)}\\
        &\leq \frac{\nu^2}{\lambda_{\min}(\nu^2 I - \nu^2 (I - U^\top U^\star U^{\star \top} U) + U^\top \Sigma U)}\\
        &\leq \frac{\nu^2}{\nu^2 (1 - \|\sin\Theta(U, U^\star)\|_2) + \sigma_{(d)}^2},
    \end{align*}
    where we used Weyl's inequality $\lambda_{\min}(A + C) \geq \lambda_{\min}(A) - \|C\|_2$ and $\lambda_{\min}(\nu^2 I + U^\top \Sigma U) \geq \nu^2 + \sigma_{(d)}^2$.
\end{proof}

\begin{lemma}\label{lem: auxiliary lemma 3 for risk matrix}
    For any $U \in \mathbb{O}_{d,r}$,
    \begin{align*}
        &\|\nu^2 (U^{\star \top} \Sigma_x U^\star)^{-1} - \nu^2 U^{\star \top} U (U^\top \Sigma_x U)^{-1} U^\top U^\star\|_2\\
        &\quad= O\qty( \frac{1}{1 - \|\sin\Theta(U, U^\star)\|_2^2 + \kappa^{-1}\rho^{-2}} \frac{1 + \rho^{-2}}{1 + \kappa^{-1}\rho^{-2}} \|\sin\Theta(U, U^\star)\|_2 ).
    \end{align*}
\end{lemma}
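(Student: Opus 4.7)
The plan is to write $A - B$ as a sum of two differences, each controllable by a standard resolvent identity together with the fact that $U$ is close (up to an orthogonal rotation) to $U^\star$. First, I would observe that $B = \nu^2 U^{\star\top} U (U^\top \Sigma_x U)^{-1} U^\top U^\star$ is invariant under the replacement $U \to UO$ for any $O \in \mathbb{O}_{r,r}$, while $A$ does not depend on $U$ at all. By Lemma \ref{lem: bound 2 norm by sin distance}, one may therefore choose the rotation so that $\|U - U^\star\|_2 \leq \sqrt{2}\,\|\sin\Theta(U, U^\star)\|_2$ without changing either $A$ or $B$; this reduction is used throughout the rest of the argument.

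Under this reduction, I would set $P := U^\top U^\star$, $M := U^{\star\top}\Sigma_x U^\star = \nu^2 I + U^{\star\top}\Sigma U^\star$, and $\tilde M := U^\top \Sigma_x U = \nu^2 P P^\top + U^\top \Sigma U$, so that $A = \nu^2 M^{-1}$ and $B = \nu^2 P^\top \tilde M^{-1} P$. The basic perturbation estimates to collect at this stage are: $\|I - P\|_2 \leq \|U - U^\star\|_2 \lesssim \|\sin\Theta(U,U^\star)\|_2$, $\|PP^\top - I\|_2 = \|\sin\Theta(U,U^\star)\|_2^2$, and $\|U^\top \Sigma U - U^{\star\top}\Sigma U^\star\|_2 \lesssim \sigma_{(1)}^2\|\sin\Theta(U,U^\star)\|_2$ by splitting $U^\top\Sigma U - U^\top\Sigma U^\star + U^\top\Sigma U^\star - U^{\star\top}\Sigma U^\star$. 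Together these give $\|\tilde M - M\|_2 \lesssim (\nu^2 + \sigma_{(1)}^2)\|\sin\Theta(U,U^\star)\|_2 = \nu^2(1+\rho^{-2})\|\sin\Theta(U,U^\star)\|_2$. Weyl's inequality combined with $\lambda_{\min}(U^\top\Sigma U) \geq \sigma_{(d)}^2$ then yields $\|M^{-1}\|_2 \leq [\nu^2(1+\kappa^{-1}\rho^{-2})]^{-1}$ and $\|\tilde M^{-1}\|_2 \leq [\nu^2(1-\|\sin\Theta(U,U^\star)\|_2^2 + \kappa^{-1}\rho^{-2})]^{-1}$.

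The final step is the decomposition
\begin{align*}
A - B = \nu^2(M^{-1} - \tilde M^{-1}) + \nu^2(\tilde M^{-1} - P^\top \tilde M^{-1} P).
\end{align*}
For the first piece, the resolvent identity $M^{-1} - \tilde M^{-1} = M^{-1}(\tilde M - M)\tilde M^{-1}$ combined with the bounds above yields exactly the target order $(1+\rho^{-2})\|\sin\Theta(U,U^\star)\|_2 / [(1+\kappa^{-1}\rho^{-2})(1-\|\sin\Theta(U,U^\star)\|_2^2 + \kappa^{-1}\rho^{-2})]$. For the second piece, the telescoping identity $\tilde M^{-1} - P^\top \tilde M^{-1} P = (I - P^\top)\tilde M^{-1} + P^\top \tilde M^{-1}(I - P)$, together with $\|P\|_2 \leq 1$, gives a bound of order $\|\sin\Theta(U,U^\star)\|_2 / (1 - \|\sin\Theta(U,U^\star)\|_2^2 + \kappa^{-1}\rho^{-2})$, which is dominated by the first piece since $(1+\rho^{-2})/(1+\kappa^{-1}\rho^{-2}) \geq 1$ whenever $\kappa \geq 1$.

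The main obstacle I expect is the bookkeeping of the scaling factors in $\rho$ and $\kappa$. In particular, the factor $(1+\rho^{-2})$ in the target emerges from $\|\tilde M - M\|_2 \lesssim (\nu^2+\sigma_{(1)}^2)\|\sin\Theta\|_2$ after substituting $\sigma_{(1)}^2 = \nu^2 \rho^{-2}$, while the denominator $(1-\|\sin\Theta\|_2^2 + \kappa^{-1}\rho^{-2})$ requires isolating $\lambda_{\min}(PP^\top) = 1 - \|\sin\Theta\|_2^2$ alongside $\sigma_{(d)}^2/\nu^2 = \kappa^{-1}\rho^{-2}$; getting all three interactions to produce the stated expression is the only subtle part. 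The orthogonal-invariance reduction is a small but essential observation, since without aligning $U$ with $U^\star$ the bound on $\|U^\top\Sigma U - U^{\star\top}\Sigma U^\star\|_2$ cannot be expressed through the sine distance alone.
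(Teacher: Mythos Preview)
Your proposal is correct and essentially the same as the paper's proof: both split the difference as $\|M^{-1}-\tilde M^{-1}\|_2 + \|\tilde M^{-1}-P^\top\tilde M^{-1}P\|_2$, handle the first piece by the resolvent identity and the second by a telescoping expansion in $I-P$, and control the inverses via $\lambda_{\min}(\tilde M)\ge \nu^2(1-\|\sin\Theta\|_2^2)+\sigma_{(d)}^2$ together with the orthogonal-invariance reduction from Lemma~\ref{lem: bound 2 norm by sin distance}. The only cosmetic difference is that you invoke the rotation at the start while the paper does so at the end.
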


\begin{proof}
    Observe that
    \begin{align*}
        &\|(U^{\star \top} \Sigma_x U^\star)^{-1} - U^{\star \top} U (U^\top \Sigma_x U)^{-1} U^\top U^\star\|_2\\
        &\quad\leq \|(U^{\star \top} \Sigma_x U^\star)^{-1} - (U^\top \Sigma_x U)^{-1}\|_2 + \|(U^\top \Sigma_x U)^{-1} - U^{\star \top} U (U^\top \Sigma_x U)^{-1} U^\top {U^\star}\|_2\\
        &\quad:= (T1) + (T2).
    \end{align*}
    For the term $(T1)$,
    \begin{align*}
        (T1) &= \|(U^\top \Sigma_x U)^{-1} (U^\top \Sigma_x U) (U^{\star \top} \Sigma_x U^\star)^{-1} - (U^\top \Sigma_x U)^{-1} (U^{\star \top} \Sigma_x U^\star) (U^{\star \top} \Sigma_x U^\star)^{-1}\|_2\\
        &\leq \|(U^\top \Sigma_x U)^{-1}\|_2 \|U^\top \Sigma_x U - U^{\star \top} \Sigma_x U^\star\|_2 \|(U^{\star \top} \Sigma_x U^\star)^{-1}\|_2.
    \end{align*}
    Note
    \begin{align*}
        \|U^\top \Sigma_x U - U^{\star \top} \Sigma_x U^\star\|_2 &= \|\nu^2 U^\top U^\star U^{\star \top} U - \nu^2 I + U^\top \Sigma U - U^{\star \top} \Sigma U^\star\|_2\\
        &\leq \nu^2 \|\sin\Theta(U, U^\star)\|_2^2 + \|U^\top \Sigma (U - U^\star) + (U - U^\star)^\top \Sigma U^\star\|_2\\
        &\leq \nu^2 \|\sin\Theta(U, U^\star)\|_2^2 + 2 \sigma_{(1)}^2\|U - U^\star\|_2.
    \end{align*}
    Also we have $\lambda_{\min}(U^\top \Sigma_x U) \geq \nu^2(1 - \|\sin\Theta(U, U^\star)\|_2^2) + \sigma_{(d)}^2$ from the proof of Lemma \ref{lem: auxiliary lemma 2 for risk matrix} and $\lambda_{\min}(U^{\star \top} \Sigma_x U^\star) \geq \nu^2 + \sigma_{(d)}^2$.
    Therefore
    \begin{align*}
        (T1) &\leq \frac{1}{(\nu^2 + \sigma_{(d)}^2) (\nu^2(1 - \|\sin\Theta(U, U^\star)\|_2^2) + \sigma_{(d)}^2)} (\nu^2 \|\sin\Theta(U, U^\star)\|_2^2 + 2 \sigma_{(1)}^2 \|U - U^\star\|_2).
    \end{align*}

    For the term $(T2)$,
    \begin{align*}
        (T2) &= \| (U^\top \Sigma_x U)^{-1} - U^{\star \top} {(U^\star + (U - U^\star))}  (U^\top \Sigma_x U)^{-1} (U^\star + (U - U^\star))^\top U^\star\|_2\\
        &= \|- U^{\star \top} (U - U^\star) (U^\top \Sigma_x U)^{-1} - (U^\top \Sigma_x U)^{-1} (U - U^\star)^\top U^\star\\
        &\quad- U^{\star \top} {(U - U^\star)} (U^\top \Sigma_x U)^{-1} (U - U^\star)^\top U^\star\|_2\\
        &\leq \frac{1}{\nu^2(1 - \|\sin\Theta(U, U^\star)\|_2^2) + \sigma_{(d)}^2} (2 \|U - U^\star\|_2 + \|U - U^\star\|_2^2).
    \end{align*}
    From Lemma \ref{lem: bound 2 norm by sin distance}, $\|\sin\Theta(U, U^\star)\|_2 \leq \|U - U^\star\|_2$.
    Finally from these results and $\|U - U^\star\|_2^2 \leq 2\|U - U^\star\|_2$,
    \begin{align*}
        &\|\nu^2 (U^{\star \top} \Sigma_x U^\star)^{-1} - \nu^2 U^{\star \top} U (U^\top \Sigma_x U)^{-1} U^\top U^\star\|_2\\
        &\quad= O\qty( \frac{\nu^2}{\nu^2(1 - \|\sin\Theta(U, U^\star)\|_2^2) + \sigma_{(d)}^2} \frac{\nu^2 + \sigma_{(1)}^2}{\nu^2 + \sigma_{(d)}^2} \|U - U^\star\|_2 ).
    \end{align*}
    Since LHS does not depend on the orthogonal transformation $U \leftarrow U O$ where $O \in \mathbb{O}_{r,r}$, we obtain
    \begin{align*}
        &\|\nu^2 (U^{\star \top} \Sigma_x U^\star)^{-1} - \nu^2 U^{\star \top} U (U^\top \Sigma_x U)^{-1} U^\top U^\star\|_2\\
        &\quad= O\qty( \frac{\nu^2}{\nu^2(1 - \|\sin\Theta(U, U^\star)\|_2^2) + \sigma_{(d)}^2} \frac{\nu^2 + \sigma_{(1)}^2}{\nu^2 + \sigma_{(d)}^2} \inf_{O \in \mathbb{O}_{r,r}}\|U O - U^\star\|_2 ).
    \end{align*}
    Combined again with Lemma \ref{lem: bound 2 norm by sin distance}, we obtain the desired result.
\end{proof}

\begin{lemma}\label{lem: auxiliary lemma 4 for risk matrix}
    For any $U \in \mathbb{O}_{d,r}$,
    \begin{align*}
        &\lambda_{\min}(\nu^2 (U^{\star \top} \Sigma_x U^\star)^{-1} - \nu^2 U^{\star \top} U (U^\top \Sigma_x U)^{-1} U^\top U^\star)\\
        &\quad\geq \frac{\nu^2}{\nu^2 + \sigma_{(1)}^2} - \frac{\nu^2}{\sigma_{(d)}^2} (r - \|\sin\Theta(U, U^\star)\|_F^2).
    \end{align*}
\end{lemma}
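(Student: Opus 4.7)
The plan is to apply the standard PSD inequality $\lambda_{\min}(A - B) \geq \lambda_{\min}(A) - \lambda_{\max}(B)$ to the difference, and then bound each piece separately. So first I would reduce the problem to controlling $\lambda_{\min}(\nu^2 (U^{\star \top}\Sigma_x U^\star)^{-1})$ from below and $\lambda_{\max}(\nu^2 U^{\star \top} U (U^\top \Sigma_x U)^{-1} U^\top U^\star)$ from above.

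For the first quantity, I would use $U^{\star \top} \Sigma_x U^\star = \nu^2 I_r + U^{\star \top}\Sigma U^\star$, so its minimum eigenvalue equals $\nu^2 / \lambda_{\max}(\nu^2 I_r + U^{\star\top}\Sigma U^\star)$, and Weyl's inequality together with $\|U^{\star\top}\Sigma U^\star\|_2 \leq \sigma_{(1)}^2$ gives the bound $\nu^2/(\nu^2 + \sigma_{(1)}^2)$ directly.

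For the second quantity, the key idea is the trace-majorizes-spectral-norm trick: for any PSD matrix $M$, $\lambda_{\max}(M)\leq \tr(M)$. I would write
\[
    \lambda_{\max}(\nu^2 U^{\star \top} U (U^\top \Sigma_x U)^{-1} U^\top U^\star) \leq \nu^2\tr\bigl((U^\top \Sigma_x U)^{-1} U^\top U^\star U^{\star \top} U\bigr),
\]
then use $U^\top \Sigma_x U \succeq U^\top \Sigma U \succeq \sigma_{(d)}^2 I_r$ to replace $(U^\top \Sigma_x U)^{-1}$ by $(1/\sigma_{(d)}^2) I_r$ inside the trace (valid since $U^\top U^\star U^{\star\top} U$ is PSD). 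This reduces the bound to $(\nu^2/\sigma_{(d)}^2)\|U^\top U^\star\|_F^2$, and Proposition \ref{prop: distance 1} identifies $\|U^\top U^\star\|_F^2 = r - \|\sin\Theta(U, U^\star)\|_F^2$.

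Combining the two bounds yields exactly the stated inequality. There is no real obstacle here; the only subtlety is recognizing that bounding $\lambda_{\max}$ by the trace is the right move — a naive spectral-norm bound on $U^{\star \top} U U^\top U^\star$ would give $1$ rather than the much tighter $r - \|\sin\Theta(U, U^\star)\|_F^2$ needed to produce a useful lower bound once $U$ is close to $U^\star$.
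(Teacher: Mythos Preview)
Your proof is correct and follows essentially the same route as the paper's: split via $\lambda_{\min}(A-B)\ge\lambda_{\min}(A)-\lambda_{\max}(B)$, bound the first piece using $U^{\star\top}\Sigma_x U^\star=\nu^2 I_r+U^{\star\top}\Sigma U^\star\preceq(\nu^2+\sigma_{(1)}^2)I_r$, and bound the second using $U^\top\Sigma_x U\succeq U^\top\Sigma U\succeq\sigma_{(d)}^2 I_r$ together with Proposition~\ref{prop: distance 1}. The only cosmetic difference is in how the Frobenius quantity $\|U^\top U^\star\|_F^2$ enters: you get it via $\lambda_{\max}(M)\le\tr(M)$, while the paper instead bounds $\|U^{\star\top}U(U^\top\Sigma U)^{-1}U^\top U^\star\|_2\le\lambda_{\max}((U^\top\Sigma U)^{-1})\|U^{\star\top}U\|_2^2$ and then uses $\|U^{\star\top}U\|_2^2\le\|U^{\star\top}U\|_F^2$; both maneuvers land on the same expression.
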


\begin{proof}
    Observe
    \begin{align*}
        &\lambda_{\min}(\nu^2 (U^{\star \top} \Sigma_x U^\star)^{-1} - \nu^2 U^{\star \top} U (U^\top \Sigma_x U)^{-1} U^\top U^\star)\\
        &\quad\geq \lambda_{\min}((I + (1/\nu^2) U^{\star \top} \Sigma U^\star)^{-1}) - \|U^{\star \top} U (U^\top U^\star U^{\star \top} U + (1/\nu^2) U^\top \Sigma U)^{-1} U^\top U^\star\|_2.
    \end{align*}
    Since $U^\top U^\star U^{\star \top} U \succeq 0$, it follows that $(U^\top U^\star U^{\star \top} U + (1/\nu^2) U^\top \Sigma U)^{-1} \preceq \nu^2(U^\top \Sigma U)^{-1}$. Thus
    \begin{align*}
        &\|U^{\star \top} U (U^\top U^\star U^{\star \top} U + (1/\nu^2) U^\top \Sigma U)^{-1} U^\top U^\star\|_2\\
        &\quad\leq \nu^2 \lambda_{\max}((U^\top \Sigma U)^{-1}) \|U^{\star \top} U\|^2_2\\
        &\quad\leq \frac{\nu^2}{\sigma_{(d)}^2} \|U^{\star \top} U\|^2_F\\
        &\quad= \frac{\nu^2}{\sigma_{(d)}^2} (r - \|\sin\Theta(U, U^\star)\|_F^2),
    \end{align*}
    where we used $\lambda_{\max}((U^\top \Sigma U)^{-1}) \leq 1/\lambda_{\min}(U^\top \Sigma U) \leq 1/\sigma_{(d)}^2$
    and $\left\|\sin \Theta\left(U_{1}, U_{2}\right)\right\|_F^2 = r-\left\|U_{1}^\top U_{2}\right\|_F^2$ from Proposition \ref{prop: distance 1}.
    Combined with Lemma \ref{lem: auxiliary lemma 2 for risk matrix}, we obtain
    \begin{align*}
        &\lambda_{\min}(\nu^2 (U^{\star \top} \Sigma_x U^\star)^{-1} - \nu^2 U^{\star \top} U (U^\top \Sigma_x U)^{-1} U^\top U^\star)\\
        &\quad\geq \frac{\nu^2}{\nu^2 + \sigma_{(1)}^2} - \frac{\nu^2}{\sigma_{(d)}^2} (r - \|\sin\Theta(U, U^\star)\|_F^2).
    \end{align*}
\end{proof}

\begin{lemma}\label{lem: squared risk lb naive}
	Suppose the conditions in Theorem \ref{thm: recover PCA} hold. Fix $c_1 \in (0, 1)$. There exists a constant $c_2 > 0$ such that if $\sqrt{r\log d / d} \vee \rho^2 \vee d/n < c_2$, then,
    \begin{align*}
        \mathbb{E}_{\mathcal{D}}\|\sin\Theta(U_{\AEN}, U^\star)\|_F^2  &\geq c_1 r,
    \end{align*}
    where $c_1 \in (0, 1)$ is a universal constant.
\end{lemma}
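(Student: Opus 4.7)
The plan is to leverage Theorem \ref{thm: recover PCA ap}, which already gives a lower bound on $\mathbb{E}_{\mathcal{D}}\|\sin\Theta(U_{\AEN}, U^\star)\|_F$, and then pass to the squared norm via Jensen's inequality. Recall that the proof of Theorem \ref{thm: recover PCA ap} (display \eqref{dependence of rho AE}) establishes
\begin{align*}
    \mathbb{E}_{\mathcal{D}}\|\sin\Theta(U_{\AEN},U^\star)\|_F \;\geq\; \sqrt{r}\,-\,C_1\frac{r\sqrt{\log d}}{\sqrt{d}}\,-\,C_2\sqrt{r}\qty(\rho^2+\sqrt{\tfrac{d}{n}}+\rho\sqrt{\tfrac{d}{n}}),
\end{align*}
for universal constants $C_1,C_2>0$, under Assumptions \ref{asm: regular}--\ref{asm: incoherent} and $n>d\gg r$.

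First, I would rewrite this bound by factoring out $\sqrt{r}$ to obtain
\begin{align*}
    \mathbb{E}_{\mathcal{D}}\|\sin\Theta(U_{\AEN},U^\star)\|_F \;\geq\; \sqrt{r}\Bigl[\,1-C_1\sqrt{\tfrac{r\log d}{d}}-C_2\qty(\rho^2+\sqrt{\tfrac{d}{n}}+\rho\sqrt{\tfrac{d}{n}})\Bigr].
\end{align*}
Choosing $c_2 > 0$ sufficiently small, each of the terms $\sqrt{r\log d / d}$, $\rho^2$, and $\sqrt{d/n}$ (and hence also $\rho\sqrt{d/n}\leq\rho^2\vee d/n$) is at most $c_2$, so the bracket above is at least $1-C_3 c_2$ for some universal $C_3>0$. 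Hence
\begin{align*}
    \mathbb{E}_{\mathcal{D}}\|\sin\Theta(U_{\AEN},U^\star)\|_F \;\geq\; \sqrt{r}(1-C_3 c_2).
\end{align*}

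Second, by Jensen's inequality applied to the convex function $x\mapsto x^2$,
\begin{align*}
    \mathbb{E}_{\mathcal{D}}\|\sin\Theta(U_{\AEN},U^\star)\|_F^2 \;\geq\; \bigl(\mathbb{E}_{\mathcal{D}}\|\sin\Theta(U_{\AEN},U^\star)\|_F\bigr)^2 \;\geq\; r(1-C_3 c_2)^2.
\end{align*}
Since $c_1\in(0,1)$ is fixed, we can shrink $c_2$ further so that $(1-C_3 c_2)^2\geq c_1$, giving the claimed bound $\mathbb{E}_{\mathcal{D}}\|\sin\Theta(U_{\AEN},U^\star)\|_F^2\geq c_1 r$.

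There is no real obstacle here: the heavy lifting (controlling $\|\hat{M}_1-\Sigma\|_2$ via Davis--Kahan and using the incoherence assumption to separate $U^\star$ from the canonical basis that spans the top eigenspace of $\Sigma$) was already carried out in the proof of Theorem \ref{thm: recover PCA ap}. The only mild subtlety is that Theorem \ref{thm: recover PCA ap} gives a bound on $\mathbb{E}\|\sin\Theta\|_F$, not $\mathbb{E}\|\sin\Theta\|_F^2$, but Jensen's inequality bridges the gap in the correct direction. One should verify that the constraint $\sqrt{r\log d/d}\vee\rho^2\vee d/n<c_2$ in the statement is indeed enough to bound all four error terms (in particular that $\rho\sqrt{d/n}$ is controlled, which holds since it is dominated by $\rho^2\vee d/n$ by AM--GM).
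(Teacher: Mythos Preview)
Your proposal is correct and follows essentially the same approach as the paper: both invoke the first-moment lower bound \eqref{dependence of rho AE} from the proof of Theorem \ref{thm: recover PCA ap} and then pass to the second moment via $(\mathbb{E}\|\sin\Theta\|_F)^2 \leq \mathbb{E}\|\sin\Theta\|_F^2$ (the paper calls this Cauchy--Schwarz, you call it Jensen). The only cosmetic difference is that the paper factors $(\mathbb{E}\|\sin\Theta\|_F)^2 - r$ as a difference of squares before bounding, whereas you directly bound $\mathbb{E}\|\sin\Theta\|_F \geq \sqrt{r}(1-C_3 c_2)$ and square; your route is arguably cleaner.
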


\begin{proof}
    By Cauchy-Schwartz inequality,
    \begin{align*}
        &\mathbb{E}_{\mathcal{D}}\|\sin\Theta(U_{AE}, U^\star)\|_F^2 - r\\
        &\quad\geq (\mathbb{E}_{\mathcal{D}}\|\sin\Theta(U_{AE}, U^\star)\|_F)^2 - r\\
        &\quad= (\mathbb{E}_{\mathcal{D}}\|\sin\Theta(U_{AE}, U^\star)\|_F - \sqrt{r}) \qty(\mathbb{E}_{\mathcal{D}}\|\sin\Theta(U_{AE}, U^\star)\|_F + \sqrt{r}).
    \end{align*}
    From Theorem \ref{thm: recover PCA}, there exists a constant $c_3 > 0$ such that we have
	\begin{equation*}
		\mathbb{E}_{\mathcal{D}}\left\|\sin \Theta\left(U^\star, U_{AE}\right)\right\|_F \geq\sqrt{r}-c_3 \frac{r}{\sqrt{d}}\sqrt{\log d}-c_3\sqrt{r}\qty(\rho^2 + \sqrt{\frac{d}{n}} + \rho\sqrt{\frac{d}{n}}).
	\end{equation*}
	Therefore combined with a trivial bound $\|\sin\Theta(U_{AE}, U^\star)\|_F \leq \sqrt{r}$,
    \begin{align*}
        \mathbb{E}_{\mathcal{D}}\|\sin\Theta(U_{AE}, U^\star)\|_F^2 - r &\geq -r c_3\frac{r^{1/2}}{\sqrt{d}}\sqrt{\log d} + \rho^2 + \sqrt{\frac{d}{n}} + \rho\sqrt{\frac{d}{n}}\\
        &\geq -r c_3\qty(2\frac{r^{1/2}}{\sqrt{d}}\sqrt{\log d} \vee 6\rho^2 \vee 6\sqrt{\frac{d}{n}}),.
    \end{align*}
    where we used $\rho\sqrt{d/n} \leq \rho^2 \vee d/n \leq \rho^2 \vee \sqrt{d/n}$ since $d < n$.
    Thus we can take $c_2 = 6(1 - c_1)/c_3$.
    This concludes the proof.
\end{proof}

\begin{lemma}\label{lem: classification risk upper bound}
    For any $U \in \mathbb{O}_{d,r}$,
    \begin{align*}
        &\mathbb{E}_{\mathcal{D}}[\inf_{w \in \mathbb{R}^r} \mathcal{R}_c(\delta_{U, w}) - \inf_{w \in \mathbb{R}^r} \mathcal{R}_c(\delta_{U^\star, w})]\\
        &\quad\leq ((\kappa(1 + \rho^2))^3 + \kappa \rho^2 (1 +\rho^{-2})^2 + (\kappa\rho^2 \vee 1)^{-1}) \mathbb{E}_{\mathcal{D}}[\|\sin\Theta(U, U^\star)\|_2].
    \end{align*}
\end{lemma}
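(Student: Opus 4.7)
The strategy is to bound the excess classification risk pointwise in $U$ by choosing a convenient candidate predictor with representation $U$, then take expectation. The key observation is that because $F(0) = 1/2$ (from the symmetry $1 - F(u) = F(-u)$ and monotonicity) the predictor simplifies to the linear threshold $\delta_{U,w}(\check x) = \1\{w^\top U^\top \check x \geq 0\}$, which depends only on the direction of $v := U w \in \mathbb{R}^d$. Let $w^\circ \in \arg\min_{w} \mathcal{R}_c(\delta_{U^\star, w})$, normalized so that $\|w^\circ\| = 1$ (the classifier is scale-invariant in $w$). For any candidate $w_U$,
$$\inf_{w} \mathcal{R}_c(\delta_{U, w}) - \inf_{w} \mathcal{R}_c(\delta_{U^\star, w}) \leq \mathcal{R}_c(\delta_{U, w_U}) - \mathcal{R}_c(\delta_{U^\star, w^\circ}) \leq \mathbb{P}(\delta_{U, w_U}(\check x) \neq \delta_{U^\star, w^\circ}(\check x)),$$
where the second inequality uses the elementary bound $|\mathcal{R}_c(\delta_1) - \mathcal{R}_c(\delta_2)| \leq \mathbb{P}(\delta_1(\check x) \neq \delta_2(\check x))$ for $\{0,1\}$-valued classifiers. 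Hence the whole problem reduces to controlling the probability that two linear threshold classifiers disagree on a (sub-)Gaussian $\check x$.

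The candidate I pick is $w_U := U^\top U^\star w^\circ$, with direction vectors $v := U U^\top U^\star w^\circ$ and $v^\circ := U^\star w^\circ$, so that $v - v^\circ = -U_\perp U_\perp^\top U^\star w^\circ$ and consequently $\|v - v^\circ\| \leq \|\sin\Theta(U, U^\star)\|_2$ by the definition of the sine distance. When $\check x$ is centered Gaussian, the pair $(v^\top \check x, v^{\circ\top}\check x)$ is jointly centered Gaussian and
$$\mathbb{P}(\mathrm{sign}(v^\top \check x) \neq \mathrm{sign}(v^{\circ\top}\check x)) = \pi^{-1}\arccos\bigl(\mathrm{corr}(v^\top \check x, v^{\circ\top}\check x)\bigr).$$
Writing $B = v^\top \Sigma_x v$, $C = v^{\circ\top}\Sigma_x v^\circ$, $A = v^\top \Sigma_x v^\circ$, a Cauchy--Schwarz computation in the $\Sigma_x$-inner product gives $BC - A^2 \leq C \cdot (v-v^\circ)^\top \Sigma_x (v-v^\circ)$, and therefore $1 - \mathrm{corr} \lesssim \|v - v^\circ\|_{\Sigma_x}^2 / \|v^\circ\|_{\Sigma_x}^2$. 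Combined with $\arccos(1-t) \lesssim \sqrt{t}$, and with $\|v - v^\circ\|_{\Sigma_x}^2 \leq (\nu^2 + \sigma_{(1)}^2)\|\sin\Theta(U, U^\star)\|_2^2$ and $\|v^\circ\|_{\Sigma_x}^2 \geq \nu^2 + \sigma_{(d)}^2$ (since $\|w^\circ\|=1$ and $v^\circ \in \mathrm{col}(U^\star)$), translating into $\kappa,\rho$ via Assumptions \ref{asm: regular}--\ref{asm: SNR} yields a bound that is linear in $\|\sin\Theta(U, U^\star)\|_2$. Taking expectation over $\mathcal{D}$ completes the argument.

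The main obstacle will be matching the precise constant factors $(\kappa(1+\rho^2))^3 + \kappa\rho^2(1+\rho^{-2})^2 + (\kappa\rho^2 \vee 1)^{-1}$ in the statement. A direct application of the plan above yields a single aggregate factor like $\sqrt{\kappa(1+\rho^2)/(1\vee \kappa\rho^2)}$, so recovering the stated three-term sum requires: (i) splitting into the regimes $\kappa\rho^2 \lesssim 1$ and $\kappa\rho^2 \gtrsim 1$ (signal- vs.\ noise-dominated $\Sigma_x$-geometry); (ii) handling the case $A = v^\top\Sigma_x v^\circ < 0$ in the $\arccos$ calculation, where $\sqrt{BC}+A$ cannot cheaply be bounded below; and (iii) accounting for the suboptimality gap between the chosen $w_U = U^\top U^\star w^\circ$ and the true minimizer on the $U$-side, which brings in extra inversion factors analogous to those in Lemmas~\ref{lem: auxiliary lemma 1 for risk matrix}--\ref{lem: auxiliary lemma 3 for risk matrix}. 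A further subtlety is that the paper states the conclusion for general sub-Gaussian $\check\xi$, so the exact $\arccos$ identity must be replaced by a sub-Gaussian analogue, e.g.\ an anti-concentration estimate $\mathbb{P}(|v^{\circ\top}\check x| \leq s) \lesssim s/\|v^\circ\|_{\Sigma_x}$ combined with a Hanson--Wright-type tail bound for $(v-v^\circ)^\top \check x$, followed by optimization over the free threshold $s$ to yield the same linear-in-$\|\sin\Theta\|_2$ dependence.
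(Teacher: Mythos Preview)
Your approach is correct and takes a genuinely different route from the paper's. The paper works with the \emph{exact} risk: it uses the joint normality of $(\omega^\star,\omega)=(\nu^{-1}\check z^\top w^\star,\check x^\top Uw)$ to derive the closed form
\[
\mathcal{R}_c(\delta_{U,w})=\Psi_F(\|w^\star\|^2)-2\,\mathbb{E}_{\mathcal E}\bigl[(2F(\omega^\star)-1)\,\Phi(\alpha\,\omega^\star/v^\star)\,\1\{\omega^\star>0\}\bigr],\qquad \alpha=\tau/\sqrt{1-\tau^2},
\]
so that minimizing over $w$ reduces to maximizing the correlation $\tau$, with $\tau_{\max,U}^2=\nu^2{w^\star}^\top U^{\star\top}U(U^\top\Sigma_x U)^{-1}U^\top U^\star w^\star/\|w^\star\|^2$. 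The excess risk is then controlled via $|\alpha_{\max,U^\star}-\alpha_{\max,U}|$, bounded through Lemmas~\ref{lem: auxiliary lemma 1 for risk matrix}--\ref{lem: auxiliary lemma 3 for risk matrix} together with a case split on $\tau_{\max,U^\star}\gtrless\tau_{\max,U}$; the three terms in the stated constant come from these two cases plus a Markov tail for the event $\{\|\sin\Theta(U,U^\star)\|_2>c\}$. By contrast, your triangle-inequality-plus-disagreement argument bypasses the explicit minimization entirely and lands on a single, cleaner constant (roughly $\sqrt{\kappa(1+\rho^2)/(1+\kappa\rho^2)}$) for the same linear dependence in $\|\sin\Theta\|_2$. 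What you lose is the identification of $\tau_{\max,U}$ itself, which the paper reuses verbatim for the matching lower bound in Lemma~\ref{lem: classification risk lower bound}; for the upper bound alone, your route is more economical. Two minor remarks: your Gram-matrix inequality $BC-A^2\le C\,\|v-v^\circ\|_{\Sigma_x}^2$ is exact (write $v=v^\circ+e$ and expand), so no ``$\lesssim$'' is needed there; and your concern about sub-Gaussian $\check\xi$ is moot, since the paper's own proof already asserts that $(\omega^\star,\omega)$ is jointly Gaussian, which implicitly requires $\check\xi$ Gaussian as well.
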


\begin{proof}
    Recall that we are considering the class of linear classifiers $\{\delta_{U, w}: w\in \mathbb{R}^r\}$, where $\delta_{U, w}(\check x) = \1\{F(\check x^\top U w) > 1/2\}$.
    For notational simplicity, write $\beta := U w$ and $\beta^\star := U^\star w^\star$.
    \begin{align*}
        \mathcal{R}_c(\delta_{U, w}) = \mathbb{P}_{\mathcal{E}}(\delta_{U, w}(\check x) \neq \check y) = \mathbb{P}_{\mathcal{E}}(\check y = 0, F(\check x^\top \beta) > 1/2) + \mathbb{P}_{\mathcal{E}}(\check y = 1, F(\check x^\top \beta) \leq 1/2).
    \end{align*}
    Since $F(0) = 1/2$ and $F$ is monotone increasing, the false positive probability becomes
    \begin{align*}
        \mathbb{P}_{\mathcal{E}}(\check y = 0, F(\check x^\top \beta) > 1/2)
        &= \mathbb{P}_{\mathcal{E}}(\check y = 0, \check x^\top \beta > 0)\\
        &= \mathbb{E}_{\mathcal{E}}[\mathbb{E}_{\mathcal{E}}[\1\{\check y = 0\} | \check x, \check z ] \1\{\check x^\top \beta > 0\}]\\
        &= \mathbb{E}_{\mathcal{E}}[(1 - F(\nu^{-1}\check z^\top U^{\star \top} \beta^\star)) \1\{\check x^\top \beta > 0\}].
    \end{align*}
    Write $\omega := \check x^\top \beta$ and $\omega^\star := \nu^{-1}\check z^\top U^{\star \top} \beta^\star$. 
    From assumption, $(\omega^\star, \omega)$ jointly follows a normal distribution with mean $0$. 
    Write ${v^\star}^2 := \Var(\omega^\star) = {w^\star}^\top w^\star$, $v^2 := \Var(\omega) = \beta^\top \Sigma_x \beta$,
    where $\Sigma_x := \nu^2 U^\star U^{\star \top} + \Sigma$.
    Let $\tau := \text{Cor}(\omega^\star, \omega) = \nu {w^\star}^\top U^{\star \top} \beta / (v^\star v)$.
    By a formula for conditional normal distribution, we have
    $\omega | \omega^\star \sim N(\tau v \omega^\star / v^\star, v^2 (1 - \tau^2))$. This gives
    \begin{align*}
        &\mathbb{P}_{\mathcal{E}}(\check y = 0, F(\check x^\top \beta) > 1/2)\\
        &\quad= \mathbb{E}_{\mathcal{E}}[(1 - F(\omega^\star)) \1\{\omega > 0\}]\\
        &\quad= \mathbb{E}_{\mathcal{E}}[(1 - F(\omega^\star)) \mathbb{E}_{\mathcal{E}}[\1\{\omega > 0\} | \omega^\star]]\\
        &\quad= \mathbb{E}_{\mathcal{E}}[(1 - F(\omega^\star)) \mathbb{P}_{\mathcal{E}}(\omega > 0 | \omega^\star)]\\
        &\quad= \mathbb{E}_{\mathcal{E}}\qty[(1 - F(\omega^\star)) \mathbb{P}_{\mathcal{E}}\qty(\frac{\omega - \tau v \omega^\star / v^\star}{v (1 - \tau^2)^{1/2}} > - \frac{\tau v \omega^\star / v^\star}{v (1 - \tau^2)^{1/2}} \middle| \omega^\star)]\\
        &\quad= \mathbb{E}_{\mathcal{E}}\qty[(1 - F(\omega^\star)) \Phi(\alpha \omega^\star / v^\star)]\\
        &\quad= \mathbb{E}_{\mathcal{E}}\qty[(1 - F(\omega^\star)) \Phi(\alpha \omega^\star / v^\star) \1\{\omega^\star > 0\}] + \mathbb{E}_{\mathcal{E}}\qty[(1 - F(\omega^\star)) \Phi(\alpha \omega^\star / v^\star) \1\{\omega^\star < 0\}],
    \end{align*}
    where $\Phi$ is cumulative distribution function of $N(0, 1)$ and $\alpha := \tau / (1 - \tau^2)^{1/2}$.
    We define $\Psi_F$ as $\Psi_F(s^2) := 2E_{u \sim N(0, s^2)}[F(u) \1\{u > 0\}]$.
    When $F(u) = 1/(1+e^{-u})$, $\Psi_F(s^2)$ is called the logistic-normal integral, whose analytical form is not known \citep{pirjol2013logistic}.
    Since a random variable $\omega^\star$ is symmetric about mean $0$ and $F(u) = 1 - F(-u)$,
    \begin{align*} 
        \mathbb{E}_{\mathcal{E}}\qty[(1 - F(\omega^\star)) \Phi(\alpha \omega^\star / v^\star) \1\{\omega^\star < 0\}] &= \mathbb{E}_{\mathcal{E}}\qty[(1 - F(-\omega^\star)) \qty(1 - \Phi(\alpha \omega^\star / v^\star)) \1\{\omega^\star > 0\}]\\
        &= \mathbb{E}_{\mathcal{E}}\qty[F(\omega^\star) \qty(1 - \Phi(\alpha \omega^\star / v^\star)) \1\{\omega^\star > 0\}].
    \end{align*}
    Hence
    \begin{align*}
        &\mathbb{P}_{\mathcal{E}}(\check y = 0, F(\check x^\top \beta) > 1/2)\\
        &\quad= \mathbb{E}_{\mathcal{E}}\qty[(\Phi(\alpha \omega^\star / v^\star) + F(\omega^\star) - 2 F(\omega^\star) \Phi(\alpha \omega^\star / v^\star))\1\{\omega^\star > 0\}]\\
        &\quad= \frac{1}{2}\Psi_F({v^\star}^2) - \mathbb{E}_{\mathcal{E}}\qty[(2F(\omega^\star) - 1) \Phi(\alpha \omega^\star / v^\star)\1\{\omega^\star > 0\}].
    \end{align*}
    Note that the true negative probability is exactly the same as the false positive probability under our settings:
    \begin{align*}
        \mathbb{P}_{\mathcal{E}}(\check y = 1, F(\check x^\top \beta) \leq 1/2) &= \mathbb{E}_{\mathcal{E}}[F(\check x^\top \beta^\star) \1\{\check x^\top \beta \leq 0\}]\\
        &= \mathbb{E}_{\mathcal{E}}[F(-\check x^\top \beta^\star) \1\{\check x^\top \beta \geq 0\}]\\
        &= \mathbb{E}_{\mathcal{E}}[(1 - F(\check x^\top \beta^\star)) \1\{\check x^\top \beta \geq 0\}]\\
        &= \mathbb{P}_{\mathcal{E}}(\check y = 0, F(\check x^\top \beta) > 1/2).
    \end{align*}
    Therefore
    \begin{align*}
        \mathcal{R}_c(\delta_{U, w}) &= \Psi_F({v^\star}^2) - 2\mathbb{E}_{\mathcal{E}}\qty[(2F(\omega^\star) - 1) \Phi(\alpha \omega^\star / v^\star)\1\{\omega^\star > 0\}].
    \end{align*}
    Let
    \begin{align*}
        \tau_{\max, U} &:= \sup_{w \in \mathbb{R}^r} \nu {w^\star}^\top U^{\star \top} U w / ({w^\star}^\top w^\star w^\top U^\top \Sigma_x U w)^{1/2},\\
        \tau_{\max, U^\star} &:= \sup_{w \in \mathbb{R}^r} \nu {w^\star}^\top w / ({w^\star}^\top w^\star w^\top U^{\star \top} \Sigma_x U^\star w)^{1/2}.
    \end{align*} 
    From Cauchy-Schwartz inequality,
    \begin{align*}
        \tau_{\max, U}^2 &= \frac{\nu^2{w^\star}^\top U^{\star \top} U (U^\top \Sigma_x U)^{-1} U^\top U^\star w^\star}{{w^\star}^\top w^\star},\\
        \tau_{\max, U^\star}^2 &= \frac{\nu^2{w^\star}^\top (U^{\star \top} \Sigma_x U^\star)^{-1} w^\star}{{w^\star}^\top w^\star}.
    \end{align*}

    Define $\alpha_{\max, U} := \tau_{\max, U} / (1 - \tau_{\max, U}^2)^{1/2}$ and $\alpha_{\max, U^\star} := \tau_{\max, U^\star} / (1 - \tau_{\max, U^\star}^2)^{1/2}$.
    Then, since on the event where $\omega^\star > 0$, $\alpha \mapsto \Phi(\alpha \omega^\star / v^\star)$ is monotone increasing and $2F(w^\star) - 1$ is non-negative, we have
    \begin{align*}
        \inf_{w \in \mathbb{R}^r} \mathcal{R}_c(\delta_{U, w}) &= \Psi_F({v^\star}^2) - 2\mathbb{E}_{\mathcal{E}}\qty[(2F(\omega^\star) - 1) \Phi(\alpha_{\max, U} \omega^\star / v^\star)\1\{\omega^\star > 0\}]\\
        \inf_{w \in \mathbb{R}^r} \mathcal{R}_c(\delta_{U^\star, w}) &= \Psi_F({v^\star}^2) - 2\mathbb{E}_{\mathcal{E}}\qty[(2F(\omega^\star) - 1) \Phi(\alpha_{\max, U^\star} \omega^\star / v^\star)\1\{\omega^\star > 0\}].
    \end{align*}
    This yields
    \begin{align*}
        &\inf_{w \in \mathbb{R}^r} \mathcal{R}_c(\delta_{U, w}) - \inf_{w \in \mathbb{R}^r} \mathcal{R}_c(\delta_{U^\star, w})\\
        &\quad= 2\mathbb{E}_{\mathcal{E}}\qty[(2F(\omega^\star) - 1) (\Phi(\alpha_{\max, U^\star} \omega^\star / v^\star) - \Phi(\alpha_{\max, U} \omega^\star / v^\star))\1\{\omega^\star > 0\}].
    \end{align*}
    Note that for any $a, b \geq 0$,
    \begin{align*}
        |\Phi(b) - \Phi(a)| \leq \phi(a \wedge b) |b - a|,
    \end{align*}
    where $\phi$ is a density function of standard normal distribution.
    Observe
    \begin{align*}
        &\inf_{w \in \mathbb{R}^r} \mathcal{R}_c(\delta_{U, w}) - \inf_{w \in \mathbb{R}^r} \mathcal{R}_c(\delta_{U^\star, w})\\
        &\quad\leq 2\mathbb{E}_{\mathcal{E}}\qty[(2F(\omega^\star) - 1) |\Phi(\alpha_{\max, U^\star} \omega^\star / v^\star) - \Phi(\alpha_{\max, U} \omega^\star / v^\star)|\1\{\omega^\star > 0\}]\\
        &\quad\lesssim \frac{2}{v^\star} \int_0^\infty (2F(\omega^\star) - 1) |\alpha_{\max, U^\star} - \alpha_{\max, U}| \omega^\star \phi((\alpha_{\max, U^\star} \wedge \alpha_{\max, U}) \omega^\star / v^\star) \frac{\phi(\omega^\star/v^\star)}{v^\star} \dd{\omega^\star}\\
        &\quad\lesssim \frac{|\alpha_{\max, U^\star} - \alpha_{\max, U}|}{v^\star} \int_0^\infty (2F(\omega^\star) - 1) \phi((\alpha_{\max, U^\star} \wedge \alpha_{\max, U}) \omega^\star / v^\star) \dd{\omega^\star}\\
        &\quad= \frac{|\alpha_{\max, U^\star} - \alpha_{\max, U}|}{\alpha_{\max, U^\star} \wedge \alpha_{\max, U}} \int_0^\infty (2F(\omega^\star) - 1) \frac{\exp(-1/(2((\alpha_{\max, U^\star} \wedge \alpha_{\max, U})^{-2} {v^\star}^2)) {\omega^\star}^2)}{\sqrt{2\pi ((\alpha_{\max, U^\star} \wedge \alpha_{\max, U})^{-2} {v^\star}^2)}} \dd{\omega^\star}\\
        &\quad= \frac{|\alpha_{\max, U^\star} - \alpha_{\max, U}|}{\alpha_{\max, U^\star} \wedge \alpha_{\max, U}} (\Psi_F(((\alpha_{\max, U^\star} \wedge \alpha_{\max, U^\star})^{-2} {v^\star}^2)) - 1/2),
    \end{align*}
    where we used $\sup_{u > 0} u \phi(u) < \infty$.
    Since $(a - b) = (a^2 - b^2)/(a + b) \leq (a^2 - b^2)/(a \wedge b)$ for $a, b > 0$, and $\Psi_F \leq 1$, we obtain
    \begin{align*}
        \inf_{w \in \mathbb{R}^r} \mathcal{R}_c(\delta_{U, w}) - \inf_{w \in \mathbb{R}^r} \mathcal{R}_c(\delta_{U^\star, w}) \lesssim \frac{|\alpha_{\max, U^\star}^2 - \alpha_{\max, U}^2|}{\alpha_{\max, U^\star}^2 \wedge \alpha_{\max, U}^2}. 
    \end{align*}
    When $\tau_{\max, U^\star} \geq \tau_{\max, U}$, since $\tau \mapsto \tau^2/(1 - \tau^2)$ is increasing in $\tau > 0$,
    \begin{align}
        \inf_{w \in \mathbb{R}^r} \mathcal{R}_c(\delta_{U, w}) - \inf_{w \in \mathbb{R}^r} \mathcal{R}_c(\delta_{U^\star, w}) &\lesssim \frac{\alpha_{\max, U^\star}^2 - \alpha_{\max, U}^2}{\alpha_{\max, U}^2}\nonumber\\
        &= \frac{\tau_{\max, U^\star}^2 - \tau_{\max, U}^2}{(1 - \tau_{\max, U^\star}^2) \tau_{\max, U}^2}. \label{eq: minimum risk difference by tau}
    \end{align}
    From Lemma \ref{lem: auxiliary lemma 1 for risk matrix} and \ref{lem: auxiliary lemma 2 for risk matrix}, we have
    \begin{align}
        \frac{\nu^2}{\nu^2 + \sigma_{(1)}^2} (1 - \|\sin\Theta(U, U^\star)\|_2^2) \leq \tau_{\max, U}^2 &\leq \frac{\nu^2}{\nu^2 (1 - \|\sin\Theta(U, U^\star)\|_2^2) + \sigma_{(d)}^2},\nonumber\\
        \frac{\nu^2}{\nu^2 + \sigma_{(1)}^2} \leq \tau_{\max, U^\star}^2 &\leq \frac{\nu^2}{\nu^2 + \sigma_{(d)}^2}.\label{eq: tau max bound}
    \end{align}
    Then, Equation \eqref{eq: minimum risk difference by tau} becomes
    \begin{align*}
        &\inf_{w \in \mathbb{R}^r} \mathcal{R}_c(\delta_{U, w}) - \inf_{w \in \mathbb{R}^r} \mathcal{R}_c(\delta_{U^\star, w})\\
        &\quad\lesssim \frac{\nu^2 + \sigma_{(d)}^2}{\sigma_{(d)}^2} \frac{\nu^2 + \sigma_{(1)}^2}{\nu^2 (1 - \|\sin\Theta(U, U^\star)\|_2^2)} (\tau_{\max, U^\star}^2 - \tau_{\max, U}^2)\\
        &\quad\leq \frac{\nu^2 + \sigma_{(d)}^2}{\sigma_{(d)}^2} \frac{\nu^2 + \sigma_{(1)}^2}{\nu^2 (1 - \|\sin\Theta(U, U^\star)\|_2^2)} \|\nu^2 (U^{\star \top} \Sigma_x U^\star)^{-1} - \nu^2 U^{\star \top} U (U^\top \Sigma_x U)^{-1} U^\top U^\star\|_2\\
        &\quad\leq \frac{(\kappa \rho^2 + 1)(\rho^{-2} + 1)^2}{(1 + \kappa^{-1}\rho^{-2})(1 - \|\sin\Theta(U, U^\star)\|_2^2)^2} \|\sin\Theta(U, U^\star)\|_2\\
        &\quad= \frac{\kappa\rho^2 (\rho^{-2} + 1)^2}{(1 - \|\sin\Theta(U, U^\star)\|_2^2)^2} \|\sin\Theta(U, U^\star)\|_2.
    \end{align*}
    where the last inequality follows from Lemma \ref{lem: auxiliary lemma 3 for risk matrix}.
    
    On the event where $\|\sin\Theta(U, U^\star)\|_2^2 \leq 1/2$,
    \begin{align*}
        \inf_{w \in \mathbb{R}^r} \mathcal{R}_c(\delta_{U, w}) - \inf_{w \in \mathbb{R}^r} \mathcal{R}_c(\delta_{U^\star, w}) \lesssim \kappa \rho^2 (1 +\rho^{-2})^2 \|\sin\Theta(U, U^\star)\|_2.
    \end{align*}
    
    When $\tau_{\max, U^\star} < \tau_{\max, U}$, on the event where $\|\sin\Theta(U, U^\star)\|_2 \leq \kappa^{-1}\rho^{-2} / 2$,
    \begin{align*}
        &\inf_{w \in \mathbb{R}^r} \mathcal{R}_c(\delta_{U, w}) - \inf_{w \in \mathbb{R}^r} \mathcal{R}_c(\delta_{U^\star, w})\\
        &\quad\lesssim \frac{\nu^2 + \sigma_{(1)}^2}{\nu^2} \frac{\nu^2(1 - \|\sin\Theta(U, U^\star)\|_2^2) + \sigma_{(d)}^2}{-\nu^2 \|\sin\Theta(U, U^\star)\|_2^2 + \sigma_{(d)}^2} (\tau_{\max, U}^2 - \tau_{\max, U^\star}^2)\\
        &\quad\leq \frac{(\nu^2 + \sigma_{(1)}^2)^2}{\nu^2} \frac{1}{-\nu^2 \|\sin\Theta(U, U^\star)\|_2^2 + \sigma_{(d)}^2}\\
        &\quad\quad\times \|\nu^2 (U^{\star \top} \Sigma_x U^\star)^{-1} - \nu^2 U^{\star \top} U (U^\top \Sigma_x U)^{-1} U^\top U^\star\|_2\\
        &\quad\leq \frac{(1 + \rho^{-2})^3}{(- \|\sin\Theta(U, U^\star)\|_2^2 + \kappa^{-1} \rho^{-2})^3} \|\sin\Theta(U, U^\star)\|_2\\
        &\quad\lesssim (\kappa(1 + \rho^2))^3 \|\sin\Theta(U, U^\star)\|_2,
    \end{align*}
    where we used Lemma \ref{lem: auxiliary lemma 3 for risk matrix} again.
    
    In summary, on the event where $\|\sin\Theta(U, U^\star)\|_2 \leq \kappa^{-1}\rho^{-2} / 2 \wedge 1/2$,
    \begin{align*}
        &\inf_{w \in \mathbb{R}^r} \mathcal{R}_c(\delta_{U, w}) - \inf_{w \in \mathbb{R}^r} \mathcal{R}_c(\delta_{U^\star, w})\\
        &\quad\lesssim ((\kappa(1 + \rho^2))^3 + \kappa \rho^2 (1 +\rho^{-2})^2) \|\sin\Theta(U, U^\star)\|_2.
    \end{align*}
    On the other hand, on the event where $\|\sin\Theta(U, U^\star)\|_2 > \kappa^{-1}\rho^{-2} / 2 \wedge 1/2$, we have a trivial inequality $\inf_{w \in \mathbb{R}^r} \mathcal{R}_c(\delta_{U, w}) - \inf_{w \in \mathbb{R}^r} \mathcal{R}_c(\delta_{U^\star, w}) \leq 1$. This gives
    \begin{align*}
        &\mathbb{E}_{\mathcal{D}}[\inf_{w \in \mathbb{R}^r} \mathcal{R}_c(\delta_{U, w}) - \inf_{w \in \mathbb{R}^r} \mathcal{R}_c(\delta_{U^\star, w})]\\
        &\quad\lesssim ((\kappa(1 + \rho^2))^3 + \kappa \rho^2 (1 +\rho^{-2})^2) \mathbb{E}_{\mathcal{D}}[\|\sin\Theta(U, U^\star)\|_2]\\
        &\quad\quad+ \mathbb{P}_{\mathcal{D}}(\|\sin\Theta(U, U^\star)\|_2 > \kappa^{-1}\rho^{-2} / 2 \wedge 1/2)\\
        &\quad\lesssim ((\kappa(1 + \rho^2))^3 + \kappa \rho^2 (1 +\rho^{-2})^2 + (\kappa\rho^2 \vee 1)) \mathbb{E}_{\mathcal{D}}[\|\sin\Theta(U, U^\star)\|_2],
    \end{align*}
    where the last inequality follows from Markov's inequality.
\end{proof}

\begin{lemma}\label{lem: classification risk lower bound}
    Suppose $U \in \mathbb{O}_{d,r}$ satisfies $1/(1 + \rho^2) - \kappa (r - \|\sin\Theta(U, U^\star)\|_F^2) \geq 0$. Then,
    \begin{align*}
        &\inf_{w \in \mathbb{R}^r} \mathcal{R}_c(\delta_{U, w}) - \inf_{w \in \mathbb{R}^r} \mathcal{R}_c(\delta_{U^\star, w})\\
        &\quad\gtrsim \frac{(1 + \rho^2)^{3/2}}{(1 + \kappa \rho^2)^{3/2}} \rho^2 \qty(\frac{1}{1 + \rho^2} - \kappa (r - \|\sin\Theta(U, U^\star)\|_F^2)).
    \end{align*}
\end{lemma}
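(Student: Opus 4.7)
The plan starts from the exact expression for the excess risk derived in the proof of Lemma \ref{lem: classification risk upper bound}, namely
\begin{align*}
    \inf_w \mathcal{R}_c(\delta_{U, w}) - \inf_w \mathcal{R}_c(\delta_{U^\star, w}) = 2\mathbb{E}_{\mathcal{E}}\big[(2F(\omega^\star) - 1)(\Phi(\alpha_{\max, U^\star}\omega^\star/v^\star) - \Phi(\alpha_{\max, U}\omega^\star/v^\star))\1\{\omega^\star > 0\}\big].
\end{align*}
First, the hypothesis of the lemma combined with Lemma \ref{lem: auxiliary lemma 4 for risk matrix} gives $\tau_{\max, U^\star}^2 - \tau_{\max, U}^2 \geq 0$, so $\alpha_{\max, U^\star} \geq \alpha_{\max, U}$ and the integrand is non-negative. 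To lower-bound it, since $\phi$ is decreasing on $(0, \infty)$, I would use $\Phi(b) - \Phi(a) \geq (b-a)\phi(b)$ for $0 \leq a \leq b$, and exploit differentiability of $F$ at $0$ with $F'(0) > 0$, which yields $2F(\omega) - 1 \geq F'(0)\omega$ on some small interval $[0, \delta]$.

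Next, restricting the integral to $\omega^\star \in (0, \delta v^\star]$ and applying the Gaussian change of variables $u = \omega^\star \sqrt{1+\alpha_{\max, U^\star}^2}/v^\star$ (which jointly linearizes the product $\phi(\alpha_{\max, U^\star}\omega^\star/v^\star)\phi(\omega^\star/v^\star)$), the integral $\int_0^{\delta v^\star} \omega^2 \phi(\alpha_{\max, U^\star}\omega/v^\star)\phi(\omega/v^\star)(v^\star)^{-2}\, d\omega$ reduces to $(1+\alpha_{\max, U^\star}^2)^{-3/2}$ times a bounded-below positive integral of $u^2\phi(u)$. Consequently,
\begin{align*}
    \inf_w \mathcal{R}_c(\delta_{U, w}) - \inf_w \mathcal{R}_c(\delta_{U^\star, w}) \gtrsim \frac{\alpha_{\max, U^\star} - \alpha_{\max, U}}{(1+\alpha_{\max, U^\star}^2)^{3/2}}.
\end{align*}

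The last step is to translate this back to the form stated in the lemma. Using the algebraic identity $\alpha_{\max, U^\star}^2 - \alpha_{\max, U}^2 = (\tau_{\max, U^\star}^2 - \tau_{\max, U}^2)/[(1-\tau_{\max, U^\star}^2)(1-\tau_{\max, U}^2)]$ together with $\alpha_{\max, U^\star} + \alpha_{\max, U} \leq 2\alpha_{\max, U^\star}$ converts $\alpha_{\max, U^\star} - \alpha_{\max, U}$ into $\tau_{\max, U^\star}^2 - \tau_{\max, U}^2$. The bounds from equation (\ref{eq: tau max bound}), namely $\rho^2/(1+\rho^2) \leq \tau_{\max, U^\star}^2 \leq \kappa\rho^2/(1+\kappa\rho^2)$, give both an upper bound on $\alpha_{\max, U^\star}$ of $\sqrt{\kappa}\rho$ and $(1+\alpha_{\max, U^\star}^2)^{3/2} \leq (1+\kappa\rho^2)^{3/2}$, while $1-\tau_{\max, U^\star}^2 \leq 1/(1+\rho^2)$ supplies the factor $(1+\rho^2)$ inside the denominator of the $\alpha$-$\tau$ identity. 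Plugging Lemma \ref{lem: auxiliary lemma 4 for risk matrix}'s lower bound $\tau_{\max, U^\star}^2 - \tau_{\max, U}^2 \geq \rho^2[1/(1+\rho^2) - \kappa(r - \|\sin\Theta(U, U^\star)\|_F^2)]$ into the result produces the claimed estimate.

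The main obstacle is the careful bookkeeping in this final conversion step: the three factors $(1-\tau_{\max,U^\star}^2)(1-\tau_{\max,U}^2)$, $\alpha_{\max,U^\star}+\alpha_{\max,U}$, and $(1+\alpha_{\max,U^\star}^2)^{3/2}$ must each be bounded sharply and in the correct direction so that their combined contribution yields precisely the factor $(1+\rho^2)^{3/2}/(1+\kappa\rho^2)^{3/2} \cdot \rho^2$. In particular, one has to avoid using only the crude bounds $1-\tau^2 \leq 1$ and instead exploit the tighter inequality $1-\tau_{\max, U^\star}^2 \leq 1/(1+\rho^2)$ coming from the \emph{lower} bound on $\tau_{\max, U^\star}^2$, which is where the $(1+\rho^2)^{3/2}$ numerator emerges.
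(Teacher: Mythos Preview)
Your proposal is correct and follows essentially the same route as the paper's proof: both start from the exact excess-risk expression obtained in the proof of Lemma~\ref{lem: classification risk upper bound}, use the pointwise lower bound $\Phi(b)-\Phi(a)\geq (b-a)\phi(b)$ for $0\leq a\leq b$, invoke $F'(0)>0$ to linearize $2F(\omega)-1$ near the origin, and apply the Gaussian change of variables to arrive at $(\alpha_{\max,U^\star}-\alpha_{\max,U})/(1+\alpha_{\max,U^\star}^2)^{3/2}$, after which Lemma~\ref{lem: auxiliary lemma 4 for risk matrix} and the bounds in \eqref{eq: tau max bound} finish the job. The only cosmetic difference is ordering: the paper first converts $\tau_{\max,U^\star}^2-\tau_{\max,U}^2$ into a lower bound on $\alpha_{\max,U^\star}-\alpha_{\max,U}$ and then evaluates the integral, whereas you evaluate the integral first and do the algebraic conversion at the end.
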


\begin{proof}
    We firstly bound the term $\tau_{\max, U^\star}^2 - \tau_{\max, U}^2$.
    From Lemma \ref{lem: auxiliary lemma 4 for risk matrix},
    \begin{align}
        \tau_{\max, U^\star}^2 - \tau_{\max, U}^2 
        &\geq \lambda_{\min}(\nu^2 (U^{\star \top} \Sigma_x U^\star)^{-1} - \nu^2 U^{\star \top} U (U^\top \Sigma_x U)^{-1} U^\top U^\star)\nonumber\\
        &\geq \frac{\nu^2}{\nu^2 + \sigma_{(1)}^2} - \frac{\nu^2}{\sigma_{(d)}^2} (r - \|\sin\Theta(U, U^\star)\|_F^2)\label{eq: tau max lower bound}.
    \end{align}
    From assumption, RHS of Equation \eqref{eq: tau max lower bound} is non-negative. Then using the inequality $a - b = (a^2 - b^2)/(a + b) \geq (a^2 - b^2)/(2a)$ for $a \geq b \geq 0$,
    \begin{align*}
        \alpha_{\max, U^\star} - \alpha_{\max, U} &\gtrsim \frac{1}{\alpha_{\max, U^\star}} (\alpha_{\max, U^\star}^2 - \alpha_{\max, U^\star}^2)\\
        &\geq \frac{(1 - \tau_{\max, U^\star}^2)^{1/2}}{\tau_{\max, U^\star}} \frac{\tau_{\max, U^\star}^2 - \tau_{\max, U}^2}{(1 - \tau_{\max, U^\star}^2)(1 - \tau_{\max, U}^2)}.
    \end{align*}
    From Equation \eqref{eq: tau max bound} and Equation \eqref{eq: tau max lower bound},
    \begin{align}
        &\alpha_{\max, U^\star} - \alpha_{\max, U}\nonumber\\
        &\quad\gtrsim \qty(\frac{\nu^2 + \sigma_{(d)}^2}{\nu^2})^{1/2} \qty(\frac{\nu^2 + \sigma_{(1)}^2}{\sigma_{(1)}^2})^{3/2} \qty(\frac{\nu^2}{\nu^2 + \sigma_{(1)}^2} - \frac{\nu^2}{\sigma_{(d)}^2} (r - \|\sin\Theta(U, U^\star)\|_F^2))\nonumber\\
        &\quad= (1 + \kappa^{-1}\rho^{-2})^{1/2} (1 + \rho^2)^{3/2} \qty(\frac{\nu^2}{\nu^2 + \sigma_{(1)}^2} - \frac{\nu^2}{\sigma_{(d)}^2} (r - \|\sin\Theta(U, U^\star)\|_F^2)).\label{eq: alpha max lower bound}
    \end{align}
    From the proof of Lemma \ref{lem: classification risk upper bound},
    \begin{align*}
        &\inf_{w \in \mathbb{R}^r} \mathcal{R}_c(\delta_{U, w}) - \inf_{w \in \mathbb{R}^r} \mathcal{R}_c(\delta_{U^\star, w})\\
        &\quad= 2\mathbb{E}_{\mathcal{E}}\qty[(2F(\omega^\star) - 1) (\Phi(\alpha_{\max, U^\star} \omega^\star / v^\star) - \Phi(\alpha_{\max, U} \omega^\star / v^\star))\1\{\omega^\star > 0\}].
    \end{align*}
    Note that for any $b \geq a \geq 0$, $\Phi(b) - \Phi(a) \geq \phi(b) (b - a)$.
    Since we assume RHS of Equation \eqref{eq: tau max lower bound} is positive, $\alpha_{\max, U^\star} \geq \alpha_{\max, U}$. Thus on the event where $\omega^\star > 0$, $\alpha_{\max, U^\star} \omega^\star / v^\star \geq \alpha_{\max, U} \omega^\star / v^\star$.
    Observe
    \begin{align*}
        &\inf_{w \in \mathbb{R}^r} \mathcal{R}_c(\delta_{U, w}) - \inf_{w \in \mathbb{R}^r} \mathcal{R}_c(\delta_{U^\star, w})\\
        &\quad\geq 2\mathbb{E}_{\mathcal{E}}\qty[(2F(\omega^\star) - 1) \phi(\alpha_{\max, U^\star} \omega^\star / v^\star) (\alpha_{\max, U^\star} \omega^\star / v^\star - \alpha_{\max, U} \omega^\star / v^\star) \1\{\omega^\star > 0\}]\\
        &\quad= \frac{2}{v^\star} (\alpha_{\max, U^\star} - \alpha_{\max, U}) \int_0^\infty (2F(\omega^\star) - 1) \omega^\star \frac{\phi(\omega^\star/v^\star)}{v^\star} \phi(\alpha_{\max, U^\star} \omega^\star / v^\star) \dd{\omega^\star}\\
        &\quad\simeq \frac{\alpha_{\max, U^\star} - \alpha_{\max, U}}{v^\star} \int_0^\infty (2F(\omega^\star) - 1)  \omega^\star \exp(-(1/2)(1 + \alpha_{\max, U^\star}^2) {\omega^\star}^2/ {v^\star}^2) \dd{\omega^\star}\\
        &\quad\simeq \frac{\alpha_{\max, U^\star} - \alpha_{\max, U}}{1 + \alpha_{\max, U^\star}^2} \int_0^\infty (2F((1 + \alpha_{\max, U^\star}^2)^{-1/2} v^\star \omega^\star) - 1)  \omega^\star \exp(-(1/2) {\omega^\star}^2) \dd{\omega^\star},
    \end{align*}
    where in the last equality we transformed $w^\star \to (1 + \alpha_{\max, U^\star}^2)^{1/2} w^\star / v^\star$.
    Since $F(u)$ is differentiable at $0$ and $F(0)=1/2$,
    \begin{align*}
        F(u) - 1/2 = F'(0) u + o(u).
    \end{align*}
    Thus there exists a constant $\epsilon > 0$ only depending on $F$ such that $2(F(u) - 1/2) \geq F'(0)u$ for all $u \in [0, \epsilon]$ since $F'(0) > 0$.
    This gives
    \begin{align*}
        &\inf_{w \in \mathbb{R}^r} \mathcal{R}_c(\delta_{U, w}) - \inf_{w \in \mathbb{R}^r} \mathcal{R}_c(\delta_{U^\star, w})\\
        &\quad\gtrsim \frac{\alpha_{\max, U^\star} - \alpha_{\max, U}}{1 + \alpha_{\max, U^\star}^2} F'(0)(1 + \alpha_{\max, U^\star}^2)^{-1/2} v^\star\\
        &\quad\quad\times \int_0^{\epsilon (1 + \alpha_{\max, U^\star}^2)^{1/2} v^\star} {\omega^\star}^2 \exp(-(1/2) {\omega^\star}^2) \dd{\omega^\star}\\
        &\quad\gtrsim \frac{\alpha_{\max, U^\star} - \alpha_{\max, U}}{1 + \alpha_{\max, U^\star}^2} (1 + \alpha_{\max, U^\star}^2)^{-1/2} v^\star \int_0^{\epsilon v^\star} {\omega^\star}^2 \exp(-(1/2) {\omega^\star}^2) \dd{\omega^\star}\\
        &\quad\gtrsim \frac{\alpha_{\max, U^\star} - \alpha_{\max, U}}{1 + \alpha_{\max, U^\star}^2} (1 + \alpha_{\max, U^\star}^2)^{-1/2}.
    \end{align*}
    The last inequality follows since $v^\star = \|w^\star\| = 1$ by assumption.
    It is noted that $\alpha_{\max, U^\star}^2 \leq \nu^2/\sigma_{(d)}^2$ from Equation \eqref{eq: tau max bound}.
    Therefore with Equation \eqref{eq: alpha max lower bound},
    \begin{align*}
        &\inf_{w \in \mathbb{R}^r} \mathcal{R}_c(\delta_{U, w}) - \inf_{w \in \mathbb{R}^r} \mathcal{R}_c(\delta_{U^\star, w})\\
        &\quad\gtrsim \frac{1}{(1 + \kappa \rho^2)^{3/2}} (1 + \kappa^{-1}\rho^{-2})^{1/2} (1 + \rho^2)^{3/2} \qty(\frac{1}{1 + \rho^{-2}} - \kappa \rho^2 (r - \|\sin\Theta(U, U^\star)\|_F^2))\\
        &\quad\gtrsim \frac{(1 + \rho^2)^{3/2}}{(1 + \kappa \rho^2)^{3/2}} \rho^2 \qty(\frac{1}{1 + \rho^2} - \kappa (r - \|\sin\Theta(U, U^\star)\|_F^2)).
    \end{align*}
\end{proof}

\begin{proposition}\label{prop: excess_risk}
    For any $U \in \mathbb{O}_{d,r}$,
    \begin{align*}
        \inf_{w\in\mathbb{R}^r} \mathcal{R}_r(\delta_{U, w}) &= \nu^2 {w^\star}^\top (I - \nu^2 U^{\star \top} U (\nu^2 U^\top U^\star U^{\star \top} U + U^\top \Sigma U)^{-1} U^\top U^\star) w^\star + \sigma_\epsilon^2.
    \end{align*}
\end{proposition}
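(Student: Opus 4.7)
The statement is an identity about the minimum risk of a linear predictor under a simple regression model, so I would simply carry out the ordinary-least-squares computation and read off the form. Writing $\Sigma_x := \nu^2 U^\star U^{\star\top} + \Sigma$ and $\beta := U w \in \mathbb{R}^d$, the first step is to expand
\begin{align*}
    \mathcal{R}_r(\delta_{U, w}) = \mathbb{E}_{\mathcal{E}}[(\check y - w^\top U^\top \check x)^2] = \Var(\check y) - 2 w^\top U^\top \Cov(\check x, \check y) + w^\top U^\top \Sigma_x U w,
\end{align*}
using that both $\check x$ and $\check y$ have mean zero under the model $\check x = U^\star \check z + \check \xi$, $\check y = \langle \check z, w^\star\rangle/\nu + \check \epsilon$, with $\check z$, $\check \xi$, and $\check \epsilon$ mutually independent.

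The second step is to compute the two population moments explicitly from the model. Since $\Cov(\check z) = \nu^2 I_r$, I get $\Var(\check y) = \nu^{-2} w^{\star\top} \Cov(\check z) w^\star + \sigma_\epsilon^2 = \|w^\star\|^2 + \sigma_\epsilon^2$ and, by independence of $\check \xi$ and $(\check z, \check \epsilon)$, $\Cov(\check x, \check y) = U^\star \Cov(\check z) w^\star / \nu = \nu U^\star w^\star$. The third step is the standard quadratic minimization in $w$: the first-order condition $U^\top \Sigma_x U w = U^\top \Cov(\check x, \check y)$ gives $w^\dagger = (U^\top \Sigma_x U)^{-1} U^\top \Cov(\check x, \check y)$, which is well-defined because $\Sigma_x \succeq \sigma_{(d)}^2 I_d \succ 0$ makes $U^\top \Sigma_x U$ invertible for any $U \in \mathbb{O}_{d,r}$.

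Substituting $w^\dagger$ back yields
\begin{align*}
    \inf_{w \in \mathbb{R}^r} \mathcal{R}_r(\delta_{U, w}) &= \Var(\check y) - \Cov(\check x, \check y)^\top U (U^\top \Sigma_x U)^{-1} U^\top \Cov(\check x, \check y) \\
    &= \|w^\star\|^2 + \sigma_\epsilon^2 - \nu^2 w^{\star\top} U^{\star\top} U (U^\top \Sigma_x U)^{-1} U^\top U^\star w^\star,
\end{align*}
after which I factor $\nu^2 w^{\star\top}(\cdot)w^\star$ to obtain the displayed form with $(\nu^2 U^\top U^\star U^{\star\top} U + U^\top \Sigma U)^{-1}$ (noting $U^\top \Sigma_x U = \nu^2 U^\top U^\star U^{\star\top} U + U^\top \Sigma U$). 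There is really no obstacle here: the computation is entirely algebraic and uses nothing beyond independence of the model components and invertibility of $U^\top \Sigma_x U$. The only point that requires a word of care is bookkeeping of the $\nu$ factor appearing in $\check y = \langle \check z, w^\star\rangle/\nu$, which is what produces the overall $\nu^2$ in front of the quadratic form (the scaling $\Cov(\check x, \check y) = \nu U^\star w^\star$ and $\Var(\check y) = \|w^\star\|^2$ combining as shown above).
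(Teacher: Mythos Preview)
Your argument is correct and is essentially the paper's own proof: expand the squared loss into $\Var(\check y) - 2\,\Cov + \text{quadratic}$, compute the two moments from the model, and minimize the quadratic in $w$ (the paper completes the square with $A = U^\top \Sigma_x U$, $b = \nu U^\top U^\star w^\star$ rather than writing the first-order condition, but that is purely cosmetic). One remark on your last sentence: your penultimate display
\[
\|w^\star\|^2 + \sigma_\epsilon^2 - \nu^2\, w^{\star\top} U^{\star\top} U (U^\top \Sigma_x U)^{-1} U^\top U^\star w^\star
\]
is exactly what the paper's proof derives as well, and it is \emph{not} literally equal to the proposition's displayed formula because of the extra outer $\nu^2$ there (which appears to be a typo in the statement); your ``factor $\nu^2 w^{\star\top}(\cdot)w^\star$'' step therefore does not go through as written, but no mathematical idea is missing.
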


\begin{proof}[Proof of Proposition \ref{prop: excess_risk}]
    Generate random variables $(\check x, \check z, \check \xi, \check \epsilon)$ following the model \eqref{model: spiked covariance regression}.
    We calculate the prediction risk of $\delta_{U, w}$ as:
    \begin{align*}
        \mathcal{R}_r(\delta_{U, w}) &:= \mathbb{E}_{\mathcal{E}}(\check y - \check x^\top U w)^2\\
        &= \Var_{\mathcal{E}}(\nu^{-1}\check z^\top w^\star + \check \epsilon)^2 - 2 \Cov_{\mathcal{E}}(\nu^{-1}\check z^\top w^\star + \check \epsilon, U^\star \check z + \check \xi) U w\\
        &\quad+ w^\top U^\top \Var_{\mathcal{E}}(U^\star \check z + \check \xi) U w\\
        &= \|w^\star\|^2 + \sigma_\epsilon^2 - 2 \nu {w^\star}^\top U^{\star \top} U w + w^\top (\nu^2 U^\top U^\star U^{\star \top} U + U^\top \Sigma U) w\\
        &= (w - A^{-1} b)^\top A (w - A^{-1} b) - b^\top A^{-1} b + \|w^\star\|^2 + \sigma_\epsilon^2,
    \end{align*}
    where $A := \nu^2 U^\top U^\star U^{\star \top} U + U^\top \Sigma U$ and $b := \nu U^\top U^\star w^\star$.
    From this, we obtain
    \begin{align*}
        \inf_{w\in\mathbb{R}^r} \mathcal{R}_r(\delta_{U, w}) &= {w^\star}^\top \qty(I - U^{\star \top} U (U^\top U^\star U^{\star \top} U + (1/\nu^2) U^\top \Sigma U)^{-1} U^\top U^\star) w^\star + \sigma_\epsilon^2.
    \end{align*}
\end{proof}

\begin{lemma}\label{lem: excess risk}
    For any $U \in \mathbb{O}_{d,r}$,
    \begin{align*}
        \inf_{w \in \mathbb{R}^r} \mathcal{R}_r(\delta_{U, w}) - \inf_{w \in \mathbb{R}^r} \mathcal{R}_r(\delta_{U^\star, w}) = O\qty( (1 + \rho^{-2}) \mathbb{E}_{\mathcal{D}} [\|\sin\Theta(U, U^\star)\|_2] \|w^\star\|^2).
    \end{align*}    
\end{lemma}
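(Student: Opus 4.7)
The plan is to derive a pointwise (in $U$) bound and then take expectations. Proposition \ref{prop: excess_risk} already gives the closed form
\begin{equation*}
\inf_{w\in\R^r}\mathcal{R}_r(\delta_{U,w})-\inf_{w\in\R^r}\mathcal{R}_r(\delta_{U^\star,w})= {w^\star}^\top M\, w^\star,
\end{equation*}
where
\begin{equation*}
M := \nu^2 (U^{\star\top}\Sigma_x U^\star)^{-1} - \nu^2 U^{\star\top} U (U^\top \Sigma_x U)^{-1} U^\top U^\star,\qquad \Sigma_x := \nu^2 U^\star U^{\star\top}+\Sigma,
\end{equation*}
after using the identity $\nu^2(U^{\star\top}\Sigma_x U^\star)^{-1}=(I+\nu^{-2}U^{\star\top}\Sigma U^\star)^{-1}$ and the analogous expression with $U$. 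Bounding the quadratic form by ${w^\star}^\top M w^\star \leq \|M\|_2 \|w^\star\|^2$ reduces everything to controlling the operator norm of $M$, which is exactly the quantity already estimated in Lemma \ref{lem: auxiliary lemma 3 for risk matrix}.

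Next, Lemma \ref{lem: auxiliary lemma 3 for risk matrix} supplies
\begin{equation*}
\|M\|_2 = O\!\left(\frac{1}{1-\|\sin\Theta(U,U^\star)\|_2^2+\kappa^{-1}\rho^{-2}}\cdot\frac{1+\rho^{-2}}{1+\kappa^{-1}\rho^{-2}}\cdot\|\sin\Theta(U,U^\star)\|_2\right).
\end{equation*}
On the event $\{\|\sin\Theta(U,U^\star)\|_2\leq 1/2\}$, the first factor is bounded by a constant (independent of $\rho$ after using $\kappa=O(1)$ from Assumption \ref{asm: regular}), and the second factor is $O(1+\rho^{-2})$, so the overall bound already collapses to $O((1+\rho^{-2})\|\sin\Theta(U,U^\star)\|_2\,\|w^\star\|^2)$, matching the target.

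The main obstacle is the complementary regime $\|\sin\Theta(U,U^\star)\|_2>1/2$, where Lemma \ref{lem: auxiliary lemma 3 for risk matrix} becomes essentially vacuous because its first factor blows up as the sine distance approaches $1$. Here I would fall back on a trivial bound read off from the proof of Proposition \ref{prop: excess_risk}: writing $\mathcal{R}_r(\delta_{U,w}) = (w-A^{-1}b)^\top A(w-A^{-1}b) - b^\top A^{-1} b + \|w^\star\|^2 + \sigma_\epsilon^2$ shows that $\inf_w \mathcal{R}_r(\delta_{U,w}) \leq \|w^\star\|^2 + \sigma_\epsilon^2$ and $\inf_w \mathcal{R}_r(\delta_{U^\star,w}) \geq \sigma_\epsilon^2$, so the excess risk is at most $\|w^\star\|^2$ unconditionally. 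Combining this with $\|w^\star\|^2 \leq 2\|\sin\Theta(U,U^\star)\|_2\|w^\star\|^2 \leq 2(1+\rho^{-2})\|\sin\Theta(U,U^\star)\|_2\|w^\star\|^2$ on this event closes the gap. Taking $\mathbb{E}_{\mathcal{D}}$ of the resulting pointwise bound yields the statement of the lemma.
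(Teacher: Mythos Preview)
Your proposal is correct and follows essentially the same approach as the paper: both start from Proposition \ref{prop: excess_risk}, invoke Lemma \ref{lem: auxiliary lemma 3 for risk matrix} to bound $\|M\|_2$, split on whether $\|\sin\Theta(U,U^\star)\|_2$ is small, and use a trivial bound on the excess risk for the complementary event. The only cosmetic difference is that the paper phrases the bad-event step as ``trivial bound times $\mathbb{P}(\|\sin\Theta\|_2\geq 1/\sqrt{2})$ followed by Markov's inequality,'' whereas you absorb the same logic into a pointwise inequality via $1\leq 2\|\sin\Theta(U,U^\star)\|_2$ on that event; the two are equivalent.
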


\begin{proof}[Proof of Lemma \ref{lem: excess risk}]
    From proposition \ref{prop: excess_risk}, we have 
    \begin{align*}
        &\inf_{w \in \mathbb{R}^r} \mathcal{R}_r(\delta_{U, w}) - \inf_{w \in \mathbb{R}^r} \mathcal{R}_r(\delta_{U^\star, w})\\
        &\quad= {w^\star}^\top \qty( (I + (1/\nu^2) U^{\star \top} \Sigma U^\star)^{-1} - U^{\star \top} U (U^\top U^\star U^{\star \top} U + (1/\nu^2) U^\top \Sigma U)^{-1} U^\top {U^\star}) w^\star.
    \end{align*}
    Note that $\inf_{w \in \mathbb{R}^r} \mathcal{R}_r(\delta_{U, w}) - \inf_{w \in \mathbb{R}^r} \mathcal{R}_r(\delta_{U^\star, w}) \equiv \inf_{w \in \mathbb{R}^r} \mathcal{R}_r(\delta_{U O, w}) - \inf_{w \in \mathbb{R}^r} \mathcal{R}_r(\delta_{U^\star, w})$ for any orthogonal matrix $O \in \mathbb{O}_{r,r}$.
    Take $\tilde O \in \mathbb{O}_{r,r}$ such that $\|U \tilde O - U^\star\|_2 \leq \sqrt{2} \|\sin\Theta(U, O)\|_2$ without loss of generality, since we can always take a sequence $(\tilde O_m)_{m \geq 1}$ such that 
    $\|U O_m - U^\star\|_2 \leq \sqrt{2} \|\sin\Theta(U, O)\|_2 + 1/m$ from Lemma \ref{lem: bound 2 norm by sin distance}. 

    Lemma \ref{lem: auxiliary lemma 3 for risk matrix} gives
    \begin{align*}
        &\inf_{w \in \mathbb{R}^r} \mathcal{R}_r(\delta_{U, w}) - \inf_{w \in \mathbb{R}^r} \mathcal{R}_r(\delta_{U^\star, w})\\
        &\quad= O\qty( \frac{1}{1 - \|\sin\Theta(U, U^\star)\|_2^2 + \kappa^{-1}\rho^{-2}} \frac{1 + \rho^{-2}}{1 + \kappa^{-1}\rho^{-2}} \|\sin\Theta(U, U^\star)\|_2 \|w^\star\|^2 ).
    \end{align*}
    On the event where $\|\sin\Theta(U, U^\star)\|_2^2 < 1/2$,
    \begin{align*}
        \inf_{w \in \mathbb{R}^r} \mathcal{R}_r(\delta_{U, w}) - \inf_{w \in \mathbb{R}^r} \mathcal{R}_r(\delta_{U^\star, w}) = O\qty( \frac{1 + \rho^{-2}}{(1 + \kappa^{-1}\rho^{-2})^2} \|\sin\Theta(U, U^\star)\|_2 \|w^\star\|^2).
    \end{align*}
    On the event where $\|\sin\Theta(U, U^\star)\|_2^2 \geq 1/2$, we utilize the trivial upper bound
    \begin{align*}
        \inf_{w \in \mathbb{R}^r} \mathcal{R}_r(\delta_{U, w}) - \inf_{w \in \mathbb{R}^r} \mathcal{R}_r(\delta_{U^\star, w}) &\leq \|(I + \nu^{-2} U^{\star \top} \Sigma U^\star)^{-1}\|_2 \|w^\star\|^2 \leq \frac{\nu^2}{\nu^2 + \sigma_{(d)}^2} \|w^\star\|^2.
    \end{align*}
    Combining these results, we have
    \begin{align*}
        &\mathbb{E}_{\mathcal{D}} [ \inf_{w \in \mathbb{R}^r} \mathcal{R}_r(\delta_{U, w}) - \inf_{w \in \mathbb{R}^r} \mathcal{R}_r(\delta_{U^\star, w}) ]\\
        &\quad\lesssim \frac{1 + \rho^{-2}}{(1 + \kappa^{-1}\rho^{-2})^2} \mathbb{E}_{\mathcal{D}} [\|\sin\Theta(U, U^\star)\|_2] \|w^\star\|^2\\
        &\quad\quad+ \frac{1}{1 + \kappa^{-1}\rho^{-2}} \|w^\star\|^2 \mathbb{P}_{\mathcal{D}}(\|\sin\Theta(U, U^\star)\|_2 \geq 1/\sqrt{2})\\
        &\quad\lesssim \frac{1 + \rho^{-2}}{(1 + \kappa^{-1}\rho^{-2})^2} \mathbb{E}_{\mathcal{D}} [\|\sin\Theta(U, U^\star)\|_2] \|w^\star\|^2,
    \end{align*}
    where the last inequality follows by Markov's inequality.
\end{proof}

\section{Discussion about Autoencoders and random masking augmentation}
\label{sec: MAE}
In the following, we show that our results do not change if we applied the same augmentation (\ref{aug: random masking}) for autoencoders. As discussed in Section \ref{sec: PCA and autoencoder}, we can ignore the bias term in autoencoders for simplicity, which only serves as centralization of the data matrix. In that case, we applied random augmentation $g_1(x)=Ax$ and $g_2(x)=(I-A)x$ to the original data $\{x_i\}_{i=1}^n$, and the optimization problem can be formulated as follows:
\begin{equation}
\begin{aligned}
\label{Augmented autoencoder}
    \min_{W_{AE}, W_{DE}}\frac{1}{2n}\mathbb{E}_{A}[\|AX-W_{DE}W_{AE}AX\|_F^2+\|(I-A)X-W_{DE}W_{AE}(I-A)X\|_F^2].
\end{aligned}
\end{equation}
Then, similar to Theorem \ref{prop: augment} for contrastive learning, we can also obtain an explicit solution for this optimization problem.
\begin{theorem}
\label{thm: augmented autoencoder}
    The optimal solution of autoencoders with random masking augmentation (\ref{Augmented autoencoder}) is given by:
	\begin{equation*}
		W_{AE}=W_{DE}^\top = C\left(\sum_{i=1}^{r}u_i\sigma_i v_i^\top\right)^\top,
	\end{equation*}
	where $C>0$ is a positive constant, $\sigma_i$ is the $i$-th largest eigenvalue of the following matrix:
	\begin{equation}
		\frac{1}{2}\Delta(XX^\top)+D(XX^\top),
	\end{equation}
	$u_i$ is the corresponding eigenvector and $V=[v_1,\cdots,v_r]\in\mathbb{R}^{r\times r}$ can be any orthonormal matrix.
\end{theorem}
\begin{proof}
We first derive the equivalent form for this objective function:
\begin{equation}
    \begin{aligned}
        &\frac{1}{2n}\mathbb{E}_{A}[\|AX-W_{DE}W_{AE}AX\|_F^2+\|(I-A)X-W_{DE}W_{AE}(I-A)X\|_F^2]\\
        =&\frac{1}{2n}\mathbb{E}_{A}[\tr(X^\top A^\top AX)+\tr(X^\top A^\top W_{DE}W_{AE}AX)+\tr(X^\top A^\top W_{AE}^\top W_{DE}^\top W_{DE}W_{AE}AX)\\
        &+\tr(X^\top (I-A)^\top (I-A)X)+\tr(X^\top (I-A)^\top W_{DE}W_{AE}(I-A)X)\\
        &+\tr(X^\top (I-A)^\top W_{AE}^\top W_{DE}^\top W_{DE}W_{AE}(I-A)X)]\\
        =&\frac{1}{2n}\mathbb{E}_{A}[\tr(X^\top AX)+\tr(AXX^\top A^\top W_{DE}W_{AE})+\tr(AXX^\top A^\top W_{AE}^\top W_{DE}^\top W_{DE}W_{AE})\\
        &+\tr(X^\top (I-A)X)+\tr((I-A)XX^\top (I-A)^\top W_{DE}W_{AE})\\
        &+\tr((I-A)XX^\top (I-A)^\top W_{AE}^\top W_{DE}^\top W_{DE}W_{AE})]\\
        =&\frac{1}{2n}\mathbb{E}_{A}[\tr(X^\top X)+\tr(\hat M W_{DE}W_{AE})+\tr(\hat M W_{AE}^\top W_{DE}^\top W_{DE}W_{AE})],\\
    \end{aligned}
\end{equation}
where $\hat M:= AXX^\top A^\top+(I-A)XX^\top (I-A)^\top$. Note that by Definition \ref{aug: random masking} we have $A=\diag(a_1,\cdots,a_d)$ and $a_i$ follows the Bernoulli distribution, so we have:
\begin{equation}
    \mathbb{E}_A \hat M= \frac{1}{2}\Delta(XX^\top)+D(XX^\top)
\end{equation}   
Again, by Theorem 2.4.8 in \citet{van1996matrix}, the optimal solution of Eq.(\ref{Augmented autoencoder}) is given by the eigenvalue decomposition of $\mathbb{E}_A \hat M= \frac{1}{2}\Delta(XX^\top)+D(XX^T)$, up to an orthogonal transformation, which finishes the proof.
\end{proof}
With Theorem \ref{thm: augmented autoencoder} established, we can now derive the space distance for autoencoders with random masking augmentation. 
\begin{theorem}
\label{thm: random masking autoencoder}
	Consider the spiked covariance model Eq.(\ref{model: spiked covariance}), under Assumptions \ref{asm: regular}-\ref{asm: incoherent} and $n> d\gg r$, let $W_{AE}$ be the learned representation of augmented autoencoder with singular value decomposition $W_{AE}=(U_{AE}\Sigma_{AE}V_{AE}^\top)^\top$ (i.e., the optimal solution of optimization problem \ref{Augmented autoencoder}). If we further assume $\{\sigma_{i}^2\}_{i=1}^d$ are different from each other and $\sigma_{(1)}^2/(\sigma_{(r)}^2-\sigma_{(r+1)}^2)<C_{\sigma}$ for some universal constant $C_{\sigma}$. Then there exist two universal constants $C_\rho>0,c\in (0,1)$, such that when $\rho<C_\rho$, we have
	\begin{equation}
		\mathbb{E}\left\|\sin \Theta\left(U^\star, U_{AE}\right)\right\|_F \geq c\sqrt{r}.
	\end{equation}
\end{theorem}
\begin{proof}
    Step1, similar to the proof of Theorem \ref{thm: recover PCA ap}, we first bound the difference between $\hat{M}:=\Delta(XX^\top)+2D(XX^\top)$ and $\Sigma:=\Cov(\xi\xi^\top)$. Note that:
    \begin{equation}
        \|\hat{M}-\Sigma\|_2=\|XX^\top-\Sigma-\frac{1}{2}\Delta(XX^\top)\|_2\leq\|XX^\top-\Sigma\|_2+\frac{1}{2}\|\Delta(XX^\top-\Sigma)\|_2+\frac{1}{2}\|\Delta(\Sigma)\|_2
    \end{equation}
    Since $\Sigma$ is a diagonal matrix, then by Lemma \ref{Lemma: Delta} we have:
    \begin{equation}
        \|\hat{M}-\Sigma\|_2\leq 2 \|XX^\top-\Sigma\|_2
    \end{equation}
    Now, directly apply equation (\ref{PCA step1 term1 b})(\ref{PCA step1 term2})(\ref{PCA step1 term3}) we can obtain that:
	\begin{equation}
		\mathbb{E}\|\hat{M}-\Sigma\|_2\lesssim \nu^2\qty(1+\sqrt{\frac{r}{n}}+\frac{r}{n})+\sigma_{(1)}^2\qty(\sqrt{\frac{d}{n}}+\frac{d}{n})+\sqrt{\frac{d}{n}}\sigma_{(1)}\nu.
	\end{equation}
	
	Step 2, bound the $\sin\Theta$ distance between eigenspaces. As we have shown in step 1, the target matrix of the autoencoder is close to the covariance matrix of random noise, i.e., $\Sigma$. Note that $\Sigma$ is assumed to be a diagonal matrix with different elements, hence its eigenspace only consists of canonical basis $e_i$. Denote $U_{\Sigma}$ to be the top-$r$ eigenspace of $\Sigma$ and $\{e_i\}_{i\in C}$ to be its corresponding basis vectors, apply the Davis-Kahan Theorem \ref{Lemma: DK} we can conclude that:
	\begin{equation*}
		\begin{aligned}
			&\mathbb{E}\|\sin\Theta(U_{AE},U_{\Sigma})\|_F\leq\frac{2\sqrt{r}\mathbb{E}\|\hat{M}-\Sigma\|_2}{\sigma_{(r)}^2-\sigma_{(r+1)}^2}\\
			\lesssim& \sqrt{r}\frac{1}{\sigma_{(1)}^2}\qty(\nu^2\qty(1+\sqrt{\frac{r}{n}}+\frac{r}{n})+\sigma_{(1)}^2\qty(\sqrt{\frac{d}{n}}+\frac{d}{n})+\sqrt{\frac{d}{n}}\sigma_{(1)}\nu)\\
			\lesssim& \sqrt{r} \qty(\rho^2+\sqrt{\frac{d}{n}}+\rho\sqrt{\frac{d}{n}}).\\
		\end{aligned}
	\end{equation*}
	
	Step 3, obtain the final result by triangular inequality. By Assumption \ref{asm: incoherent} we know that the distance between canonical basis and the eigenspace of core features can be large:
	\begin{equation}
		\begin{aligned}
			\|\sin\Theta(U^\star ,U_{\Sigma})\|_F^2&=\|U_{\Sigma\perp}^\top U^\star \|_F^2=\sum_{i\in[d]/ C}\|e_i^\top U^\star \|^2=\|U^\star \|_F^2-\sum_{i\in C}\|e_i^\top U^\star \|^2\\
			&\geq r-rI(U^\star )= r-O\qty(\frac{r^2}{d}\log d).
		\end{aligned}
	\end{equation}
	Then apply the triangular inequality of $\sin\Theta$ distance (Proposition \ref{triangular inequality of sin theta}) we can obtain the lower bound of the autoencoder.
	\begin{equation*}
		\begin{aligned}
			\mathbb{E}\|\sin\Theta(U_{AE},U^\star )\|_F&\geq\mathbb{E}\|\sin\Theta(U^\star ,U_\Sigma)\|_F-\mathbb{E}\|\sin\Theta(U_{AE},U_\Sigma)\|_F\\
			&\geq\sqrt{r}-O\qty(\frac{r}{\sqrt{d}}\sqrt{\log d})-O\qty(\sqrt{r} \qty(\rho^2+\sqrt{\frac{d}{n}}+\rho\sqrt{\frac{d}{n}})).
		\end{aligned}
	\end{equation*}
	By Assumption \ref{asm: SNR}, it implies that when n and d are sufficiently large and $\rho$ is sufficiently small (smaller than a given constant $C_\rho>0$), there exists a universal constant $c\in(0,1)$ such that:
	\begin{equation*}
	    \mathbb{E}\|\sin\Theta(U_{AE},U^\star )\|_F\geq c\sqrt{r}.
	\end{equation*}
\end{proof}
Compared with Theorem \ref{thm: recover PCA}, we can find that random masking augmentation makes no difference to autoencoders, which justifies the fairness of our comparison between contrastive learning and autoencoders.

However, contrary to the autoencoders with random-masking augmentation, we show that the representations obtained by DAEs behave as the representations obtained by contrastive learning.
\begin{proof}[Proof of Remark \ref{rem: masked ae}]\label{proof: masked 1}
    Let $\mathcal{L} := (1/n) \|X - W^\top W A X\|_F^2$ be the loss function of DAEs. Then,
    \begin{align}
        \mathbb{E}_A \mathcal{L} = \frac{1}{n}\tr( W^\top W \qty( \frac{1}{2} D(X X^\top) + \frac{1}{4} \Delta(X X^\top) ) W^\top W -  W^\top W X X^\top ) + (\text{const}.).
    \end{align}
    We minimize the loss over $W$ such that $W W^\top = 2 I_r$.
    Then, the loss becomes
    \begin{align*}
        \argmin_{W W^\top = 2 I_r} E_A \mathcal{L} &= \argmin_{W W^\top = 2 I_r} \frac{1}{n}\tr( W \qty(D(XX^\top) + \frac{1}{2} \Delta(XX^\top)) W^\top - W XX^\top W^\top )\\
        &= \argmax_{W W^\top = 2 I_r} \tr( W \frac{1}{n} \Delta(XX^\top) W^\top ).
    \end{align*}
    Thus, the solution to the (expected) loss minimization problem is the top-$r$ eigenvectors of $\Delta(n^{-1} XX^\top)$, i.e., $W^\top = \sqrt{2} O P_r(\Delta(n^{-1} XX^\top))$, where $O$ is any orthogonal matrix from $\mathcal{O}_{r,r}$.
    We use the same argument as in the proof of Theorem \ref{thm: recover CL ap}.
    First note that
    \begin{align*}
        \frac{1}{n}\Delta(XX^\top) = \frac{1}{n} \Delta(U^\star Z Z^\top U^{\star \top}) + \frac{1}{n} \Delta(U^\star Z E + E Z^\top U^{\star \top}) + \frac{1}{n} \Delta(E E^\top).
    \end{align*}
    By Lemmas \ref{Lemma: Delta}, \ref{Lemma: bound ZZT} and the incoherent condition $I(U)=O(\frac{r}{d}\log d)$, we have:
	\begin{align}
		&\mathbb{E}\norm{\Delta\qty(\frac{1}{n}U^\star ZZ^\top U^{\star \top})-\nu^2 U^\star U^{\star \top}}_2\nonumber\\
		&\leq 2\mathbb{E} \norm{\frac{1}{n}U^\star ZZ^\top U^{\star \top}-\nu^2 U^\star U^{\star \top}}_2 + \mathbb{E}\norm{\Delta\qty(\nu^2 U^\star U^{\star \top})-\nu^2 U^\star U^{\star \top} )}_2\nonumber\\
		&\lesssim 2(\sqrt{\frac{r}{n}}+\frac{r}{n})\nu^2 + \frac{r}{d}\log d\nu^2.\label{MAE step1 term1}
	\end{align}
	
	For the second term, applying equation (\ref{PCA step1 term2}) yields:
	\begin{equation}
		\begin{aligned}
			\label{MAE step1 term2}
			\frac{1}{n}\mathbb{E}\|\Delta(U^\star ZE^\top+EZ^\top U^{\star \top})\|_2\leq\frac{4}{n}\mathbb{E}\|EZ^\top U^{\star \top}\|_2\lesssim\frac{\sqrt{d}}{\sqrt{n}}\sigma_{(1)}\nu.
		\end{aligned}
	\end{equation} 
		
    For the third term, applying equation (\ref{PCA step1 term3}) yields:
	\begin{equation}
		\label{MAE step1 term3}
		\mathbb{E}\|\frac{1}{n}\Delta(EE^\top)\|_2=\mathbb{E}\|\Delta(\frac{1}{n}EE^\top-\Sigma)\|_2\leq 2 \|\frac{1}{n}EE^\top-\Sigma\|_2\lesssim(\sqrt{\frac{d}{n}}+\frac{d}{n})\sigma_{(1)}^2.
	\end{equation}
	
	Combining equations (\ref{MAE step1 term1})(\ref{MAE step1 term2})(\ref{MAE step1 term3}) gives
	\begin{equation*}
		\mathbb{E}\norm{\Delta\qty(\frac{1}{n} XX^\top - \nu^2 U^\star U^{\star \top})}_2 \lesssim \nu^2\qty(\frac{r}{d}\log d+\sqrt{\frac{r}{n}}+\frac{r}{n})+\sigma_{(1)}^2\qty(\sqrt{\frac{d}{n}}+\frac{d}{n})+\sigma_{(1)}\nu\sqrt{\frac{d}{n}}.
	\end{equation*}
	From Lemma \ref{Lemma: DK}, we obtain the desired bound:
	\begin{align*}
	    &\mathbb{E}\|\sin\Theta(U_{\DAE},U^\star )\|_F \leq \frac{2\sqrt{r}}{\nu^2} \mathbb{E}\norm{\Delta\qty(\frac{1}{n} XX^\top - \nu^2 U^\star U^{\star \top})}_2 \lesssim \frac{r^{3/2}}{d}\log d+\sqrt{\frac{dr}{n}}.
    \end{align*}
\end{proof}

\if0
Before moving on to the proof, we introduce some notations for tractability.
For any matrix $A$, let $\lambda_r(A)$ be the $r$-th largest singular value of $A$ and let $\lambda_{\min}^+(A)$ be the smallest positive singular value of $A$. Let $P_r(A)$ be the top-$r$ right singular vectors of $A$ and let $\SVD_r(A)$ be the rank-$r$ approximation of $A$. Write $\hat\Sigma_x$ as $(1/n) X X^\top$.

\begin{theorem}[Restatement of Theorem \ref{thm: masked 1}]\label{thm: masked 1 restatement}
    Suppose Assumptions \ref{asm: incoherent} and \ref{asm: regular} hold and $n / \log n \gg d \gg r \log d$. Then
    \begin{align}
        \mathbb{E} \|\sin\Theta( P_r(W_2), P_r( \SVD_r(\Delta(\hat \Sigma_x) M^{-1/2}) M^{-1/2} ) )\|_F \lesssim r \sqrt{\frac{d \log n}{n}}.
    \end{align}
\end{theorem}

\begin{proof}[Proof of Theorem \ref{thm: masked 1 restatement}]\label{proof: masked 1}
    For brevity, we focus on the event where the results of Lemma \ref{lem: good event} hold.
    By triangular inequality,
    \begin{align}
        &\|\sin\Theta\qty( P_r(W_2), P_r( \SVD_r(\Delta(\hat \Sigma_x) M^{-1/2}) M^{-1/2} ) )\|_F\\
        &\leq \|\sin\Theta\qty( P_r(W_2), P_r( W_1 W_2 ) )\|_F\\
        &\quad+\|\sin\Theta\qty( P_r(W_1 W_2), P_r( \SVD_r(\Delta(\hat \Sigma_x) \hat M^{-1/2}) \hat M^{-1/2} ) )\|_F\\
        &\quad+ \|\sin\Theta\qty(P_r( \SVD_r(\Delta(\hat \Sigma_x) \hat M^{-1/2}) \hat M^{-1/2} ), P_r( \SVD_r(\Delta(\hat \Sigma_x) M^{-1/2}) \hat M^{-1/2} ) )\|_F\\
        &\quad+ \|\sin\Theta\qty( P_r( \SVD_r(\Delta(\hat \Sigma_x) M^{-1/2}) \hat M^{-1/2} ), P_r( \SVD_r(\Delta(\hat \Sigma_x) M^{-1/2}) M^{-1/2} ) )\|_F.
    \end{align}
    Since $W_1 W_2$ is a rank-$r$ matrix and $W_2 \in \R^{r \times d}$, $\|\sin\Theta\qty( P_r(W_1 W_2), P_r(W_2) )\|_F = 0$.
    The second term on the right-hand side is $0$ by \eqref{eq: EYM mae2}. We bound the third and fourth terms separately.
    
    \paragraph{Step 1.} 
    Using Theorem 3 from \citet{yu2015useful},
    \begin{align}
        &\|\sin\Theta\qty(P_r( \SVD_r(\Delta(\hat \Sigma_x) \hat M^{-1/2}) \hat M^{-1/2} ), P_r( \SVD_r(\Delta(\hat \Sigma_x) M^{-1/2}) \hat M^{-1/2} ) )\|_F\\
        &\lesssim \frac{\|\SVD_r(\Delta(\hat \Sigma_x) M^{-1/2}) \hat M^{-1/2}\|_2 + \|\hat M^{-1/2}\|_2 \|\SVD_r(\Delta(\hat\Sigma_x) \hat M^{-1/2}) - \SVD_r(\Delta(\hat\Sigma_x) M^{-1/2})\|_2}{\lambda_r^2(\SVD_r(\Delta(\hat \Sigma_x) M^{-1/2})\hat M^{-1/2})}\\
        &\quad \times \sqrt{r} \|\hat M^{-1/2}\|_2 \|\SVD_r(\Delta(\hat\Sigma_x) \hat M^{-1/2}) - \SVD_r(\Delta(\hat\Sigma_x) M^{-1/2})\|_2.
    \end{align}
    We first bound the term $\|\SVD_r(\Delta(\hat\Sigma_x) \hat M^{-1/2}) - \SVD_r(\Delta(\hat\Sigma_x) M^{-1/2})\|_2$.
    
    Note that
    \begin{align}
        \lambda_r(\Delta(\hat\Sigma_x) M^{-1/2}) - \lambda_{r+1}(\Delta(\hat\Sigma_x) M^{-1/2}) &= \lambda_r(\Delta(\hat\Sigma_x) M^{-1/2}) - \lambda_r(\Delta(\Sigma_x) M^{-1/2})\\
        &\quad+ \lambda_r(\Delta(\Sigma_x) M^{-1/2}) - \lambda_{r+1}(\Delta(\Sigma_x) M^{-1/2})\\
        &\quad+ \lambda_{r+1}(\Delta(\Sigma_x) M^{-1/2}) - \lambda_{r+1}(\Delta(\hat\Sigma_x) M^{-1/2}).
    \end{align}
    By Lemmas \ref{lem: good event} and \ref{lem: equivalence under SNR}, we have 
    \begin{align}
        \lambda_r(\Delta(\hat\Sigma_x) M^{-1/2}) - \lambda_{r+1}(\Delta(\hat\Sigma_x) M^{-1/2})
        &\geq c \|\Delta(\Sigma_x) M^{-1/2}\|_2 - 2\|(\Delta(\hat\Sigma_x) - \Delta(\Sigma_x)) M^{-1/2}\|_2\\
        &\gtrsim \|\Delta(\Sigma_x) M^{-1/2}\|_2 - \|\Sigma_x\|_2 \|M^{-1/2}\|_2 \sqrt{\frac{d \log n}{n}}\\
        &\gtrsim \|\Delta(\Sigma_x) M^{-1/2}\|_2,\label{eq: to be used for rank-r diff 1}
    \end{align}
    where the last inequality follows from Lemma \ref{lem: equivalence under SNR}.
    Also, note that by Lemma \ref{lem: good event},
    \begin{align}
        \|\Delta(\hat\Sigma_x) \hat M^{-1/2} - \Delta(\hat\Sigma_x) M^{-1/2}\|_2 &\leq \|\Delta(\hat \Sigma_x)\|_2 \|\hat M^{-1/2} - M^{-1/2}\|_2\\
        &\lesssim \frac{\|\Sigma_x\|_2^{2}}{\lambda_{\min}^{3/2}(\Sigma_x)} \sqrt{\frac{d \log n}{n}}\\
        &\lesssim \|\Sigma_x\|_2^{1/2} \sqrt{\frac{d \log n}{n}},
    \end{align}
    where the second inequality follows from Lemma \ref{lem: good event} and the last inequality follows from Lemma \ref{lem: equivalence under SNR}.
    Again by Lemmas \ref{lem: equivalence under SNR} and \ref{lem: good event}, there exists a constant $C > 0$ satisfying
    \begin{align}
        \|\Delta(\hat\Sigma_x) M^{-1/2}\|_2 &\geq \|\Delta(\Sigma_x) M^{-1/2}\|_2 - C \|M^{-1/2}\|_2 \|\Sigma_x\|_2 \sqrt{\frac{d \log n}{n}}\\
        &\gtrsim \|\Sigma_x\|_2 \|M^{-1/2}\|_2.
    \end{align}
    By Lemma \ref{lem: equivalence under SNR}, we obtain $\|M^{-1/2}\|_2 \asymp \|\Sigma_x\|_2^{-1/2}$ and thus
    \begin{align}
        \|\Delta(\hat\Sigma_x) \hat M^{-1/2} - \Delta(\hat\Sigma_x) M^{-1/2}\|_2 \lesssim \|\Delta(\hat\Sigma_x) M^{-1/2}\|_2.\label{eq: to be used for rank-r diff 2}
    \end{align}

    Using Lemmas \ref{lem: rank-r diff}, \ref{lem: good event} combined with \eqref{eq: to be used for rank-r diff 1} and \eqref{eq: to be used for rank-r diff 2}, we obtain
    \begin{align}
        &\|\SVD_r(\Delta(\hat\Sigma_x) \hat M^{-1/2}) - \SVD_r(\Delta(\hat\Sigma_x) M^{-1/2})\|_2\\ 
        &\quad\lesssim \|\Delta(\hat\Sigma_x) \hat M^{-1/2} - \Delta(\hat\Sigma_x) M^{-1/2}\|_2 \sqrt{r} \frac{\|\Delta(\hat\Sigma_x) M^{-1/2}\|_2}{\lambda_r(\Delta(\hat\Sigma_x) M^{-1/2})}\\
        &\quad\lesssim \sqrt{r} \|\Delta(\hat\Sigma_x) \hat M^{-1/2} - \Delta(\hat\Sigma_x) M^{-1/2}\|_2,\label{eq: rank-r diff 1}
    \end{align}
    where the last inequality follows from Lemma \ref{lem: good event} and \ref{lem: equivalence under SNR}.
    
    Using \eqref{eq: rank-r diff 1}, \eqref{eq: to be used for rank-r diff 2}, and
    \begin{align}
        \lambda_r(\SVD_r(\Delta(\hat \Sigma_x) M^{-1/2})\hat M^{-1/2}) \geq \lambda_r(\Delta(\hat \Sigma_x) M^{-1/2}) \lambda_{\min}(\hat M^{-1/2}),\label{eq: to be used for rank-r diff 4}
    \end{align}
    we obtain
    \begin{align}
        &\|\sin\Theta\qty(P_r( \SVD_r(\Delta(\hat \Sigma_x) \hat M^{-1/2}) \hat M^{-1/2} ), P_r( \SVD_r(\Delta(\hat \Sigma_x) M^{-1/2}) \hat M^{-1/2} ) )\|_F\\
        &\lesssim r \frac{\|\Delta(\hat \Sigma_x) M^{-1/2}\|_2 \| \hat M^{-1/2}\|_2^2}{\lambda_r^2(\Delta(\hat \Sigma_x) M^{-1/2}) \lambda_{\min}^2(\hat M^{-1/2})}  \|\SVD_r(\Delta(\hat\Sigma_x) \hat M^{-1/2}) - \SVD_r(\Delta(\hat\Sigma_x) M^{-1/2})\|_2\\
        &\lesssim r \frac{\|\Delta(\hat \Sigma_x) M^{-1/2}\|_2}{\lambda_r(\Delta(\hat \Sigma_x) M^{-1/2})} \frac{\lambda_{\max}(\hat M)}{\lambda_{\min}(\hat M)}  \frac{\|\Delta(\hat\Sigma_x)\|_2 \|\hat M^{-1/2} - M^{-1/2}\|_2}{\lambda_r(\Delta(\hat \Sigma_x) M^{-1/2})}.
    \end{align}
    Again by Lemmas \ref{lem: good event}, \ref{lem: equivalence under SNR}, we obtain
    \begin{align}
        &\|\sin\Theta\qty(P_r( \SVD_r(\Delta(\hat \Sigma_x) \hat M^{-1/2}) \hat M^{-1/2} ), P_r( \SVD_r(\Delta(\hat \Sigma_x) M^{-1/2}) \hat M^{-1/2} ) )\|_F\\
        &\lesssim r \frac{\|\Sigma_x\|_2^2}{\lambda_r(\Delta(\Sigma_x) M^{-1/2}) \lambda_{\min}^{3/2}(\Sigma_x)} \sqrt{\frac{d \log n}{n}}\\
        &\lesssim r \frac{\|\Sigma_x\|_2^{1/2}}{\lambda_r(\Delta(\Sigma_x) M^{-1/2})} \sqrt{\frac{d \log n}{n}}\\
        &\lesssim r \frac{\nu + \|\Sigma\|_2^{1/2}}{\nu^2 \lambda_r(U^{\star \top} M^{-1/2} U^\star)} \sqrt{\frac{d \log n}{n}}\\
        &\lesssim r \sqrt{\frac{d \log n}{n}},
    \end{align}
    where the last inequality follows from $\lambda_r(U^{\star \top} M^{-1/2} U^\star) \geq \lambda_{\min}(M^{-1/2})$ and Lemma \ref{lem: equivalence under SNR}.
    
    \if0
    When $\SVD_r(A) = \sum_{i=1}^r \lambda_i u_{1i} v_{1i}^\top$ and $\SVD_r(B) = \sum_{i=1}^r \ell_i u_{2i} v_{2i}^\top$, 
    \begin{align}
        \|\SVD_r(A) - \SVD_r(B)\|_2 &\leq \sum_{i=1}^r \|\lambda_i u_{1i} v_{1i}^\top - \lambda_i u_{1i} v_{2i}^\top\|_2 + \sum_{i=1}^r \|\lambda_i u_{1i} v_{2i}^\top - \ell_i u_{2i} v_{2i}^\top\|_2\\
        &\leq 
        \|\sum_{i=1}^r (\lambda_i - \ell_i) u_i u_i^\top\|_2 + \|\sum_{i=1}^r \ell_i (u_i u_i^\top - v_i v_i^\top)\|_2\\
        &\leq \|A - B\|_2 + r \frac{\|B\|_2}{\lambda_{r}(B)}
    \end{align}
    \fi

    \paragraph{Step 2.} 
    Again with Theorem 3 from \citet{yu2015useful} and \eqref{eq: to be used for rank-r diff 4},
    \begin{align}
        &\|\sin\Theta\qty(P_r( \SVD_r(\Delta(\hat \Sigma_x) M^{-1/2}) \hat M^{-1/2} ), P_r( \SVD_r(\Delta(\hat \Sigma_x) M^{-1/2}) M^{-1/2} ) )\|_F\\
        &\lesssim \sqrt{r}\frac{\|\SVD_r(\Delta(\hat \Sigma_x) M^{-1/2})\|_2(\|M^{-1/2}\|_2 + \|\hat M^{-1/2} - M^{-1/2}\|_2)}{\lambda_r^2(\Delta(\hat \Sigma_x) M^{-1/2}) \lambda_{\min}^2(M^{-1/2})} \|\SVD_r(\Delta(\hat \Sigma_x) M^{-1/2})\|_2 \|\hat M^{-1/2} - M^{-1/2}\|_2\\
        &\leq \sqrt{r}\frac{\|\Delta(\hat \Sigma_x) M^{-1/2}\|_2(\|M^{-1/2}\|_2 + \|\hat M^{-1/2} - M^{-1/2}\|_2)}{\lambda_r^2(\Delta(\hat \Sigma_x) M^{-1/2}) \lambda_{\min}^2(M^{-1/2})}  \|\Delta(\hat \Sigma_x) M^{-1/2}\|_2 \|\hat M^{-1/2} - M^{-1/2}\|_2\\
        &\lesssim \sqrt{r}\frac{\lambda_{\max}^2(\Delta(\hat\Sigma_x) M^{-1/2})}{\lambda_{r}^2(\Delta(\hat\Sigma_x) M^{-1/2})} \sqrt{\frac{\lambda_{\max}(M)}{\lambda_{\min}(M)}} \|M\|_2^{1/2} \|\hat M^{-1/2} - M^{-1/2}\|_2,
    \end{align}
    where the last inequality follows from $\|\hat M^{-1/2} - M^{-1/2}\|_2 \lesssim \|M^{-1/2}\|_2$ from Lemma \ref{lem: good event}.
    
    Using Lemmas \ref{lem: good event} and \ref{lem: equivalence under SNR}, we obtain
    \begin{align}
        &\|\sin\Theta\qty(P_r( \SVD_r(\Delta(\hat \Sigma_x) M^{-1/2}) \hat M^{-1/2} ), P_r( \SVD_r(\Delta(\hat \Sigma_x) M^{-1/2}) M^{-1/2} ) )\|_F\\
        &\lesssim \sqrt{r} \frac{\|\Sigma_x\|_2^{3/2}}{\lambda_{\min}^{3/2}(\Sigma_x)} \sqrt{\frac{d \log n}{n}} \asymp \sqrt{\frac{r d \log n}{n}}.
    \end{align}
    
    Finally, we obtain that
    \begin{align}
        \|\sin\Theta\qty( P_r(W_2), P_r( \SVD_r(\Delta(\hat \Sigma_x) M^{-1/2}) M^{-1/2} ) )\|_F \lesssim r \sqrt{\frac{d \log n}{n}}.\label{eq: proof of thm: masked 1}
    \end{align}
    Since \eqref{eq: proof of thm: masked 1} holds with probability at least $1 - O(n^{-1})$, we obtain
    \begin{align}
        \mathbb{E} \|\sin\Theta\qty( P_r(W_2), P_r( \SVD_r(\Delta(\hat \Sigma_x) M^{-1/2}) M^{-1/2} ) )\|_F \lesssim r \sqrt{\frac{d \log n}{n}} + \frac{\sqrt{r}}{n} \lesssim r \sqrt{\frac{d \log n}{n}},
    \end{align}
    where the inequality follows from the trivial bound $\|\sin\Theta( P_r(W_2), P_r( \SVD_r(\Delta(\hat \Sigma_x) M^{-1/2}) M^{-1/2} ) )\|_F \leq \sqrt{r}$.
    This concludes the proof.
\end{proof}

\begin{theorem}[Restatement of Theorem \ref{thm: masked 2}]\label{thm: masked 2 restatement}
    Suppose Assumptions \ref{asm: incoherent}, \ref{asm: regular} and $n / \log n \gg d \gg r \log d$ hold and $\Sigma = \sigma^2 I_d$. Then,
    \begin{align}
        \mathbb{E} \|\sin\Theta( P_r(W_2), P_r( \Delta(\hat \Sigma_x)) )\|_F \lesssim \sqrt{r} p \vee \frac{r^2 \log d}{d} \vee r\sqrt{\frac{d \log n}{n}}.
    \end{align}
\end{theorem}

\begin{proof}[Proof of Theorem \ref{thm: masked 2 restatement}]
    For brevity, we focus on the event where the results of Lemma \ref{lem: good event} hold.
    As a preparation, we first bound $\|M^{-1/2} - \Sigma^{-1/2}\|_2$.
    Since $M = D(\Sigma_x) + p \Delta(\Sigma)$,
    \begin{align}
        M = \nu^2 \qty(D(U^\star U^{\star \top}) + p\Delta(U^\star U^{\star \top})) + \Sigma.
    \end{align}
    Notice that $\|D(U^\star U^{\star \top})\|_2 = \max_{i} \|e_i^\top U^\star\|^2 = I(U^\star)$ and $\|\Delta(U^\star U^{\star \top})\|_2 \leq 2 \|U^\star U^{\star \top}\|_2 \lesssim 1$. Hence
    \begin{align}
        \|M - \Sigma\|_2 \lesssim \nu^2 \qty{ I(U^\star) \vee p }.
    \end{align}
    Since $p < \rho^{-2}$, we have
    \begin{align}
        \lambda_{\min}(M) \geq \sigma^2 - (\nu^2 I(U^\star) \vee \sigma^2) \gtrsim 1,
    \end{align}
    where the last inequality follows from Assumption \ref{asm: incoherent} and $n / \log n \gg d \gg r \log d$.
    Using Lemma 2.2 from \citet{schmitt1992perturbation},
    \begin{align}
        \|M^{1/2} - \Sigma^{1/2}\|_2 \leq \frac{1}{\lambda_{\min}^{1/2}(\Sigma)} \|M - \Sigma\|_2 \lesssim \rho \nu \qty(\frac{r \log d}{d} \vee p).
    \end{align}
    Therefore by Lemma \ref{lem: equivalence under SNR},
    \begin{align}
        \|M^{-1/2} - \Sigma^{-1/2}\|_2 &= \|M^{-1/2} (M^{1/2} - \Sigma^{1/2}) \Sigma^{-1/2}\|_2\\
        &\leq \|M^{-1/2}\|_2 \|M^{1/2} - \Sigma^{1/2}\|_2 \|\Sigma^{-1/2}\|_2\\
        &\lesssim \frac{1}{\|\Sigma\|_2^{1/2}} \qty(\frac{r \log d}{d} \vee p).
    \end{align}

    Next we bound $\|\sin\Theta( P_r(\Delta(\hat\Sigma_x)), P_r( \SVD_r(\Delta(\hat \Sigma_x) M^{-1/2}) M^{-1/2} ) )\|_F$.
    
    \begin{align}
        &\|\Delta(\hat\Sigma_x) \Sigma^{-1} - \SVD_r(\Delta(\hat \Sigma_x) M^{-1/2}) M^{-1/2}\|_2\\
        &\quad\leq \|\Delta(\hat\Sigma_x) \Sigma^{-1/2} (\Sigma^{-1/2} - M^{-1/2})\|_2\\
        &\quad\quad+ \|(\Delta(\hat\Sigma_x) - \Delta(\Sigma_x)) M^{-1}\|_2\\
        &\quad\quad+ \|(\Delta(\Sigma_x) M^{-1/2} - \SVD_r(\Delta(\Sigma_x) M^{-1/2})) M^{-1/2}\|_2\\
        &\quad\quad+ \|(\SVD_r(\Delta(\hat\Sigma_x) M^{-1/2}) - \SVD_r(\Delta(\Sigma_x) M^{-1/2})) M^{-1/2}\|_2.
    \end{align}
    We bound the terms on the right-hand side separately.
    By Lemma \ref{lem: equivalence under SNR},
    \begin{align}
        \|\Delta(\hat\Sigma_x) \Sigma^{-1/2} (\Sigma^{-1/2} - M^{-1/2})\|_2 &\lesssim \|\Sigma_x\|_2 \|\Sigma^{-1/2}\|_2 \|\Sigma^{-1/2} - M^{-1/2}\|_2\\
        &\lesssim \frac{r \log d}{d} \vee p.
    \end{align}
    Also,
    \begin{align}
        \|(\Delta(\hat\Sigma_x) - \Delta(\Sigma_x)) M^{-1}\|_2 &\leq \|\hat\Sigma_x - \Sigma_x\|_2 \|M^{-1}\|_2 \lesssim \sqrt{\frac{d \log n}{n}}.
    \end{align}
    For the third term,
    \begin{align}
        &\|(\Delta(\Sigma_x) M^{-1/2} - \SVD_r(\Delta(\Sigma_x) M^{-1/2})) M^{-1/2}\|_2\\
        &\quad\leq \nu^2 \|(U^\star U^{\star \top} - D(U^\star U^{\star \top})) M^{-1/2} - \SVD_r( (U^\star U^{\star \top} - D(U^\star U^{\star \top})) M^{-1/2} )\|_2 \|M^{-1/2}\|_2\\
        &\quad\leq \nu^2 \|U^\star U^{\star \top} M^{-1/2} - \SVD_r(U^\star U^{\star \top} M^{-1/2})\|_2 \|M^{-1/2}\|_2\\
        &\quad\quad+ \nu^2 \|\SVD_r(U^\star U^{\star \top} M^{-1/2}) - \SVD_r(U^\star U^{\star \top} M^{-1/2} - D(U^\star U^{\star \top}) M^{-1/2})\|_2 \|M^{-1/2}\|_2\\
        &\quad\quad+ \nu^2 \|D(U^\star U^{\star \top})\|_2 \|M^{-1}\|_2.
    \end{align}
    Note that the first term in the right hand side is $0$, since $U^\star U^{\star \top} M^{-1/2}$ is rank-$r$. The third term in the right hand side can be bounded as $\nu^2 \|D(U^\star U^{\star \top})\|_2 \|M^{-1}\|_2 \leq \nu^2 \max_i \|e_i^\top U^\star\|^2 \|M^{-1}\|_2 \lesssim I(U^\star)$ under Assumption \ref{asm: SNR}.
    For the second term on the right-hand side,
    \begin{align}
        \frac{\lambda_r(U^\star U^{\star \top} M^{-1/2}) - \lambda_{r+1}(U^\star U^{\star \top} M^{-1/2})}{\|U^\star U^{\star \top} M^{-1/2}\|_2} \geq \frac{\lambda_{\min}(M^{-1/2})}{\|M^{-1/2}\|_2} \gtrsim 1,
    \end{align}
    follows from Lemma \ref{lem: equivalence under SNR}. Moreover,
    \begin{align}
        \|U^\star U^{\star \top} M^{-1/2} - D(U^\star U^{\star \top}) M^{-1/2} - U^\star U^{\star \top} M^{-1/2}\|_2 &\leq \|D(U^\star U^{\star \top})\|_2 \|M^{-1/2}\|_2\\
        &= I(U^\star) \|M^{-1/2}\|_2\\
        &\lesssim \frac{d \log n}{n} \|M^{-1/2}\|_2\\
        &\lesssim \lambda_{\min}(M^{-1/2}).
    \end{align}
    where the last inequality follows from Lemma \ref{lem: equivalence under SNR} and Assumption $n / \log n \gg d \gg r \log d$.
    By Lemmas \ref{lem: rank-r diff} and \ref{lem: equivalence under SNR},
    \begin{align}
        &\nu^2 \|\SVD_r(U^\star U^{\star \top} M^{-1/2}) - \SVD_r(U^\star U^{\star \top} M^{-1/2} - D(U^\star U^{\star \top}) M^{-1/2})\|_2 \|M^{-1/2}\|_2\\
        &\quad\lesssim \nu^2 \sqrt{r} \|D(U^\star U^{\star \top}) M^{-1/2}\|_2 \|M^{-1/2}\|_2 \lesssim \frac{r^{3/2} \log d}{d}.
    \end{align}
    
    From Lemmas \ref{lem: equivalence under SNR}, \ref{lem: good event} and $n / \log n \gg d \gg r \log d$, we have $\|\Delta(\hat\Sigma_x) M^{-1/2} - \Delta(\Sigma_x) M^{-1/2}\|_2 \lesssim \|\Delta(\Sigma_x) M^{-1/2}\|_2$.
    Using Lemmas \ref{lem: rank-r diff}, \ref{lem: equivalence under SNR},
    \begin{align}
        &\|(\SVD_r(\Delta(\hat\Sigma_x) M^{-1/2}) - \SVD_r(\Delta(\Sigma_x) M^{-1/2})) M^{-1/2}\|_2\\ &\quad\lesssim \sqrt{r} \|\Delta(\hat\Sigma_x) M^{-1/2} - \Delta(\Sigma_x) M^{-1/2}\|_2 \|M^{-1/2}\|_2\\
        &\quad\lesssim \sqrt{r} \|\Sigma_x\|_2 \|M^{-1}\|_2 \sqrt{\frac{d \log n}{n}} \asymp \sqrt{\frac{r d \log n}{n}},
    \end{align}
    where the last equality follows from Lemma \ref{lem: equivalence under SNR}.
    
    In summary,
    \begin{align}
        \|\Delta(\hat\Sigma_x) \Sigma^{-1} - \SVD_r(\Delta(\hat \Sigma_x) M^{-1/2}) M^{-1/2}\|_2 \lesssim p \vee \frac{r^{3/2} \log d}{d} \vee \sqrt{\frac{r d \log n}{n}}.
    \end{align}
    By Theorem 3 from \citet{yu2015useful},
    \begin{align}
         &\|\sin\Theta( P_r(\Delta(\hat\Sigma_x) \Sigma^{-1}), P_r( \SVD_r(\Delta(\hat \Sigma_x) M^{-1/2}) M^{-1/2} ) )\|_F\\
         &\quad\lesssim \sqrt{r} \frac{\|\SVD_r(\Delta(\hat \Sigma_x) M^{-1/2}) M^{-1/2}\|_2 + \|\Delta(\hat\Sigma_x) \Sigma^{-1} - \SVD_r(\Delta(\hat \Sigma_x) M^{-1/2}) M^{-1/2}\|_2}{\lambda_r^2(\SVD_r(\Delta(\hat \Sigma_x) M^{-1/2}) M^{-1/2}) - \lambda_{r+1}^2(\SVD_r(\Delta(\hat \Sigma_x) M^{-1/2}) M^{-1/2})}\\
         &\quad\quad\times \|\Delta(\hat\Sigma_x) \Sigma^{-1} - \SVD_r(\Delta(\hat \Sigma_x) M^{-1/2}) M^{-1/2}\|_2\\
         &\quad\lesssim \sqrt{r} \frac{1}{\lambda_r^2(\SVD_r(\Delta(\hat \Sigma_x) M^{-1/2}) M^{-1/2})} \|\Delta(\hat\Sigma_x) \Sigma^{-1} - \SVD_r(\Delta(\hat \Sigma_x) M^{-1/2}) M^{-1/2}\|_2\\
         &\quad\lesssim \sqrt{r} \qty(p \vee \frac{r^{3/2} \log d}{d} \vee \sqrt{\frac{r d \log n}{n}}),
    \end{align}
    where the second and the third inequality follow from 
    \begin{align}
        \|\SVD_r(\Delta(\hat \Sigma_x) M^{-1/2}) M^{-1/2}\|_2 &\gtrsim \lambda_r(\Delta(\hat \Sigma_x) M^{-1/2}) \lambda_{\min}(M^{-1/2})\\
        &\gtrsim \|\Delta(\Sigma_x) M^{-1/2}\|_2 \|M^{-1/2}\|_2\\
        &\gtrsim \|\Sigma_x\|_2 \|M^{-1}\|_2 \asymp 1\\
        &\gtrsim \|\Delta(\hat\Sigma_x) \Sigma^{-1} - \SVD_r(\Delta(\hat \Sigma_x) M^{-1/2}) M^{-1/2}\|_2
    \end{align}
    by Lemmas \ref{lem: equivalence under SNR} and \ref{lem: good event}.
    
    Theorem \ref{thm: masked 1} and $P_r(\Delta(\hat\Sigma_x) \Sigma^{-1}) = P_r(\Delta(\hat\Sigma_x))$ by Assumption $\Sigma = \sigma^2 I_d$ give
    \begin{align}
         &\|\sin\Theta( P_r(W_2), P_r( \Delta(\hat\Sigma_x) )\|_F \lesssim \sqrt{r} \qty(p \vee \frac{r^{3/2} \log d}{d} \vee \sqrt{\frac{r d \log n}{n}}).
    \end{align}
    The bound of $\|\sin\Theta( P_r(W_2), P_r( \Delta(\hat\Sigma_x) )\|_F$ follows from a similar argument as in the proof of Theorem \ref{thm: masked 1}.
\end{proof}

Theorem \ref{thm: masked 2} states that under certain conditions, masked autoencoders can be seen as an extension of diagonal-deletion PCA. Thus we can apply our technique developed so far to analyze masked autoencoders. $p$ in the bias term arises since $p$ determines the approximation accuracy of $\Sigma$ by $M$.
In fact, we have the following theorem.
\begin{theorem}[Feature Learning via Masked Autoencoders 3]\label{thm: masked 3}
    Suppose Assumptions \ref{asm: incoherent}, \ref{asm: regular} and $n / \log n \gg d \gg r \log d$ hold and $\Sigma = \sigma^2 I_d$. Then,
    \begin{align}
        \mathbb{E} \|\sin\Theta( P_r(W_2), U^\star )\|_F \lesssim \sqrt{r} p \vee \frac{r^2 \log d}{d} \vee r\sqrt{\frac{d \log n}{n}}
    \end{align}
    holds with probability $1 - O(n^{-1})$.
\end{theorem}

\begin{proof}[Proof of Theorem \ref{thm: masked 3}]
    For brevity, we focus on the event where the results of Lemma \ref{lem: good event} hold.
    We bound the term $\|\sin\Theta(P_r(\Delta(\hat\Sigma_x)), U^\star)\|_F$. 
    From \eqref{eq: delta sigma x}, we have 
    \begin{align}
        \|\Delta(\hat\Sigma_x) - \nu^2 U^\star U^{\star \top}\|_2 &\lesssim \|\Sigma_x\|_2 \sqrt{\frac{d \log n}{n}} + \nu^2 \frac{r \log d}{d}.
    \end{align}
    By Theorem 3 from \citet{yu2015useful},
    \begin{align}
        \|\sin\Theta(P_r(\Delta(\hat\Sigma_x)), U^\star)\|_F &\lesssim \sqrt{r} \frac{\|\nu^2 U^\star U^{\star \top}\|_2 + \|\Delta(\hat\Sigma_x) - \nu^2 U^\star U^{\star \top}\|_2}{\lambda_r^2(\nu^2 U^\star U^{\star \top})} \|\Delta(\hat\Sigma_x) - \nu^2 U^\star U^{\star \top}\|_2\\
        &\lesssim \sqrt{r} \frac{\nu^2 + \sigma^2}{\nu^4} \qty(\|\Sigma_x\|_2 \sqrt{\frac{d \log n}{n}} + \nu^2 \frac{r \log d}{d})\\
        &\lesssim \sqrt{\frac{r d \log n}{n}} + \frac{r^{3/2} \log d}{d},
    \end{align}
    where the second inequality follows from Lemma \ref{lem: good event} and the third inequality follows from Assumption \ref{asm: SNR}.
    
    Combined with Theorem \ref{thm: masked 2}, we obtain
    \begin{align}
        \|\sin\Theta( P_r(W_2), P_r( \Delta(\hat\Sigma_x) )\|_F \lesssim \sqrt{r} \qty(p \vee \frac{r^{3/2} \log d}{d} \vee \sqrt{\frac{r d \log n}{n}}).
    \end{align}
    The conclusion follows from a similar argument as in the proof of Theorem \ref{thm: masked 1}.
\end{proof}

\begin{lemma}\label{lem: equivalence under SNR}
    Suppose Assumptions \ref{asm: SNR} and \ref{asm: regular} hold and $n / \log n \gg d \gg r \log d$. Then, there exists some constant $c' > 0$ such that the following relations hold:
    \begin{align}
        \|\Sigma\|_2 \asymp \|\Sigma_x\|_2 &\asymp \|M\|_2,\\
        \frac{\lambda_{\max}(\Sigma_x)}{\lambda_{\min}(\Sigma_x)} &\lesssim 1,\\
        \frac{\lambda_{\max}(M)}{\lambda_{\min}(M)} &\lesssim 1,\\
        \frac{\|\Delta(\Sigma_x) M^{-1/2}\|_2}{\|\Sigma_x\|_2 \|M^{-1/2}\|_2} &\gtrsim 1,\\
        \frac{\lambda_{\max}(\Delta(U^\star U^{\star \top}) M^{-1/2})}{\lambda_r(\Delta(U^\star U^{\star \top}) M^{-1/2})} &\lesssim 1,\\
        \frac{\lambda_r(\Delta(\Sigma_x) M^{-1/2}) - \lambda_{r+1}(\Delta(\Sigma_x) M^{-1/2})}{\lambda_{\max}(\Delta(\Sigma_x) M^{-1/2})} &\geq c'.
    \end{align}
\end{lemma}

\begin{proof}[Proof of Lemma \ref{lem: equivalence under SNR}]
    From Assumption \ref{asm: SNR},
    \begin{align}
        \|\Sigma_x\|_2 \leq \nu^2 + \|\Sigma\|_2 = \|\Sigma\|_2 (\rho^2 + 1) = O(\|\Sigma\|_2).
    \end{align}
    Also since $\Sigma_x \succeq \Sigma$, $\|\Sigma_x\|_2 \geq \|\Sigma\|_2$. These give $\|\Sigma_x\|_2 = \Theta(\|\Sigma\|_2)$.
    When $p \leq 1/2$, 
    \begin{align}
        M = p \Sigma_x + (1 - p) D(\Sigma_x) \succeq (1 - p) D(\Sigma_x) \succeq \frac{1}{2} \Sigma.\label{eq: proof of equivalence 1}
    \end{align}
    When $p > 1/2$,
    \begin{align}
        M = p \Sigma_x + (1 - p) D(\Sigma_x) \succeq \frac{1}{2} \Sigma_x.\label{eq: proof of equivalence 2}
    \end{align}
    This gives $\|M\|_2 \gtrsim \|\Sigma\|_2$ or $\|M\|_2 \gtrsim \|\Sigma_x\|_2$.
    On the other hand, $\|M\|_2 = \|\Sigma_x - (1 - p) \Delta(\Sigma_x)\|_2 \leq 3\|\Sigma_x\|_2$ by Lemma \ref{Lemma: Delta}.
    Therefore $\|M\|_2 \asymp \|\Sigma\|_2 \asymp \|\Sigma_x\|_2$.
    
    From Assumption \ref{asm: SNR},
    \begin{align}
        \frac{\lambda_{\max}(\Sigma_x)}{\lambda_{\min}(\Sigma_x)} \leq \frac{\nu^2 + \|\Sigma\|_2}{\nu^2} = 1 + \rho^{-2} = O(1).
    \end{align}
    
    By \eqref{eq: proof of equivalence 1} and \eqref{eq: proof of equivalence 2}, we have 
    \begin{align}
        \lambda_{\min}(M) \geq \frac{1}{2} \lambda_{\min}(\Sigma_x) \wedge \frac{1}{2} \lambda_{\min}(\Sigma) \geq \frac{1}{2} \lambda_{\min}(\Sigma).
    \end{align}
    Combined with Assumptions \ref{asm: SNR} and \ref{asm: regular}, we obtain
    \begin{align}
        \frac{\lambda_{\max}(M)}{\lambda_{\min}(M)} \leq 6 \frac{\|\Sigma_x\|_2}{\lambda_{\min}(\Sigma)} \leq 6 \kappa \frac{\nu^2 + \|\Sigma\|_2}{\|\Sigma\|_2} = 6\kappa(1 + \rho^2) = O(1).\label{eq: proof of equivalence 3}
    \end{align}
    
    Note that by Assumption \ref{asm: incoherent},
    \begin{align}
        \|\Delta(\Sigma_x) M^{-1/2} - \nu^2 U^\star U^{\star \top} M^{-1/2}\|_2 &= \|\nu^2 D(U^\star U^{\star \top}) M^{-1/2}\|_2\\
        &\leq \nu^2 \max_i \|e_i^\top U^\star\|^2 \|M^{-1/2}\|_2\\
        &\lesssim \nu^2 \frac{r \log d}{d} \|M^{-1/2}\|_2.\label{eq: proof of lem: supermultiplicativity 1}
    \end{align}
    Thus under assumption $n / \log n \gg d \gg r \log d$, we obtain
    \begin{align}
        \frac{\|\Delta(\Sigma_x) M^{-1/2}\|_2}{\|\Sigma_x\|_2 \|M^{-1/2}\|_2} \geq \frac{\nu^2 (\|U^\star U^{\star \top} M^{-1/2}\|_2 - C (r/d) \log d \|M^{-1/2}\|_2)}{\|\Sigma_x\|_2 \|M^{-1/2}\|_2}
    \end{align}
    for some constant $C > 0$.
    Since 
    \begin{align}
        \|U^\star U^{\star \top} M^{-1/2}\|_2 \geq \lambda_{\min}(M^{-1/2}) \asymp \|M^{-1/2}\|_2\label{eq: proof of lem: supermultiplicativity 2}
    \end{align}
    by Lemma \ref{lem: equivalence under SNR}, we have
    \begin{align}
        \frac{\|\Delta(\Sigma_x) M^{-1/2}\|_2}{\|\Sigma_x\|_2 \|M^{-1/2}\|_2} \gtrsim \frac{\nu^2 \|M^{-1/2}\|_2}{\nu^2 (1 + \rho^{-2}) \|M^{-1/2}\|_2} \asymp 1
    \end{align}
    under assumption $n / \log n \gg d \gg r \log d$.
    
    Also, from \eqref{eq: proof of lem: supermultiplicativity 1}, there exists some constant $C > 0$ such that
    \begin{align}
        \frac{\lambda_{\max}(\Delta(U^\star U^{\star \top}) M^{-1/2})}{\lambda_r(\Delta(U^\star U^{\star \top}) M^{-1/2})} 
        &\leq \frac{\lambda_{\max}(U^\star U^{\star \top} M^{-1/2}) + C \nu^2 (r/d) \log d \|M^{-1/2}\|_2}{\lambda_r(U^\star U^{\star \top} M^{-1/2}) - C \nu^2 (r/d) \log d \|M^{-1/2}\|_2}\\
        &\leq \frac{\lambda_{\max}(M^{-1/2}) + C \nu^2 (r/d) \log d \|M^{-1/2}\|_2}{\lambda_{\min}(M^{-1/2}) - C \nu^2 (r/d) \log d \|M^{-1/2}\|_2}\\
        &\lesssim 1,
    \end{align}
    where the last inequality follows from $n / \log n \gg d \gg r \log d$ and Lemma \ref{lem: equivalence under SNR}.
    
    Note that by \eqref{eq: proof of lem: supermultiplicativity 1}, we have $|\lambda_i(\Delta(\Sigma_x) M^{-1/2}) - \nu^2 \lambda_i(U^\star U^{\star \top} M^{-1/2})| \lesssim \nu^2 \|M^{-1/2}\|_2 (r/d) \log d$ for all $i \in [d]$.
    Under assumption $n / \log n \gg d \gg r \log d$, this implies that
    \begin{align}
        &\frac{\lambda_r(\Delta(\Sigma_x) M^{-1/2}) - \lambda_{r+1}(\Delta(\Sigma_x) M^{-1/2})}{\lambda_{\max}(\Delta(\Sigma_x) M^{-1/2})}\\
        &\quad\geq \frac{\lambda_r(U^\star U^{\star \top} M^{-1/2}) - \lambda_{r+1}(U^\star U^{\star \top} M^{-1/2}) - 2\|M^{-1/2}\|_2 (r/d) \log d}{\lambda_{\max}(U^\star U^{\star \top} M^{-1/2}) + \|M^{-1/2}\|_2 (r/d) \log d}\\
        &\quad\gtrsim \frac{\lambda_r(U^\star U^{\star \top} M^{-1/2})}{\lambda_{\max}(U^\star U^{\star \top} M^{-1/2})},
    \end{align}
    where the second inequality follows from $\lambda_{\max}(U^\star U^{\star \top} M^{-1/2}) \geq \lambda_r(U^\star U^{\star \top} M^{-1/2}) \geq \lambda_{\min}(M^{-1/2}) \asymp \|M^{-1/2}\|_2$.
    Furthermore,
    \begin{align}
        \frac{\lambda_r(U^{\star \top} M^{-1/2} U^\star)}{\lambda_{\max}(U^{\star \top} M^{-1/2} U^\star)} \geq \frac{\lambda_{\min}(M^{-1/2})}{\lambda_{\max}(M^{-1/2})} \gtrsim 1,
    \end{align}
    where the last inequality is from \eqref{eq: proof of equivalence 3}. This gives the last statement of the theorem.
\end{proof}

\begin{lemma}\label{lem: good event}
    Suppose Assumption \ref{asm: regular} holds and $n / \log n \gg d \gg r \log d$.
    There exists an event $E$ satisfying $P(E) \geq 1 - O(n^{-1})$ such that the followings hold on the event $E$:
    \begin{align}
        \|\Delta(\hat\Sigma_x) - \Delta(\Sigma_x)\|_2 &\lesssim \|\Sigma_x\|_2 \sqrt{\frac{d \log n}{n}},\label{eq: lem: good event 1}\\
        \|\hat M - M\|_2 &\lesssim \|\Sigma_x\|_2 \sqrt{\frac{d \log n}{n}}\label{eq: lem: good event 2},\\
        \|\hat M^{-1/2} - M^{-1/2}\|_2 &\lesssim \|\Sigma_x\|_2^{-1/2} \sqrt{\frac{d \log n}{n}} \lesssim \|M^{-1/2}\|_2,\\
        \|\Delta(\hat \Sigma_x) M^{-1/2} - \Delta(\Sigma_x) M^{-1/2}\|_2 &\lesssim \|\Sigma_x\|_2^{1/2} \sqrt{\frac{d \log n}{n}},\\
        \frac{\lambda_{\max}(\hat M)}{\lambda_{\min}(\hat M)} &\asymp \frac{\lambda_{\max}(M)}{\lambda_{\min}(M)},\\
        \frac{\lambda_{\max}(\Delta(\hat \Sigma_x) M^{-1/2})}{\lambda_{r}(\Delta(\hat \Sigma_x) M^{-1/2})} &\asymp \frac{\lambda_{\max}(\Delta(\Sigma_x) M^{-1/2})}{\lambda_{r}(\Delta(\Sigma_x) M^{-1/2})},\\
        \|\Delta(\hat \Sigma_x)\|_2 &\asymp \|\Sigma_x\|_2.
    \end{align}
\end{lemma}

\begin{proof}
    We borrow the following result from \citet{vershynin2012close} that
    \begin{align}
        \|\hat \Sigma_x - \Sigma_x\|_2 \lesssim \|\Sigma_x\|_2 \sqrt{\frac{d \log n}{n}}\label{eq: proof of lem: good event 1}
    \end{align}
    holds with probability $1 - O(n^{-1})$. For brevity, we focus on the event where \eqref{eq: proof of lem: good event 1} holds.
    
    The first inequality in Lemma \ref{lem: good event} follows from Lemma \ref{Lemma: Delta}.
    
    By definition,
    \begin{align}
        \|\hat M - M\|_2 &\leq \|(\hat \Sigma_x - \Sigma_x) - (1 - p) \Delta(\hat \Sigma_x - \Sigma_x)\|_2\\
        &\leq 3 \|\hat \Sigma_x - \Sigma_x\|_2\\
        &\lesssim \|\Sigma_x\|_2 \sqrt{\frac{d \log n}{n}}.\label{eq: M hat rate}
    \end{align}
    
    From \eqref{eq: proof of equivalence 1} and \eqref{eq: proof of equivalence 2},
    $\lambda_{\min}(M) \geq (1/2) \lambda_{\min}(\Sigma)$.
    This gives
    \begin{align}
        \lambda_{\min}(\hat M) &\geq \lambda_{\min}(M) - \|\hat M - M\|_2\\
        &\geq \frac{1}{2} \lambda_{\min}(\Sigma) - (\nu^2 + \|\Sigma\|_2) \sqrt{\frac{d \log n}{n}}\\
        &\gtrsim \|\Sigma\|_2,
    \end{align}
    where the last inequality follows from Assumption \ref{asm: regular} and $n / \log n \gg d \gg r \log d$.
    
    Using Lemma 2.2 from \citet{schmitt1992perturbation}, we obtain
    \begin{align}
        \|\hat M^{1/2} - M^{1/2}\|_2 &\lesssim \frac{\|\hat M - M\|_2}{\lambda_{\min}^{1/2}(M)} \lesssim \frac{\|\Sigma_x\|_2}{\lambda_{\min}^{1/2}(\Sigma)} \sqrt{\frac{d \log n}{n}} \lesssim \|\Sigma_x\|_2^{1/2} \sqrt{\frac{d \log n}{n}},\label{eq: sqrt M hat rate}
    \end{align}
    where the last inequality follows again from Assumption \ref{asm: regular} and $n / \log n \gg d \gg r \log d$.

    Using $\hat M^{-1/2} - M^{-1/2} = \hat M^{-1/2} (M^{1/2} - \hat M^{1/2}) M^{-1/2}$ and \eqref{eq: sqrt M hat rate},
    \begin{align}
        \|\hat M^{-1/2} - M^{-1/2}\|_2 \leq \frac{\|\hat M^{1/2} - M^{1/2}\|_2}{\lambda_{\min}(\hat M^{1/2}) \lambda_{\min}(M^{1/2})} \lesssim \frac{1}{\lambda_{\min}^{1/2}(\Sigma_x)} \sqrt{\frac{d \log n}{n}}.
    \end{align}
    The inequality $\|\Sigma_x\|_2^{-1/2} \sqrt{d \log n / n} \lesssim \|M^{-1/2}\|_2$ follows from the fact that $\lambda_{\min}(M) \leq \nu^2 + \|\Sigma\|_2 \lesssim \|\Sigma_x\|_2$ under Assumption \ref{asm: SNR}.
    
    The inequality $\|\Delta(\hat \Sigma_x) M^{-1/2} - \Delta(\Sigma_x) M^{-1/2}\|_2 \lesssim \|\Sigma_x\|_2^{1/2} \sqrt{d \log n / n}$ follows from Lemma \ref{lem: equivalence under SNR} and \eqref{eq: proof of lem: good event 1}.
    
    The equality $\lambda_{\max}(\hat M)/\lambda_{\min}(\hat M) \asymp \lambda_{\max}(M) /\lambda_{\min}(M)$ and $\lambda_{\max}(\Delta(\hat \Sigma_x) M^{-1/2})/\lambda_{r}(\Delta(\hat \Sigma_x) M^{-1/2}) \asymp \lambda_{\max}(\Delta(\Sigma_x) M^{-1/2})/\lambda_{r}(\Delta(\Sigma_x) M^{-1/2})$ follows from \eqref{eq: lem: good event 2}, \eqref{eq: lem: good event 1}, $\|\Sigma_x\|_2 \asymp \|M\|_2 \asymp \lambda_{\min}(M)$ and $n / \log n \gg d \gg r \log d$.
    
    Observe that under Assumption \ref{asm: SNR} and $n / \log n \gg d \gg r \log d$,
    \begin{align}
        \|\Delta(\hat\Sigma_x) - \nu^2 U^\star U^{\star \top}\|_2 &= \|\Delta(\hat\Sigma_x - \Sigma_x) - \nu^2 D(U^\star U^{\star \top})\|_2\\
        &\leq 2\|\hat\Sigma_x - \Sigma_x\|_2 + \nu^2 \max_i \|e_i^\top U^\star\|^2\\
        &\lesssim \|\Sigma_x\|_2 \sqrt{\frac{d \log n}{n}} + \nu^2 \frac{r \log d}{d}\\
        &\lesssim \|\Sigma_x\|_2.\label{eq: delta sigma x}
    \end{align}
    Hence $\|\Delta(\hat\Sigma_x)\|_2 \asymp \|\Sigma_x\|_2$ follows from $\|\nu^2 U^\star U^{\star \top}\|_2 \asymp \|\Sigma_x\|_2$.
\end{proof}

\begin{lemma}\label{lem: rank-r diff}
    Assume $\|A - B\|_2 \lesssim \|B\|_2$ and $\lambda_r(B) - \lambda_{r+1}(B) > 0$. Then,
    \begin{align}
        \|\SVD_r(A) - \SVD_r(B)\|_F &\leq \|A - B\|_F \qty( 1 + \|B\|_2^2 \frac{\|A\|_2\lambda_{r+1}(B) + \|B\|_2\lambda_{r+1}(A)}{(\lambda_r^2(B) - \lambda_{r+1}^2(B))^2} )^{1/2},\\
        \|\SVD_r(A) - \SVD_r(B)\|_2 &\lesssim \|A - B\|_2 \qty(1 + r \|B\|_2^2 \frac{\|A\|_2\lambda_{r+1}(B) + \|B\|_2\lambda_{r+1}(A)}{(\lambda_r^2(B) - \lambda_{r+1}^2(B))^2} )^{1/2}.\label{eq: lem: rank-r diff 1}
    \end{align}
    Furthermore, if $( \lambda_r(B) - \lambda_{r+1}(B) ) / \|B\|_2 > c$ for some constant $c > 0$, then
    \begin{align}
        \|\SVD_r(A) - \SVD_r(B)\|_2 &\lesssim \sqrt{r} \|A - B\|_2 \frac{\|B\|_2}{\lambda_r(B)}.
    \end{align}
\end{lemma}

\begin{proof}
    Write $A = U_1 \Lambda_1 V_1^\top + U_{1\perp} \Lambda_{1\perp} V_{1\perp}^\top$, $B = U_2 \Lambda_2 V_2^\top + U_{2\perp} \Lambda_{2\perp} V_{2\perp}^\top$.
    By assumption,
    \begin{align}
        \|A - B\|_F^2 &= \|(U_1 \Lambda_1 V_1^\top - U_2 \Lambda_2 V_2^\top) + (U_{1\perp} \Lambda_{1\perp} V_{1\perp}^\top - U_{2\perp} \Lambda_{2\perp} V_{2\perp}^\top)\|_F^2\\
        &= \|U_1 \Lambda_1 V_1^\top - U_2 \Lambda_2 V_2^\top\|_F^2 + \|U_{1\perp} \Lambda_{1\perp} V_{1\perp}^\top - U_{2\perp} \Lambda_{2\perp} V_{2\perp}^\top\|_F^2\\
        &\quad- 2\tr( V_1\Lambda_1 U_1^\top U_{2\perp} \Lambda_{2\perp} V_{2\perp}^\top + V_2\Lambda_2 U_2^\top U_{1\perp} \Lambda_{1\perp} V_{1\perp}^\top).
    \end{align}
    Thus,
    \begin{align}
        \|U_1 \Lambda_1 V_1^\top - U_2 \Lambda_2 V_2^\top\|_F^2 &\leq \|A - B\|_F^2 + 2\tr( V_{2\perp}^\top V_1\Lambda_1 U_1^\top U_{2\perp} \Lambda_{2\perp} + V_{1\perp}^\top V_2\Lambda_2 U_2^\top U_{1\perp} \Lambda_{1\perp})\\
        &\leq \|A - B\|_F^2 + 2 \|V_{2\perp}^\top V_1\|_F \|U_{2\perp}^\top U_1\|_F 
        \|\Lambda_1\|_2 \|\Lambda_{2\perp}\|_2 + 2 \|V_{1\perp}^\top V_2\|_F \|U_{1\perp}^\top U_2\|_F 
        \|\Lambda_2\|_2 \|\Lambda_{1\perp}\|_2\\
        &= \|A - B\|_F^2 + 2 \|\sin\Theta(V_2, V_1)\|_F \|\sin\Theta(U_2, U_1)\|_F 
        (\lambda_{\max}(A) \lambda_{r+1}(B) + \lambda_{\max}(B) \lambda_{r+1}(A)),\label{eq: rank-r diff frobenius}
    \end{align}
    where we used $\tr(AB) \leq \|A\|_F\|B\|_F$ and $\|AB\|_F \leq \|A\|_2 \|B\|_F$ in the second inequality.
    
    From Davis-Kahan's $\sin\Theta$ theorem, we have
    \begin{align}
        \|\sin\Theta(V_2, V_1)\|_F \vee \|\sin\Theta(U_2, U_1)\|_F &\lesssim (\|B\|_2 + \|A - B\|_2)\frac{\|A - B\|_F}{\lambda_r^2(B) - \lambda_{r+1}^2(B)}.
    \end{align}
    Combined with \eqref{eq: rank-r diff frobenius} as well as $\|A - B\|_2 \lesssim \|B\|_2$, we obtain
    \begin{align}
        \|U_1 \Lambda_1 V_1^\top - U_2 \Lambda_2 V_2^\top\|_F^2 \lesssim \|A - B\|_F^2 \qty(1 + \|B\|^2 \frac{\|A\|_2\lambda_{r+1}(B) + \|B\|_2\lambda_{r+1}(A)}{(\lambda_r^2(B) - \lambda_{r+1}^2(B))^2} )
    \end{align}
    
    Next we derive \eqref{eq: lem: rank-r diff 1}. By assumption,
    \begin{align}
        \|A - B\|_2^2 &= \|(U_1 \Lambda_1 V_1^\top - U_2 \Lambda_2 V_2^\top) + (U_{1\perp} \Lambda_{1\perp} V_{1\perp}^\top - U_{2\perp} \Lambda_{2\perp} V_{2\perp}^\top)\|_2^2\\
        &= \|(U_1 \Lambda_1 V_1^\top - U_2 \Lambda_2 V_2^\top)^\top (U_1 \Lambda_1 V_1^\top - U_2 \Lambda_2 V_2^\top)\\
        &\quad+ (U_{1\perp} \Lambda_{1\perp} V_{1\perp}^\top - U_{2\perp} \Lambda_{2\perp} V_{2\perp}^\top)^\top (U_{1\perp} \Lambda_{1\perp} V_{1\perp}^\top - U_{2\perp} \Lambda_{2\perp} V_{2\perp}^\top)\\
        &\quad- 2 V_1\Lambda_1 U_1^\top U_{2\perp} \Lambda_{2\perp} V_{2\perp}^\top + V_2\Lambda_2 U_2^\top U_{1\perp} \Lambda_{1\perp} V_{1\perp}^\top\|_2\\
        &\geq \|(U_1 \Lambda_1 V_1^\top - U_2 \Lambda_2 V_2^\top)^\top (U_1 \Lambda_1 V_1^\top - U_2 \Lambda_2 V_2^\top)\\
        &\quad+ (U_{1\perp} \Lambda_{1\perp} V_{1\perp}^\top - U_{2\perp} \Lambda_{2\perp} V_{2\perp}^\top)^\top (U_{1\perp} \Lambda_{1\perp} V_{1\perp}^\top - U_{2\perp} \Lambda_{2\perp} V_{2\perp}^\top)\|_2\\
        &\quad- 2 \|V_1\Lambda_1 U_1^\top U_{2\perp} \Lambda_{2\perp} V_{2\perp}^\top + V_2\Lambda_2 U_2^\top U_{1\perp} \Lambda_{1\perp} V_{1\perp}^\top\|_2\\
        &\geq \|(U_1 \Lambda_1 V_1^\top - U_2 \Lambda_2 V_2^\top)^\top (U_1 \Lambda_1 V_1^\top - U_2 \Lambda_2 V_2^\top)\|_2\\
        &\quad- 2 \|V_1\Lambda_1 U_1^\top U_{2\perp} \Lambda_{2\perp} V_{2\perp}^\top + V_2\Lambda_2 U_2^\top U_{1\perp} \Lambda_{1\perp} V_{1\perp}^\top\|_2,
    \end{align}
    where the second last inequality follows from triangular inequality and the last inequality follows since $(U_{1\perp} \Lambda_{1\perp} V_{1\perp}^\top - U_{2\perp} \Lambda_{2\perp} V_{2\perp}^\top)^\top (U_{1\perp} \Lambda_{1\perp} V_{1\perp}^\top - U_{2\perp} \Lambda_{2\perp} V_{2\perp}^\top)$ is positive semi-definite.
    Thus,
    \begin{align}
        \|U_1 \Lambda_1 V_1^\top - U_2 \Lambda_2 V_2^\top\|_2^2 &\leq \|A - B\|_2^2 + 2\|V_{2\perp}^\top V_1\Lambda_1 U_1^\top U_{2\perp} \Lambda_{2\perp} + V_{1\perp}^\top V_2\Lambda_2 U_2^\top U_{1\perp} \Lambda_{1\perp}\|_2\\
        &\leq \|A - B\|_2^2 + 2 \|\sin\Theta(V_2, V_1)\|_2 \|\sin\Theta(U_2, U_1)\|_2
        (\lambda_{\max}(A) \lambda_{r+1}(B) + \lambda_{\max}(B) \lambda_{r+1}(A)),\label{eq: rank-r diff}.
    \end{align}
    
    From Davis-Kahan's $\sin\Theta$ theorem, we have
    \begin{align}
        \|\sin\Theta(V_2, V_1)\|_2 \vee \|\sin\Theta(U_2, U_1)\|_2 &\lesssim (\|B\|_2 + \|A - B\|_2)\frac{\sqrt{r}\|A - B\|_2}{\lambda_r^2(B) - \lambda_{r+1}^2(B)}.
    \end{align}
    Combined with \eqref{eq: rank-r diff} and $\|A - B\|_2 \lesssim \|B\|_2$, we obtain
    \begin{align}
        \|U_1 \Lambda_1 V_1^\top - U_2 \Lambda_2 V_2^\top\|_2^2 \lesssim \|A - B\|_2^2 \qty(1 + r \|B\|_2^2 \frac{\|A\|_2\lambda_{r+1}(B) + \|B\|_2\lambda_{r+1}(A)}{(\lambda_r^2(B) - \lambda_{r+1}^2(B))^2} ).
    \end{align}
    
    The last inequality follows from
    \begin{align}
        r \|B\|_2^2 \frac{\|A\|_2\lambda_{r+1}(B) + \|B\|_2\lambda_{r+1}(A)}{(\lambda_r^2(B) - \lambda_{r+1}^2(B))^2} &\leq 2r \|A\|_2 \|B\|_2^3 \frac{1}{(\lambda_r(B) - \lambda_{r+1}(B))^2 \lambda_r^2(B)}\\
        &\lesssim r \|B\|_2^2 \frac{1}{c \lambda_r^2(B)},
    \end{align}
    where the last inequality follows from $( \lambda_r(B) - \lambda_{r+1}(B) ) / \|B\|_2 > c$ and $\|A - B\|_2 \lesssim \|B\|_2$.
\end{proof}
\fi
Here we provide some experimental results about DAEs on synthetic datasets as analog to Figure \ref{fig: CL and AE} and \ref{fig: theta distance}, the settings are the same as described in Section \ref{sec: synthetic}. The results are summarized in Figure \ref{fig: MAE error}, as we can observe, the performance of DAEs is comparable with contrastive learning, which aligns with our theoretical results above.

\begin{figure}
	\centering
	\includegraphics[width=0.4\linewidth]{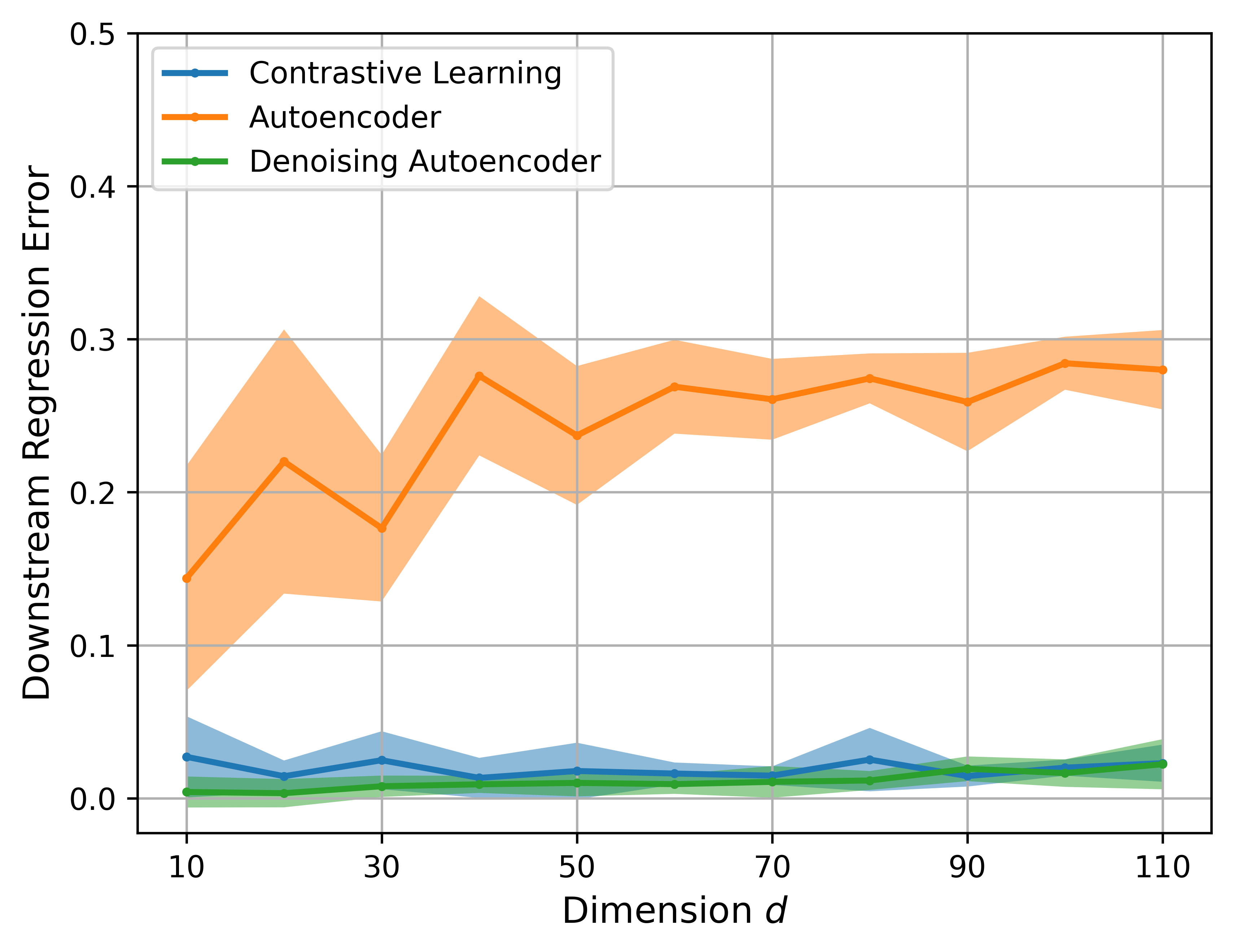}
    \includegraphics[width=0.39\linewidth]{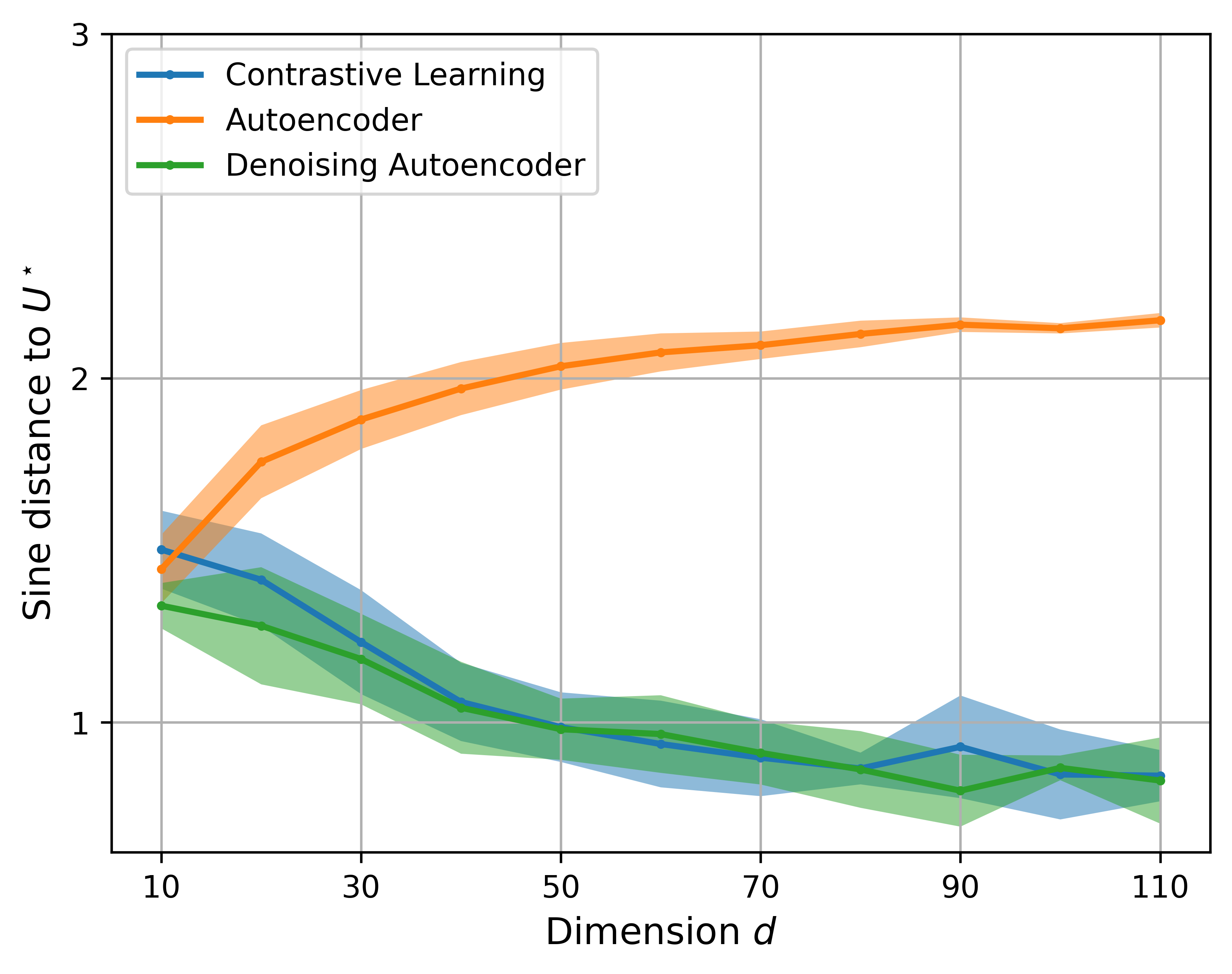}
    
    \includegraphics[width=0.4\linewidth]{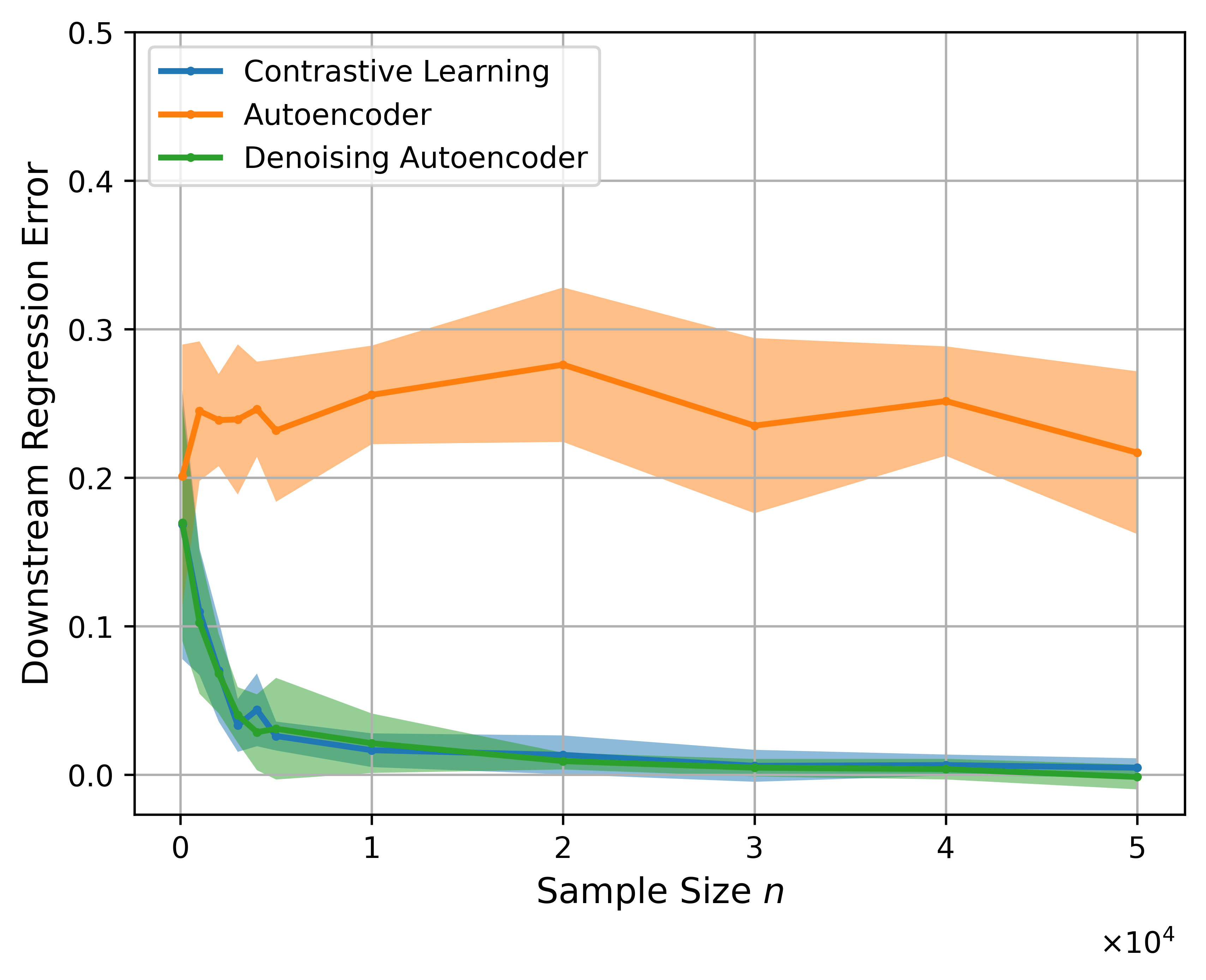}
    \includegraphics[width=0.39\linewidth]{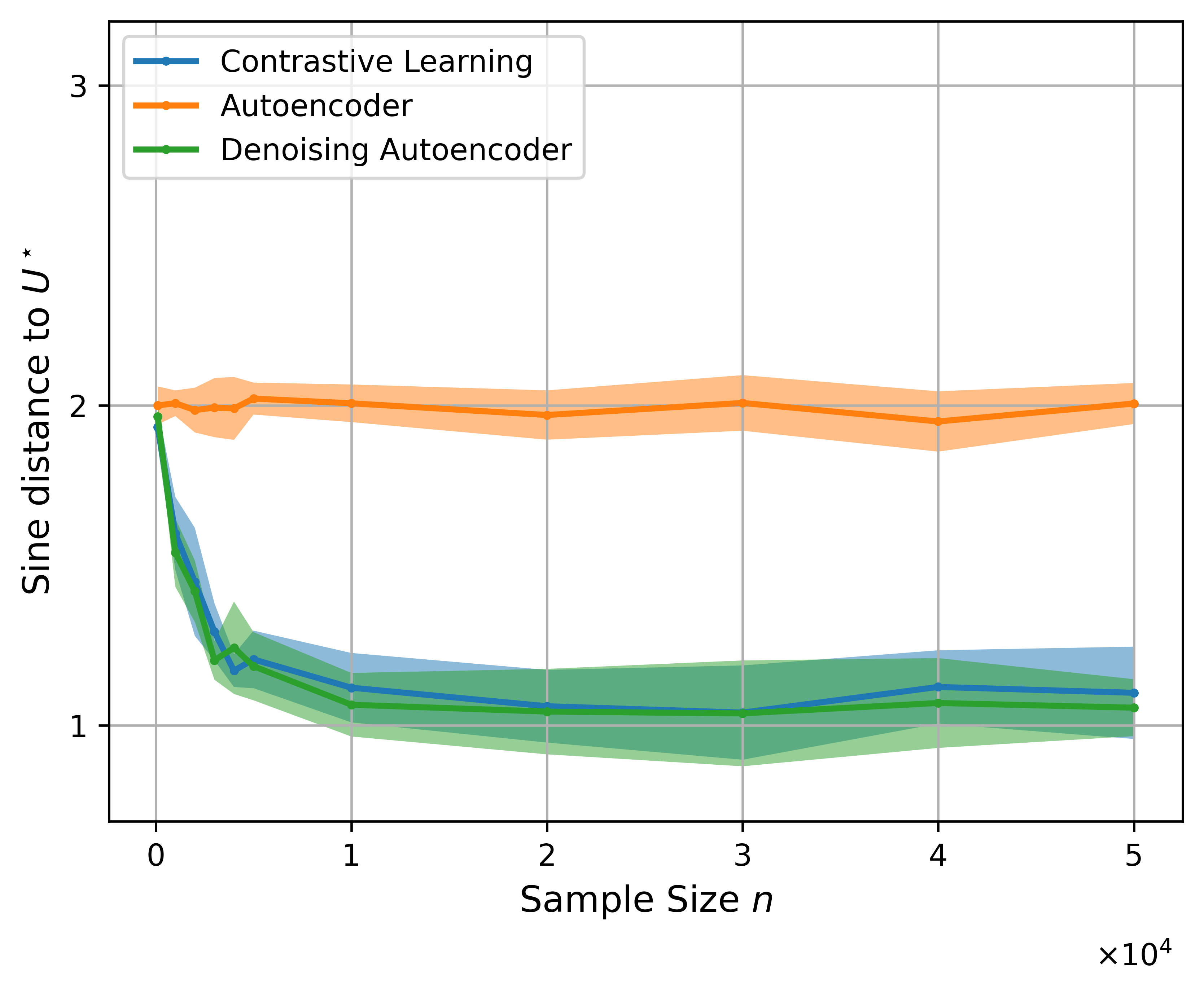}
    \caption{Comparison of denoising autoencoders, autoencoders, and contrastive learning on synthetic datasets. \textbf{Left Column:} The vertical axes indicate the downstream regression error. We subtract the regression error of the ground truth features to measure the excess error.
    \textbf{Top Row:} 
    Comparison of in-domain downstream task performance of  autoencoders, contrastive learning, and denoising autoencoders the dimension $d$. The sample size $n$ is set as $n=20000$. \textbf{Bottom Row:} Comparison of in-domain downstream task performance of autoencoders, contrastive learning, and denoising autoencoders the dimension $n$. The dimension $d$ is set as $d=40$. \vspace{-1.5em}}
	\label{fig: MAE error}
\end{figure}

\section{Omitted proofs for Section \ref{sec: labeled data}}

\subsection{Proofs for Section \ref{sec: classification}}
In this section, we will provide the proof of a generalized version of Theorem \ref{thm: SupCon} to cover the imbalanced setting, the statement and the detailed proof can be found in Theorem \ref{thm: general supcon}.

In the main body, we assume the unlabeled data and labeled data are both balanced for the sake of clarity and simplicity. Now we allow them to be imbalanced and provide a more general analysis. Suppose we have $n$ unlabeled data $X=[x_1,\cdots,x_n]\in\mathbb{R}^{d\times n}$ and $n_k$ labeled data $X_k=[x_k^1,\cdots,x_{k}^{n_k}]\in\mathbb{R}^{d\times n_k}$ for class $k$, the contrastive learning task can be formulated as:
\begin{equation}
	\label{sup contrastive task ap}
	\min_{W\in\mathbb{R}^{r\times d}}\mathcal{L}(W):=\min_{W\in\mathbb{R}^{r\times d}}\mathcal{L}_{\text{SelfCon}}(W)+\mathcal{L}_{\text{SupCon}}(W;\alpha).
\end{equation}
In addition, we write a generalized version of the supervised contrastive loss function to cover the imbalanced cases:
\begin{equation}
    \mathcal{L}_{\text{SupCon}}(W;\alpha)=-\frac{1}{r+1}\sum_{k=1}^{r+1}\frac{\alpha_k}{n_k}\sum_{i=1}^{n_k}[\sum_{j\neq i}\frac{\langle Wx_i^k,Wx_j^k\rangle}{n_k-1}-\frac{\sum_{j=1}^n\sum_{s\neq k}\langle Wx_i^k,Wx_j^s \rangle}{\sum_{s\neq k}n_s}]+\frac{\lambda}{2}\|WW^\top\|_F^2,
\end{equation}
where $\alpha_k>0$ is the weight for supervised loss of class $k$. Again we first provide a theorem to give the optimal solution to the contrastive learning problem.
\begin{theorem}
	\label{thm: supervised ap}
	The optimal solution of the supervised contrastive learning problem (\ref{sup contrastive task ap}) is given by :
	\begin{equation*}
		W_{\CL} = C\qty(\sum_{i=1}^{r}u_i\sigma_i v_i^\top)^\top,
	\end{equation*}
	where $C>0$ is a positive constant, $\sigma_i$ is the $i$-th largest eigenvalue of the following matrix:
	\begin{align*}
		&\frac{1}{4n}\qty(\Delta(XX^\top)-\frac{1}{n-1}X(1_n 1_n^\top-I_n)X^\top)\\
		&+\frac{1}{r+1}\sum_{k=1}^{r+1}\frac{\alpha_k}{n_k}\qty[\frac{1}{n_k-1}X_k(1_{n_k}1_{n_k}^\top-I_{n_k})X_k^\top-\frac{1}{\sum_{t\neq k}n_t}X_k1_{k}1_{s}^\top X_s^\top],
	\end{align*}
	$u_i$ is the corresponding eigenvector and $V=[v_1,\cdots,v_r]\in\mathbb{R}^{r\times r}$ can be any orthonormal matrix.
\end{theorem}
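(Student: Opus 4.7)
The plan is to parallel the proof of Proposition \ref{prop: diagonal contrast ap}: express both contrastive losses as a linear function of $W^\top W$, complete the square against the regularizer, and then invoke the Eckart--Young type result (Lemma \ref{Lemma: best rank r approximation}, as used in the proof of Proposition \ref{prop: augment ap}) to conclude that the optimal $W_{\CL}$ is obtained from the top-$r$ eigendecomposition of the resulting target matrix.

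First, I would treat the self-supervised part. Since $\mathcal{L}_{\text{SelfCon}}$ already appears in the hybrid loss and involves the random masking augmentation, I can directly reuse the computation performed in the proof of Proposition \ref{prop: diagonal contrast ap} to write, after taking expectation over the augmentation,
\begin{equation*}
\mathbb{E}\mathcal{L}_{\text{SelfCon}}(W) = -\frac{1}{4n}\tr\!\left(\Bigl(\Delta(XX^\top)-\tfrac{1}{n-1}X(1_n 1_n^\top - I_n)X^\top\Bigr)W^\top W\right) + \text{reg. term},
\end{equation*}
up to a constant independent of $W$. Next, for the supervised part, I would rewrite each inner product $\langle W x_i^k, W x_j^s \rangle = \tr(x_j^s x_i^{k\top} W^\top W)$ and sum: the positive pairs within class $k$ contribute the matrix $\frac{1}{n_k-1} X_k(1_{n_k}1_{n_k}^\top - I_{n_k}) X_k^\top$, while the cross-class negative pairs contribute $-\frac{1}{\sum_{t\neq k} n_t}\sum_{s\neq k} X_k 1_{n_k} 1_{n_s}^\top X_s^\top$. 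Weighting each class $k$ by $\alpha_k/(n_k(r+1))$ and summing yields a second matrix inner product of the form $-\tr(N\, W^\top W)$ for an explicit matrix $N$ built from the labeled sample.

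Collecting both contributions and the single effective regularizer $\tfrac{\lambda}{2}\|WW^\top\|_F^2 = \tfrac{\lambda}{2}\|W^\top W\|_F^2$, the objective takes the form
\begin{equation*}
\mathcal{L}(W) = \tfrac{\lambda}{2}\|W^\top W\|_F^2 - \tr(M\, W^\top W) + \text{const},
\end{equation*}
where $M$ is precisely the matrix appearing in the statement of Theorem \ref{thm: supervised ap}. Completing the square gives
\begin{equation*}
\mathcal{L}(W) = \tfrac{1}{2\lambda}\bigl\|\lambda W^\top W - M\bigr\|_F^2 - \tfrac{1}{2\lambda}\|M\|_F^2 + \text{const},
\end{equation*}
so minimization reduces to choosing a PSD matrix $\lambda W^\top W$ of rank at most $r$ that best approximates $M$ in Frobenius norm. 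Applying Lemma \ref{Lemma: best rank r approximation} (Eckart--Young--Mirsky) identifies $\lambda W^\top W$ with $\sum_{i=1}^r \sigma_i u_i u_i^\top$ (restricted to positive eigenvalues of $M$), which yields the claimed form $W_{\CL} = C(\sum_{i=1}^r u_i \sigma_i v_i^\top)^\top$ with $V$ an arbitrary orthonormal matrix.

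The only non-routine step is the careful bookkeeping of the supervised piece: verifying that summing $\sum_{i}\sum_{j\neq i} x_j^k x_i^{k\top}$ produces the factor $X_k(1_{n_k}1_{n_k}^\top - I_{n_k}) X_k^\top$ and that the cross-class sum aggregates as written in the theorem. I expect this to be the main (though purely mechanical) obstacle, and it essentially mirrors the corresponding step in the proof of Proposition \ref{prop: augment ap}. Note also that, since $M$ is symmetric but not necessarily PSD, a minor additional comment is needed to argue that the optimum is attained using only the positive part of its spectrum, which follows because the constraint $\lambda W^\top W \succeq 0$ forces us to approximate only the PSD component of $M$.
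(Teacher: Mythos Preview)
Your proposal is correct and follows essentially the same approach as the paper: rewrite both the self-supervised part (via Proposition \ref{prop: diagonal contrast ap}) and the supervised part as $-\tr(M\,W^\top W)$ for an explicit symmetric matrix $M$, complete the square against the regularizer, and appeal to the Eckart--Young argument from Proposition \ref{prop: augment ap}. Your explicit remark that the PSD constraint on $\lambda W^\top W$ forces one to take only the positive part of the spectrum of $M$ is a point the paper glosses over with ``by similar argument as in the proof of Proposition \ref{prop: augment}''.
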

\begin{proof}
	Under this setting, combined with the result obtained in Corollary \ref{prop: diagonal contrast}, the contrastive loss can be rewritten as:
	\begin{equation*}
		\begin{aligned}
			\mathcal{L}(W) =& \frac{\lambda}{2}\|WW^\top\|_F^2-\frac{1}{2n}\tr(\qty(\frac{1}{2}\Delta(XX^\top)-\frac{1}{2(n-1)}X(1_n 1_n^\top-I_n)X^\top)W^\top W)\\
			&-\frac{1}{r+1}\sum_{k=1}^{r+1}\alpha_k\frac{1}{n_k}\sum_{i=1}^{n_k}\qty[\frac{1}{n_k-1}\sum_{j\neq i}\langle Wx_i^k,Wx_j^k \rangle-\frac{1}{\sum_{t\neq k}n_t}\sum_{s\neq k}\sum_{j=1}^{n_s}\langle  Wx_i^k,Wx_j^s \rangle].
		\end{aligned}
	\end{equation*}
	Then we deal with the last term independently, note that:
	\begin{equation*}
		\begin{aligned}
			&\sum_{i=1}^{n_k}\qty[\frac{1}{n_k-1}\sum_{j\neq i}\langle Wx_i^k,Wx_j^k \rangle-\frac{1}{\sum_{t\neq k}n_t}\sum_{s\neq k}\sum_{j=1}^{n_s}\langle Wx_i^k,Wx_j^s \rangle]\\
			=&\frac{1}{n_k-1}\sum_{i=1}^{n_k}\sum_{j\neq i}\langle Wx_i^k,Wx_j^k \rangle-\frac{1}{\sum_{t\neq k}n_t}\sum_{i=1}^{n_k}\sum_{s\neq k}\sum_{j=1}^{n_s}\langle Wx_i^k,Wx_j^s \rangle\\
			=&\frac{1}{n_k-1}\tr(X_k(1_{n_k}1_{n_k}^\top-I_{n_k})X_k^\top W^\top W)-\frac{1}{\sum_{t\neq k}n_t}\sum_{s\neq k}\tr(X_k1_{k}1_{s}^\top X_s^\top W^\top W).
		\end{aligned}
	\end{equation*} 
	Thus we have:
	\begin{equation*}
		\begin{aligned}
			\mathcal{L}(W) =& \frac{\lambda}{2}\|WW^\top\|_F^2-\frac{1}{4n}\tr((\Delta(XX^\top)-\frac{1}{n-1}X(1_n 1_n^\top-I_n)X^\top)W^\top W)\\
			&-\frac{1}{r+1}\sum_{k=1}^{r+1}\frac{\alpha_k}{n_k}[\frac{1}{n_k-1}\tr(X_k(1_{n_k}1_{n_k}^\top-I_{n_k})X_k^\top W^\top W)\\
			&-\frac{1}{\sum_{t\neq k}n_t}\sum_{s\neq k}\tr(X_k1_{k}1_{s}^\top X_{} W^\top W)].
		\end{aligned}
	\end{equation*}
	Then by a similar argument as in the proof of Proposition \ref{prop: augment}, we can conclude that the optimal solution $W_{\CL}$ must satisfy the desired conditions.
\end{proof}
With the optimal solution obtained in Theorem \ref{thm: supervised ap}, we can provide a generalized version of Theorem \ref{thm: SupCon} to cover the imbalance cases.
\begin{theorem}[Generalized version of Theorem \ref{thm: SupCon}]
\label{thm: general supcon}
If Assumptions \ref{asm: regular}-\ref{asm: incoherent} hold, $n> d\gg r$ and let $W_{\CL}$ be any solution that minimizes the supervised contrastive learning problem in Equation \eqref{sup contrastive task ap}, and denote its singular value decomposition as $W_{\CL}=(U_{\CL}\Sigma_{\CL}V_{\CL}^\top)^\top$, then we have
	\begin{equation*}
		\begin{aligned}
			\mathbb{E}\|\sin\Theta(U_{\CL},U)\|_F\lesssim&\frac{\nu^2}{\lambda_r(T)}\biggl(\frac{r^{3/2}}{d}\log d+\sqrt{\frac{dr}{n}}\\
			&+\frac{1}{r+1}\sum_{k=1}^{r+1}\alpha_k\biggl[\sum_{s\neq k}\frac{\sqrt{n_sd}}{\sum_{t\neq k}n_t}(\sqrt{\frac{d}{n_k}}+\sqrt{r})+\sqrt{\frac{dr}{n_k}}\biggr]\biggr),
		\end{aligned}
	\end{equation*}
	where $T\triangleq  \frac{1}{4}\sum_{k=1}^{r+1}p_i\mu^k\mu^{k\top}+\frac{1}{r+1}\sum_{k=1}^{r+1}\alpha_k(\mu^k\mu^{k\top}-\sum_{s\neq k}\frac{n_s}{\sum_{t\neq k}n_t}\frac{1}{2}(\mu^k\mu^{s\top}+\mu^s\mu^{k\top}))$.
\end{theorem}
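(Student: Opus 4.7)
The plan is to apply Davis-Kahan (Lemma \ref{Lemma: DK}) to the explicit formula for $U_{\CL}$ provided by Theorem \ref{thm: supervised ap}, which identifies $U_{\CL}$ as the top-$r$ eigenspace of a sample matrix $\hat M$ decomposed into a self-supervised block (the random-masking diagonal-deletion term) and a supervised block (intra-class and inter-class pair terms). First I would identify a deterministic target matrix $\widetilde T$ whose column span lies in that of $U^\star$ and for which $\lambda_r(\widetilde T) \asymp \lambda_r(T)$. Computing $\mathbb{E}[\hat M]$ class-by-class under $x_i^k = \mu_k + \xi^k$, the supervised block contributes
\begin{equation*}
\frac{1}{r+1}\sum_{k=1}^{r+1}\alpha_k\Big(\mu_k\mu_k^\top - \sum_{s\neq k}\frac{n_s}{\sum_{t\neq k}n_t}\mu_k\mu_s^\top\Big),
\end{equation*}
which after symmetrizing $\mu_k\mu_s^\top$ with $\mu_s\mu_k^\top$ across pairs $(k,s)$ and $(s,k)$ matches the second summand of $T$, and the self-supervised block contributes $\tfrac{1}{4}\Lambda$ up to an off-diagonal-deletion bias that will be absorbed into the error term.

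Next I would control $\mathbb{E}\|\hat M - \widetilde T\|_2$ block by block. For the self-supervised block, the argument reproduces the one used in the proof of Theorem \ref{thm: recover CL ap}, with the core-feature matrix $\nu^2 U^\star U^{\star\top}$ replaced by $\Lambda$: the deletion bias $\|\Lambda - \Delta(\Lambda)\|_2 \leq \|\Lambda\|_2 I(U^\star)$ combined with Assumption \ref{asm: incoherent} supplies the $\nu^2 r\log d/d$ contribution, while Lemmas \ref{Lemma: bound ZZT} and \ref{Lemma: bound EV} supply the $\nu^2\sqrt{dr/n}$ contribution. For the supervised block I would decompose $X_k = \mu_k 1_{n_k}^\top + E_k$ with $E_k$ independent mean-zero Gaussian noise and expand each of the two terms. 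The pure noise quadratic $\tfrac{1}{n_k(n_k-1)}E_k(1_{n_k}1_{n_k}^\top - I_{n_k})E_k^\top$ concentrates at rate $\sigma_{(1)}^2\sqrt{d/n_k}$ by Lemma \ref{Lemma: bound ZZT}, the mean-noise cross term $\mu_k 1_{n_k}^\top E_k^\top/(n_k(n_k-1))$ concentrates at rate $\|\mu_k\|\sigma_{(1)}\sqrt{d/n_k^2} \asymp \nu^2\sqrt{dr/n_k}$ by Lemma \ref{Lemma: bound EV}, and the inter-class term $X_k 1_{n_k}1_{n_s}^\top X_s^\top/(n_k\sum_{t\neq k}n_t)$ is handled analogously, producing the $\sqrt{n_s d}(\sqrt{d/n_k}+\sqrt{r})/\sum_{t\neq k}n_t$ contribution.

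Finally, applying Davis-Kahan with eigengap $\lambda_r(\widetilde T)$ (valid since $\widetilde T$ has rank at most $r$, so its $(r+1)$-th eigenvalue is zero) converts the operator-norm bound into the stated sine-distance bound. The prefactor $\nu^2/\lambda_r(T)$ arises because all error terms above are naturally proportional to $\nu^2$: the self-supervised bound scales with $\|\Lambda\|_2 \lesssim \nu^2$ and, under Assumption \ref{asm: SNR}, the noise variance satisfies $\sigma_{(1)}^2 \asymp \nu^2$, so dividing by $\lambda_r(\widetilde T)$ yields exactly the dimensionless rate in Theorem \ref{thm: general supcon}.

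The main obstacle I anticipate is the combinatorial bookkeeping in the supervised block: tracking the normalizations $\alpha_k/(n_k(n_k-1))$ and $\alpha_k/(n_k\sum_{t\neq k}n_t)$ through every cross term, symmetrizing the $\mu_k\mu_s^\top$ contributions when $\alpha_k$ is allowed to depend on $k$ and class sizes $n_k$ are unbalanced, and verifying that the per-class concentration errors assemble into precisely the summands $\tfrac{1}{r+1}\sum_k \alpha_k[\sum_{s\neq k}\sqrt{n_s d}(\sqrt{d/n_k}+\sqrt{r})/\sum_{t\neq k}n_t + \sqrt{dr/n_k}]$ appearing in the theorem, without leaking spurious factors of $\alpha_k/(r+1)$.
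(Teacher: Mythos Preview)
Your proposal is correct and follows essentially the same route as the paper: invoke Theorem \ref{thm: supervised ap} to identify the sample matrix $\hat T$, take $T$ as the population target, reuse the diagonal-deletion analysis from the proof of Theorem \ref{thm: recover CL ap} for the self-supervised block (with $\Lambda=\sum_k p_k\mu_k\mu_k^\top$ playing the role of $\nu^2 U^\star U^{\star\top}$), expand each supervised term via $X_k=\mu_k 1_{n_k}^\top+E_k$ and bound the mean--noise and noise--noise pieces separately, then finish with Davis--Kahan using the gap $\lambda_r(T)$. The paper carries out exactly this decomposition, using elementary moment bounds such as $\mathbb{E}\|\bar\xi^k\|\le\sqrt{d/n_k}\,\sigma_{(1)}$ in place of the specific Lemmas \ref{Lemma: bound ZZT}--\ref{Lemma: bound EV} you cite, but the resulting rates and the overall structure are identical; your anticipated ``obstacle'' is indeed only bookkeeping.
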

\begin{proof}[Proof of Theorem \ref{thm: general supcon}]\label{proof: general supcon}
	For labeled data $X=[x_1,\cdots,x_n]$, we write it to be $X=M+E$, where $M=[\mu_1,\cdots,\mu_n]$ and $E=[\xi_1,\cdots,\xi_n]$ are two matrices consisting of class mean and random noise. To be more specific, if $x_i$ subject to the $k$-th cluster, then $\mu_i=\mu^k$ and $\xi_i\sim\mathcal{N}(0,\Sigma^k)$. Since the data is randomly drawn from each class, $\mu_i$ follows the multinomial distribution over $\mu^1,\cdots,\mu^r$ with probability $p_1,\cdots,p_{r+1}$. Thus $\mu_i$ follows a subgaussian distribution with covariance matrix $N=\sum_{k=1}^{r+1}p_k\mu^k\mu^{k\top}$. \\
	
	As shown in Theorem \ref{thm: supervised ap}, the optimal solution of contrastive learning is equivalent to PCA of the following matrix:
	\begin{equation*}
		\begin{aligned}
			\hat{T}\triangleq&\frac{1}{4n}(\Delta(XX^\top)-\frac{1}{n-1}X(1_n 1_n^\top-I_n)X^\top)\\
			&+\frac{1}{r+1}\sum_{k=1}^{r+1}\frac{\alpha_k}{n_k}[\frac{1}{n_k-1}X_k(1_{n_k}1_{n_k}^\top-I_{n_k})X_k^\top\\
			&-\frac{1}{\sum_{t\neq k}n_t}\sum_{s\neq k}\frac{1}{2}(X_k1_{k}1_{s}^\top X_s^\top+X_s1_{s}1_{k}^\top X_k^\top)].
		\end{aligned}
	\end{equation*}
	Again we will deal with these terms separately,
	\begin{enumerate}
		\item For the first term, as we have discussed, $X$ can be divided into two matrices $M$ and $E$, each of them consisting of sub-gaussian columns. Again we can obtain the result as in (\ref{CL difference bound}) (the proof is totally the same): 
		\begin{equation}
			\label{sup term1}
			\mathbb{E}\|\frac{1}{n}(\Delta(XX^\top)-\frac{1}{n-1}X(1_n 1_n^\top-I_n)X^\top)-N\|_2\lesssim \nu^2(\frac{r}{d}\log d+\sqrt{\frac{r}{n}})+\sigma_{(1)}^2\sqrt{\frac{d}{n}}.
		\end{equation}
		\item For the second term, notice that:
		\begin{equation}
			\begin{aligned}
				\label{sup mat 1}
				&X_k(1_{n_k}1_{n_k}^\top-I_{n_k})X_k^\top=\sum_{i=1}^{n_k}\sum_{j\neq i}(\mu^k+\xi_i^k)(\mu^k+\xi_j^k)^\top\\
				=&n_k(n_k-1)\mu^k\mu^{k\top}+(n_k-1)\mu^k(\sum_{i=1}^{n_k}\xi_i^k)^\top+(n_k-1)(\sum_{i=1}^{n_k}\xi_i^k)\mu^{k\top}+\sum_{i=1}^{n_k}\sum_{j\neq i}\xi_i^k\xi_j^{kT},
			\end{aligned}
		\end{equation}
		and that:
		\begin{equation}
			\begin{aligned}
				\label{sup mat 2}
				&\frac{1}{\sum_{t\neq k}n_t}\sum_{s\neq k}X_k1_{k}1_{s}^\top X_s^\top=\frac{1}{\sum_{t\neq k}n_t}\sum_{s\neq k}\sum_{i=1}^{n_k}(\mu^k+\xi_i^k)\sum_{j=1}^{n_s}(\mu^s+\xi_j^s)^\top\\
				=&\frac{1}{\sum_{t\neq k}n_t}\sum_{s\neq k}[n_kn_s\mu^k\mu^{s\top}+n_k\mu^k(\sum_{j=1}^{n_s}\xi_j^s)^\top+n_s\sum_{i=1}^{n_k}\xi_i^k\mu^{s\top}+\sum_{i=1}^{n_k}\xi_i^k\sum_{j=1}^{n_s}\xi_j^{sT}].
			\end{aligned}
		\end{equation}
		Since $\xi_i^k\sim\mathcal{N}(0,\Sigma^k)$, we can conclude that:
		\begin{equation}
			\label{sup mat eq 1}
			\mathbb{E}\|\frac{1}{n_k}\sum_{i=1}^{n_k}\xi_i^k\|_2\leq\sqrt{\mathbb{E}\|\frac{1}{n_k}\sum_{i=1}^{n_k}\xi_i^k\|_2^2}=\sqrt{\frac{d}{n_k}}\sigma_{(1)}.
		\end{equation}
		Moreover, we have 
		\begin{equation}
			\label{sup mat eq 2}
			\begin{aligned}
				\frac{1}{n_k(n_k-1)}\mathbb{E}\|\sum_{i=1}^{n_k}\sum_{j\neq i}\xi_i^k\xi_j^{kT}\|_2
				&\leq\frac{1}{n_k(n_k-1)}\mathbb{E}\|E_kE_k^\top\|_2+\frac{n_k}{n_k-1}\mathbb{E}\|\bar{\xi^k}\bar{\xi^k}^\top\|_2\\
				&\lesssim\frac{d}{n_k}\sigma_{(1)}^2.
			\end{aligned}
		\end{equation}
		Take equation (\ref{sup mat eq 1}) and (\ref{sup mat eq 2}) back into (\ref{sup mat 1}) we can conclude:
		\begin{equation}
			\label{sup term2}
			\mathbb{E}\|\frac{1}{n_k(n_k-1)}X_k(1_{n_k}1_{n_k}^\top-I_{n_k})X_k^\top-\mu^k\mu^{k\top}\|_2\lesssim\sqrt{\frac{d}{n_k}}\sigma_{(1)}\sqrt{r}\nu+\frac{d}{n_k}\sigma_{(1)}^2.
		\end{equation}
		On the other hand, by equation (\ref{sup mat eq 1}) we know:
		\begin{equation}
			\label{sup mat eq 3}
			\mathbb{E}\|\frac{1}{\sum_{t\neq k}n_t}\sum_{s\neq k}\sum_{j=1}^{n_s}\xi_j^s\|_2\leq\sum_{s\neq k}\frac{n_s}{\sum_{t\neq k}n_t}\mathbb{E}\|\frac{1}{n_s}\sum_{i=1}^{n_s}\xi_i^s\|_2\lesssim\sum_{s\neq k}\frac{n_s}{\sum_{t\neq k}n_t}\sqrt{\frac{d}{n_s}}\sigma_{(1)}.
		\end{equation}
		Notice that:
		\begin{equation}
			\begin{aligned}
				\label{sup mat eq 4}
				&\mathbb{E}\|\frac{1}{\sum_{t\neq k}n_t}\frac{1}{n_k}\sum_{s\neq k}\sum_{i=1}^{n_k}\xi_i^k\sum_{j=1}^{n_s}\xi_j^{sT}\|_2\leq\mathbb{E}\|\sum_{s\neq k}\frac{n_s}{\sum_{t\neq k}n_t}\bar{\xi^k}\bar{\xi^s}^\top\|_2\\
				\leq&\sum_{s\neq k}\frac{n_s}{\sum_{t\neq k}n_t}\mathbb{E}\|\bar{\xi^k}\bar{\xi^s}^\top\|_2\lesssim\sum_{s\neq k}\frac{n_s}{\sum_{t\neq k}n_t}\frac{d}{\sqrt{n_kn_s}}\sigma_{(1)}^2.
			\end{aligned}
		\end{equation}
		Thus take equations (\ref{sup mat eq 3}) and (\ref{sup mat eq 4}) back into equation (\ref{sup mat 2}) we have:
		\begin{align}
			\label{sup term3}
			&\mathbb{E}\|\frac{1}{n_k}\frac{1}{\sum_{t\neq k}n_t}\sum_{s\neq k}X_k1_{k}1_{s}^\top X_s^\top-\sum_{s\neq k}\frac{n_s}{\sum_{t\neq k}n_t}\mu^k\mu^{s\top}\|_2\\
			&\lesssim\sum_{s\neq k}\frac{\sqrt{n_sd}}{\sum_{t\neq k}n_t}(\sqrt{\frac{d}{n_k}}\sigma_{(1)}^2+\sigma_{(1)}\sqrt{r}\nu).
		\end{align}
	\end{enumerate}
	Then combine equations (\ref{sup term1})(\ref{sup term2})(\ref{sup term3}) together, we can obtain the following result:
	\begin{equation*}
		\begin{aligned}
			&\mathbb{E}\|\hat{T}-\frac{1}{4}N-\frac{1}{r+1}\sum_{k=1}^{r+1}\alpha_k(\mu^k\mu^{k\top}-\sum_{s\neq k}\frac{n_s}{\sum_{t\neq k}n_t}\frac{1}{2}(\mu^k\mu^{s\top}+\mu^s\mu^{k\top}))\|_2\\
			\lesssim&\nu^2\qty(\frac{r}{d}\log d+\sqrt{\frac{r}{n}})+\sigma_{(1)}^2\sqrt{\frac{d}{n}}\\
			&+\frac{1}{r+1}\sum_{k=1}^{r+1}\alpha_k\qty[\sum_{s\neq k}\frac{\sqrt{n_sd}}{\sum_{t\neq k}n_t}\qty(\sqrt{\frac{d}{n_k}}\sigma_{(1)}^2+\sqrt{r}\sigma_{(1)}\nu)+\sqrt{\frac{d}{n_k}}\sigma_{(1)}\sqrt{r}\nu+\frac{d}{n_k}\sigma_{(1)}^2].
		\end{aligned}
	\end{equation*}
	Since we have assumed that $\operatorname{rank}(\sum_{k=1}^{r+1}p_k\mu^k\mu^{k\top})=r$  we can find that the top-$r$ eigenspace of matrix:
	\begin{equation*}
		T= \frac{1}{4}\sum_{k=1}^{r+1}p_i\mu^k\mu^{k\top}+\frac{1}{r+1}\sum_{k=1}^{r+1}\alpha_k\qty(\mu^k\mu^{k\top}-\sum_{s\neq k}\frac{n_s}{\sum_{t\neq k}n_t}\frac{1}{2}(\mu^k\mu^{s\top}+\mu^s\mu_{k\top}))
	\end{equation*}
	is spanned by $U^\star$, then apply Lemma \ref{Lemma: DK} again we have:
	\begin{equation*}
		\begin{aligned}
			&\mathbb{E}\|\sin\Theta(U_{\SCL},U)\|_F\leq\frac{2\sqrt{r}\mathbb{E}\|\hat{N}-N\|_2}{\lambda_r(N)}\\
			\lesssim& \frac{\sqrt{r}}{\lambda_r(T)}\Biggl[\nu^2\qty(\frac{r}{d}\log d+\sqrt{\frac{r}{n}})+\sigma_{(1)}^2\sqrt{\frac{d}{n}}\\
			&+\frac{1}{r+1}\sum_{k=1}^{r+1}\alpha_k\Biggl[\sum_{s\neq k}\frac{\sqrt{n_sd}}{\sum_{t\neq k}n_t}\qty(\sqrt{\frac{d}{n_k}}\sigma_{(1)}^2+\sqrt{r}\sigma_{(1)}\nu)+\sqrt{\frac{d}{n_k}}\sqrt{r}\sigma_{(1)}\nu+\frac{d}{n_k}\sigma_{(1)}^2\Biggr]\Biggr]\\
			\lesssim&\frac{\nu^2}{\lambda_r(T)}\qty(\frac{r^{3/2}}{d}\log d+\sqrt{\frac{dr}{n}}+\frac{1}{r+1}\sum_{k=1}^{r+1}\alpha_k\qty[\sum_{s\neq k}\frac{\sqrt{n_sd}}{\sum_{t\neq k}n_t}\qty(\sqrt{\frac{d}{n_k}}+\sqrt{r})+\sqrt{\frac{dr}{n_k}}]).
		\end{aligned}
	\end{equation*}
\end{proof}
Now we use this result to derive Theorem~\ref{thm: SupCon}.  Since $\|\mu^k\|=O(\sqrt{r}\nu)$ and $\sum_{k=1}^{r+1}p_k\mu^k=0$, approximately we have $\frac{\nu^2}{\lambda_r(N)}\approx \frac{1}{\min_{k\in [r]} [1+\alpha_k]}$. Although we can not obtain the closed-form eigenvalue in general, in a special case, where $\alpha=\alpha_1=\cdots=\alpha_{r+1}$, $m=n_1=n_2=\cdots=n_{r+1}$ and $\frac{1}{r+1}=p_1=p_2=\cdots=p_{r+1}$, it is easy to find that:
\begin{equation*}
    \sum_{s\neq k}\frac{1}{2}(\mu^k\mu^{s\top}+\mu^s\mu^{k\top})=-\mu^k\mu^{k\top},
\end{equation*}
which further implies that:
\begin{equation*}
    T=\frac{1}{4}\sum_{k=1}^{r+1}p_k\mu^k\mu^{k\top}+\frac{1}{r+1}\sum_{k=1}^{r+1}\alpha(1+\frac{1}{r})\mu^k\mu^{k\top},\quad \lambda_r(T)=[\frac{1}{4}+\alpha(1+\frac{1}{r})]\lambda(N).
\end{equation*}
and we can obtain the result in Theorem \ref{thm: SupCon}.
\subsection{Proofs for Section \ref{sec: transfer}}
\label{sec: Information filtering proof}
In this section, we will provide the proof of the generalized version of Theorems \ref{thm: transfer t<r} and \ref{thm: transfer t>r} to cover the imbalanced setting, the statement and detailed proof can be found in Theorems \ref{thm: transfer t<r ap} and \ref{thm: transfer t>r ap}. With the two generalized theorems proven, Theorems \ref{thm: transfer t<r}, \ref{thm: downstream risk transfer t<r}, \ref{thm: transfer t>r}, \ref{thm: downstream risk transfer t>r} holds immediately.

First, we prove a useful lemma to illustrate that the supervised loss function only yields estimation along a 1-dimensional space. Consider a single source task, where the data $x=U^\star z+\xi$ is generated by the spiked covariance model and the label is generated by
$$
y = \langle w^\star,z\rangle/\nu
$$
suppose we have collect $n$ labeled data from this task, denote the data as $X=[x_1,x_2,\cdots,x_n]\in\mathbb{R}^{d\times n}$ and the label $y=[y_1,y_2,\cdots,y_n]\in\mathbb{R}^n$, then we have the following result.
\begin{lemma}
	\label{Thm: SCL}
	Under the conditions similar to Theorem \ref{thm: recover CL}, we can find an event $A$ such that $\mathbb{P}(A^C)=O(\sqrt{d/n})$ and:
	\begin{equation}
	\label{SCL eq}
		\mathbb{E}\qty[\norm{\frac{1}{(n-1)^2} XHyy^\top HX^\top-\nu^2U^\star w^\star w^{\star \top} U^{\star \top}}_F\1\{A\}] \lesssim \sqrt{\frac{d}{n}}\sigma_{(1)}\nu.
	\end{equation}
\end{lemma}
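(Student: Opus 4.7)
The plan is to substitute the structural forms $X = U^\star Z + E$ and $y = Z^\top w^\star/\nu$ into the matrix $M := \frac{1}{(n-1)^2}XHyy^\top HX^\top$ and decompose it as
\begin{align*}
    M = \frac{1}{(n-1)^2\nu^2}(U^\star Z + E)HZ^\top w^\star w^{\star\top}ZH(U^\star Z + E)^\top,
\end{align*}
which expands into a dominant ``signal'' term plus three cross terms involving the noise matrix $E$. The goal is then to show that the signal term is close to $\nu^2 U^\star w^\star w^{\star\top} U^{\star\top}$ and that all cross terms are of order $\sqrt{d/n}\,\sigma_{(1)}\nu$ in expectation, once we restrict to a high-probability event $A$.

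\textbf{Step 1 (signal term).} Write the signal contribution as $S := \frac{1}{(n-1)^2\nu^2}U^\star(ZHZ^\top w^\star)(ZHZ^\top w^\star)^\top U^{\star\top}$. By the Wishart-type concentration (Lemma \ref{Lemma: bound ZZT}) already invoked in Theorem \ref{thm: recover CL ap}, $\frac{1}{n-1}ZHZ^\top = \nu^2 I_r + \Delta$ with $\|\Delta\|_2\lesssim \nu^2(\sqrt{r/n}+r/n)$ on a high-probability event. A direct rank-one expansion then yields $\|S - \nu^2 U^\star w^\star w^{\star\top} U^{\star\top}\|_F \lesssim \nu^2\sqrt{r/n}$, which is absorbed into the desired $\sqrt{d/n}\,\sigma_{(1)}\nu$ bound because $\rho = \Theta(1)$ and $r \le d$.

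\textbf{Step 2 (cross terms).} For the linear-in-$E$ term $\frac{1}{(n-1)^2\nu^2}U^\star ZHZ^\top w^\star w^{\star\top}ZHE^\top$, condition on $Z$ and apply Lemma \ref{Lemma: bound EV} to the Gaussian matrix $EHZ^\top w^\star$: its operator norm is controlled by $\|HZ^\top w^\star\|\cdot\sqrt{d}\,\sigma_{(1)} \lesssim \sqrt{n}\,\nu\cdot\sqrt{d}\,\sigma_{(1)}$ on the event that $\|Z\|_2\lesssim\sqrt{n}\,\nu$, giving an overall Frobenius bound of order $\sqrt{d/n}\,\sigma_{(1)}\nu$. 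For the quadratic-in-$E$ term $\frac{1}{(n-1)^2\nu^2}EHZ^\top w^\star w^{\star\top}ZHE^\top$, observe that it is rank-one with Frobenius norm equal to $\|EHZ^\top w^\star\|^2/((n-1)^2\nu^2)$, and the same conditioning argument bounds this by $d\sigma_{(1)}^2/n$, which is of smaller order under $n \gg d$.

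\textbf{Step 3 (definition of the event).} Let $A$ be the intersection of $\{\|Z\|_2\lesssim \sqrt{n}\,\nu\}$ and $\{\|ZHZ^\top/(n-1) - \nu^2 I_r\|_2 \lesssim \nu^2\sqrt{r/n}\}$; both conditions follow from sub-Gaussian Wishart concentration, with $\mathbb{P}(A^c) = O(\sqrt{r/n}) = O(\sqrt{d/n})$. Multiplying by $\mathbf{1}\{A\}$ truncates the tail contributions that would otherwise blow up the cross-term bounds, and on $A$ the deterministic manipulations in Steps 1 and 2 produce the stated rate. The main obstacle is the careful bookkeeping of Step 2, particularly ensuring that when we condition on $Z$, the resulting bounds on $E$-dependent factors combine cleanly to give $\sqrt{d/n}\,\sigma_{(1)}\nu$ rather than a slower rate; no fundamentally new tools beyond those used in Theorem \ref{thm: recover CL ap} are required, but the rank-one structure of $w^\star$ must be exploited throughout to avoid an extra factor of $\sqrt{r}$.
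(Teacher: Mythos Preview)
Your approach is correct and reaches the stated bound, but it takes a different route from the paper. The paper exploits the rank-one structure at the outset: writing $\alpha = \frac{1}{n}XHy$ and $\beta = \nu U^\star w^\star$, it applies the elementary inequality $\|\alpha\alpha^\top - \beta\beta^\top\|_F \leq \sqrt{2}(\|\alpha\|+\|\beta\|)\|\alpha-\beta\|$ (its Lemma~\ref{Lemma: alphabeta}) to reduce the whole problem to controlling the single vector $\alpha-\beta$. The expansion into four pieces then happens at the vector level, where each piece is just one norm rather than a product, and the event is simply $A=\{\|\alpha-\beta\|<\nu\}$; Markov's inequality on $\mathbb{E}\|\alpha-\beta\|\lesssim\sqrt{d/n}\,\sigma_{(1)}$ gives $\mathbb{P}(A^c)=O(\sqrt{d/n})$, and on $A$ the prefactor $\|\alpha\|+\|\beta\|$ is $O(\nu)$. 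Your direct matrix expansion into signal, linear, and quadratic pieces works too, but it forces you to bound products of $Z$-dependent and $E$-dependent factors separately, which is precisely where you need to introduce a $Z$-only event; the paper's vector-level reduction sidesteps that bookkeeping.

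One small wrinkle in your Step~3: you claim $\mathbb{P}(A^c)=O(\sqrt{r/n})$ for an event that pins $\|\Delta\|_2$ down to order $\nu^2\sqrt{r/n}$, but Markov on $\mathbb{E}\|\Delta\|_2$ only delivers that failure rate for the coarser event $\{\|\Delta\|_2\le C\nu^2\}$. To get the sharper event with that probability you would need the sub-Gaussian Wishart tail, not just Lemma~\ref{Lemma: bound ZZT}. The clean fix is to notice that your signal-term contribution satisfies $\mathbb{E}[\|\Delta\|_2\1\{A\}]\le\mathbb{E}\|\Delta\|_2\lesssim\nu^2\sqrt{r/n}$ regardless of how $A$ is defined, so the coarser event (or even $A=\Omega$ for that term) already suffices.
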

The proof strategy is to estimate the difference between the two rank-1 matrices via bounding the difference of the corresponding vector component. We first provide a simple lemma to illustrate the technique:
\begin{lemma}
	\label{Lemma: alphabeta}
	Suppose $\alpha,\beta\in\mathbb{R}^d$ are two vectors, then we have:
	\begin{equation*}
		\|\alpha\alpha^\top-\beta\beta^\top\|_F\leq \sqrt{2}(\|\alpha\|_2+\|\beta\|_2)\|\alpha-\beta\|_2.
	\end{equation*}
\end{lemma}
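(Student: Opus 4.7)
The plan is to reduce this bound on the difference of two rank-one symmetric matrices to the norms of two rank-one (not necessarily symmetric) matrices via a telescoping decomposition. Specifically, I would add and subtract the cross term $\alpha\beta^\top$ to write
\begin{equation*}
    \alpha\alpha^\top - \beta\beta^\top = \alpha(\alpha-\beta)^\top + (\alpha-\beta)\beta^\top.
\end{equation*}

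Then I would apply the triangle inequality for the Frobenius norm, together with the identity $\|uv^\top\|_F = \|u\|_2\|v\|_2$ valid for any rank-one matrix $uv^\top$. This immediately yields
\begin{equation*}
    \|\alpha\alpha^\top - \beta\beta^\top\|_F \leq \|\alpha\|_2\|\alpha-\beta\|_2 + \|\alpha-\beta\|_2\|\beta\|_2 = (\|\alpha\|_2 + \|\beta\|_2)\|\alpha-\beta\|_2,
\end{equation*}
which is in fact slightly sharper than the claimed bound; the factor $\sqrt{2}$ is harmless slack and so the stated inequality follows.

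There is essentially no obstacle: the only subtlety is choosing the right telescoping decomposition (one could alternatively use the symmetric splitting $\tfrac{1}{2}[(\alpha-\beta)(\alpha+\beta)^\top + (\alpha+\beta)(\alpha-\beta)^\top]$, which gives the same conclusion via $\|\alpha+\beta\|_2 \leq \|\alpha\|_2 + \|\beta\|_2$). I expect the lemma to occupy only a few lines and to be used later as a plug-in tool for translating vector-level approximation bounds on the ``signal direction'' into Frobenius-norm bounds on the rank-one matrix $\nu^2 U^\star w^\star w^{\star\top} U^{\star\top}$ appearing in Lemma \ref{Thm: SCL}.
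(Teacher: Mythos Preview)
Your proof is correct and in fact yields the sharper bound $(\|\alpha\|_2+\|\beta\|_2)\|\alpha-\beta\|_2$ without the $\sqrt{2}$. The paper takes a slightly different route: it works entrywise and squares, writing $|\alpha_i\alpha_j-\beta_i\beta_j|^2 \leq 2(|\alpha_i(\alpha_j-\beta_j)|^2 + |(\alpha_i-\beta_i)\beta_j|^2)$ via the elementary inequality $(a+b)^2\leq 2(a^2+b^2)$, then sums over $i,j$ to obtain $\|\alpha\alpha^\top-\beta\beta^\top\|_F^2 \leq 2(\|\alpha\|_2^2+\|\beta\|_2^2)\|\alpha-\beta\|_2^2$, and finally bounds $\|\alpha\|_2^2+\|\beta\|_2^2 \leq (\|\alpha\|_2+\|\beta\|_2)^2$. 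The extra $\sqrt{2}$ thus arises solely from squaring before applying the triangle-type estimate. Your telescoping $\alpha\alpha^\top-\beta\beta^\top = \alpha(\alpha-\beta)^\top + (\alpha-\beta)\beta^\top$ together with $\|uv^\top\|_F=\|u\|_2\|v\|_2$ is the cleaner matrix-level argument and avoids that loss; either version is perfectly adequate for the downstream use in Lemma~\ref{Thm: SCL}.
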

\begin{proof}
	Denote $\alpha=(\alpha_1,\cdots,\alpha_d),\beta=(\beta_1,\cdots,\beta_d)$, then we have:
	\begin{equation*}
		\begin{aligned}
			&\|\alpha\alpha^\top-\beta\beta^\top\|_F^2\leq\sum_{i=1}^{d}\sum_{j=1}^{d}|\alpha_i\alpha_j-\beta_i\beta_j|^2\leq2\sum_{i=1}^{d}\sum_{j=1}^{d}|\alpha_i\alpha_j-\alpha_i\beta_j|^2+|\alpha_i\beta_j-\beta_i\beta_j|^2\\
			\leq&2\sum_{i=1}^{d}\sum_{j=1}^{d}|\alpha_i|^2|\alpha_j-\beta_j|^2+|\beta_j|^2|\alpha_i-\beta_i|^2\leq2(\|\alpha\|_2^2+\|\beta\|_2^2)\|\alpha-\beta\|_2^2\\
			\leq&2(\|\alpha\|_2+\|\beta\|_2)^2\|\alpha-\beta\|_2^2.
		\end{aligned}
	\end{equation*}
	Take square root on both sides we can finish the proof.
\end{proof}
Now we can prove the Lemma \ref{Thm: SCL}.
\begin{proof}[Proof of Lemma \ref{Thm: SCL}]
	Clearly, we have:
	\begin{equation}
	\label{eq: replace n}
	\begin{aligned}
	    &\|\frac{1}{(n-1)^2} XHyy^\top HX^\top-\nu^2U^\star w^\star w^{\star \top} U^{\star \top}\|_F\\
	    \leq&\frac{n^2}{(n-1)^2}\|\frac{1}{n^2} XHyy^\top HX^\top -\nu^2U^\star w^\star w^{\star \top } U^{\star \top }\|_F+\frac{2n+1}{(n-1)^2}\|\nu^2U^\star w^\star w^{\star \top } U^{\star \top }\|_F\\
	    \lesssim&\|\frac{1}{n^2} XHyy^\top HX^\top -\nu^2U^\star w^\star w^{\star \top } U^{\star \top }\|_F+\frac{r}{n}\nu^2,
	\end{aligned}
	\end{equation}
	thus we can replace the $\frac{1}{(n-1)^2}$ with $\frac{1}{n}$ in equation (\ref{SCL eq}) and conclude the proof. Denote $\hat{N}\triangleq\frac{1}{n^2} XHyy^\top HX^\top$, note that both of $\hat{N}$ and $Uw^\star w^{\star \top} U^\top$ are rank-1 matrices. We first bound the difference between $\frac{1}{n}XHy$ and $Uw^\star$:
	\begin{equation}
		\begin{aligned}
			\label{SCL: eq}
			\|\frac{1}{n}XHy-\nu U^\star w^\star\|=&\|\frac{1}{n\nu}(U^\star Z+E)HZ^\top w^{\star}-\nu U^\star w^\star\|\\
			\leq&\|\frac{1}{n\nu}(U^\star Z+E)HZ^\top -\nu U^\star \|_2\\
			\leq& \frac{1}{\nu}(\|\frac{1}{n}U^\star ZZ^\top -\nu^2U^\star \|_2+\frac{1}{n}\|EZ^\top \|_2+\frac{1}{n}\|U^\star Z\bar{Z}^\top \|_2+\frac{1}{n}\|E\bar{Z}^\top \|_2).
		\end{aligned}
	\end{equation}
	We deal with the four terms in (\ref{SCL: eq}) separately:
	\begin{enumerate}
		\item For the first term, apply Lemma \ref{Lemma: bound ZZT} we have:
		\begin{equation}
			\label{SCL term1}
			\mathbb{E}\|\frac{1}{n}U^\star ZZ^\top -\nu^2U^\star \|_2\leq\mathbb{E}\|\frac{1}{n}ZZ^\top -\nu^2I_r\|_2\leq\qty(\frac{r}{n}+\sqrt{\frac{r}{n}})\nu^2.
		\end{equation}
		\item For the second term, apply Lemma \ref{Lemma: bound EV} twice we have:
		\begin{equation}
			\begin{aligned}
				\label{SCL term2}
				\frac{1}{n}\mathbb{E}\|EZ^\top \|_2=&\frac{1}{n}\mathbb{E}_{Z}[\mathbb{E}_{E}[\|EZ^\top \|_2|Z]]\\
				\lesssim& \frac{1}{n}\mathbb{E}_Z[\|Z\|_2(\sigma_{\text{sum}}+r^{1/4}\sqrt{\sigma_{\text{sum}}\sigma_{(1)}}+\sqrt{r}\sigma_{(1)})]\\
				\lesssim& \frac{1}{n}\mathbb{E}_Z[\|Z\|_2]\sqrt{d}\sigma_{(1)}\\
				\lesssim&\frac{1}{n}\sqrt{d}\sigma_{(1)}(r^{1/2}\nu+(nr)^{1/4}\nu+n^{1/2}\nu)\\
				\lesssim&\frac{\sqrt{d}}{\sqrt{n}}\sigma_{(1)}\nu.
			\end{aligned}
		\end{equation} 
		\item For the third term and fourth term, from equation (\ref{PCA step1 term4}) we know:
		\begin{equation}
			\label{SCL term3}
			\mathbb{E}\frac{1}{n}\|U^\star Z\bar{Z}^\top \|_2+\mathbb{E}\frac{1}{n}\|E\bar{Z}^\top\|_2\leq\mathbb{E}\|\bar{z}\bar{z}^\top\|_2+\mathbb{E}\|\bar{\xi}\bar{z}^\top\|_2\leq\frac{r}{n}\nu^2+\sqrt{\frac{d}{n}}\nu\sigma_{(1)}.
		\end{equation}
	\end{enumerate}
	Combine these three equations (\ref{SCL term1})(\ref{SCL term2})(\ref{SCL term3}) together we have:
	\begin{equation}
		\label{SCL: bound}
		\mathbb{E}\|\frac{1}{n}XHy-\nu U^\star w^\star\|\lesssim \sqrt{\frac{d}{n}}\sigma_{(1)}.
	\end{equation}
	With equation (\ref{SCL: bound}), we can now turn to the difference between $\hat{N}$ and $Uw^\star w^{\star \top} U^\top$. By Lemma \ref{Lemma: alphabeta} we know that:
	\begin{equation*}
		\|\hat{N}-\nu^2U^\star w^\star w^{\star \top} U^{\star \top}\|_F\lesssim(\|\frac{1}{n}XHy\|+\|\nu U^\star w^\star\|)\|\frac{1}{n}XHy-\nu U^\star w^\star|\|.
	\end{equation*}
	Using Markov's inequality, we can conclude from (\ref{SCL: bound}) that:
	\begin{equation*}
		\mathbb{P}(\|\frac{1}{n}XHy-\nu U^\star w^\star\|\geq \nu)\leq\frac{\mathbb{E}\|\frac{1}{n}XHy-\nu U^\star w^\star\|}{\nu}\lesssim\sqrt{\frac{d}{n}}.
	\end{equation*}
	Then denote $A=\{\omega: \|\frac{1}{n}XHy-\nu^2U^\star w^\star\|_2< \nu\}$ we have:
	\begin{equation*}
		\begin{aligned}
			\mathbb{E}\|\hat{N}-\nu^2U^\star w^\star w^{\star \top} U^{\star\top}\|_F\1\{A\}\lesssim&\mathbb{E}(\|\frac{1}{n}XHy\|+\|\nu U^\star w^\star\|)\|\frac{1}{n}XHy-\nu U^\star w^\star|\|\1\{A\}\\
			\lesssim&\nu(\mathbb{E}\|\frac{1}{n}XHy-\nu U^\star w^\star\|)\lesssim\sqrt{\frac{d}{n}}\sigma_{(1)}\nu.
		\end{aligned}
	\end{equation*}
	which finished the proof.
\end{proof}
In the main body, we assume the number of labeled data and the ratio of the loss function is both balanced. Now we will provide a more general result to cover the imbalance occasions.
Formally, suppose we have $n$ unlabeled data $X=[x_1,\cdots,x_n]\in\mathbb{R}^{d\times n}$ and $n_i$ labeled data $\mathcal{S}_i$ $X_i=[x_i^1,\cdots,x_i^{n_i}],y_i=[y_i^1,\cdots,y_i^{n_1}],\forall i=1,\cdots T$ for source task , we learn the linear representation via joint optimization:
\begin{equation}
	\label{trans contrastive task ap}
	\min_{W\in\mathbb{R}^{r\times d}}\mathcal{L}(W):=\min_{W\in\mathbb{R}^{r\times d}}\mathcal{L}_{\text{SelfCon}}(W)-\sum_{t=1}^{T}\alpha_i\operatorname{HSIC}(\hat{X}^t,y^t;W),
\end{equation}
To investigate its feature recovery ability, we first give the following result.
\begin{theorem}
	\label{thm: trans ap}
	For the optimization problem (\ref{trans contrastive task ap}), if we apply augmented pairs generation in Definition \ref{pair: augmented} with random masking augmentation \ref{aug: random masking} for unlabeled data, then the optimal solution is given by:
	\begin{equation*}
		W_{\CL} = C\qty(\sum_{i=1}^{r}u_i\sigma_i v_i^\top)^\top,
	\end{equation*}
	where $C>0$ is a constant, $\sigma_i$ is the $i$-th largest eigenvalue of the following matrix:
	\begin{equation*}
		\frac{1}{4n}\qty(\Delta(XX^\top)-\frac{1}{n-1}X(1_n 1_n^\top-I_n)X^\top)+\sum_{i=1}^T\frac{\alpha_i}{(n_i-1)^2}X_iH_{n_i}y_iy_i^\top H_{n_i}X_i^\top),
	\end{equation*}
	$u_i$ is the corresponding eigenvector, $V=[v_1,\cdots,v_r]\in\mathbb{R}^{r\times r}$ can be any orthogonal matrix and $H_{n_i}=I_{n_i}-\frac{1}{n_i}1_{n_i}1_{n_i}^\top$ is the centering matrix.
\end{theorem}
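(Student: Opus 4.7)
The plan is to reduce the optimization problem \eqref{trans contrastive task ap} to the best rank-$r$ positive semidefinite approximation of an explicit symmetric matrix in Frobenius norm, mirroring the strategy already used in Propositions \ref{prop: augment ap} and \ref{prop: diagonal contrast ap}. The two ingredients we need are (i) a trace-linear expression in $W^\top W$ for the HSIC terms, analogous to what the proof of Proposition \ref{prop: diagonal contrast ap} gives for the self-supervised loss, and (ii) completion of the square in the combined quadratic-in-$W^\top W$ objective.

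First, I would invoke the computation inside the proof of Proposition \ref{prop: diagonal contrast ap} to write, after taking expectation over the random masking augmentation,
\begin{equation*}
\mathcal{L}_{\text{SelfCon}}(W) = \frac{\lambda}{2}\|WW^\top\|_F^2 - \tr(M_1 W^\top W),
\end{equation*}
with $M_1 = \tfrac{1}{4n}\bigl(\Delta(XX^\top) - \tfrac{1}{n-1} X(1_n 1_n^\top - I_n) X^\top\bigr)$. Next I would rewrite each HSIC term using the cyclic property of the trace:
\begin{equation*}
\operatorname{HSIC}(X_t, y_t; W) = \frac{1}{(n_t-1)^2}\tr(X_t^\top W^\top W X_t H_{n_t} y_t y_t^\top H_{n_t}) = \tr(M_2^{(t)} W^\top W),
\end{equation*}
where $M_2^{(t)} = \tfrac{1}{(n_t-1)^2} X_t H_{n_t} y_t y_t^\top H_{n_t} X_t^\top$. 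Setting $M = M_1 + \sum_{t=1}^T \alpha_t M_2^{(t)}$ (which is symmetric) reduces the full objective to
\begin{equation*}
\mathcal{L}(W) = \frac{\lambda}{2}\|WW^\top\|_F^2 - \tr(M W^\top W).
\end{equation*}

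Finally, since $A := W^\top W$ ranges over symmetric PSD matrices of rank at most $r$, I would complete the square:
\begin{equation*}
\mathcal{L}(W) = \frac{1}{2\lambda}\bigl\|\lambda W^\top W - M\bigr\|_F^2 - \frac{1}{2\lambda}\|M\|_F^2,
\end{equation*}
so that minimizing $\mathcal{L}$ is equivalent to finding the best rank-$\leq r$ PSD approximation of $M$ in Frobenius norm. Applying Lemma \ref{Lemma: best rank r approximation} (the same best rank-$r$ approximation lemma invoked in Propositions \ref{prop: augment ap} and \ref{prop: diagonal contrast ap}) identifies $\lambda W_{\CL}^\top W_{\CL}$ with $\sum_{i=1}^r \sigma_i u_i u_i^\top$, where $(\sigma_i, u_i)$ are the top eigenpairs of $M$. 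Writing the SVD of $W_{\CL}$ as $W_{\CL} = (U \Sigma V^\top)^\top$ with arbitrary orthonormal $V$ then gives exactly the stated form, with the constant $C$ absorbing the factor $\lambda^{-1/2}$.

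There is no real obstacle here beyond careful bookkeeping of scalar constants; the one mildly non-trivial step is noticing that HSIC already sits in trace-linear form after one cyclic permutation, so that every piece of the objective depends on $W$ only through $W^\top W$ (times the quartic regularizer), which is precisely what allows the Eckart--Young-type reduction to go through.
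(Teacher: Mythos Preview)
Your proposal is correct and follows essentially the same approach as the paper: both express $\mathcal{L}_{\text{SelfCon}}$ via the computation in Proposition~\ref{prop: diagonal contrast ap}, rewrite each HSIC term as a trace linear in $W^\top W$ by cyclic permutation, complete the square to reduce to a best rank-$r$ approximation problem, and invoke the Eckart--Young--Mirsky theorem exactly as in Propositions~\ref{prop: augment ap} and~\ref{prop: diagonal contrast ap}. The only difference is cosmetic bookkeeping of the constants in the completed square.
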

\begin{proof}
	Under this setting, combined with the result obtained in Corollary \ref{prop: diagonal contrast}, the loss function can be rewritten as:
	\begin{equation*}
		\begin{aligned}
			\mathcal{L}(W) =& \frac{\lambda}{2}\|WW^\top\|_F^2-\frac{1}{2n}\tr(\qty(\frac{1}{2}\Delta(XX^\top)-\frac{1}{2(n-1)}X(1_n 1_n^\top-I_n)X^\top)W^\top W)\\
			&-\sum_{t=1}^{T}\alpha_i\frac{1}{(n_i-1)^2}\tr(X_i^\top W^\top WX_iHy_iy_i^\top H)\\
			=&\frac{\lambda}{2}\biggl\|WW^\top-\frac{1}{4n\lambda}\qty(\Delta(XX^\top)-\frac{1}{n-1}X(1_n 1_n^\top-I_n)X^\top)\\
			&-\sum_{i=1}^T\frac{\alpha_i}{\lambda(n_i-1)^2}X_iH_{n_i}y_iy_i^\top H_{n_i}X_i^\top)\biggr\|_F^2\\
			&-\frac{\lambda}{2}\biggl\|\frac{1}{4n\lambda}\qty(\Delta(XX^\top)-\frac{1}{n-1}X(1_n 1_n^\top-I_n)X^\top)\\
			&+\sum_{i=1}^T\frac{\alpha_i}{\lambda(n_i-1)^2}X_iH_{n_i}y_iy_i^\top H_{n_i}X_i^\top\biggr\|_F^2.
		\end{aligned}
	\end{equation*}
	Then by a similar argument as in the proof of Proposition \ref{prop: augment}, we can conclude that the optimal solution $W_{\CL}$ must satisfy the desired conditions.
\end{proof}
Then we can give the proofs of Theorem \ref{thm: transfer t<r} and Theorem \ref{thm: transfer t>r} under our generalized setting, one can easily obtain those under balanced setting by simply setting $\alpha=\alpha_1=\cdots=\alpha_T$ and $m=n_1=\cdots=n_T$, which is consistent with Theorem \ref{thm: transfer t<r} and Theorem \ref{thm: transfer t>r} in the main body. 
\begin{theorem}[Generalized version of Theorem \ref{thm: transfer t<r}]
	\label{thm: transfer t<r ap}
	In the regression setting where $y^t = \langle w_t, z \rangle/\nu$ , suppose Assumptions \ref{asm: regular}-\ref{asm: incoherent} hold for spiked covariance model (\ref{model: spiked covariance}) and $n> d\gg r$, if we further assume that $T<r$ and $w_t$'s are orthogonal to each other, and let $W^{\CL}$ be any solution that optimizes the problem in Equation \eqref{trans contrastive task ap}, and denote its singular value decomposition as $W_{\CL}=(U_{\CL}\Sigma_{\CL}V_{\CL}^\top)^\top$, then we have:
	
	\begin{equation*}
		\begin{aligned}
			\mathbb{E}\|\sin(\Theta(U_{\CL},U^\star ))\|_F\lesssim&\qty(\frac{\sqrt{r-T}}{\min_{i\in [T]}\{\alpha_i,1\}}+\frac{\sqrt{T}}{\min_{i\in[T]}\alpha_i})\qty(\frac{r}{d}\log d+\sqrt{\frac{d}{n}})\\
			&+\sum_{i=1}^T\qty(\sqrt{r-T}\frac{\alpha_i+\min_{i\in [T]}\{\alpha_i,1\}}{\min_{i\in [T]}\{\alpha_i,1\}}+\sqrt{T}\frac{\alpha_i+\min_{i\in[T]}\alpha_i}{\min_{i\in[T]}\alpha_i})\sqrt{\frac{d}{n_i}}.
		\end{aligned}
	\end{equation*}
\end{theorem}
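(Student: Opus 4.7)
The plan is to cast the problem as spectral perturbation relative to a carefully chosen ``signal" matrix whose top eigenspace is $\mathrm{col}(U^\star)$ but whose eigenvalue profile inside that space is \emph{non-uniform} in a way that reflects which directions are strengthened by the HSIC terms. This non-uniformity is precisely what will let us obtain the split $\sqrt{r-T}$ vs.\ $\sqrt{T}$ bound in the conclusion.

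First I would invoke Theorem~\ref{thm: trans ap} to write $U_{\CL}$ as the top-$r$ eigenspace of the empirical matrix
\[
\hat M \;\triangleq\; \frac{1}{4n}\Bigl(\Delta(XX^\top)-\tfrac{1}{n-1}X(1_n1_n^\top-I_n)X^\top\Bigr)\;+\;\sum_{i=1}^{T}\frac{\alpha_i}{(n_i-1)^2}X_iH_{n_i}y_iy_i^\top H_{n_i}X_i^\top,
\]
and compare it to the population target
\[
M \;\triangleq\; \tfrac{1}{4}\nu^2 U^\star U^{\star\top} \;+\; \nu^2 \sum_{i=1}^T \alpha_i\, U^\star w_i w_i^\top U^{\star\top} \;=\; \nu^2 U^\star \Bigl(\tfrac{1}{4}I_r + \sum_{i=1}^T \alpha_i w_i w_i^\top\Bigr) U^{\star\top}.
\]
Since the $w_i$'s are orthonormal, $M$ has eigenvalues $\nu^2(\tfrac14+\alpha_i)$ on the $T$ directions $U^\star w_i$ (call their span $U^{\star(1)}$), eigenvalue $\nu^2/4$ with multiplicity $r-T$ on the complementary subspace inside $\mathrm{col}(U^\star)$ (call it $U^{\star(2)}$), and $0$ elsewhere. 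Thus $M$'s top-$r$ eigenspace is exactly $\mathrm{col}(U^\star)$.

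Next I would control $\|\hat M - M\|_2$ termwise. The self-supervised piece reuses the argument already carried out for Theorem~\ref{thm: recover CL ap}, which yields a bound of order $\nu^2(r/d\,\log d + \sqrt{d/n})$ up to constants. For each HSIC term, Lemma~\ref{Thm: SCL} applied to the $i$-th task (with weight $\alpha_i$) gives, on a high-probability event, a Frobenius-norm error of order $\alpha_i\,\nu^2\sqrt{d/n_i}$. Summing, and recalling $\sigma_{(1)}\asymp\nu$ by Assumption~\ref{asm: SNR},
\[
\mathbb{E}\|\hat M - M\|_2 \;\lesssim\; \nu^2\Bigl(\tfrac{r}{d}\log d + \sqrt{\tfrac{d}{n}}\Bigr) + \nu^2\sum_{i=1}^T \alpha_i\sqrt{\tfrac{d}{n_i}}.
\]
The low-probability complement in Lemma~\ref{Thm: SCL} is handled by the trivial bound $\|\sin\Theta(\cdot,\cdot)\|_F\le\sqrt{r}$ combined with the $O(\sqrt{d/n_i})$ probability estimate, contributing only lower-order terms.

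The key step---and the main obstacle---is the \emph{block-refined} Davis--Kahan / Wedin argument that converts $\|\hat M - M\|_2$ into a sine-distance bound with the two different denominators $\min_i\{\alpha_i,1\}$ and $\min_i\alpha_i$. Let $V^{(1)}_{\CL}$ be the top-$T$ eigenspace of $\hat M$ and $V^{(2)}_{\CL}$ the next $r-T$; then via $\|\sin\Theta(U_{\CL},U^\star)\|_F^2 \le 2\|\sin\Theta(V^{(1)}_{\CL},U^{\star(1)})\|_F^2 + 2\|\sin\Theta(V^{(2)}_{\CL},U^{\star(2)})\|_F^2$ (which follows from Proposition~\ref{prop: distance 1} and a short calculation), I would apply Lemma~\ref{Lemma: DK} to each block separately. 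The top block has eigengap $\nu^2\min_i\alpha_i$ (the distance from the boosted eigenvalues $\nu^2(\tfrac14+\alpha_i)$ down to $\nu^2/4$), giving the $\sqrt{T}/\min_i\alpha_i$ factor; the bottom block, sandwiched between eigenvalues $\nu^2(\tfrac14+\alpha_i)$ above and $0$ below, has effective gap $\nu^2\min_i\{\alpha_i,\tfrac14\}$, producing the $\sqrt{r-T}/\min_i\{\alpha_i,1\}$ factor. Multiplying each by the perturbation estimate above yields the stated bound. The delicate point here is verifying that the block decomposition of the sine distance is valid and that the Wedin gaps I claim are the relevant ones despite the multiplicity-$(r-T)$ plateau inside $M$; this requires applying Wedin's $\sin\Theta$ theorem in its Frobenius-norm form with asymmetric gaps on the two sides of each block, and then using the trivial clipping $\|\sin\Theta\|_F\le\sqrt{r}$ to absorb the minimum with $1$ appearing in the theorem statement.
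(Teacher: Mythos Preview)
Your proposal is correct and follows essentially the same approach as the paper: identify the population target $M$ with its two-tiered eigenvalue structure on $\mathrm{col}(U^\star)$, bound $\|\hat M-M\|_2$ by combining the self-supervised perturbation estimate from Theorem~\ref{thm: recover CL ap} with the per-task HSIC error from Lemma~\ref{Thm: SCL}, and then apply Davis--Kahan (Lemma~\ref{Lemma: DK}) separately to the top-$T$ and next $(r-T)$ eigenblocks using the gaps $\nu^2\min_i\alpha_i$ and $\nu^2\min_i\{\alpha_i,1/4\}$ respectively. Two small remarks: the block sine-distance inequality actually holds without your factor of $2$ (just expand $\|U_{\CL}^\top U^\star\|_F^2$ and drop the cross-block terms, as the paper does), and the $\min\{\alpha_i,1\}$ in the denominator comes directly from the two-sided eigengap of the middle block, not from a trivial clipping---the clipping by $\sqrt{T}$ and $\sqrt{r-T}$ is instead what handles the low-probability events $A_i^C$ from Lemma~\ref{Thm: SCL}.
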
 
\begin{proof}[Proof of Theorem \ref{thm: transfer t<r ap}]\label{proof: transfer t<r ap}
	As shown in Theorem \ref{thm: trans ap}, optimizing loss function (\ref{trans contrastive task ap}) is equivalent to find the top-$r$ eigenspace of matrix
	\begin{equation*}
		\frac{1}{4n}\qty(\Delta(XX^\top)-\frac{1}{n-1}X(1_n 1_n^\top-I_n)X^\top)+\sum_{i=1}^T \frac{\alpha_i}{(n_i-1)^2}X_iH_{n_i}y_iy_i^\top H_{n_i}X_i^\top.
	\end{equation*}
	Again denote $\hat{M}_2 \triangleq \frac{1}{n}(\Delta(XX^\top)-\frac{1}{n-1}X(1_n 1_n^\top-I_n)X^\top)$ and $\hat{N}_i\triangleq\frac{1}{(n_i-1)^2}X_iHy_iy_i^\top HX_i^\top$.
	By equation (\ref{CL difference bound}) we know that:
	\begin{equation*}
		\mathbb{E}\|\hat{M}_2-M\|_2\lesssim \nu^2\qty(\frac{r}{d}\log d+\sqrt{\frac{r}{n}}+\frac{r}{n})+\sigma_{(1)}^2\qty(\sqrt{\frac{d}{n}}+\frac{d}{n})+\sigma_{(1)}\nu\sqrt{\frac{d}{n}}.
	\end{equation*}
	By Theorem \ref{Thm: SCL} we know that for each task $\mathcal{S}_i$, we can find an event $A_i$ such that $\mathbb{P}(A_i) =O(\sqrt{\frac{d}{n}})$:
	\begin{equation*}
		\mathbb{E}\|\hat{N}_i-\nu^2U^\star w_i w_i^{\top} U^{\star\top}\|_F\1\{A_i\}\lesssim \sqrt{\frac{d}{n_i}}\sigma_{(1)}\nu.
	\end{equation*}
	The target matrix is $N = \nu^2U^\star U^{\star \top}+\sum_{i=1}^{T}\alpha_i\nu^2U^\star w_i w_i^{ T} U^{\star \top}$, and we can obtain the upper bound for the difference between $N$ and $\hat{N}$:
	\begin{equation}
		\label{eq: transfer upper bound}
		\begin{aligned}
			&\mathbb{E}\|\hat{N}-N\|_2\1\{\cap_{i=1}^T A_i\}\leq\frac{1}{4}\mathbb{E}\|\hat{M}_2-M\|_2+\sum_{i=1}^{T}\alpha_i\mathbb{E}\|\hat{N}_i-\nu^2U^\star w_i w_i^{\top} U^{\star \top}\|_F\1\{A_i\}\\
			\lesssim&\nu^2(\frac{r}{d}\log d+\sqrt{\frac{r}{n}}+\frac{r}{n})+\sigma_{(1)}^2\qty(\sqrt{\frac{d}{n}}+\frac{d}{n})+\sigma_{(1)}\nu\sqrt{\frac{d}{n}}+\sum_{i=1}^{T}\qty[\alpha_i\sqrt{\frac{d}{n_i}}\sigma_{(1)}\nu].
		\end{aligned}
	\end{equation}
	We divide the top-$r$ eigenspace $U_{\CL}$ of $W_{\CL}W_{\CL}^\top$ into two parts: the top-$T$ eigenspace $U_{\CL}^{(1)}$ and top-$(T+1)$ to top-$r$ eigenspace $U_{\CL}^{(2)}$. Similarly, we also divide the top-$r$ eigenspace $U^\star$ of $N$ into two parts: $U^{\star(1)}$ and $U^{\star(2)}$.
	Then applying Lemma \ref{Lemma: DK} we can bound the sine distance for each part: on the one hand,
	\begin{equation*}
		\begin{aligned}
			&\mathbb{E}\|\sin(\Theta(U_{\CL}^{(1)},U^{\star(1)}))\|_F\\
			=&\mathbb{E}\|\sin(\Theta(U_{\CL}^{(1)},U^{\star(1)}))\|_F\1\{\cap_{i=1}^T A_i\}+\mathbb{E}\|\sin(\Theta(U_{\CL}^{(1)},U^{\star(1)}))\|_F\1\{\cup_{i=1}^T A_i^C\}\\
			\leq&\frac{\sqrt{T}\mathbb{E}\|\hat{N}-N\|_2\1\{\cap_{i=1}^T A_i\}}{\lambda_{(T)}(N)-\lambda_{(T+1)}(N)}+\sqrt{T}\mathbb{P}(\cup_{i=1}^T A_i^C)\\
			\lesssim&\frac{\sqrt{T}}{\min_{i\in[T]}\alpha_i\nu^2}\qty(\nu^2\frac{r}{d}\log d+\sigma_{(1)}^2\sqrt{\frac{d}{n}}+\sum_{i=1}^{T}\alpha_i\sqrt{\frac{d}{n_i}}\sigma_{(1)}\nu )+\sqrt{T}\sum_{i=1}^T\sqrt{\frac{d}{n_i}}\\
			\lesssim&\frac{\sqrt{T}}{\min_{i\in[T]}\alpha_i}\qty(\frac{r}{d}\log d+\sqrt{\frac{d}{n}})+\sqrt{T}\sum_{i=1}^{T}\frac{\alpha_i+\min_{i\in[T]}\alpha_i}{\min_{i\in[T]}\alpha_i}\sqrt{\frac{d}{n_i}}.
		\end{aligned}
	\end{equation*}
	On the other hand,
	\begin{equation*}
		\begin{aligned}
			&\mathbb{E}\|\sin(\Theta(U_{\CL}^{(2)},U^{\star(2)}))\|_F\\
			&=\mathbb{E}\|\sin(\Theta(U_{\CL}^{(2)},U^{\star(2)}))\|_F\1\{\cap_{i=1}^T A_i\}+\mathbb{E}\|\sin(\Theta(U_{\CL}^{(2)},U^{\star(2)}))\|_F\1\{\cup_{i=1}^T A_i^C\}\\
			\leq&\frac{\sqrt{r-T}\mathbb{E}\|\hat{N}-N\|_2\1\{\cap_{i=1}^T A_i\}}{\min\{\lambda_{(T)}(N)-\lambda_{(T+1)}(N),\lambda_{(r)}(N)\}}+\sqrt{r-T}\mathbb{P}(\cup_{i=1}^T A_i^C)\\
			\lesssim&\frac{\sqrt{r-T}}{\min_{i\in [T]}\{\alpha_i,1\}\nu^2}\qty(\nu^2\frac{r}{d}\log d+\sigma_{(1)}^2\sqrt{\frac{d}{n}}+\sum_{i=1}^{T}\alpha_i\sqrt{\frac{d}{n_i}}\sigma_{(1)}\nu )+\sqrt{r-T}\sum_{i=1}^T\sqrt{\frac{d}{n_i}}\\
			\lesssim&\frac{\sqrt{r-T}}{\min_{i\in [T]}\{\alpha_i,1\}}\qty(\frac{r}{d}\log d+\sqrt{\frac{d}{n}})+\sqrt{r-T}\sum_{i=1}^T\qty(\frac{\alpha_i}{\min_{i\in [T]}\{\alpha_i,1\}}+1)\sqrt{\frac{d}{n_i}}.
		\end{aligned}
	\end{equation*}
	Note that:
	\begin{equation*}
		\begin{aligned}
			&\|\sin(\Theta(U_{\CL},U^\star))\|_F^2\\
			&= r-\|U_{\CL}^\top U^\star\|_F^2\\
			&\leq r-\|U_{\CL}^{(1)\top}U^{\star(1)}\|_F^2-\|U_{\CL}^{(2)T}U^{\star(2)}\|_F^2\\
			&\leq T-\|U_{\CL}^{(1)\top}U^{\star(1)}\|_F^2+(r-T)-\|U_{\CL}^{(2)\top}U^{\star(2)}\|_F^2\\
			&\leq \|\sin\Theta(U_{\CL}^{(1)},U^{\star(1)})\|_F^2+\|\sin\Theta(U_{\CL}^{(1)},U^{\star(1)})\|_F^2,
		\end{aligned}
	\end{equation*}
	and the sine distance has trivial upper bounds:
	\begin{equation*}
	    \|\sin \Theta(U_{\CL}^{(1)},U^{\star(1)})\|_F^2\leq T,\quad \|\sin \Theta(U_{\CL}^{(2)},U^{\star(2)})\|_F^2\leq r-T
	\end{equation*}
	Thus we can conclude:
	\begin{equation*}
		\begin{aligned}
			&\mathbb{E}\|\sin(\Theta(U_{\CL},U^\star))\|_F\\
			&\leq\mathbb{E}\|\sin(\Theta(U_{\CL}^{(1)},U^{\star(1)}))\|_F+\mathbb{E}\|\sin(\Theta(U_{\CL}^{(2)},U^{\star(2)}))\|_F\\
			&\lesssim\qty(\frac{\sqrt{r-T}}{\min_{i\in [T]}\{\alpha_i,1\}}\qty(\frac{r}{d}\log d+\sqrt{\frac{d}{n}})+\sum_{i=1}^T\sqrt{r-T}\frac{\alpha_i+\min_{i\in [T]}\{\alpha_i,1\}}{\min_{i\in [T]}\{\alpha_i,1\}}\sqrt{\frac{d}{n_i}})\wedge\sqrt{r-T}\\
			&\quad+\qty(\frac{\sqrt{T}}{\min_{i\in[T]}\alpha_i}\qty(\frac{r}{d}\log d+\sqrt{\frac{d}{n}})
			+\sum_{i=1}^T\sqrt{T}\frac{\alpha_i+\min_{i\in[T]}\alpha_i}{\min_{i\in[T]}\alpha_i}\sqrt{\frac{d}{n_i}})\wedge\sqrt{T}.
		\end{aligned}
	\end{equation*}
\end{proof}

\begin{theorem}[Restatement of Theorem \ref{thm: downstream risk transfer t<r}]\label{thm: downstream risk transfer t<r ap}
    Suppose the conditions in Theorem \ref{thm: transfer t<r} hold. Then, 
    \begin{align}
		&\mathbb{E}_{\mathcal{D}}[\inf_{w\in\mathbb{R}^r} \mathbb{E}_{\mathcal{E}}[\ell_r(\delta_{W_{\CL}, w})]-\inf_{w\in\mathbb{R}^r}\mathbb{E}_{\mathcal{E}}[\ell_r(\delta_{U^{\star \top}, w})]\\
		&\quad\lesssim \sqrt{r-T} \left(\frac{r\log d}{d}+\sqrt{\frac{d}{n}}+\alpha T\sqrt{\frac{ d}{m}} \wedge 1 \right) +\sqrt{T}\left(\frac{r\log d}{\alpha d}+\frac{1}{\alpha}\sqrt{\frac{d}{n}}+T\sqrt{\frac{d}{m}}\right).
	\end{align}
\end{theorem}

\begin{proof}[Proof of Theorem \ref{thm: downstream risk transfer t<r ap}]\label{proof: downstream risk transfer t<r ap}
    Theorem \ref{thm: downstream risk transfer t<r} follows directly from Lemma \ref{lem: excess risk} and Theorem \ref{thm: transfer t<r}.
\end{proof}

\begin{theorem}[Generalized version of Theorem \ref{thm: transfer t>r}]
	\label{thm: transfer t>r ap}
	In the regression setting where $y^t = \langle w_t, z \rangle/\nu$ , suppose Assumptions \ref{asm: regular}-\ref{asm: incoherent} hold for spiked covariance model (\ref{model: spiked covariance}) and $n> d\gg r$, if we further assume that $T\geq r$ and $\sum_{i=1}^{T}\alpha_iw_iw_i^\top$ is full rank, suppose $W^{\CL}$ is the optimal solution of optimization problem Equation \eqref{trans contrastive task ap}, and denote its singular value decomposition as $W_{\CL}=(U_{\CL}\Sigma_{\CL}V_{\CL}^\top)^\top$, then we have:
	\begin{equation*}
		\begin{aligned}
			\mathbb{E}\|\sin(\Theta(U_{\CL},U^\star))\|_F\lesssim&\frac{\sqrt{r}}{1+\nu^2\lambda_{(r)}(\sum_{i=1}^{T}\alpha_i w_iw_i^\top)}\qty(\frac{r}{d}\log d
			+\sqrt{\frac{d}{n}})\\&+\sqrt{r}\sum_{i=1}^T\qty(\frac{\alpha_i}{1+\nu^2\lambda_{(r)}(\sum_{i=1}^{T}\alpha_i w_iw_i^\top)}+1)\sqrt{\frac{d}{n_i}}.
		\end{aligned}
	\end{equation*}
\end{theorem}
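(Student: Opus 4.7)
The plan is to mirror the strategy of Theorem \ref{thm: transfer t<r ap}, but exploit the fact that when $T\geq r$ and $\sum_{i=1}^T \alpha_i w_i w_i^\top$ is full-rank, the entire column space of $U^\star$ is ``covered'' by the supervised signals, so we obtain a single eigengap of order $\nu^2(1+\lambda_{(r)}(\sum_i \alpha_i w_iw_i^\top))$ rather than the two-block gap that forced the $\sqrt{T}$/$\sqrt{r-T}$ split before.

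First, I would invoke Theorem \ref{thm: trans ap} to identify $U_{\CL}$ as the top-$r$ eigenspace of
\begin{equation*}
\hat N \;\triangleq\; \frac{1}{4n}\bigl(\Delta(XX^\top)-\tfrac{1}{n-1}X(1_n1_n^\top-I_n)X^\top\bigr)+\sum_{i=1}^T \frac{\alpha_i}{(n_i-1)^2} X_i H_{n_i} y_i y_i^\top H_{n_i} X_i^\top,
\end{equation*}
and set the population target matrix to be
\begin{equation*}
N \;=\; \nu^2 U^\star U^{\star\top} + \sum_{i=1}^T \alpha_i \nu^2 U^\star w_i w_i^\top U^{\star\top} \;=\; \nu^2 U^\star\!\bigl(I_r + \textstyle\sum_{i=1}^T \alpha_i w_i w_i^\top\bigr)U^{\star\top}.
\end{equation*}
Since the bracket is full-rank, $N$ has exactly rank $r$, its top-$r$ eigenspace is $U^\star$, and its $r$-th eigenvalue satisfies $\lambda_{(r)}(N) \geq \nu^2(1+\lambda_{(r)}(\sum_i \alpha_i w_i w_i^\top))$ while $\lambda_{(r+1)}(N)=0$.

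Next, I would bound $\mathbb{E}\|\hat N - N\|_2$ on the good event $\cap_{i=1}^T A_i$, where $A_i$ is the event from Lemma \ref{Thm: SCL} applied to source task $i$ (with $\mathbb{P}(A_i^c)=O(\sqrt{d/n_i})$). By the triangle inequality, combining the self-supervised bound from Equation \eqref{CL difference bound} with the $T$ HSIC bounds from Lemma \ref{Thm: SCL} yields
\begin{equation*}
\mathbb{E}\|\hat N - N\|_2\mathbf{1}\{\cap_i A_i\} \;\lesssim\; \nu^2\Bigl(\tfrac{r}{d}\log d+\sqrt{\tfrac{d}{n}}\Bigr) + \sigma_{(1)}\nu \sum_{i=1}^T \alpha_i \sqrt{\tfrac{d}{n_i}},
\end{equation*}
where I absorb the $\sigma_{(1)}^2(\sqrt{d/n}+d/n)$, $r/n$ and $\sqrt{r/n}$ terms into the displayed ones using $n>d\gg r$ and Assumption \ref{asm: SNR}.

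Then I would apply the Davis--Kahan theorem (Lemma \ref{Lemma: DK}) with the single eigengap $\lambda_{(r)}(N)-\lambda_{(r+1)}(N) = \nu^2(1+\lambda_{(r)}(\sum_i \alpha_i w_i w_i^\top))$, so that
\begin{equation*}
\mathbb{E}\|\sin\Theta(U_{\CL},U^\star)\|_F \mathbf{1}\{\cap_i A_i\} \;\lesssim\; \frac{\sqrt{r}\,\mathbb{E}\|\hat N - N\|_2\mathbf{1}\{\cap_i A_i\}}{\nu^2\bigl(1+\lambda_{(r)}(\sum_i \alpha_i w_i w_i^\top)\bigr)}.
\end{equation*}
On the complementary event $\cup_i A_i^c$, I would use the trivial bound $\|\sin\Theta(\cdot,\cdot)\|_F\leq\sqrt{r}$ together with $\mathbb{P}(\cup_i A_i^c)\leq \sum_i \mathbb{P}(A_i^c)\lesssim \sum_i \sqrt{d/n_i}$, contributing the extra $\sqrt{r}\sum_i \sqrt{d/n_i}$ term (which becomes the ``$+1$'' inside the parenthesis of the stated bound). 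Using $\sigma_{(1)}\asymp\nu$ from Assumption \ref{asm: SNR} to cancel the $\nu^2$, and summing the two pieces, produces exactly the claimed bound.

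The main obstacle is purely bookkeeping: carefully distributing the eigengap factor $1+\nu^2\lambda_{(r)}(\sum_i \alpha_i w_iw_i^\top)$ so that the noise-only terms pick up $1/(1+\nu^2\lambda_{(r)}(\cdots))$ while the labeled-data variance terms split into an $\alpha_i/(1+\nu^2\lambda_{(r)}(\cdots))$ piece (from Davis--Kahan on the good event) plus a residual ``$1$'' piece (from the trivial bound on $\cup_i A_i^c$). Unlike the $T<r$ case, no two-block decomposition of $U_{\CL}$ is needed, because the supervised HSIC terms already span the full $r$-dimensional signal subspace, so a single application of Davis--Kahan suffices.
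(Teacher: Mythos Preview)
Your proposal is correct and follows essentially the same approach as the paper's proof: identify the population target $N$, split the expectation on the good event $\cap_i A_i$ versus its complement, apply Davis--Kahan (Lemma \ref{Lemma: DK}) once with the single eigengap $\lambda_{(r)}(N)$ on the good event using Equation \eqref{eq: transfer upper bound}, and use the trivial $\sqrt{r}$ bound together with the union bound $\mathbb{P}(\cup_i A_i^c)\lesssim\sum_i\sqrt{d/n_i}$ on the bad event. The paper's proof makes exactly the same observation that, unlike the $T<r$ case, no two-block eigenspace decomposition is needed here.
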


\begin{proof}[Proof of Theorem \ref{thm: transfer t>r ap}]\label{proof: transfer t>r ap}
	The proof strategy is similar to that of Theorem \ref{thm: transfer t<r}, here the difference is that each direction can be accurately estimated by the labeled data and we do not need to separate the eigenspace. Directly applying Lemma \ref{Lemma: DK} and equation (\ref{eq: transfer upper bound}) we have:
	\begin{equation*}
		\begin{aligned}
			&\mathbb{E}\|\sin(\Theta(U_{\CL},U^\star))\|_F\\
			&=\mathbb{E}\|\sin(\Theta(U_{\CL},U^\star))\|_F\1\{\cap_{i=1}^T A_i\}+\mathbb{E}\|\sin(\Theta(U_{\CL},U^\star))\|_F\1\{\cup_{i=1}^T A_i^C\}\\
			&\lesssim\frac{\sqrt{r}\mathbb{E}\|\hat{N}-N\|_2\1\{\cap_{i=1}^T A_i\}}{\lambda_{(r)}(N)}+\sqrt{r}\mathbb{P}(\cup_{i=1}^T A_i^C)\\
			&\lesssim\frac{\sqrt{r}}{\nu^2+\nu^2\lambda_{(r)}(\sum_{i=1}^{T}\alpha_i w_iw_i^\top)}\qty(\nu^2\frac{r}{d}\log d+\sigma_{(1)}^2\sqrt{\frac{d}{n}}+\sum_{i=1}^{T}\alpha_i\sqrt{\frac{d}{n_i}}\sigma_{(1)}\nu)+\sqrt{r}\sum_{i=1}^T\sqrt{\frac{d}{n_i}}\\
			&\lesssim\frac{\sqrt{r}}{1+\lambda_{(r)}(\sum_{i=1}^{T}\alpha_i w_iw_i^\top)}\qty(\frac{r}{d}\log d+\sqrt{\frac{d}{n}})+\sqrt{r}\sum_{i=1}^T\qty(\frac{\alpha_i}{1+\lambda_{(r)}(\sum_{i=1}^{T}\alpha_i w_iw_i^\top)}+1)\sqrt{\frac{d}{n_i}}.
		\end{aligned}
	\end{equation*}
\end{proof}

\begin{theorem}[Restatement of Theorem \ref{thm: downstream risk transfer t>r}]\label{thm: downstream risk transfer t>r ap}
    Suppose the conditions in Theorem \ref{thm: transfer t>r} hold. Then, 
    \begin{align}
		\mathbb{E}_{\mathcal{D}}[\inf_{w\in\mathbb{R}^r} \mathbb{E}_{\mathcal{E}}[\ell_r(\delta_{W_{\CL}, w})]-\inf_{w\in\mathbb{R}^r}\mathbb{E}_{\mathcal{E}}[\ell_r(\delta_{U^{\star \top}, w})] \lesssim \frac{\sqrt{r}}{\alpha+1}\qty(\frac{r}{d}\log d+\sqrt{\frac{d}{n}})+T\sqrt{\frac{dr}{m}}.
	\end{align}
\end{theorem}

\begin{proof}[Proof of Theorem \ref{thm: downstream risk transfer t>r}]\label{proof: downstream risk transfer t>r}
    Theorem \ref{thm: downstream risk transfer t>r} follows directly from Lemma \ref{lem: excess risk} and Theorem \ref{thm: transfer t>r}.
\end{proof}

Now we move to a binary classification setting, where labels $y$ are generated by $y=\sign(\langle w^\star, z\rangle)$ instead of $y=\langle w^\star, z\rangle/\nu$ in previous regression setting. We first give the corresponding generalized version of Theorem \ref{thm: transfer t<r classification} and Theorem \ref{thm: transfer t>r classification} to cover the general imbalanced settings.
\begin{theorem}[Generalized version of Theorem \ref{thm: transfer t<r classification}]
	\label{thm: transfer t<r ap classification}
	In the classification setting where $y^t = \sign(\langle w_t, z \rangle)$ , suppose Assumptions \ref{asm: regular}-\ref{asm: incoherent} hold for spiked covariance model (\ref{model: spiked covariance}), $z$ follows a Gaussian distribution, and $n> d\gg r$, if we further assume that $T<r$ and $w_t$'s are orthogonal to each other, and let $W^{\CL}$ be any solution that optimizes the problem in Equation \eqref{trans contrastive task ap}, and denote its singular value decomposition as $W_{\CL}=(U_{\CL}\Sigma_{\CL}V_{\CL}^\top)^\top$, then we have:
	
	\begin{equation*}
		\begin{aligned}
			\mathbb{E}\|\sin(\Theta(U_{\CL},U^\star ))\|_F\lesssim&\qty(\frac{\sqrt{r-T}}{\min_{i\in [T]}\{\alpha_i,1\}}+\frac{\sqrt{T}}{\min_{i\in[T]}\alpha_i})\qty(\frac{r}{d}\log d+\sqrt{\frac{d}{n}})\\
			&+\sum_{i=1}^T\qty(\sqrt{r-T}\frac{\alpha_i+\min_{i\in [T]}\{\alpha_i,1\}}{\min_{i\in [T]}\{\alpha_i,1\}}+\sqrt{T}\frac{\alpha_i+\min_{i\in[T]}\alpha_i}{\min_{i\in[T]}\alpha_i})\sqrt{\frac{d}{n_i}}.
		\end{aligned}
	\end{equation*}
\end{theorem} 
\begin{theorem}[Generalized version of Theorem \ref{thm: transfer t>r classification}]
	\label{thm: transfer t>r ap classification}
	In the classification setting where $y^t = \sign(\langle w_t, z \rangle)$ , suppose Assumptions \ref{asm: regular}-\ref{asm: incoherent} hold for spiked covariance model (\ref{model: spiked covariance}), $z$ follows a Gaussian distribution, and $n> d\gg r$, if we further assume that $T\geq r$ and $\sum_{i=1}^{T}\alpha_iw_iw_i^\top$ is full rank, suppose $W^{\CL}$ is the optimal solution of optimization problem Equation \eqref{trans contrastive task ap}, and denote its singular value decomposition as $W_{\CL}=(U_{\CL}\Sigma_{\CL}V_{\CL}^\top)^\top$, then we have:
	\begin{equation*}
		\begin{aligned}
			\mathbb{E}\|\sin(\Theta(U_{\CL},U^\star))\|_F\lesssim&\frac{\sqrt{r}}{1+\nu^2\lambda_{(r)}(\sum_{i=1}^{T}\alpha_i w_iw_i^\top)}\qty(\frac{r}{d}\log d
			+\sqrt{\frac{d}{n}})\\&+\sqrt{r}\sum_{i=1}^T\qty(\frac{\alpha_i}{1+\nu^2\lambda_{(r)}(\sum_{i=1}^{T}\alpha_i w_iw_i^\top)}+1)\sqrt{\frac{d}{n_i}}.
		\end{aligned}
	\end{equation*}
\end{theorem}
The only difference between these two settings is the distribution of labels $y$. Thus to prove Theorem \ref{thm: transfer t<r ap classification} and Theorem \ref{thm: transfer t>r ap classification}, we only need to recover Lemma \ref{Thm: SCL} in this binary classification setting. Since in the classification setting the labels are discrete and could be harder to analyze, we make the Gaussian assumption on $z$ to make problems mathematically tractable in these two Theorems. 

\begin{lemma}[Classification version of Lemma \ref{Thm: SCL}]
    \label{lem: label concentration}
    In the binary classification setting, under the conditions similar to Theorem \ref{thm: recover CL} and assume $z$ in the spiked covariance model \eqref{model: spiked covariance} follows a Gaussian distribution, we can find an event $A$ such that $\mathbb{P}(A^C)=O(\sqrt{d/n})$ and:
	\begin{equation}
	\label{SCL eq 2}
		\mathbb{E}\qty[\norm{\frac{1}{(n-1)^2} XHyy^\top HX^\top- \frac{2\nu^2}{\pi} U^\star w^\star w^{\star \top} U^{\star \top}}_F\1\{A\}] \lesssim \sqrt{\frac{d}{n}}\sigma_{(1)}\nu.
	\end{equation}
\end{lemma}
\begin{proof}
    Again, by \eqref{eq: replace n} we have:
	\begin{equation*}
	\begin{aligned}
	    &\|\frac{1}{(n-1)^2} XHyy^\top HX^\top-\frac{2\nu^2}{\pi}U^\star w^\star w^{\star \top} U^{\star \top}\|_F\\
	    \lesssim&\|\frac{1}{n^2} XHyy^\top HX^\top -\frac{2\nu^2}{\pi}U^\star w^\star w^{\star \top } U^{\star \top }\|_F+\frac{r}{n}\nu^2,
	\end{aligned}
	\end{equation*}
	thus we can replace the $\frac{1}{(n-1)^2}$ with $\frac{1}{n}$ in equation (\ref{SCL eq 2}) and conclude the proof. Denote $\hat{N}\triangleq\frac{1}{n^2} XHyy^\top HX^\top$, note that both of $\hat{N}$ and $Uw^\star w^{\star \top} U^\top$ are rank-1 matrices. We first bound the difference between $\frac{1}{n}XHy$ and $\sqrt{\frac{2\nu^2}{\pi}}Uw^\star$:
	\begin{equation}
		\begin{aligned}
			\label{eq: vec diff}
			\|\frac{1}{n}XHy-\sqrt{\frac{2\nu^2}{\pi}} U^\star w^\star\|=&\|\frac{1}{n}(U^\star Z+E)Hy-\sqrt{\frac{2\nu^2}{\pi}} U^\star w^\star\|\\
			\leq& \|\frac{1}{n}U^\star Zy -\sqrt{\frac{2\nu^2}{\pi}} U^\star w^\star \|+\frac{1}{n}\|Ey \|+\frac{1}{n}\|U^\star Z\bar{y} \|+\frac{1}{n}\|E\bar{y}\|.
		\end{aligned}
	\end{equation}
	We deal with the four terms in (\ref{eq: vec diff}) separately:
	\begin{enumerate}
		\item For the first term, note that:$
		    \frac{1}{n}Zy=\frac{1}{n}\sum_{i=1}^n z_i\sign(z_i^\top w^\star)$
		and $z_i\sim \mathcal{N}(0,\nu^2I_r)$, thus $z_i\sign(z_i^\top w^\star)$ follows a folded Gaussian distribution, which is a reflection of standard Gaussian distribution along the normal plane of $w^\star$, thus
		\begin{equation}
		\label{vecdiff term1}
		\begin{aligned}
		    \mathbb{E}\|\frac{1}{n}U^\star Zy -\sqrt{\frac{2\nu^2}{\pi}} U^\star w^\star \|&\leq\mathbb{E}\|\frac{1}{n} Zy -\sqrt{\frac{2\nu^2}{\pi}} w^\star \|\leq\sqrt{\mathbb{E}\|\frac{1}{n} Zy -\sqrt{\frac{2\nu^2}{\pi}} w^\star \|^2}\\
		    &\leq\sqrt{\frac{r}{n}}\nu
		\end{aligned}
		\end{equation}
		\item For the second term, note that $y$ and $E$ are independent and $|y|=1$ almost surely
		\begin{equation}
			\begin{aligned}
				\label{vecdiff term2}
				\frac{1}{n}\mathbb{E}\|Ey \|=\frac{1}{n}\mathbb{E}\|\sum_{i=1}^n\xi_i\|\leq\frac{1}{n}\sqrt{\mathbb{E}\|\sum_{i=1}^n\xi_i\|^2}\lesssim\sqrt{\frac{d}{n}}\sigma_{(1)}
			\end{aligned}
		\end{equation} 
		\item For the third term and fourth terms, we have:
		\begin{equation}
			\label{vecdiff term3}
			\mathbb{E}\frac{1}{n}\|U^\star Z\bar{y} \|+\mathbb{E}\frac{1}{n}\|E\bar{y}\|\leq\mathbb{E}\frac{1}{n}\|\sum_{i=1}^n z_i \|+\mathbb{E}\frac{1}{n}\|\sum_{i=1}^n \xi_i\|\lesssim\sqrt{\frac{r}{n}}\nu+\sqrt{\frac{d}{n}}\sigma_{(1)}.
		\end{equation}
	\end{enumerate}
	Combine these three equations (\ref{vecdiff term1})(\ref{vecdiff term2})(\ref{vecdiff term3}) together we have:
	\begin{equation}
		\label{vecdiff bound}
		\mathbb{E}\|\frac{1}{n}XHy-\sqrt{\frac{2\nu^2}{\pi}} U^\star w^\star\|\lesssim \sqrt{\frac{d}{n}}\sigma_{(1)}.
	\end{equation}
	With equation (\ref{vecdiff bound}), we can now turn to the difference between $\hat{N}$ and $\frac{2\nu^2}{\pi}Uw^\star w^{\star \top} U^\top$. By Lemma \ref{Lemma: alphabeta} we know that:
	\begin{equation*}
		\|\hat{N}-\frac{2\nu^2}{\pi}U^\star w^\star w^{\star \top} U^{\star \top}\|_F\lesssim(\|\frac{1}{n}XHy\|+\|\sqrt{\frac{2\nu^2}{\pi}} U^\star w^\star\|)\|\frac{1}{n}XHy-\sqrt{\frac{2\nu^2}{\pi}} U^\star w^\star\|.
	\end{equation*}
	Using Markov's inequality, we can conclude from (\ref{vecdiff bound}) that:
	\begin{equation*}
		\mathbb{P}(\|\frac{1}{n}XHy-\sqrt{\frac{2\nu^2}{\pi}} U^\star w^\star\|\geq \nu)\leq\frac{\mathbb{E}\|\frac{1}{n}XHy-\sqrt{\frac{2\nu^2}{\pi}} U^\star w^\star\|}{\nu}\lesssim\sqrt{\frac{d}{n}}.
	\end{equation*}
	Then denote $A=\{\omega: \|\frac{1}{n}XHy-\sqrt{\frac{2\nu^2}{\pi}}U^\star w^\star\|_2< \nu\}$ we have:
	\begin{equation*}
		\begin{aligned}
			\mathbb{E}\|\hat{N}-\frac{2\nu^2}{\pi}U^\star w^\star w^{\star \top} U^{\star\top}\|_F\1\{A\}\lesssim&\mathbb{E}(\|\frac{1}{n}XHy\|+\|\sqrt{\frac{2\nu^2}{\pi}} U^\star w^\star\|)\|\frac{1}{n}XHy-\sqrt{\frac{2\nu^2}{\pi}} U^\star w^\star|\|\1\{A\}\\
			\lesssim&\nu\mathbb{E}\|\frac{1}{n}XHy-\sqrt{\frac{2\nu^2}{\pi}} U^\star w^\star\|\lesssim\sqrt{\frac{d}{n}}\sigma_{(1)}\nu.
		\end{aligned}
	\end{equation*}
	which finished the proof.
\end{proof}
With Lemma \ref{lem: label concentration} established, it is straightforward to obtain the same results as in Theorem~\ref{thm: transfer t<r}, Theorem~\ref{thm: downstream risk transfer t<r}, Theorem~\ref{thm: transfer t>r} and Theorem~\ref{thm: downstream risk transfer t>r} for this binary classification setting.

\section{Useful lemmas}
In this section, we list some of the main techniques that have been used in the proof of the main results.
\begin{lemma} [Theorem 2 in \citet{yu2015useful}]
	\label{Lemma: DK}
	Let $\Sigma, \hat{\Sigma} \in \mathbb{R}^{p \times p}$ be symmetric, with eigenvalues $\lambda_{1} \geq \ldots \geq \lambda_{p}$ and $\hat{\lambda}_{1} \geq$
	$\ldots \geq \hat{\lambda}_{p}$ respectively. Fix $1 \leq r \leq s \leq p$ and assume that $\min \left(\lambda_{r-1}-\lambda_{r}, \lambda_{s}-\lambda_{s+1}\right)>0$
	where $\lambda_{0}:=\infty$ and $\lambda_{p+1}:=-\infty .$ Let $d:=s-r+1$, and let $V=\left(v_{r}, v_{r+1}, \ldots, v_{s}\right) \in \mathbb{R}^{p \times d}$
	and $\hat{V}=\left(\hat{v}_{r}, \hat{v}_{r+1}, \ldots, \hat{v}_{s}\right) \in \mathbb{R}^{p \times d}$ have orthonormal columns satisfying $\Sigma v_{j}=\lambda_{j} v_{j}$ and $\hat{\Sigma} \hat{v}_{j}=\hat{\lambda}_{j} \hat{v}_{j}$ for $j=r, r+1, \ldots, s .$ Then
	$$
	\|\sin \Theta(\hat{V}, V)\|_{\mathrm{F}} \leq \frac{2 \min \left(d^{1 / 2}\|\hat{\Sigma}-\Sigma\|_{\mathrm{2}},\|\hat{\Sigma}-\Sigma\|_{\mathrm{F}}\right)}{\min \left(\lambda_{r-1}-\lambda_{r}, \lambda_{s}-\lambda_{s+1}\right)}.
	$$
	Moreover, there exists an orthogonal matrix $\hat{O} \in \mathbb{R}^{d \times d}$ such that
	$$
	\|\hat{V} \hat{O}-V\|_{\mathrm{F}} \leq \frac{2^{3 / 2} \min \left(d^{1 / 2}\|\hat{\Sigma}-\Sigma\|_{\mathrm{2}},\|\hat{\Sigma}-\Sigma\|_{\mathrm{F}}\right)}{\min \left(\lambda_{r-1}-\lambda_{r}, \lambda_{s}-\lambda_{s+1}\right)}.
	$$
\end{lemma}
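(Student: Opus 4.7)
\textbf{Proof proposal for Lemma \ref{Lemma: DK} (Davis--Kahan $\sin\Theta$ theorem).}
The plan is to reduce the sine-distance bound to a Sylvester-equation argument on the cross block $V_\perp^\top \hat V$, then translate the resulting operator/Frobenius bound back to the $\sin\Theta$ quantity, and finally invoke a standard alignment lemma for the ``moreover'' part.

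First I would set up the cross block. Let $V_\perp \in \mathbb{R}^{p\times(p-d)}$ have orthonormal columns spanning the orthogonal complement of $\mathrm{col}(V)$. By the definition recalled in Section~\ref{sec: distance} and Proposition~\ref{prop: distance 1}, $\|\sin\Theta(\hat V, V)\|_F = \|V_\perp^\top \hat V\|_F$, so it suffices to control $C := V_\perp^\top \hat V$. Decompose $\Sigma = V\Lambda V^\top + V_\perp \Lambda_\perp V_\perp^\top$ with $\Lambda = \diag(\lambda_r, \dots, \lambda_s)$ and $\Lambda_\perp$ the diagonal of the remaining eigenvalues, and let $\hat\Lambda = \diag(\hat\lambda_r, \dots, \hat\lambda_s)$, so that $\hat\Sigma \hat V = \hat V \hat\Lambda$ and $V_\perp^\top \Sigma = \Lambda_\perp V_\perp^\top$.

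Second, I would derive the Sylvester identity. Subtracting these two relations after left-multiplying by $V_\perp^\top$ and right-multiplying by $\hat V$ gives
\begin{equation*}
\Lambda_\perp C - C \hat\Lambda = V_\perp^\top (\Sigma - \hat\Sigma)\hat V =: -E.
\end{equation*}
Writing this coordinatewise, $C_{ij}\bigl((\lambda_\perp)_i - \hat\lambda_{r+j-1}\bigr) = -E_{ij}$, so $C_{ij} = E_{ij}/\bigl(\hat\lambda_{r+j-1} - (\lambda_\perp)_i\bigr)$ whenever the denominator is nonzero. The Yu--Wang--Samworth strengthening replaces the naive denominator $\hat\lambda_{r+j-1} - (\lambda_\perp)_i$, which would require a perturbed gap, by the original gap $\delta := \min(\lambda_{r-1}-\lambda_r,\ \lambda_s-\lambda_{s+1})$ at the cost of the constant $2$. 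This is done by a case split: when $\hat\lambda_{r+j-1}$ lies in the good region where $|\hat\lambda_{r+j-1} - (\lambda_\perp)_i| \geq \delta/2$, invert directly; otherwise transfer the bound onto the complementary block via a symmetric Sylvester equation on $V^\top \hat V_\perp$, still governed by $\delta$.

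Third, I would convert this into the claimed norms. Noting $\|V_\perp\|_2 = 1$, $\|\hat V\|_F = \sqrt d$ and $\|\hat V\|_2 = 1$, we have $\|E\|_F \leq \|\hat\Sigma - \Sigma\|_F$ and also $\|E\|_F \leq \|V_\perp\|_2\|\hat\Sigma-\Sigma\|_2\|\hat V\|_F = \sqrt d\,\|\hat\Sigma-\Sigma\|_2$, and the Sylvester bound from the previous step gives $\|C\|_F \leq 2\|E\|_F/\delta$; combining yields the first displayed inequality. The ``moreover'' part is then a direct consequence of the alignment lemma already recorded in the paper as Lemma~\ref{lem: bound 2 norm by sin distance}: the SVD of $V^\top \hat V$ produces an orthogonal $\hat O \in \mathbb{O}_{d,d}$ with $\|\hat V \hat O - V\|_F \leq \sqrt 2\,\|\sin\Theta(\hat V, V)\|_F$, which absorbs the extra factor $\sqrt 2$ into the bound.

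The main obstacle is precisely the Yu--Wang--Samworth refinement in step three: a straight inversion of the Sylvester equation only controls $C$ in terms of a perturbed gap $\delta - \|\hat\Sigma-\Sigma\|_2$, which is both weaker and requires the assumption $\delta > \|\hat\Sigma-\Sigma\|_2$. Recovering the clean statement with the \emph{unperturbed} gap $\delta$ and the constant $2$ requires the careful case analysis above (or, equivalently, a contour-integral representation of the spectral projectors and a resolvent estimate), and this is where essentially all of the technical content resides; everything else is routine linear algebra.
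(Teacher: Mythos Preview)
The paper does not prove this lemma at all: it is stated in the ``Useful lemmas'' appendix as a direct quotation of Theorem~2 in \cite{yu2015useful}, with no accompanying argument. So there is no in-paper proof to compare your proposal against.

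For what it is worth, your sketch is a faithful outline of the Yu--Wang--Samworth argument: the Sylvester identity $\Lambda_\perp C - C\hat\Lambda = V_\perp^\top(\Sigma-\hat\Sigma)\hat V$ is correct, the norm bookkeeping $\|E\|_F \le \min(\sqrt d\,\|\hat\Sigma-\Sigma\|_2,\ \|\hat\Sigma-\Sigma\|_F)$ is correct, and you correctly identify that the entire content lies in replacing the perturbed gap by the unperturbed $\delta$ at the price of the factor $2$ (in the original reference this is done via Weyl's inequality and a dichotomy on whether $\|\hat\Sigma-\Sigma\|_2$ exceeds $\delta/2$, rather than the ``complementary block'' argument you allude to, but the outcome is the same). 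The ``moreover'' clause via Lemma~\ref{lem: bound 2 norm by sin distance} is exactly right.
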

\begin{lemma} [Lemma 2 in \citet{zhang2018heteroskedastic}]
	\label{Lemma: bound EV}
	Assume that $E \in \mathbb{R}^{p_{1} \times p_{2}}$ has independent sub-Gaussian entries, $\operatorname{Var}\left(E_{i j}\right)=$ $\sigma_{i j}^{2}, \sigma_{C}^{2}=\max _{j} \sum_{i} \sigma_{i j}^{2}, \sigma_{R}^{2}=\max _{i} \sum_{j} \sigma_{i j}^{2}, \sigma_{(1)}^{2}=\max _{i, j} \sigma_{i j}^{2}.$ Assume that
	$$
	\left\|E_{i j} / \sigma_{i j}\right\|_{\psi_{2}}:=\max _{q \geq 1} q^{-1 / 2}\left\{\mathbb{E}\left(\left|E_{i j}\right| / \sigma_{i j}\right)^{q}\right\}^{1 / q} \leq \kappa.
	$$
	Let $V \in \mathbb{O}_{p_{2}, r}$ be a fixed orthogonal matrix. Then
	
	$$
	\mathbb{P}\left(\|E V\|_2 \geq 2\left(\sigma_{C}+x\right)\right) \leq 2 \exp \left(5 r-\min \left\{\frac{x^{4}}{\kappa^{4} \sigma_{(1)}^{2} \sigma_{C}^{2}}, \frac{x^{2}}{\kappa^{2} \sigma_{(1)}^{2}}\right\}\right),
	$$
	$$
	\mathbb{E}\|E V\|_2 \lesssim \sigma_{C}+\kappa r^{1 / 4}\left(\sigma_{(1)} \sigma_{C}\right)^{1 / 2}+\kappa r^{1 / 2}\sigma_{(1)}.
	$$
\end{lemma}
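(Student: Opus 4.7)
The plan is to reduce the spectral-norm estimate $\|EV\|_2 = \sup_{v \in \mathbb{S}^{r-1}} \|EVv\|_2$ to a pointwise bound via a covering-net argument and then integrate the resulting tail. Since $V \in \mathbb{O}_{p_2, r}$ is a linear isometry, writing $w = Vv$ shows that $w$ ranges over the unit sphere of the $r$-dimensional subspace $\mathrm{col}(V) \subset \mathbb{R}^{p_2}$. This structural fact is what allows the bound to scale with $r$ rather than $p_2$, and it is the hook on which the entire argument hangs.

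First I would prove a pointwise tail bound for a fixed unit $w \in \mathbb{R}^{p_2}$. Each coordinate $(Ew)_i = \sum_j w_j E_{ij}$ is sub-Gaussian with proxy at most $\kappa^2 \sum_j w_j^2 \sigma_{ij}^2 \leq \kappa^2 \sigma_{(1)}^2$, and the squared norm $\|Ew\|_2^2$ has expectation bounded by $\sum_i \sum_j w_j^2 \sigma_{ij}^2 \leq \sigma_C^2$ by definition of $\sigma_C^2 = \max_j \sum_i \sigma_{ij}^2$. Applying a Hanson-Wright/Bernstein-type concentration inequality for sums of independent sub-exponentials with heterogeneous scales yields, after a small rearrangement, the pointwise bound $\mathbb{P}\bigl(\|Ew\|_2 \geq \sigma_C + x\bigr) \leq 2\exp\!\bigl(-\min\{x^4 / (\kappa^4 \sigma_{(1)}^2 \sigma_C^2),\; x^2 / (\kappa^2 \sigma_{(1)}^2)\}\bigr)$. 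The two regimes correspond to the sub-Gaussian body (governed jointly by $\sigma_C$ and $\sigma_{(1)}$) and the sub-exponential tail (governed only by $\sigma_{(1)}$).

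Next I would pick a $(1/2)$-net $\mathcal{N}$ of the unit sphere of $\mathrm{col}(V)$ with $|\mathcal{N}| \leq 6^r$. A standard approximation argument gives $\|EV\|_2 \leq 2\sup_{w \in \mathcal{N}}\|Ew\|_2$, and a union bound of the pointwise tail over $\mathcal{N}$ costs $\log|\mathcal{N}| \leq r\log 6 \leq 5r$ additively in the exponent, producing exactly the form stated in the lemma. For the expectation bound, I would integrate this tail using $\mathbb{E}\|EV\|_2 \leq 2\sigma_C + 2\int_0^\infty \mathbb{P}(\|EV\|_2 \geq 2(\sigma_C + x))\,dx$. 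Setting the exponent to zero identifies the threshold $x^\star$ where $\min\{\cdot,\cdot\} = 5r$: in the sub-Gaussian regime $x^\star \asymp \kappa r^{1/4}(\sigma_{(1)}\sigma_C)^{1/2}$, and in the sub-exponential regime $x^\star \asymp \kappa r^{1/2}\sigma_{(1)}$. Splitting the integral at each crossover recovers the three-term bound $\sigma_C + \kappa r^{1/4}(\sigma_{(1)}\sigma_C)^{1/2} + \kappa r^{1/2}\sigma_{(1)}$.

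The main obstacle will be the pointwise Hanson-Wright step, specifically obtaining two distinct deviation regimes keyed to the different scales $\sigma_C$ and $\sigma_{(1)}$. Standard Hanson-Wright for sub-Gaussian vectors with identity covariance is not directly applicable, since the effective variances $\sum_j w_j^2 \sigma_{ij}^2$ are heterogeneous in $i$; one must track both the trace $\sum_i \sum_j w_j^2 \sigma_{ij}^2 \leq \sigma_C^2$ (controlling the Gaussian body) and the maximum $\max_i \sum_j w_j^2 \sigma_{ij}^2 \leq \sigma_{(1)}^2$ (controlling the exponential tail) simultaneously. Everything else — the net cardinality, the integration, and the final rearrangement of exponents — is routine bookkeeping that must be tuned to absorb absolute constants into the stated $2$ and $5r$.
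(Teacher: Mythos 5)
Your proposal is essentially the proof of this result in its cited source: this paper does not prove the lemma itself (it quotes it as Lemma 2 of \cite{zhang2018heteroskedastic}), and the argument there is exactly your route --- a Bernstein-type bound on $\|Ew\|_2^2$ for fixed $w$ tracking both $\sigma_C^2$ and $\sigma_{(1)}^2$, a $1/2$-net of the $r$-dimensional sphere with union bound giving the $5r$ term and the factor $2$, and integration of the tail split at the two crossover thresholds to get the three-term expectation bound. The only delicate point, as you note, is the constant bookkeeping needed so the exponent appears with no extra absolute constant, which the cited proof handles explicitly.
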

\begin{lemma} [Theorem 6 in \citet{cai2020non}]
	\label{Lemma: bound ZZT}
	Suppose $Z$ is a $p_{1}$-by- $p_{2}$ random matrix with independent mean-zero sub-Gaussian entries. If there exist $\sigma_{1}, \ldots, \sigma_{p} \geq 0$ such that $\left\|Z_{i j} / \sigma_{i}\right\|_{\psi_{2}} \leq C_{K}$ for constant $C_{K}>0$, then
	$$
	\mathbb{E}\left\|Z Z^\top-\mathbb{E} Z Z^\top\right\|_2 \lesssim \sum_{i} \sigma_{i}^{2}+\sqrt{p_{2} \sum_{i} \sigma_{i}^{2}} \cdot \max _{i} \sigma_{i}.
	$$
\end{lemma}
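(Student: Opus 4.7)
The plan is to express $ZZ^\top - \mathbb{E}ZZ^\top$ as a sum of $p_2$ independent centered rank-one matrices and apply a matrix Bernstein-type concentration bound tailored to sub-exponential summands. Writing $z_k \in \mathbb{R}^{p_1}$ for the $k$-th column of $Z$, the columns are independent (entries of $Z$ are independent) and
\[
    ZZ^\top - \mathbb{E}ZZ^\top \;=\; \sum_{k=1}^{p_2}\bigl(z_k z_k^\top - \mathbb{E}z_k z_k^\top\bigr).
\]
So the task reduces to computing the two ``Bernstein parameters'' controlling this sum: the matrix variance $v^2 := \bigl\|\sum_k \mathbb{E}(z_k z_k^\top - \mathbb{E}z_k z_k^\top)^2\bigr\|_2$ and a sub-exponential envelope $B$ for each summand.

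For the variance, I would expand $(z_k z_k^\top)^2 = \|z_k\|^2 z_k z_k^\top$ and use the following key symmetry observation: for $i \neq j$,
\[
    \mathbb{E}[\|z_k\|^2 Z_{ik} Z_{jk}] \;=\; \sum_{l} \mathbb{E}[Z_{lk}^2 Z_{ik} Z_{jk}] \;=\; 0,
\]
since when $l \notin \{i,j\}$ the three factors are independent with two mean-zero factors, and when $l \in \{i,j\}$ the remaining factor is a mean-zero variable independent of the rest. Hence $\mathbb{E}[\|z_k\|^2 z_k z_k^\top]$ is diagonal, with $i$-th entry bounded by a constant multiple of $\sigma_i^2 \sum_l \sigma_l^2$ (using $\mathbb{E}Z_{ik}^4 \lesssim \sigma_i^4$ from sub-Gaussianity). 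Subtracting $(\mathbb{E}z_k z_k^\top)^2$, which is also diagonal and smaller in the PSD order, and summing over $k$ gives $v^2 \lesssim p_2 \max_i \sigma_i^2 \cdot \sum_l \sigma_l^2$, the square of the second term in the target bound.

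For the envelope, $\|z_k z_k^\top - \mathbb{E}z_k z_k^\top\|_2 \leq \|z_k\|^2 + \sum_l \sigma_l^2$, and the sub-Gaussian hypothesis $\|Z_{lk}/\sigma_l\|_{\psi_2} \leq C_K$ yields that $\|z_k\|^2 = \sum_l Z_{lk}^2$ is sub-exponential with $\psi_1$-norm $\lesssim \sum_l \sigma_l^2$, giving $B \lesssim \sum_l \sigma_l^2$. Plugging into a sub-exponential matrix Bernstein inequality and integrating the tail yields
\[
    \mathbb{E}\bigl\|ZZ^\top - \mathbb{E}ZZ^\top\bigr\|_2 \;\lesssim\; \sqrt{v^2} + B \;\lesssim\; \sqrt{p_2 \textstyle\sum_l \sigma_l^2}\cdot \max_l \sigma_l + \sum_l \sigma_l^2,
\]
which is exactly the target.

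The main obstacle is the logarithmic $p_1$ factor that the off-the-shelf non-commutative Bernstein inequality introduces: the standard statement gives $\sqrt{v^2 \log p_1} + B \log p_1$, not the clean $\sqrt{v^2} + B$ above. To eliminate this, I would invoke a log-free sample-covariance moment bound, e.g.\ in the spirit of Koltchinskii--Lounici or Adamczak--Litvak--Pajor--Tomczak-Jaegermann, which exploit the sub-Gaussian tails and the specific structure of sums of squared vectors (rather than arbitrary sub-exponential matrices). An alternative route is an $\varepsilon$-net argument combined with a weighted Hanson--Wright inequality applied to the rescaled matrix $\tilde Z_{ij} = Z_{ij}/\sigma_i$, where the right quadratic-form variance and operator norm yield the same two terms and the net cost can be absorbed by generic chaining. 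Either route turns the calculation of the Bernstein parameters above (which is the genuinely problem-specific content) into the stated expectation bound.
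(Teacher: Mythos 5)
First, note that the paper does not prove this lemma at all: it is imported verbatim as Theorem 6 of \cite{cai2020non} (a heteroskedastic Wishart-type concentration result), so there is no internal proof to compare against and your task amounts to reproving that external theorem. Within your outline, the problem-specific computations are correct: the column decomposition, the observation that $\mathbb{E}[\|z_k\|^2 z_k z_k^\top]$ is diagonal with $i$-th entry $\lesssim \sigma_i^2\sum_l\sigma_l^2$ (hence matrix variance $v^2 \lesssim p_2\max_i\sigma_i^2\sum_l\sigma_l^2$), and the sub-exponential envelope $B \lesssim \sum_l\sigma_l^2$ do recover exactly the two scales in the target bound.

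The genuine gap is the final step, which you defer rather than prove. Matrix Bernstein with these parameters only gives $\sqrt{v^2\log p_1}+B\log p_1$, and the tools you invoke to remove the logarithms do not apply under the stated hypotheses: Koltchinskii--Lounici (and its sub-Gaussian extensions) requires identically distributed columns and the comparability condition $\|\langle z_k,u\rangle\|_{\psi_2}\lesssim\|\langle z_k,u\rangle\|_{L_2}$ uniformly in $u$, whereas here only row-wise \emph{upper} proxies $\sigma_i$ are assumed, the actual variances may vary with the column index and be far smaller than $\sigma_i^2$, and the bound is stated in terms of the proxy matrix rather than the true covariance; Adamczak--Litvak--Pajor--Tomczak-Jaegermann is a log-concave/isotropic result. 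The alternative $\varepsilon$-net plus Hanson--Wright route is also not a fix as sketched: a union bound over a net of $\mathbb{S}^{p_1-1}$ inherently costs a factor of order $p_1$ in the exponent, which after rescaling by $D=\diag(\sigma_i)$ degrades the first term from $\sum_i\sigma_i^2$ to $p_1\max_i\sigma_i^2$ (and saying the net cost ``can be absorbed by generic chaining'' is precisely the delicate heteroskedastic argument that constitutes the content of the cited theorem). So the proposal correctly identifies the right Bernstein parameters but does not actually establish the log-free, heteroskedasticity-respecting bound; as a self-contained proof it is incomplete, and as written the cleanest honest course is to do what the paper does and cite \cite{cai2020non} directly.
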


\begin{lemma}[The Eckart-Young-Mirsky Theorem \citep{Eckart1936TheAO}]
	\label{Lemma: best rank r approximation}
	Suppose that $A=U \Sigma V^{T}$ is the singular value decomposition of $A$. Then the best rank- $k$ approximation of the matrix $A$ w.r.t the Frobenius norm, $\|\cdot\|_{F}$, is given by
	$$
	A_{k}=\sum_{i=1}^{k} \sigma_{i} u_{i} v_{i}^{T}.
	$$
	that is, for any matrix $B$ of rank at most k
	$$
	\|A-A_k\|_F\leq\|A-B\|_F.
	$$
\end{lemma}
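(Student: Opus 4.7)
The plan is to prove the Eckart–Young–Mirsky theorem via Weyl's singular value inequalities, which give a clean, coordinate-free argument that applies directly to any rank-$k$ competitor $B$. The only "computation" is identifying the Frobenius norm with the $\ell_2$ norm of the singular value sequence and then comparing singular value sequences term by term.

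First, I would record Weyl's inequality for singular values: for any matrices $X, Y$ of the same size and positive integers $i, j$ with $i+j-1 \leq \min(m,n)$,
\begin{equation*}
    \sigma_{i+j-1}(X+Y) \;\leq\; \sigma_{i}(X) + \sigma_{j}(Y).
\end{equation*}
Apply this with $X = A - B$, $Y = B$, and $j = k+1$. Since $\operatorname{rank}(B) \leq k$, we have $\sigma_{k+1}(B) = 0$, which yields
\begin{equation*}
    \sigma_{i+k}(A) \;\leq\; \sigma_{i}(A-B) \qquad \text{for every } i \geq 1.
\end{equation*}

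Second, I would use the unitary invariance of the Frobenius norm, which gives $\|M\|_F^2 = \sum_{\ell} \sigma_\ell(M)^2$ for any $M$. Squaring and summing the above bound over $i$, we obtain
\begin{equation*}
    \|A-B\|_F^2 \;=\; \sum_{i \geq 1} \sigma_i(A-B)^2 \;\geq\; \sum_{i \geq 1} \sigma_{i+k}(A)^2 \;=\; \sum_{j > k} \sigma_j(A)^2.
\end{equation*}
On the other hand, $A - A_k = \sum_{j>k} \sigma_j u_j v_j^\top$ has singular values $\sigma_{k+1}, \sigma_{k+2}, \ldots$, so $\|A - A_k\|_F^2 = \sum_{j>k} \sigma_j(A)^2$. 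Combining these gives $\|A-A_k\|_F \leq \|A - B\|_F$, completing the proof.

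The proof has essentially no obstacles once Weyl's inequality is taken as input; the only subtlety worth checking is the direction of the Weyl bound and the index bookkeeping (making sure $\sigma_{k+1}(B) = 0$ is used in the correct slot). If one preferred a self-contained route that avoids citing Weyl, the alternative would be to use unitary invariance to reduce to the case $A = \Sigma = \operatorname{diag}(\sigma_1, \ldots)$ and then argue via a dimension-counting / pigeonhole intersection of $\ker B$ with $\operatorname{span}(e_1, \ldots, e_{k+1})$ to produce a unit vector $x$ with $\|Ax\|^2 \geq \sigma_{k+1}^2$; this can be iterated to extract the full tail sum $\sum_{j>k}\sigma_j^2$. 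Either route is short, so the main "choice" is presentational rather than technical.
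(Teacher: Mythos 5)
Your proof is correct. The paper does not actually prove this lemma---it is stated in the ``Useful lemmas'' appendix purely as a cited classical result (Eckart--Young--Mirsky) and is used as a black box in the proofs of Propositions \ref{prop: augment} and \ref{prop: diagonal contrast}---so there is no in-paper argument to compare against. Your route via Weyl's singular value inequality $\sigma_{i+j-1}(X+Y)\leq\sigma_i(X)+\sigma_j(Y)$, specialized to $X=A-B$, $Y=B$, $j=k+1$ with $\sigma_{k+1}(B)=0$, followed by unitary invariance of the Frobenius norm, is a standard and complete proof of the Frobenius-norm case; the only bookkeeping point is that for $i+k>\min(m,n)$ one should read $\sigma_{i+k}(A)=0$, which keeps the term-by-term comparison valid and is harmless to the tail sum $\sum_{j>k}\sigma_j(A)^2=\|A-A_k\|_F^2$.
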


\vskip 0.2in
\bibliography{cite.bib}

\end{document}